\def\shownotes{1}  \ifnum\shownotes=1
\newcommand{\authnote}[2]{{$\ll$\textsf{\footnotesize #1 notes: #2}$\gg$}}
\newcommand{\authnote}[2]{}
\newcommand\blfootnote[1]{%
	\begingroup
	\renewcommand\thefootnote{}\footnote{#1}%
	\addtocounter{footnote}{-1}%
	\endgroup
}
\newtheorem{theorem}{Theorem}[section]
\newtheorem{lemma}[theorem]{Lemma}
\newtheorem{definition}[theorem]{Definition}
\newtheorem{corollary}[theorem]{Corollary}
\newtheorem{proposition}[theorem]{Proposition}
\newtheorem{assumption}[theorem]{Assumption}
\newcommand{\ones}{{\bf 1}}
\DeclareMathOperator*{\diag}{\text{diag}}
\DeclareMathOperator*{\sign}{\textup{sign}}
\newcommand{\supp}{\text{supp}}
\newcommand{\E}{\mathbb{E}}
\newcommand{\R}{\mathbb{R}}
\newcommand{\si}[1]{\mathbbm{1}(#1)}
\newtheorem*{remark*}{Remark}
\newtheorem*{observation*}{Observation}
\numberwithin{equation}{section}
\newcommand{\poly}{\textup{poly}}
\newcommand{\Gnorm}[1]{{\left\vert\kern-0.25ex\left\vert\kern-0.25ex\left\vert #1 
		\right\vert\kern-0.25ex\right\vert\kern-0.25ex\right\vert}}
\newcommand{\gnorm}[1]{{\vert\kern-0.25ex\vert\kern-0.25ex\vert #1 
\vert\kern-0.25ex\vert\kern-0.25ex\vert}}
\newcommand{\norm}[1]{\|#1\|}
\newcommand{\inner}[1]{\langle#1\rangle}
\newcommand{\Norm}[1]{\left\|#1\right\|}
\newcommand{\var}{\textsf{Var}}
\newcommand{\veps}{\varepsilon}
\newcommand{\set}[1]{\mathcal{#1}}
\newcommand{\barU}{\overline{U}}
\newcommand{\tilU}{\widetilde{U}}
\newcommand{\barW}{\overline{W}}
\newcommand{\tilW}{\widetilde{W}}
\newcommand{\tilO}{\widetilde{O}}
\newcommand{\perm}{\,.}
 \newcommand{\Exp}{\mathop{\mathbb E}\displaylimits}
\newcommand{\cF}{\mathcal{F}}
\newcommand{\cG}{\mathcal{G}}
\newcommand{\cN}{\mathcal{N}}
\newcommand{\cP}{\mathcal{P}}
\newcommand{\cD}{\mathcal{D}}
\newcommand{\cE}{\mathcal{E}}
\newcommand{\cQ}{\mathcal{Q}}
\newcommand{\hatL}{\widehat{L}}
\newcommand{\hatE}{\widehat{\mathbb{E}}}
\title{Towards Explaining the Regularization Effect of Initial Large Learning Rate in Training Neural Networks}
\author{Yuanzhi Li \thanks{Carnegie Mellon University, email: \texttt{yuanzhil@andrew.cmu.edu}} \and Colin Wei \thanks{Stanford University, email: \texttt{colinwei@stanford.edu}} \and Tengyu Ma \thanks{Stanford University, email: \texttt{tengyuma@stanford.edu}}}
\begin{document}
\newcommand{\tilV}{\widetilde{V}}
\newcommand{\barV}{\overline{V}}

\renewcommand{\paragraph}[1]{\textbf{#1.}}
\newcommand{\setsone}{\set{M}_1}
\newcommand{\setstwo}{\bar{\set{M}}_1}
\newcommand{\setsthree}{\bar{\set{M}}_2}
\newcommand{\setsfour}{\set{M}_2}
\maketitle

\begin{abstract}
	Stochastic gradient descent with a large initial learning rate is widely used for training modern neural net architectures. Although a small initial learning rate allows for faster training and better test performance initially, the large learning rate achieves better generalization soon after the learning rate is annealed. Towards explaining this phenomenon, we devise a setting in which we can prove that a two layer network trained with large initial learning rate and annealing provably generalizes better than the same network trained with a small learning rate from the start. The key insight in our analysis is that the order of learning different types of patterns is crucial: because the small learning rate model first memorizes easy-to-generalize, hard-to-fit patterns, it generalizes worse on hard-to-generalize, easier-to-fit patterns than its large learning rate counterpart. This concept translates to a larger-scale setting: we demonstrate that one can add a small patch to CIFAR-10 images that is immediately memorizable by a model with small initial learning rate, but ignored by the model with large learning rate until after annealing. Our experiments show that this causes the small learning rate model's accuracy on unmodified images to suffer, as it relies too much on the patch early on.
\end{abstract}
\blfootnote{YL and CW contributed equally to this paper.}
\section{Introduction}
It is a commonly accepted fact that a large initial learning rate is required to successfully train a deep network even though it slows down optimization of the train loss. Modern state-of-the-art architectures typically start with a large learning rate and anneal it at a point when the model's fit to the training data plateaus~\citep{ksh12,simonyan2014very,hzrs16,zk16}. Meanwhile, models trained using only small learning rates have been found to generalize poorly despite enjoying faster optimization of the training loss. 

A number of papers have proposed explanations for this phenomenon, such as sharpness of the local minima~\citep{keskar2016large,jastrzkebski2018dnn,yy1241}, the time it takes to move from initialization~\citep{hoffer2017train,xing2018walk}, and the scale of SGD noise~\citep{wen2019interplay}. However, we still have a limited understanding of a surprising and striking part of the large learning rate phenomenon: from looking at the section of the accuracy curve before annealing, it would appear that a small learning rate model should outperform the large learning rate model in both training and test error. Concretely, in Fig.~\ref{fig:intro_motivation}, the model trained with small learning rate outperforms the large learning rate until epoch 60 when the learning rate is first annealed. Only after annealing does the large learning rate visibly outperform the small learning rate in terms of generalization.

\begin{figure}
	\centering
\includegraphics[width=0.32\textwidth]{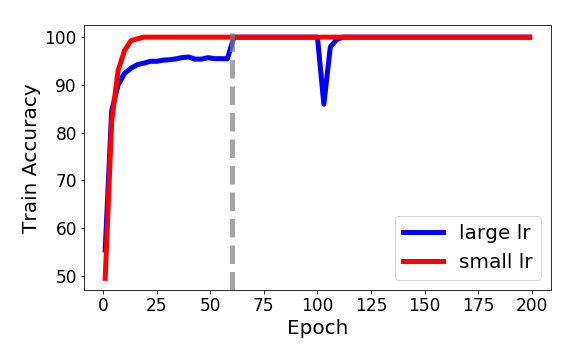}
\includegraphics[width=0.32\textwidth]{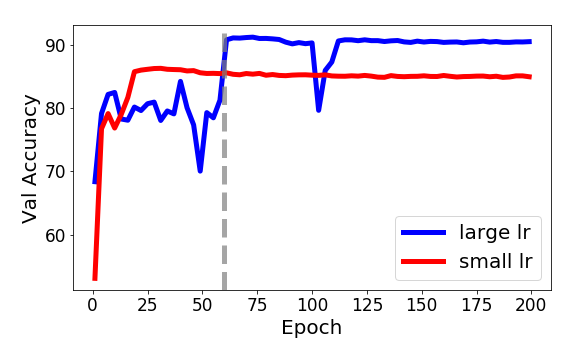}
\begin{minipage}{0.32\textwidth}
	\vspace{-30mm}
\caption{CIFAR-10 accuracy vs. epoch for WideResNet with weight decay, no data augmentation, and initial lr of 0.1 vs. 0.01. Gray represents the annealing time. \textbf{Left: } Train. \textbf{Right: } Validation.}\label{fig:intro_motivation}
\end{minipage}

\end{figure}
In this paper, we propose to theoretically explain this phenomenon via the concept of \textit{learning order} of the model, i.e., the rates at which it learns different types of examples. This is not a typical concept in the generalization literature --- learning order is a training-time property of the model, but most analyses only consider post-training properties such as the classifier's complexity~\citep{bartlett2002rademacher}, or the algorithm's output stability~\citep{bousquet2002stability}. We will construct a simple distribution for which the learning order of a two-layer network trained under large and small initial learning rates determines its generalization. 

Informally, consider a distribution over training examples consisting of two types of patterns (``pattern'' refers to a grouping of features). The first type consists of a set of easy-to-generalize (i.e., discrete) patterns of low cardinality that is difficult to fit using a low-complexity classifier, but easily learnable via complex classifiers such as neural networks. The second type of pattern will be learnable by a low-complexity classifier, but are inherently noisy so it is difficult for the classifier to generalize. In our case, the second type of pattern requires \emph{more samples} to correctly learn than the first type. 
Suppose we have the following split of examples in our dataset:
\begin{align}
\begin{split}
	20\%  &\textup{ containing only easy-to-generalize and hard-to-fit patterns}\\ 
		20\% & \textup{ containing only hard-to-generalize and easy-to-fit patterns}\\ 
	60\% & \textup{ containing both pattern types}
	\label{eq:informal_split}
	\end{split}
\end{align}
The following informal theorems characterize the learning order and generalization of the large and small initial learning rate models. They are a dramatic simplification of our Theorems~\ref{thm:1}~and~\ref{thm:2}~meant only to highlight the intuitions behind our results. 

\begin{theorem}[Informal, large initial LR + anneal]\label{thm:large_lr_informal}
	There is a dataset with size $N$ of the form~\eqref{eq:informal_split} such that
	with a large initial learning rate and noisy gradient updates, a two layer network will:
	
		~~1) initially only learn hard-to-generalize, easy-to-fit patterns from the $0.8N$ examples containing such patterns.
		
		~~2) learn easy-to-generalize, hard-to-fit patterns only after the learning rate is annealed.

Thus, the model learns hard-to-generalize, easily fit patterns with an effective sample size of $0.8N$ and still learns all easy-to-generalize, hard to fit patterns correctly with $0.2N$ samples.
\end{theorem}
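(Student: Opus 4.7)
My plan is to construct the two pattern types concretely, isolate them in orthogonal coordinates of the input, and then prove two independent signal-to-noise statements about the noisy SGD dynamics under large and small learning rates. I would take the ``hard-to-generalize, easy-to-fit'' Type B pattern to be a single linear direction corrupted by anti-concentrated label noise, so that fitting it on the training set requires only a near-linear predictor (one or two ReLU units) but generalization requires far more samples than we have; and I would take the ``easy-to-generalize, hard-to-fit'' Type A pattern to be a small, discrete set of near-one-hot feature vectors whose correct labelling requires a genuinely non-linear decision region (several hidden ReLUs whose pre-activations must cross a threshold set by the scale of the Type A coordinates). Examples are drawn as in the 20/20/60 split of~\eqref{eq:informal_split}, with labels determined by Type A when it is present and by the noisy Type B signal otherwise.

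In the large-LR phase I would analyze the coordinate-wise dynamics of the two-layer weights. The bias of the SGD update on a Type-B-aligned coordinate is $\Theta(\eta)$ and its standard deviation is also $\Theta(\eta)$, so after $\tilde{O}(1/\eta)$ steps this coordinate concentrates onto a linear predictor that fits the Type B pattern on essentially every training example containing it, and hence on all $0.8N$ such examples. In contrast, the per-step bias on any Type-A-aligned coordinate is much smaller than the ReLU activation threshold, which is tuned to be $\Theta(\eta)$ so that each Type A unit contributes to the loss only after its weight crosses that threshold. Before this happens, each Type-A coordinate performs an $\eta$-scale random walk, and an anti-concentration/barrier argument shows that the probability of crossing the threshold in any one step is $e^{-\Omega(1/\eta)}$; a union bound over $\poly(N)$ iterations keeps the Type A channel effectively frozen throughout the large-LR phase.

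After annealing, I would observe that the $0.6N$ mixed examples already achieve essentially zero loss through their Type B component, so they contribute negligible gradient to any further update. The effective training signal therefore comes only from the $0.2N$ Type-A-only examples. With the annealed (small) learning rate, the random-walk barrier shrinks and the correlated gradient from these $0.2N$ examples can now push Type-A coordinates across the activation threshold. A standard margin or uniform-convergence bound for two-layer ReLU networks restricted to the low-complexity Type A concept class then shows that $0.2N$ samples suffice to learn Type A to near-perfect test accuracy, yielding both conclusions of the theorem.

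The main obstacle will be controlling the coupled non-linear dynamics long enough to guarantee that \emph{no} Type-A coordinate spuriously activates during the entire large-LR phase: one needs a clean stopping-time/martingale argument that each Type-A weight stays below its activation threshold in the presence of the simultaneously growing Type-B weights, and that the learned Type B predictor does not leak into a spurious signal on Type-A-only examples. This forces several scale choices to be made consistently, e.g., orthogonality of the Type A and Type B coordinates by construction, an activation threshold tuned to $\Theta(\eta)$ so that the barrier is genuinely insurmountable at large $\eta$ but not at the annealed value, and noise scales aligned so that the Type B SGD noise creates the barrier without derailing Type B learning. Verifying these scale conditions simultaneously and pushing through the resulting union bounds is the technical heart of the argument.
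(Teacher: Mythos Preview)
Your high-level plan matches the paper's: two orthogonal pattern types, large learning rate learns the linearly-separable one on all $\approx 0.8N$ examples containing it, annealing then lets the network memorize the non-linearly-separable discrete pattern. But the technical mechanism you propose for \emph{why} the large learning rate blocks the Type~A pattern is different from the paper's, and as stated it has a gap.

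Your barrier argument does not balance. You say each Type-A coordinate performs an ``$\eta$-scale random walk'' against an activation threshold ``tuned to $\Theta(\eta)$,'' and conclude a per-step crossing probability of $e^{-\Omega(1/\eta)}$. But an $\eta$-scale step against a $\Theta(\eta)$ barrier gives $\Omega(1)$ crossing probability per step, not exponentially small; to get $e^{-\Omega(1/\eta)}$ you would need the barrier to be $\Theta(\sqrt{\eta})$ larger than the step. More fundamentally, ReLU units are active whenever the preactivation is positive, and at random initialization roughly half of them already are; there is no scalar threshold that a weight must ``cross'' before a unit participates. What is actually hard at large learning rate is not activating units, but maintaining a \emph{specific configuration} of activation signs across the three closely spaced inputs $z-\zeta,\,z,\,z+\zeta$ simultaneously---a unit must sit in a thin wedge of weight space to separate them, and the noise keeps kicking it out.

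The paper's mechanism makes this precise without a barrier argument. It decomposes $U_t=\barU_t+\tilU_t$ into accumulated gradient signal plus accumulated Gaussian noise, arranges the noise to be stationary (Assumption~\ref{ass:lr}), and shows that with $\|\barU_t\|_F$ controlled the activation patterns are governed by $\tilU_t$. The key lemma (Lemma~\ref{lem:llr_g}) then proves that under the large-LR noise the network output on the $\cQ$ component is \emph{forced to be nearly linear}: $|g_t(z+\zeta)+g_t(z-\zeta)-2g_t(z)|=\tilO(r^2/\lambda)$. Since $\cQ$ is not linearly separable, this immediately prevents the loss on $\cQ$-only examples from dropping below $\log 2$. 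Convergence of the $\cP$ part is handled by viewing each step as gradient descent on a \emph{sequence} of convex surrogates $K_t$ (the activation patterns change, but a single linear target $U^\star$ works for all of them), via a general lemma (Theorem~\ref{thm:convex-surrogate}) for optimizing changing convex functions that share a minimizer. After annealing, a separate coupling lemma (Lemma~\ref{lem:aux_2}) shows activation patterns freeze, so a fixed target memorizing $\cQ$ becomes reachable while the $\cP$ predictor is not forgotten.

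A smaller issue: your Type~B has ``label noise,'' which would prevent a near-linear predictor from achieving small \emph{training} loss on the mixed examples, undermining your post-annealing claim that they contribute negligible gradient. The paper's $\cP$ is noiseless and linearly separable; the difficulty in generalizing comes purely from high dimension ($d\gg N$), which is what makes the effective-sample-size argument clean.
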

\begin{theorem}[Informal, small initial LR]\label{thm:small_lr_informal}
	In the same setting as above, with small initial learning rate the network will:

		~~1) quickly learn all easy-to-generalize, hard-to-fit patterns.
		
		~~2) ignore hard-to-generalize, easily fit patterns from the $0.6N$ examples containing both pattern types, and only learn them from the $0.2N$ examples containing only hard-to-generalize patterns.

Thus, the model learns hard-to-generalize, easily fit patterns with a smaller effective sample size of $0.2N$ and will perform relatively worse on these patterns at test time.
\end{theorem}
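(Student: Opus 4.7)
The plan is to analyze the (approximately) noiseless gradient dynamics of the two-layer network under small learning rate and argue that, in contrast to the large-LR case where SGD noise obscures subtle gradient signals, the network here precisely follows the gradient and so fits the easy-to-generalize patterns first. Concretely, I would begin by verifying that with a sufficiently small step size the SGD trajectory stays close to that of the gradient flow up to a negligible perturbation (by a standard Azuma-type concentration on the noise terms), so that it suffices to analyze the deterministic flow on the empirical loss.

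I would then establish the first claim by identifying scalar quantities that measure the alignment of the network's weights with each pattern type and writing down their coupled evolution under the flow. On the $0.8N$ examples containing easy-to-generalize patterns, these patterns present a coherent signal across examples, and I would argue that the aggregate gradient contribution in the direction of the easy-to-generalize pattern dominates early in training, so that it is fit rapidly. At this point, since the easy-to-generalize pattern alone already classifies these $0.8N$ examples correctly, the per-example loss derivative on the $0.6N$ mixed examples becomes exponentially small (for logistic loss) or exactly zero (for hinge loss), so these examples contribute essentially no further gradient signal. This leaves only the $0.2N$ examples containing the hard-to-generalize pattern alone as the effective dataset for learning that pattern. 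A matching sample-complexity lower bound --- that learning the hard-to-generalize, easy-to-fit pattern requires substantially more than $0.2N$ samples to reach the generalization of the large-LR regime --- then yields the advertised gap in test accuracy.

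The main technical obstacle will be making step (ii) quantitative: one must show that once the easy-to-generalize pattern is fit to sufficient margin on the mixed examples, the residual loss derivative remains small enough throughout the rest of training that any accumulation of gradient in the hard-to-generalize direction from those examples is dominated by lower-order terms. This requires simultaneously controlling (a) the margin of the easy-to-generalize pattern on the mixed examples once it is learned, (b) the rate at which the hard-pattern-direction weights could grow from the mixed examples' residual gradient, and (c) the fact that these weights should still grow from the pure-hard-pattern examples, but only with an effective sample size of $0.2N$. Secondary obstacles include setting up the pattern distribution so that \emph{easy-to-fit vs.\ hard-to-fit} and \emph{easy-to-generalize vs.\ hard-to-generalize} are simultaneously well-defined for a two-layer network, and converting the empirical learning order into a quantitative test-error lower bound for the hard-to-generalize pattern class.
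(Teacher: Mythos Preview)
Your high-level story is the right one, and your ``main obstacle'' paragraph correctly identifies where the work is. But the argument as written has two gaps that would bite when you try to make it quantitative, and in both places the paper takes a different route.

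\textbf{Why the easy-to-generalize pattern is learned fast.} Saying $\cQ$ presents a ``coherent signal'' is the right intuition but is not yet an argument. The precise mechanism is that $\cQ$ is supported on only three directions, so the $V$-gradients from different examples point in essentially three directions and do not cancel; by contrast, the $x_1$ components are nearly isotropic Gaussians, so the $W$-gradients do cancel. The paper makes this quantitative (Lemma~\ref{lem:aux_1}): whenever the average loss derivative $\rho_t$ on $\cQ$-containing examples is at least $\Omega(1/N)$, the full gradient norm satisfies $\|\nabla \hatL(U_t)\|_F^2 \gtrsim r\rho_t^4$, coming entirely from the $V$-block. You would need some version of this structural fact; your proposal does not identify it.

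\textbf{Bounding the $W$-signal from mixed examples.} Your plan is sequential: first show $\cQ$ is fit, then argue the residual derivative on mixed examples stays small. The difficulty is that the two phases are coupled --- while $\cQ$ is being learned, $\rho_t$ is large and $W$ is also absorbing gradient from the mixed examples, so you must bound the $W$-accumulation during that transient and show it doesn't spoil the later analysis. The paper sidesteps this coupling with a potential argument (Proposition~\ref{prop:iteration}): by a descent lemma, $\eta_2\sum_s \|\nabla \hatL(U_s)\|_F^2$ is bounded by the initial loss, and by Lemma~\ref{lem:aux_1} each iteration with $\rho_t$ above a threshold contributes at least $r\rho_t^4$ to this sum. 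Hence the \emph{number} of iterations with large $\rho_t$ is bounded, independent of when they occur. This directly caps the total $W$-gradient from $\setsfour$ (Lemma~\ref{lem:slr_e1}) without ever locating a ``time when $\cQ$ is learned.'' Your sequential framing might be salvageable with careful parameter choices, but it is strictly more delicate.

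\textbf{The lower bound.} ``A matching sample-complexity lower bound'' is too vague to be a plan. The paper's argument is specific to the algorithm: because the small learning rate keeps activation patterns frozen near initialization (Lemma~\ref{lem:aux_2} with $t_0=0$), the learned $\barW_t$ lies approximately in the NTK span of the $\approx pN$ pure-$\cP$ examples (Lemma~\ref{lem:decompose}). One then shows that since $pN \le d/2$, any vector in this span has bounded correlation with $w^\star$, and computes explicitly that $r_t(x_1)-r_t(-x_1) \approx 2\langle \alpha, x_1\rangle$ for some $\alpha$ in that span (Lemma~\ref{lem:slr_e2}), from which a constant classification error follows by anticoncentration. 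A black-box sample-complexity bound would not suffice here, because you need a lower bound for \emph{this particular} trained network, not for an arbitrary estimator.

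Finally, your reduction to gradient flow via Azuma is not how the noise is handled. The relevant fact is not that the noise is small in $\ell_2$ but that it is small enough (relative to $\tau_0$) to keep the activation indicators $\si{[U_t]_i x}$ stable; this is what enables the NTK-style decomposition of $\barW_t$ above.
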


Together, these two theorems can justify the phenomenon observed in Figure~\ref{fig:intro_motivation} as follows: in a real-world network, the large learning rate model first learns hard-to-generalize, easier-to-fit patterns and is unable to memorize easy-to-generalize, hard-to-fit patterns, leading to a plateau in accuracy. Once the learning rate is annealed, it is able to fit these patterns, explaining the sudden spike in both train and test accuracy. On the other hand, because of the low amount of SGD noise present in easy-to-generalize, hard-to-fit patterns, the small learning rate model quickly overfits to them before fully learning the hard-to-generalize patterns, resulting in poor test error on the latter type of pattern. 

Both intuitively and in our analysis, the non-convexity of neural nets is crucial for the learning-order effect to occur. Strongly convex problems have a unique minimum, so what happens during training does not affect the final result. On the other hand, we show the non-convexity causes the learning order to highly influence the characteristics of the solutions found by the algorithm.

In Section~\ref{subsec:mitigation}, we propose a mitigation strategy inspired by our analysis. In the same setting as Theorems~\ref{thm:large_lr_informal}~and~\ref{thm:small_lr_informal}, we consider training a model with small initial learning rate while adding noise before the activations which gets reduced by some constant factor at some particular epoch in training. We show that this algorithm provides the same theoretical guarantees as the large initial learning rate, and we empirically demonstrate the effectiveness of this strategy in Section~\ref{sec:experiments}. In Section~\ref{sec:experiments}~we also empirically validate Theorems~\ref{thm:large_lr_informal}~and~\ref{thm:small_lr_informal} by adding an artificial memorizable patch to CIFAR-10 images, in a manner inspired by~\eqref{eq:informal_split}. 

\subsection{Related Work}

The question of training with larger batch sizes is closely tied with learning rate, and many papers have empirically studied large batch/small LR phenomena~\citep{keskar2016large,hoffer2017train,smith2017don,smith2017bayesian,dai2018towards,you2017large,goyal2017accurate,wen2019interplay}, particularly focusing on vision tasks using SGD as the optimizer.\footnote{While these papers are framed as a study of large-batch training, a number of them explicitly acknowledge the connection between large batch size and small learning rate.}~\citet{keskar2016large} argue that training with a large batch size or small learning rate results in sharp local minima.~\citet{hoffer2017train}~propose training the network for longer and with larger learning rate as a way to train with a larger batch size.~\citet{wen2019interplay}~propose adding Fisher noise to simulate the regularization effect of small batch size.

Adaptive gradient methods are a popular method for deep learning~\citep{duchi2011adaptive,zeiler2012adadelta,tieleman2012lecture,kingma2014adam,luo2019adaptive}~that adaptively choose different step sizes for different parameters. One motivation for these methods is reducing the need to tune learning rates~\citep{zeiler2012adadelta,luo2019adaptive}. However, these methods have been observed to hurt generalization performance~\citep{keskar2017improving,chen2018closing}, and modern architectures often achieve the best results via SGD and hand-tuned learning rates~\citep{hzrs16,zk16}.~\citet{wilson2017marginal}~construct a toy example for which ADAM~\citep{kingma2014adam}~generalizes provably worse than SGD. Additionally, there are several alternative learning rate schedules proposed for SGD, such as warm-restarts~\citep{loshchilov2016sgdr}~and~\citep{smith2017cyclical}.~\citet{ge2019step}~analyze the exponentially decaying learning rate and show that its final iterate achieves optimal error in stochastic optimization settings, but they only analyze convex settings. 

There are also several recent works on implicit regularization of gradient descent that establish convergence to some idealized solution under particular choices of learning rate~\citep{lmz17,soudry2018implicit,al192,adhlw19,li2018learning}. 
In contrast to our analysis, the generalization guarantees from these works would depend only on the complexity of the final output and not on the order of learning.

Other recent papers have also studied the order in which deep networks learn certain types of examples.~\citet{mangalam2019do} and~\citet{nakkiran2019sgd}~experimentally demonstrate that deep networks may first fit examples learnable by ``simpler'' classifiers. For our construction, we prove that the neural net with large learning rate follows this behavior, initially learning a classifier on linearly separable examples and learning the remaining examples after annealing. However, the phenomenon that we analyze is also more nuanced: with a small learning rate, we prove that the model first learns a complex classifier on low-noise examples which are not linearly separable.

Finally, our proof techniques and intuitions are related to recent literature on global convergence of gradient descent for over-parametrized networks~\citep{als18dnn,dltps18,dzps18,al192,als18,adhlw19,allen2018convergence,li2018learning,zz123}. These works show that gradient descent learns a \textit{fixed} kernel related to the initialization under sufficient over-parameterization. In our analysis, the underlying kernel is \textit{changing} over time. The amount of noise due to SGD governs the space of possible learned kernels, and as a result, regularizes the order of learning.

\section{Setup and Notations}
\paragraph{Data distribution}
We formally introduce our data distribution, which contains examples supported on two types of components: a $\cP$ component meant to model hard-to-generalize, easier-to-fit patterns, and a $\cQ$ component meant to model easy-to-generalize, hard-to-fit patterns (see the discussion in our introduction). Formally, we assume that the label $y$ has a uniform distribution over $\{-1,1\}$, and the data $x$ is generated as 
\begin{align}
\textup{Conditioned on the label $y$} \\
 \textup{with probability} ~ p_0,  & ~~~~~ x_1 \sim \cP_y, \textup{ and } x_2 = 0  \label{eqn:22}\\
 \textup{with probability} ~ q_0,  & ~~~~~ x_1 = 0, \textup{ and } x_2 \sim \cQ_y \label{eqn:23}\\
 \textup{with probability} ~ 1-p_0-q_0, &  ~~~~~ x_1  \sim \cP_y, \textup{ and } x_2 \sim \cQ_y \label{eqn:24}
\end{align}
where $\cP_{-1},\cP_1$ are assumed to be two half Gaussian distributions with a margin $\gamma_0$ between them: 
\begin{align}
 x_1\sim \cP_1 & \Leftrightarrow x_1 = \gamma_0 w^\star + z\vert \inner{w^\star, z} \ge 0, \textup{ where } z\sim \cN(0,I_{d\times d}/d) \nonumber\\
 x_1\sim \cP_{-1}& \Leftrightarrow   x_1 = - \gamma_0 w^\star + z\vert \inner{w^\star, z} \le 0, \textup{ where } z\sim \cN(0,I_{d\times d}/d)\nonumber
\end{align} 
Therefore, we see that when $x_1$ is present, the linear classifier $\sign({w^\star}^\top x_1)$ can classify the example correctly with a margin of $\gamma_0$. To simplify the notation, we assume that $\gamma_0 = 1/\sqrt{d}$ and $w^{\star} \in \R^d$ has a unit $\ell_2$ norm. Intuitively, $\cP$ is linearly separable, thus learnable by \emph{low complexity} (e.g. linear) classifiers. However,  because of the dimensionality, $\cP$ has high noise and requires a relatively large sample complexity to learn.
The distribution $\cQ_{-1}$ and $\cQ_{1}$ are supported only on three distinct directions $z-\zeta, z$ and $z+\zeta$ with some random scaling $\alpha$, and are thus low-noise and memorizable. Concretely, $z-\zeta$ and $z+\zeta$ have negative labels and $z$ has positive labels. 
\begin{align}
 x_2\sim \cQ_{1} & \Leftrightarrow x_2 = \alpha z \textup{ with } \alpha \sim [0,1] \textup{ uniformly }\nonumber\\
x_2\sim \cQ_{-1}& \Leftrightarrow   x_2 = \alpha(z + b\zeta)  \textup{ with } \alpha \sim [0,1], b\sim \{-1,1\} \textup{ uniformly }
\end{align}
Here for simplicity, we take $z$ to be a unit vector in $\R^d$. We assume $\zeta\in \R^d$ has norm $\|\zeta\|_2 = r$ and $\langle z, \zeta\rangle = 0$. We will assume $r \ll 1$ so that $z+\zeta, z, z-\zeta$ are fairly close to each other. We depict $z - \zeta, z, z + \zeta$ in Figure~\ref{fig:cQ_vis}. We choose this type of $\cQ$ to be the easy-to-generalize, hard-to-fit pattern. Note that $z$ is not linearly separable from $z + \zeta$, $z - \zeta$, so non-linearity is necessary to learn $\cQ$. On the other hand, it is also easy for \emph{high-complexity} models such as neural networks to memorize $\cQ$ with relatively small sample complexity. 

\begin{figure}
	\centering
	\includegraphics[width=0.15\textwidth]{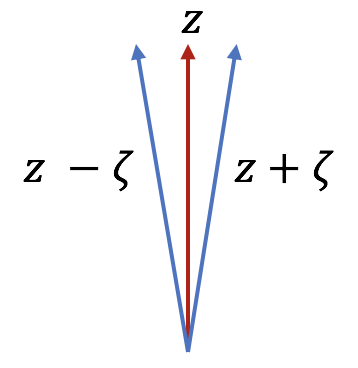}
	\begin{minipage}{0.8\textwidth}
		\vspace{-20mm}
		\caption{A visualization of the vectors $z$, $z - \zeta$, and $z + \zeta$ used to define the distribution $\cQ$ in 2 dimensions. $z \pm \zeta$ will have label $-1$ and $z$ has label $+1$. Note that the norm of $\zeta$ is much smaller than the norm of $z$.}\label{fig:cQ_vis}
	\end{minipage}
\end{figure}

\paragraph{Memorizing $\cQ$ with a two-layer net} It is easy for a two-layer relu network to memorize the labels of $x_2$ using two neurons with weights $w,v$ such that $\langle w, z \rangle < 0$, $\langle w, z - \zeta \rangle  > 0$ an  $\langle v, z \rangle < 0$, $\langle v, z + \zeta \rangle > 0$. In particular, we can verify that $-\langle w,x_2\rangle_+ - \langle v, x_2\rangle_+$ will output a negative value for $x_2\in \{z-\zeta, z+\zeta\}$ and a zero value for $x_2 = z$. Thus choosing a small enough $\rho > 0$, the classifier $-\langle w,x_2\rangle_+ - \langle v, x_2\rangle_+ + \rho$  gives the correct sign for the label $y$.

We assume that we have a training dataset with $N$ examples $\{(x^{(1)}, y^{(1)}), \cdots, (x^{(N)}, y^{(N)}) \}$ drawn i.i.d from the distribution described above. We use $p$ and $q$ to denote the empirical fraction of data points that are drawn from equation~\eqref{eqn:22} and~\eqref{eqn:23}. 

\paragraph{Two-layer neural network model} We will use a two-layer neural network with relu activation to learn the data distribution described above. The first layer weights are denoted by $U\in \R^{m\times 2d}$ and the second layer weight is denoted by $u\in \R^{m}$. With relu activation, the output of the neural network is $u^\top (\si{Ux}\odot Ux)$ where $\odot$ denotes the element-wise dot product of two vectors and $\si{z}$ is the binary vector that contains $\ones(z_i\ge 0)$ as entries.  It turns out that we will often be concerned with the object that disentangles the two occurrences of $U$ in the formula $u^\top (\si{Ux}\odot Ux)$. We define the following notation to facilitate the reference to such an object. Let 
\begin{align}
N_{A}(u,U; x) \triangleq w^\top \left(\si{A x}\odot U x\right)
\end{align}
That is, $N_A(w,W;x)$ denotes the function where we compute the activation pattern $\si{Ax}$ by the matrix $A$ instead of $U$. When $u$ is clear from the context, with slight abuse of notation, we write
$
N_{A}(U; x) \triangleq u^\top \left(\si{A x}\odot U x\right)
$.
In this notation, our model is defined as
$
f(u,U;x) = N_{U}(u,U;x)
$.
We consider several different structures regarding the weight matrices $U$. The simplest version which we consider in the main body of this paper is that $U$ can be decomposed into two $U = \begin{bmatrix}
W\\
V
\end{bmatrix}$ where $W$ only operates on the first $d$ coordinates  (that is, the last $d$ columns of $W$ are zero), and $V$ only operates on the last $d$ coordinates (those coordinates of $x_2$.) Note that $W$ operates on the $\cP$ component of examples, and $V$ operates on the $\cQ$ component of examples. In this case, the model can be decomposed into 
\begin{align}
f(u,U; x) = N_{U}(u,U; x) = N_{W}(w,W;x) + N_V(v,V;x) = N_{W}(w,W;x_1) + N_V(v,V;x_2)\nonumber
\end{align}
Here we slightly abuse the notation to use $W$ to denote both a matrix of $2d$ columns with last $d$ columns being zero, or a matrix of $d$ columns. We also extend our theorem to other $U$ such as a two layer convolution network in Section~\ref{sec:extension}.

\paragraph{Training objective} 
Let $\ell(f;(x,y))$ be the loss of the example $(x,y)$ under model $f$. Throughout the paper we use the logistic loss $\ell(f; (x,y)) = - \log \frac{1}{1+e^{-y f(x)}}$. 
We use the standard training loss function $\widehat{L}$ defined as: 
$
\widehat{L} (u,U)= \frac{1}{N} \sum_{i \in [N]} \ell\left(f(u,U;\cdot); (x^{(i)}, y^{(i)}) \right) 
$
and let $\widehat{L}_{\mathcal{S}} (u,U)$ denote the average over some subset $\set{S}$ of examples instead of the entire dataset.

We consider a regularized training objective
$
\widehat{L}_\lambda (u,U)= \widehat{L} (u,U) + \frac{\lambda}{2}\|U\|_F^2
$.
For the simplicity of derivation, the second layer weight vector $u$ is random initialized and fixed throughout this paper. Thus with slight abuse of notation the training objective can be written as $
\widehat{L}_\lambda (U)= \widehat{L} (u,U)+ \frac{\lambda}{2}\|U\|_F^2
$.

\paragraph{Notations}
Here we collect additional notations that will be useful throughout our proofs. The symbol $\oplus$ will refer to the symmetric difference of two sets or two binary vectors. The symbol $\setminus$ refers to the set difference. Let us define $\setsone$ to be the set of all $i \in [N]$ such that $x_1^{(i)} \not= 0$, let $\setstwo = [N] \backslash \setsone$. Let $\setsfour$ to be the set of all $i \in [N]$ such that $x_2^{(i)} \not= 0$, let $\setsthree = [N] \backslash \setsfour$. We define $q = \frac{|\setstwo|}{N}$ and $p = \frac{|\setsthree|}{N}$ to be the empirical fraction of data containing patterns only from $\cQ$ and $\cP$, respectively. We will sometimes use $\hatE$ to denote an empirical expectation over the training samples.
For a vector or matrix $v$, we use $\supp(v)$ to denote the set of indices of the non-zero entries of $v$. For $U \in \R^{m\times d}$ and $R\subset[m]$, let $U^R$ be the restriction of $U$ to the subset of rows indexed by $R$. We use $[U]_i$ to denote the $i$-th row of $U$ as a row vector in $\R^{1 \times d}$. Let the symbol $\odot$ denote the element-wise product between two vectors or matrices.  The notation $I_{n\times n}$ will denote the $n \times n$ identity matrix, and $\ones$ the all 1's vector where dimension will be clear from context.
We define ``with high probability'' to mean with probability at least $1 - e^{-C \log^2 (d)}$ for a sufficiently large constant $C$. $\tilde{O},  \tilde{\Omega}$ will be used to hide polylog factors of $d$.

\section{Main Results}
\newcommand{\ls}{\textup{L-S}}
\newcommand{\s}{\textup{S}}

The training algorithm that we consider is stochastic gradient descent with spherical Gaussian noise. We remark that we analyze this algorithm as a simplification of the minibatch SGD noise encountered when training real-world networks. There are a number of works theoretically characterizing this particular noise distribution~\citep{hu2017diffusion,hoffer2017train,wen2019interplay}, and we leave analysis of this setting to future work. 

We initialize $U_0$ to have i.i.d. entries from a Gaussian distribution with variance $\tau_0^2$, and at each iteration of gradient descent we add spherical Gaussian noise with coordinate-wise variance $\tau_{\xi}^2$ to the gradient updates. That is, the learning algorithm for the model is \begin{align}
U_0 &  \sim \mathcal{N}(0, \tau_0^2 I_{m\times m}\otimes I_{d\times d}) \nonumber\\ 
U_{t+1} & = U_t - \gamma_t \nabla_U (\hatL_\lambda (u,U_t) + \xi_t)  = (1-\gamma_t\lambda) U_t - \gamma_t (\nabla_U \hatL(u,U_t) + \xi_t)\\
& \textup{where } \xi_t \sim \mathcal{N}(0, \tau_\xi^2 I_{m\times m}\otimes I_{d\times d})
\end{align}
where $\gamma_t$ denotes the learning rate at time $t$. 
\noindent We will analyze two algorithms: 
\begin{itemize}
	\item[] {\bf Algorithm 1} (\ls): The learning rate is $\eta_1$ for $t_0$ iterations until the training loss drops below the threshold $\veps_1 + q \log 2$. 	Then we anneal the learning rate to $\gamma_t = \eta_2$ (which is assumed to be much smaller than $\eta_1$) and run until the training loss drops to $\veps_2$.
	\item[] {\bf Algorithm 2} (\s): We used a fixed learning rate of $\eta_2$ and stop at training loss $\veps_2' \le \veps_2$. 
\end{itemize}

For the convenience of the analysis, we make the following assumption that we choose $\tau_0$ in a way such that the contribution of the noises in the system stabilize at the initialization:\footnote{Let $\tau_0'$ be the solution to~\eqref{eq:noise_stabilization}~holding $\tau_{\xi}, \eta_1, \lambda$ fixed. If the standard deviation of the initialization is chosen to be smaller than $\tau_0'$, then standard deviation of the noise will grow to $\tau_0'$. Otherwise if the initialization is chosen to be larger, the contribution of the noise will decrease to the level of $\tau_0'$ due to regularization. In typical analysis of SGD with spherical noises, often as long as either the noise or the learning rate is small enough, the proof goes through. However, here we will make explicit use of the large learning rate or the large noise to show better generalization performance.}
\begin{assumption}\label{ass:lr}
			After fixing $\lambda$ and $\tau_\xi$,	we choose initialization $\tau_0$ and large learning rate $\eta_1$ so that
	\begin{align}
	(1 - \eta_1 \lambda)^2\tau_0^2 + \eta_1^2 \tau_{\xi}^2 = \tau_0^2
	\label{eq:noise_stabilization}
	\end{align}
As a technical assumption for our proofs, we will also require $\eta_1 \lesssim \veps_1$. 

	\end{assumption}

We also require sufficient over-parametrization. 

\begin{assumption}[Over-parameterization]\label{ass:over}

	We assume throughout the paper that $\tau_0 =  1/ \poly\left( \frac{d}{ \veps }\right)$ and  $m \geq \poly\left( \frac{d}{  \veps \tau_0 }\right)$ where $\poly$ is a sufficiently large constant degree polynomial. We note that we can choose $\tau_0$ arbitrarily small, so long as it is fixed before we choose $m$.
\end{assumption}
As we will see soon, the precise relation between $N, d$ implies that the level of over-parameterization is polynomial in $N, \epsilon$, which fits with the conditions assumed in prior works, such as~\citep{li2018learning,dzps18}. 

\begin{assumption}\label{ass:params}
	Throughout this paper, we assume the following dependencies between the parameters. We assume that  $N,d\rightarrow \infty$ with a relationship  $\frac{N}{d} = \frac{1}{\kappa^2}$ where $\kappa \in (0, 1)$ is a \emph{small value}.\footnote{Or in a non-asymptotic language, we assume that $N,d$ are sufficiently large compared to $\kappa$: $N,d\gg \poly(\kappa)$} We set $r= d^{-3/4}$, $p_0 = \kappa^2/2$, and $q_0 = \Theta(1)$. The regularizer will be chosen to be $\lambda = d^{-5/4}$. All of these choices of hyper-parameters can be relaxed, but for simplicity of exposition we only work this setting. 
\end{assumption}

We note that under our assumptions, for sufficiently large $N$, $p\approx p_0$ and $q\approx q_0$ up to constant multiplicative factors. Thus we will mostly work with $p$ and $q$ (the empirical fractions) in the rest of the paper. We also note that our parameter choice satisfies $(rd)^{-1}, d \lambda, \lambda/r   \leq \kappa^{O(1)}$ and $\lambda \leq r^2/(\kappa^2 q^3 p^2)$, which are a few conditions that we frequently use in the technical part of the paper.

Now we present our main theorems regarding the generalization of models trained with the~\ls~and~\s~algorithms. The final generalization error of the model trained with the~\ls~algorithm will end up a factor $O(\kappa) = O(p^{1/2})$ smaller than the generalization error of the model trained with~\s~algorithm.  
\begin{theorem}[Analysis of Algorithm \ls] \label{thm:1}\label{thm:not_fill}
		Under Assumption~\ref{ass:lr},~\ref{ass:over}, and~\ref{ass:params}, there exists a universal constant $0 < c < 1/16$ such that Algorithm 1 (\ls) with annealing at loss $\veps_1 + q\log 2$ for $\veps_1 \in \left(d^{-c} , \kappa^2 p^2q^3 \right)$ 		and stopping criterion $\veps_2 = \sqrt{\veps_1/q}$ satisfies the following:
				\vspace{-0.1in}
	\begin{itemize}
	\item[1.] It anneals the learning rate within $\widetilde{O}\left(\frac{d}{\eta_1 \veps_1} \right)$ iterations. 
		\vspace{-0.1in}
\item[2.] It stops at at most $t = \widetilde{O}\left( \frac{d}{\eta_1 \veps_1} +  \frac{1}{\eta_2 r \veps_1^3} \right) $. With probability at least 0.99, the solution $U_t$ has test (classification) error and test loss \textbf{at most} ${O}\left(p\kappa \log \frac{1}{\veps_1} \right) $. 
		\vspace{-0.1in}
 	\end{itemize} 
\end{theorem}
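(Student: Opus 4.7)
The argument decomposes into a Phase~1 analysis (steps $t \le t_0$, learning rate $\eta_1$), a Phase~2 analysis (steps $t > t_0$, learning rate $\eta_2$), and a final generalization bound. The starting point is the decomposition $f(u,U;x) = N_W(w,W;x_1) + N_V(v,V;x_2)$ from the setup: because $W$ only sees $x_1$ and $V$ only sees $x_2$, the two blocks evolve along trajectories that are coupled only through the shared logistic-loss derivatives. The key structural asymmetry driving the argument is that $\cP$ is linearly separable by $w^\star$, while $\cQ$ requires genuinely nonlinear coordination between neurons; hence a large noise level can kill the coordination needed to fit $\cQ$ while leaving the linear signal on $\cP$ intact.

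\emph{Phase~1 (large learning rate).} First I would use Assumption~\ref{ass:lr} to argue that the injected Gaussian noise equilibrates so that $V_t$ decomposes as $\bar V_t + \Xi_t$, with $\bar V_t$ a slowly-varying signal part and $\Xi_t$ approximately stationary Gaussian of scale $\tau_0$. Conditioned on this noise bath, for each $\cQ$-only example the activation pattern $\si{V_t x_2}$ is essentially that of a random matrix, so the effective feature map induced on the three directions $\{z-\zeta,z,z+\zeta\}$ cannot separate the labels $(-1,+1,-1)$ by any linear combination that $v$ could realize; consequently the loss contribution from the $\cQ$-only set (fraction $q$) stays at $(\log 2)(1-o(1))$ throughout Phase~1. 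Meanwhile, on $\cP$-containing examples the gradient on $W$ concentrates around a positive multiple of $\hatE[y\, x_1]\propto w^\star$, and an NTK-style convergence argument in the overparameterized regime shows $W_t$ drives the loss on the fraction $1-q$ down to $\veps_1$ in $\widetilde O(d/(\eta_1 \veps_1))$ steps, so the total training loss hits $\veps_1 + q\log 2$ exactly when annealing is triggered.

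\emph{Phase~2 (small learning rate).} After annealing, the equilibrium noise on $V$ drops by roughly $\sqrt{\eta_2/\eta_1}$, and $V$ is free to specialize. Because only $O(1)$ neurons suffice to separate $\{z-\zeta,z,z+\zeta\}$ in the way described in the ``Memorizing $\cQ$'' paragraph, and because $\Omega((1-p)N)$ examples carry a $\cQ$-pattern, a standard memorization argument in the overparameterized regime drives the loss on $\cQ$-containing examples below $\sqrt{\veps_1/q}=\veps_2$ within $\widetilde O(1/(\eta_2 r \veps_1^3))$ iterations. During Phase~2 one must check that $W$ does not deteriorate; this follows because the gradient and the added noise are now both small, so the total drift of $W$ is bounded by $\eta_2 T_2 \cdot \poly(d)$, negligible under the parameter choices in Assumption~\ref{ass:params}.

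\emph{Generalization and main obstacle.} I split the test error by example type. $\cQ$-containing examples (fraction $1-p$) are classified correctly with overwhelming probability, since $V$ memorizes three discrete directions from $\Omega(N)$ samples and the solution has small norm, giving sharp memorization-style generalization. For $\cP$-only examples (fraction $p$), only $W$ contributes, and $W$ behaves like a margin-$\gamma_0=1/\sqrt d$ linear classifier trained from $N$ samples in $d$ dimensions; a Rademacher/margin bound yields classification error $O(\sqrt{d/N}) = O(\kappa)$, for a total contribution $O(p\kappa)$; the $\log(1/\veps_1)$ factor arises when converting logistic loss into classification error via Markov. The hardest step is the Phase-1 lower bound on the $\cQ$-only loss: tracking the coupled stochastic dynamics of $V_t$ over $\widetilde O(d/(\eta_1 \veps_1))$ noisy steps and ruling out that even a small coherent subset of neurons aligns with $\pm\zeta$. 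This is the quantitative form of ``large noise prevents fine-grained feature learning'' and will consume the bulk of the technical work.
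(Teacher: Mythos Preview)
Your high-level decomposition into Phase~1, Phase~2, and the generalization split is exactly the paper's structure, and the convergence parts (a common target $U^\star$ reached by gradient descent on a sequence of changing convex surrogates; after annealing, a new target $V^*$ that memorizes $\cQ$ while $W$ barely drifts) match Lemmas~\ref{lem:llr_gg}, \ref{lem:dlr_r}, \ref{lem:dlr_g} and Theorem~\ref{thm:convex-surrogate}. The generalization bookkeeping (Rademacher on the $\cP$-only slice giving $O(p\kappa)$) is also right, though the $\log(1/\veps_1)$ factor comes from $\|U^\star\|_F^2=O(d\log^2(1/\veps_1))$ in the Rademacher bound, not from a Markov argument.

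The genuine gap is in the Phase-1 lower bound, which you correctly flag as the crux but then propose to prove by the wrong mechanism. You write that with activation patterns $\si{V_t x_2}$ coming from a random matrix, ``the effective feature map induced on the three directions $\{z-\zeta,z,z+\zeta\}$ cannot separate the labels $(-1,+1,-1)$ by any linear combination that $v$ could realize.'' This is false: at any \emph{fixed} time $t$, the random activation patterns on $\{z-\zeta,z,z+\zeta\}$ do permit separation, and the paper explicitly constructs such a $V^*$ in Lemma~\ref{prop:exist} using exactly those activation patterns. So a static ``random features can't separate'' argument cannot work. What the paper actually proves (Lemma~\ref{lem:llr_g}) is the second-difference bound
\[
|g_t(z+\zeta)+g_t(z-\zeta)-2g_t(z)|\le \widetilde O(r^2/\lambda),
\]
i.e.\ $g_t$ is forced to be \emph{nearly linear} on the triple, and then Proposition~\ref{prop:1} converts this into the $\log 2$ loss lower bound on $\setstwo$. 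The mechanism is purely temporal: the accumulated signal $\bar V_t$ is a sum of gradient updates $\Delta Q_s$; each $\Delta Q_s$ has identical rows within each activation-pattern group at time $s$ (Proposition~\ref{lem:coupling_gradient}); and the large noise \emph{reshuffles} activation patterns between time $s$ and time $t$, so that when you pair $\Delta Q_s$ with the second-difference indicator $\si{\cem_t}+\si{\cep_t}-2\si{\cez_t}$ at time $t$, the contributions cancel to order $r^2/\sqrt{\lambda\eta_1(t-s)}$ (Proposition~\ref{prop:firstpart}). In short, the obstruction is not ``no good $V$ exists under random activations,'' but ``gradient descent cannot \emph{find} it because the activations at the time of the gradient are decorrelated from the activations at the time of evaluation.'' Your phrase ``ruling out that a small coherent subset of neurons aligns with $\pm\zeta$'' gestures in this direction but does not identify the temporal-decorrelation / second-difference structure, and without it the argument would not go through.
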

Roughly, the learning order and generalization of the~\ls~model is as follows: before annealing the learning rate, the model only learns an effective classifier for $\cP$ on the $\approx (1 - q)N$ samples in $\setsone$ as the large learning rate creates too much noise to effectively learn $\cQ$ (Lemma~\ref{lem:llr_gg} and Lemma~\ref{lem:llr_g}). After the learning rate is annealed, the model memorizes $\cQ$ and correctly classifies examples with only a $\cQ$ component during test time (formally shown in Lemmas~\ref{lem:dlr_r}~and~\ref{lem:dlr_g}). For examples with only $\cP$ component, the generalization error is (ignoring log factors and other technicalities) $p \sqrt{\frac{d}{N}} = O(p \kappa )$ via standard Rademacher complexity. The full analysis of the~\ls~algorithm is clarified in Section~\ref{sec:l-s}.

\begin{theorem}[Lower bound for Algorithm \s]\label{thm:2}
	Let $\veps_2$ be chosen in Theorem~\ref{thm:1}. 
	Under Assumption~\ref{ass:lr},~\ref{ass:over} and~\ref{ass:params},  there exists a universal constant $c  > 0$  such that w.h.p, Algorithm 2 with any $\eta_2 \leq \eta_1 d^{-c}$ and any stopping criterion $\veps_2' \in (d^{-c}, \veps_2]$, 
achieves training loss $\veps_2'$ in at most $\widetilde{O}\left(\frac{d}{\eta_2 \veps_2'} \right)$ iterations, and both the test error and the test loss of the obtained solution are \textbf{at least} $\Omega(p)$. 
			
\end{theorem}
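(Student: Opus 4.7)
The approach mirrors the informal narrative of Theorem~\ref{thm:small_lr_informal}: with learning rate $\eta_2$, the injected SGD noise is too small to prevent early memorization of the discretely supported $\cQ$ component by the $V$ subnetwork, so $V$ overfits $\cQ$ on $\setsfour$ within the first phase of training. Once $\cQ$ is fit to small logistic loss, the logistic derivative on every mixed example in $\setsone \cap \setsfour$ is exponentially damped, effectively removing those samples from the gradient signal seen by $W$. Thus $W$ is trained on an effective dataset of only $|\setsthree| = pN = \Theta(d)$ half-Gaussian $\cP$-only examples---too few to learn $w^\star \in \R^d$ to constant angular accuracy given the small margin $\gamma_0 = 1/\sqrt d$, so the final classifier makes $\Omega(1)$ mistakes on fresh $\cP$-only test examples, contributing $\Omega(p)$ to the total test error.

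The first step is a phase-one analysis showing that within $\widetilde{O}(d/(\eta_2 \veps_2'))$ iterations the loss on $\setsfour$ falls below $\veps_2'/q$ and the $V$ subnetwork attains margin $\Omega(\log(1/\veps_2'))$ on every $\cQ$-example, so that each such example contributes logistic derivative $O(\veps_2'/q)$. The ``Memorizing $\cQ$ with a two-layer net'' construction gives a margin-$\Omega(r)$ classifier using two neurons, and in the over-parameterized regime of Assumption~\ref{ass:over} the tangent kernel restricted to the three-point support $\{z, z\pm\zeta\}$ has minimum eigenvalue $\Theta(r)$, from which a standard convex-in-output convergence argument yields the loss bound. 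Crucially, because $\eta_2 \leq \eta_1 d^{-c}$, one has $\eta_2^2 \tau_\xi^2 \ll \tau_0^2$, so (in contrast with Assumption~\ref{ass:lr} for the large-learning-rate algorithm) the spherical noise never dominates the memorization signal and cannot prevent over-fitting of $\cQ$.

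The main obstacle is the signal-suppression step: showing that $W$ never receives meaningful gradient from mixed examples. After phase one, the per-example logistic derivative on any $(x,y)\in\setsone\cap\setsfour$ is $O(\veps_2'/q)$, so the $W$-gradient contributed by mixed examples is $O(\veps_2'/q)$ per iteration, whereas the $W$-gradient from $\setsthree$ is $\Theta(p)$ as long as $W$ has not fitted $\setsthree$. Since the parameter choices in Assumption~\ref{ass:params} imply $\veps_2' \leq \sqrt{\veps_1/q} \leq \kappa p q \ll p$, the mixed contribution is negligible. The delicate bookkeeping is to verify that the fitted-$\cQ$ condition is preserved \emph{uniformly} along the training trajectory, since as $W_t$ evolves its outputs on mixed examples may drift; I would carry an inductive invariant on the margin of the network over $\setsfour$, exploiting the regularizer bound $\lambda \leq r^2/(\kappa^2 q^3 p^2)$ to cap the size of $W_t$ on mixed inputs together with a standard level bound for the $\eta_2 \tau_\xi$-scale noise residuals.

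Finally, conditional on $W_t$ being produced (up to negligible perturbations) by regularized noisy gradient descent on the $\setsthree$ subsample alone, I would invoke a minimax lower bound for learning $\cP$ from $|\setsthree|=\Theta(d)$ samples. The $\cP$-only subproblem is binary classification of two half-Gaussians in $\R^d$ separated by margin $\gamma_0=1/\sqrt d$; a Le~Cam / Fano argument for Gaussian mean estimation shows that any estimator $\hat w$ based on $\Theta(d)$ samples satisfies $\|\hat w - w^\star\|_2 = \Omega(1)$ with constant probability, which translates to $\Omega(1)$ classification error on a fresh $\cP$-only example because $\gamma_0 \ll 1$. To lift this from the linear setting to the neural-net classifier, I would observe that on $\cP$-only inputs the model output reduces to $N_W(w,W_t;x_1)$ and that the class of $W_t$-dependent functions on the $\cP$ mass embeds linear thresholds on $\R^d$, so the same sample-complexity lower bound applies. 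Multiplying by the probability $p_0=p$ of a $\cP$-only test draw gives the claimed $\Omega(p)$ bound on test error, and the test-loss lower bound follows since $\ell \geq \Omega(1)$ on every misclassified example.
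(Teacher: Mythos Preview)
Your high-level narrative matches the paper's, and the convergence and signal-suppression outline is roughly right (though the paper controls the cumulative $W$-signal from $\setsfour$ in Lemma~\ref{lem:slr_e1} by bounding the \emph{number} of iterations where the average loss-derivative $\rho_t$ on $\setsfour$ is large, rather than by maintaining a uniform margin invariant). The real gap is in your final lower-bound step.

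A Le~Cam/Fano argument yields a \emph{minimax} statement: for any estimator there exists a $w^\star$ on which it fails. It does not tell you that this specific trained network fails on the fixed $w^\star$ of the problem. You would need an additional equivariance argument to turn minimax into an instance-wise bound, and even then you must first reduce the network output $r_t(x_1)$ to an estimator of $w^\star$---which is precisely the non-trivial structural step. Your remark that the $W_t$-dependent function class ``embeds linear thresholds'' points the wrong direction for a lower bound: containing linear classifiers does not stop the algorithm from outputting something better than linear.

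The paper replaces this black-box step with an explicit structural argument. Using the signal-suppression bound, it decomposes $[\barW_t]_j = w_j \sum_{k\in\setsthree} \alpha_k\, x_1^{(k)} \si{[W_0]_j x_1^{(k)}} + [\barW_t']_j$ with $\|\barW_t'\|_F$ small (Lemma~\ref{lem:decompose}). A direct NTK-style computation then shows that for a fresh $x_1$ one has $r_t(x_1) - r_t(-x_1) = 2\langle \alpha, x_1\rangle \pm \tilde{O}(\veps_3)$ for a single vector $\alpha \in \text{span}\{x_1^{(i)}\}_{i\in\setsthree}$ with $\|\alpha\|_2 = \tilde{\Omega}(\sqrt{Np})$ (Lemma~\ref{lem:slr_e2}). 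The lower bound is then elementary linear algebra over random Gaussians: since $|\setsthree| = pN \le d/2$, any $\alpha$ in that span has $\langle \alpha, w^\star\rangle / \|\alpha\|_2 \le 0.9$ w.h.p.\ (Lemma~\ref{lem:aux_22}). Writing a test point as $x_1 = \gamma w^\star + \beta$ with $\beta \perp w^\star$, the noise term $\langle \alpha, \beta\rangle$ dominates $\gamma\langle \alpha, w^\star\rangle$ with constant probability, forcing $r_t(x_1) \ge r_t(-x_1)$ even when $\gamma < 0$; hence one of $(\pm x_1, 0)$ is misclassified, giving test error $\Omega(p)$.

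In short, the missing ingredient is not a sample-complexity lower bound but the reduction of $r_t$ to an (approximately) linear function with coefficient lying in the low-dimensional span of $\setsthree$; after that, the lower bound is a direct high-probability statement about random subspaces, not a minimax argument.
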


We explain this lower bound as follows: the~\s~algorithm will quickly memorize the $\cQ$ component which is low noise and ignore the $\cP$ component for the $\approx 1 - p - q$ examples with both $\cP$ and $\cQ$ components (shown in Lemma~\ref{lem:slr_e1}). Thus, it only learns $\cP$ on $\approx pN$ examples. It obtains a small margin on these examples and therefore misclassifies a constant fraction of $\cP$-only examples at test time. This results in the lower bound of $\Omega(p)$. We formalize the analysis in Section~\ref{sec:s}.

\paragraph{Decoupling the Iterates} It will be fruitful for our analysis to separately consider the gradient signal and Gaussian noise components of the weight matrix $U_t$. We will decompose the weight matrix $U_t$ as follows: 
$
U_t = \barU_t + \tilU_t
$.
In this formula, $\barU_t$ denotes the signals from all the gradient updates accumulated over time, and $\tilU_t$ refers to the noise accumulated over time:
\begin{align}
\begin{split}
\barU_t &= - \sum_{s = 1}^t \gamma_{s-1} \left(\prod_{i=s}^{t-1} (1 - \gamma_i \lambda)\right) \nabla \hatL(U_{s-1})\\ \tilU_t &= \left(\prod_{i=0}^{t-1} (1 - \gamma_i \lambda)\right)   U_0 - \sum_{s = 1}^t \gamma_{s-1} \left(\prod_{i=s}^{t-1} (1 - \gamma_i \lambda)\right)  \xi_{s-1}
\end{split}\label{eqn:signal}
\end{align}
Note that when the learning rate $\gamma_t$ is always $\eta$, the formula simplifies to 
$\barU_t = \sum_{s = 1}^t \eta (1 - \eta\lambda)^{t-s} \nabla \hatL(U_{s-1}) $ and $
\tilU_t = (1-\eta\lambda)^t   U_0 + \sum_{s = 1}^t\eta (1 - \eta\lambda)^{t-s}   \xi_{s-1}
$. 
The decoupling and our particular choice of initialization satisfies that the noise updates in the system stabilize at initialization, so the marginal distribution of $\tilU_t$ is always the same as the initialization. Another nice aspect of the signal-noise decomposition is as follows: we use tools from~\citep{als18dnn} to show that if the signal term $\barU$ is small, then using only the noise component $\tilU$ to compute the activations roughly preserves the output of the network. This facilitates our analysis of the network dynamics. See Section~\ref{sec:decouple_prelim} for full details.

\paragraph{Decomposition of Network Outputs} For convenience, we will explicitly decompose the model prediction at each time into two components, each of which operates on one pattern: we have $N_{U_t}(u, U_t; x)  =  g_t(x) + r_t(x)$,
\begin{align}
\textup{where } & g_t(x) = g_t(x_2) \triangleq N_{V_t}(v, V_t; x)  =  N_{V_t}(v, V_t; x_2) \\
& r_t(x) = r_t(x_1) \triangleq N_{W_t}(w, W_t; x) =  N_{W_t}(w, W_t; x_1)
\end{align}
In other words, the network $g_t$ acts on the $\cQ$ component of examples, and the network $r_t$ acts on the $\cP$ component of examples.

\section{Characterization of Algorithm 1 (\ls)}
\label{sec:l-s}
We characterize the behavior of algorithm~\ls{}~with large initial learning rate. We provide proof sketches in Section~\ref{sec:llr_proof_sketch} with full proofs in Section~\ref{sec:llr_proofs}.

\paragraph{Phase I: initial learning rate $\eta_1$} The following lemma bounds the rate of convergence to the point where the loss gets annealed. It also bounds the total gradient signal accumulated by this point. 
 
\begin{lemma}\label{lem:llr_gg}
	In the setting of Theorem~\ref{thm:1}, at some time step $t_0 \le \widetilde{O}\left( \frac{d}{\eta_1 \veps_1} \right)$, the training loss $\hatL(U_{t_0})$ becomes smaller than $q\log 2 + \epsilon_1$. Moreover, we have 	$\|\barU_{t_0}\|_F^2 = {O}\left( d \log^2 \frac{1}{\veps_1} \right)$.
\end{lemma}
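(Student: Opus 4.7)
The plan is to exhibit a low-norm comparator $U^\star$ whose training loss sits just above the floor $q\log 2$, and then to run a standard convex-SGD descent analysis against that comparator in the NTK-like regime around the noise iterate $\tilU_t$. The signal-noise decoupling $U_t = \barU_t + \tilU_t$ is central: by Assumption~\ref{ass:lr}, the marginal distribution of $\tilU_t$ coincides with the initialization for every $t$, so activation patterns driven by $\tilU_t$ behave like a fixed random feature map. Tools of~\citep{als18dnn} (invoked via Section~\ref{sec:decouple_prelim}) then allow me to linearize $N_{U_t}(u,U_t;x) \approx N_{\tilU_t}(u,\tilU_t;x) + N_{\tilU_t}(u,\barU_t;x)$ with small error, provided $\|\barU_t\|_F$ stays at order $\sqrt d \cdot \text{polylog}(1/\veps_1)$ throughout, which I will maintain inductively. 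Under this linearization, $\hatL(U_t)$ is approximately convex in $\barU_t$.

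The comparator has block form $U^\star = \begin{bmatrix} W^\star \\ 0 \end{bmatrix}$. Setting $V^\star = 0$ forces the comparator to output exactly $0$ on the $\setstwo$ examples (those with no $\cP$ component), contributing exactly $\log 2$ each to the logistic loss and totalling $q\log 2$. For the $W^\star$ block I use that $\cP$ is linearly separable with margin $1/\sqrt d$: a standard NTK-representation argument in the over-parameterized regime (Assumption~\ref{ass:over}) produces a $W^\star$ such that $N_{\tilW_t}(w,W^\star;x_1^{(i)}) \ge c\, y^{(i)} \log(1/\veps_1)$ for every $i\in\setsone$, with $\|W^\star\|_F^2 = O(d\log^2(1/\veps_1))$. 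In the linearized regime the comparator then achieves $\hatL(U^\star) \le q\log 2 + (1-q)\veps_1/2 \le q\log 2 + \veps_1/2$.

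Setting $\Phi_t = \|\barU_t - U^\star\|_F^2$ and expanding the deterministic signal update $\barU_{t+1} = (1-\eta_1\lambda)\barU_t - \eta_1 \nabla_U \hatL(U_t)$ (note that the Gaussian noise $\xi_t$ is absorbed into $\tilU_t$, not $\barU_t$) yields, via the approximate convexity from the previous paragraph,
\[
\hatL(U_t) - \hatL(U^\star) \;\lesssim\; \frac{\Phi_t - \Phi_{t+1}}{\eta_1} + \eta_1 \|\nabla_U \hatL(U_t)\|_F^2 + \lambda \|U^\star\|_F^2 + (\text{linearization error}).
\]
The gradient norm is controlled because the logistic loss has bounded derivative and the $\tilU_t$-induced features have small norm (Assumption~\ref{ass:over} fixes $\tau_0$ small); Assumption~\ref{ass:lr}'s condition $\eta_1 \lesssim \veps_1$ makes the $\eta_1\|\nabla\hatL\|_F^2$ contribution $o(\veps_1)$, while $\lambda\|U^\star\|_F^2 = O(d^{-5/4}\cdot d\log^2(1/\veps_1)) = o(\veps_1)$ by Assumption~\ref{ass:params}. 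Telescoping over $T = \tilde O(\Phi_0/(\eta_1 \veps_1)) = \tilde O(d/(\eta_1 \veps_1))$ steps produces some $t_0 \le T$ with $\hatL(U_{t_0}) \le \hatL(U^\star) + \veps_1/2 \le q\log 2 + \veps_1$. Along the way, $\Phi_t \le \Phi_0$ is preserved, so by the triangle inequality
\[
\|\barU_{t_0}\|_F \le \|\barU_{t_0} - U^\star\|_F + \|U^\star\|_F \le 2\|U^\star\|_F = O\!\left(\sqrt d \, \log(1/\veps_1)\right),
\]
which delivers the claimed $\|\barU_{t_0}\|_F^2 = O(d\log^2(1/\veps_1))$ and retroactively justifies the NTK-linearization regime used throughout.

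The main obstacle is propagating the linearization error along the entire trajectory despite fresh Gaussian noise being injected into $\tilU_t$ at every step. Unlike stationary-kernel NTK analyses, the feature map $x \mapsto \si{\tilU_t x} \odot \tilU_t x$ changes from step to step even though its law is stationary, so I must couple the first-order expansion of $\hatL$ around $U_t$ with a uniform (over $t \le t_0 = \poly(d/\veps_1)$ and over all $N$ training samples) concentration bound on both the preactivations $\tilU_t x$ and the activation-pattern differences $\si{U_t x} \oplus \si{\tilU_t x}$. The over-parameterization of Assumption~\ref{ass:over} is essential here: only an $O(\|\barU_t\|_F/(\sqrt d\,\tau_0))$ fraction of ReLU signs flip between $\tilU_t$ and $U_t$, and a union bound over iterations and samples is affordable precisely because $m$ is polynomially large compared to everything else. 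This step, executed carefully, is what keeps the accumulated linearization error below $\veps_1$ and closes the induction.
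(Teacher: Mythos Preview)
Your proposal is correct and follows essentially the same route as the paper: construct a block comparator $U^\star = (W^\star, 0)$ with $W^\star$ aligned to the linear separator $w^\star$ so that its loss is $q\log 2 + \veps_1/2$, then run a convex descent argument on $\|\barU_t - U^\star\|_F^2$ to produce both the stopping time and the norm bound. The one packaging difference worth noting is that the paper freezes activations at $U_t$ rather than $\tilU_t$ when defining the convex surrogate, i.e.\ it sets $K_t(B) = \hatL$ evaluated at weights $B+\tilU_t$ with activation pattern $\si{U_t x}$; this makes $\hatL(U_t) = K_t(\barU_t)$ and $\nabla_U\hatL(U_t) = \nabla K_t(\barU_t)$ hold as exact identities, so the descent step itself carries no linearization error and all the coupling work is pushed into the single task of showing $K_t(U^\star) \le q\log 2 + \veps_1/2$ uniformly in $t$ (the paper's Lemma~\ref{lem:2}), after which a general convergence theorem for sequences of convex functions with a common near-minimizer (Theorem~\ref{thm:convex-surrogate}) finishes the job. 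Your choice to linearize around $\tilU_t$ instead is also fine but forces you to track an extra activation-mismatch error term through the telescoping, which is exactly the burden you flag in your last paragraph; the paper's choice sidesteps it.
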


Our proof of Lemma~\ref{lem:llr_gg} views the SGD dynamics as optimization with respect to the neural tangent kernel induced by the activation patterns where the kernel is rapidly changing due to the noise terms $\xi$. This is in contrast to the standard NTK regime, where the activation patterns are assumed to be stable~\citep{dzps18,li2018learning}. Our analysis extends the NTK techniques to deal with a sequence of changing kernels which share a common optimal classifier (see Section~\ref{sec:llr_proof_sketch} and Theorem~\ref{thm:convex-surrogate} for additional details).

The next lemma says that with large initial learning rate, the function $g_t$ does not learn anything meaningful for the $\cQ$ component before the $\frac{1}{\eta_1 \lambda}$-timestep. Note that by our choice of parameters $1/\lambda \gg d$ and Lemma~\ref{lem:llr_gg},  we anneal at the time step $\widetilde{O}\left( \frac{d}{\eta_1 \veps_1} \right)\le \frac{1}{\eta_1\lambda}$. Therefore, the function has not learned anything meaningful about the memorizable pattern on distribution $\cQ$ before we anneal. 
\begin{lemma}\label{lem:llr_g}
In the setting of Theorem~\ref{thm:1},	w.h.p.,  for every $t \leq \frac{1}{\eta_1 \lambda}$, 
	\begin{align}
	\left| g_t(z + \zeta) + g_t(z - \zeta) - 2 g_t(z) \right| \leq \widetilde{O}\left( \frac{r^2}{ \lambda}\right) = \tilO(d^{-1/4})
	\end{align}
\end{lemma}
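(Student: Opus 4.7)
The plan is to bound $E_t := g_t(z+\zeta)+g_t(z-\zeta)-2g_t(z)$ via the signal-noise decomposition $V_t=\barV_t+\tilV_t$, a pseudo-network approximation, and a concentration argument. The guiding observation is that on any neuron whose three pre-activations $[V_t]_i z, [V_t]_i(z\pm\zeta)$ share a common sign the second difference of ReLU vanishes: $\mathrm{relu}(a+b)+\mathrm{relu}(a-b)-2\mathrm{relu}(a)=0$ whenever $|a|\ge|b|$. So $E_t$ is carried entirely by a small set $F$ of ``flipping'' neurons whose three activation signs are not all equal.

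First, I would pass to the pseudo-network $\widetilde g_t(x) := v^\top(\si{\tilV_t x}\odot V_t x)$ in which the activation pattern is frozen at $\tilV_t$. Under Assumption~\ref{ass:lr} the noise component $\tilV_t$ retains the Gaussian law $\mathcal{N}(0,\tau_0^2)$ entrywise at every time, so provided $\|\barV_t\|_F$ is suitably controlled throughout Phase~I, the decoupling tools from \citep{als18dnn} (cf.\ Section~\ref{sec:decouple_prelim}) make $|g_t(x)-\widetilde g_t(x)|$ negligible on $\{z,z\pm\zeta\}$, reducing the task to bounding $\widetilde E_t$.

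Expanding $[V_t]_i(z\pm\zeta)=[V_t]_i z\pm[V_t]_i\zeta$ in $\widetilde E_t$ yields
\begin{align*}
\widetilde E_t \;=\; \sum_{i\in F} v_i c_i\,[V_t]_i z \;+\; \sum_{i\in F} v_i d_i\,[V_t]_i\zeta,
\end{align*}
where $c_i := \sigma^+_i+\sigma^-_i-2\sigma_i \in \{-1,0,1\}$, $d_i := \sigma^+_i-\sigma^-_i \in \{-1,0,1\}$, and $\sigma_i,\sigma^\pm_i$ are the $\tilV_t$-activations at $z,z\pm\zeta$. A sign flip forces $|[\tilV_t]_i z|\le|[\tilV_t]_i\zeta|$; since $\langle z,\zeta\rangle=0$, the scalars $[\tilV_t]_i z$ and $[\tilV_t]_i\zeta$ are independent $N(0,\tau_0^2)$ and $N(0,\tau_0^2 r^2)$, so Gaussian anti-concentration gives per-neuron flip probability $O(r)$ and $|F|\le\tilO(mr)$ w.h.p., while the symmetry $(a,b)\mapsto(-a,-b)$ makes $c_i,d_i$ mean-zero. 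The $\tilV_t$-contribution to both sums is trivially tiny from $|[\tilV_t]_i z|\le|[\tilV_t]_i\zeta|\le\tilO(\tau_0 r)$ combined with $|v_i|=O(1/\sqrt m)$ and our tiny $\tau_0$; the $\barV_t$-contribution is handled by Hoeffding's inequality conditional on $\barV_t$, giving $|\sum_{i\in F} v_i c_i [\barV_t]_i z|\le\tilO(\sqrt{r/m}\,\|\barV_t\|_F)$ and similarly $\tilO(\sqrt{r^3/m}\,\|\barV_t\|_F)$ for the $\zeta$-analog. Using the regularization-driven bound $\|V_t\|_F^2\lesssim 1/\lambda$ together with the over-parametrization $m\ge\mathrm{poly}(d)$ from Assumption~\ref{ass:over}, both right-hand sides lie well below $\tilO(r^2/\lambda)$.

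The main obstacle is propagating a Frobenius bound on $\barV_t$ uniformly over all $t\le 1/(\eta_1\lambda)$, which is far past the annealing time $t_0$ at which Lemma~\ref{lem:llr_gg} applies. This requires combining the shrinkage factor $(1-\eta_1\lambda)^t$ from the regularizer with the observation that $\nabla_V\widehat L$ on $\cQ$-only examples averages to approximately zero in the relevant directions, owing to the $b\leftrightarrow-b$ symmetry built into $\cQ_{-1}$. Controlling $\|\barV_t\|_F$ and the pseudo-network error simultaneously across the entire large-learning-rate phase is the technical heart of the argument.
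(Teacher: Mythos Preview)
Your outline has a genuine gap in the step where you apply Hoeffding's inequality to $\sum_{i\in F} v_i c_i\,[\barV_t]_i z$ ``conditional on $\barV_t$.'' The random signs $c_i$ are functions of $\tilV_t$, but $\barV_t$ is \emph{not} independent of $\tilV_t$: by construction $\barV_t = -\sum_{s\le t}\eta_1(1-\eta_1\lambda)^{t-s}\nabla_V\hatL(U_{s-1})$, and each $[\nabla_V\hatL(U_s)]_i$ contains the activation indicators $\si{[V_s]_i z}$ and $\si{[V_s]_i(z\pm\zeta)}$, which (via the coupling lemma) are essentially $\si{[\tilV_s]_i z}$, etc. Since $\tilV_t = (1-\eta_1\lambda)^{t-s}\tilV_s + (\textup{fresh noise})$, the variables $[\tilV_s]_i z$ and $[\tilV_t]_i z$ are correlated Gaussians with correlation $(1-\eta_1\lambda)^{t-s}$. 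Hence $[\barV_t]_i z$ is systematically correlated with $c_i$ (most strongly through the recent steps $s$ close to $t$), and the mean-zero premise of your concentration step fails. You also cannot swap to the randomness of $v$ here, since $\barV_t$ depends on $v$ through the loss derivatives. Incidentally, the ``trivially tiny'' bound on the $\tilV_t$-weight part also needs the cancellation from $v$, not just the pointwise estimate: the naive bound $|F|\cdot m^{-1/2}\cdot\tilO(\tau_0 r)\approx \sqrt{m}\,r^2\tau_0$ is \emph{not} small under Assumption~\ref{ass:over}, where $m$ is chosen after $\tau_0$ and can be an arbitrarily large polynomial in $1/\tau_0$.

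The paper's proof resolves exactly this dependence issue and does so by a different mechanism than concentration. Writing $Q_t=\diag(v)\barV_t$ and expanding $Q_t=\sum_{s\le t}\eta_1(1-\eta_1\lambda)^{t-s}\Delta Q_{s-1}$ with $\Delta Q_s=\diag(v)\nabla_V\hatL(U_s)$, the key structural fact is that at any fixed step $s$ the rows $[\Delta Q_s]_i$ take only a few distinct values (neurons sharing the same activation pattern on $z,z\pm\zeta$ at time $s$ receive the same update up to sign). This converts $(\si{\cem_t}+\si{\cep_t}-2\si{\cez_t})^\top\Delta Q_s$ into a combination of differences of \emph{set sizes} like $|A\cap\cfps|-|B\cap\cfps|$, which are then bounded by computing probabilities over the joint Gaussian law of $([\tilV_s]_i z,[\tilV_t]_i z,[\tilV_t]_i\zeta)$; the $r^2$ comes from the Gaussian density integral and the summability over $s$ from the variance growth $\sigma_{s,t}\gtrsim\tau_0\sqrt{\lambda\eta_1(t-s)}$ of the fresh noise. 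Finally, what you flag as the main obstacle --- a uniform bound on $\|\barV_t\|_F$ for all $t\le 1/(\eta_1\lambda)$ --- is in fact immediate from the per-row bound $\|[\barU_t]_i\|_2\lesssim 1/(\lambda\sqrt{m})$ induced by the $\ell_2$ regularizer; the heart of the argument lies elsewhere.
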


\paragraph{Phase II: after annealing the learning rate to $\eta_2$} After iteration $t_0$, we decrease the learning rate to $\eta_2$. The following lemma bounds how fast the loss converges after annealing.

\begin{lemma}\label{lem:dlr_r}

	In the setting of Theorem~\ref{thm:1}, there exists $t = \widetilde{O}\left(\frac{1}{\veps_1^3 \eta_2 r} \right)$, such that after $t_0 + t$ iterations, we have that
$$
	\hatL(U_t)= O \left( \sqrt{\veps_1/q} \right) 
$$	Moreover, $\|\barU_{t_0+t} - \barU_{t_0}\|_F^2 \le \widetilde{O}\left( \frac{1}{\veps_1^2 r} \right) \leq O(d)$.
		
\end{lemma}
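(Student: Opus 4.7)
}
The plan is to run an NTK-style convex-surrogate argument around the iterate $U_{t_0}$. After annealing the learning rate to $\eta_2 \ll \eta_1$, the noise injected per step $\eta_2^2 \tau_\xi^2$ is much smaller than in Phase~I, so the noise term $\tilU_t$ stays essentially frozen at $\tilU_{t_0}$ (whose marginal distribution is still $\cN(0, \tau_0^2 I)$ by Assumption~\ref{ass:lr}). The decoupling tools described after Theorem~\ref{thm:2} then let us evaluate activations using only $\tilU_t$ (up to negligible error), so the optimization behaves like a convex problem in $\barU$ against the frozen random-feature kernel determined by $\sigma'(\tilU_{t_0} x)$. I would therefore invoke the same convex-surrogate machinery used for Lemma~\ref{lem:llr_gg} (i.e.\ Theorem~\ref{thm:convex-surrogate}), with the key difference that now I must track both the $\cP$-loss (already small on $\setsone$ from Phase~I) and the $\cQ$-loss (which still sits at roughly $\log 2$ on $\setstwo$ because Lemma~\ref{lem:llr_g} guarantees $g_{t_0}$ has not distinguished $z$ from $z\pm\zeta$).

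The central task is to exhibit a reference perturbation $V^\star$ (with $W^\star = 0$) that memorizes $\cQ$ on all $\approx qN$ examples in $\setstwo$ while having Frobenius norm squared $\widetilde{O}(1/(r\veps_1^2))$ — this is exactly the bound claimed in the lemma for $\|\barU_{t_0+t} - \barU_{t_0}\|_F^2$. Concretely, for each neuron index $i\in[m]$ I would, depending on the sign of the activation of $[\tilV_{t_0}]_i$ on $z$ vs.\ $z\pm\zeta$, set $[V^\star]_i$ to point along $\pm\zeta/r$ with magnitude $\Theta(1/(r\sqrt{m\veps_1}))$ on the $\tilde O(m)$ ``good'' neurons whose activation pattern under $\tilV_{t_0}$ flips between $z$ and $z\pm\zeta$. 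Because $\|\zeta\|=r$, such a neuron's contribution to $g_{V_{t_0}+V^\star}(z)-g_{V_{t_0}+V^\star}(z\pm\zeta)$ is order $1/\sqrt{m\veps_1}$, and averaging over the $\Omega(m)$ good neurons (which exist w.h.p.\ by a Gaussian anticoncentration argument on $\tilV_{t_0}$) yields a margin of order $1/\sqrt{\veps_1}$. One verifies $\|V^\star\|_F^2 = \widetilde O(1/(r^2\veps_1))$ and that $V^\star$ drives the $\setstwo$ logistic loss to $O(\sqrt{\veps_1/q})$.

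With such a $V^\star$ in hand, a standard NTK / online-convex-optimization mirror-descent bound of the form used in Lemma~\ref{lem:llr_gg} gives
\[
\frac{1}{t}\sum_{s=1}^{t}\hatL(U_{t_0+s}) \;\le\; \hatL(U_{t_0}+V^\star) + \frac{\|V^\star\|_F^2}{\eta_2 t} + (\text{coupling error}),
\]
so choosing $t = \widetilde\Theta(\|V^\star\|_F^2/(\eta_2 \sqrt{\veps_1/q})) = \widetilde\Theta(1/(\eta_2 r\veps_1^3))$ (after absorbing $q$ into polylogs) pushes the averaged training loss below $O(\sqrt{\veps_1/q})$. Picking the best iterate then yields the stated bound. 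The signal-norm bound is obtained for free from the same inequality, since the total signal accumulated cannot exceed $\|V^\star\|_F^2 = \widetilde O(1/(r\veps_1^2))$ up to constants, which is indeed $\le O(d)$ under Assumption~\ref{ass:params}.

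The main obstacle I anticipate is the coupling error term. In Phase~I the large learning rate and large noise made the effective kernel change rapidly, but the analysis there relied on a common optimal classifier existing for \emph{every} kernel along the way. After annealing, in contrast, I need the kernel to stay \emph{close} to the one at time $t_0$, which in turn requires two estimates: (i) showing $\|\tilU_{t_0+t}-\tilU_{t_0}\|$ and $\|\barU_{t_0+t}-\barU_{t_0}\|$ stay inside the activation-stability radius of~\citet{als18dnn} throughout the $\widetilde{O}(1/(\eta_2 r\veps_1^3))$ iterations, and (ii) showing the $\setsone$-loss does not increase — this follows because $V^\star$ has $W^\star=0$, so adding $V^\star$ leaves $r_{t_0}$ unchanged and the gradient-descent trajectory inherits the convex-surrogate guarantee without trading $\cP$-accuracy for $\cQ$-accuracy. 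The stability radius is the tightest constraint and dictates the precise polynomial dependence of $\tau_0$ and $m$ on $d/\veps_1$ in Assumption~\ref{ass:over}.
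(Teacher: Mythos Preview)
Your high-level strategy matches the paper exactly: freeze activations via Lemma~\ref{lem:aux_2}, exhibit a perturbation $U^\star=(0,V^\star)$ that memorizes $\cQ$ while leaving $r_{t_0}$ untouched, and then apply Theorem~\ref{thm:convex-surrogate}. The gap is in the actual construction of $V^\star$, and it propagates to the norm and time bounds.

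First, the neuron count is off. The ``good'' neurons---those whose activation under $\tilV_{t_0}$ differs between $z$ and $z\pm\zeta$---number $\Theta(rm)$, not $\tilde O(m)$: the relevant event is $|[\tilV_{t_0}]_i z|\le |[\tilV_{t_0}]_i\zeta|$, and since $\|\zeta\|=r\ll 1$ this has probability $\Theta(r)$ (this is exactly the calculation behind $|\cE_1|,|\cE_2|,|\cE_3|=\Theta(rm)$ in the paper's proof of Lemma~\ref{prop:exist}). Second, pointing $[V^\star]_i$ along $\zeta/r$ is inefficient and in fact cannot separate $z$ from $z\pm\zeta$: since $\langle\zeta,z\rangle=0$, such a neuron contributes nothing to $g(z)$ regardless of its activation, so you cannot make $g(z)>0$. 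The paper instead sets $[V^\star]_i\propto v_i z$ on the $\Theta(rm)$ neurons in $\cE_1,\cE_2,\cE_3$, exploiting the asymmetry of their activation patterns on the triple $(z-\zeta,z,z+\zeta)$; because $\langle z, z\pm\zeta\rangle=\Theta(1)$ rather than $\Theta(r)$, the per-neuron contribution is a factor $1/r$ larger and the norm works out to $\|V^\star\|_F^2=\Theta(rm)\cdot\Theta\!\big(\tfrac{1}{m}\cdot\tfrac{\log^2(1/\veps_1)}{r^2\veps_1^2}\big)=\tilde O\!\big(\tfrac{1}{r\veps_1^2}\big)$, which is what the lemma claims. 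Your stated bound $\|V^\star\|_F^2=\tilde O(1/(r^2\veps_1))$ is both inconsistent with the $\tilde O(1/(r\veps_1^2))$ you quote later and, under Assumption~\ref{ass:params}, equals $d^{3/2}/\veps_1>d$, violating the ``$\le O(d)$'' conclusion.

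Finally, the margin you target ($\sim 1/\sqrt{\veps_1}$) is too small once you remember that $x_2=\alpha z$ with $\alpha\sim\mathrm{Unif}[0,1]$: to get per-example loss $\le\veps_1$ whenever $\|x_2\|\ge\veps_1$, you need margin $\Theta(\log(1/\veps_1)/\veps_1)$ on the unit-norm directions (see \eqref{eq:vbiwbgwieuf}). This is what forces the $1/\veps_1^2$ in the norm, and in turn the $1/\veps_1^3$ in the iteration count after setting $\mu=\veps_1$ in Theorem~\ref{thm:convex-surrogate}.
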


The following lemma bounds the training loss on the example subsets $\setsone$, $\setstwo$.
\begin{lemma}\label{lem:dlr_g}
	In the setting of Lemma~\ref{lem:dlr_r} using the same $t=\widetilde{O}\left(\frac{1}{\veps_1^3 \eta_2 r} \right)$,  the average training losses on the subsets $\setsone$ and $\setstwo$ are both good in the sense that 
	\begin{align}
	\hatL_{\setsone}(r_{t_0 + t}) = O(\sqrt{\veps_1/q}) \textup{  and }
	\hatL_{\setstwo}(g_{t_0 + t}) = O(\sqrt{\veps_1/q^3})
	\end{align}

\end{lemma}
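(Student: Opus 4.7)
The proof splits cleanly into the two claimed bounds. The $\setstwo$ bound is short: for each $i \in \setstwo$ we have $x_1^{(i)} = 0$, which forces $W_{t_0+t} x_1^{(i)} = 0$ and hence $r_{t_0+t}(x^{(i)}) = 0$ (the ReLU pre-activation simply vanishes). Thus on $\setstwo$ the per-example loss of the full network $f$ coincides with that of $g_{t_0+t}$, and by non-negativity of $\ell$ together with the full training loss bound from Lemma~\ref{lem:dlr_r},
\[
\hatL_{\setstwo}(g_{t_0+t}) \le \tfrac{N}{|\setstwo|}\,\hatL(U_{t_0+t}) = \tfrac{1}{q}\cdot O\!\left(\sqrt{\veps_1/q}\right) = O\!\left(\sqrt{\veps_1/q^3}\right),
\]
which is exactly the claim.

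The $\setsone$ bound is more delicate because on $\setsone \cap \setsfour$ the per-example loss involves both sub-networks through $f = r_{t_0+t} + g_{t_0+t}$. I would use $1$-Lipschitzness of the logistic loss in its first argument to peel off the $g$ contribution, writing
$\ell(r_{t_0+t}; x^{(i)}, y^{(i)}) \le \ell(f; x^{(i)}, y^{(i)}) + |g_{t_0+t}(x_2^{(i)})|$,
where the correction vanishes on $\setsone \cap \setsthree$ since $x_2^{(i)} = 0$ there. Averaging over $\setsone$ and using $|\setsone|/N = 1-q = \Theta(1)$, the problem reduces to bounding
\[
\tfrac{1}{|\setsone|}\sum_{i \in \setsone \cap \setsfour} |g_{t_0+t}(x_2^{(i)})| = O\!\left(\sqrt{\veps_1/q}\right).
\]
Since our two-layer ReLU model is positively $1$-homogeneous in its input (no biases) and $\cQ$ is supported on the three rays $\alpha \cdot \{z, z+\zeta, z-\zeta\}$ with $\alpha \in [0,1]$, every summand satisfies $|g_{t_0+t}(x_2^{(i)})| = \alpha^{(i)} |g_{t_0+t}(v^{(i)})| \le \max_{v \in \{z, z\pm\zeta\}} |g_{t_0+t}(v)|$, collapsing the sum to a uniform three-point bound $\max_v |g_{t_0+t}(v)| = O(\sqrt{\veps_1/q})$.

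\textbf{Main obstacle.} The crux is producing this uniform magnitude bound: small training loss only lower-bounds the signed margin $y \cdot g$ and says nothing about $|g|$ on its own, so one must use structure of the signal part. My plan is to invoke the signal--noise decoupling $V_{t_0+t} = \barV_{t_0+t} + \tilV_{t_0+t}$ from Section~\ref{sec:decouple_prelim} together with the Frobenius control $\|\barU_{t_0+t}\|_F \le \tilO(\sqrt{d})$ inherited from Lemmas~\ref{lem:llr_gg}~and~\ref{lem:dlr_r}, and then exploit the fact that gradient updates arising from $\cQ$-only examples concentrate $\barV_{t_0+t}$ inside the low-dimensional subspace $\mathrm{span}(z,\zeta)$. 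A Jensen-type argument against the balanced labels of $\cQ$ can then pair this structural control with the bound $\hatL_{\setstwo}(g_{t_0+t}) = O(\sqrt{\veps_1/q^3})$ already established to pin each $|g_{t_0+t}(v)|$ to the target $O(\sqrt{\veps_1/q})$. The reason this is the main obstacle is that the naive Frobenius estimate $|g_{t_0+t}(v)| \lesssim \|v\|_{\mathrm{out}} \|\barV_{t_0+t}\|_F \|v\|_2 \lesssim \sqrt{d}$ is off by a factor of roughly $\sqrt{d}$, so the low-rank concentration of $\barV_{t_0+t}$ along $\mathrm{span}(z,\zeta)$ is really doing the work.
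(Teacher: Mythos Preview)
Your argument for the $\setstwo$ bound is correct and matches the paper's.

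Your argument for the $\setsone$ bound has a genuine gap at the ``main obstacle'' you flagged, and in fact the approach cannot work. You want to show $\max_{v\in\{z,z\pm\zeta\}}|g_{t_0+t}(v)| = O(\sqrt{\veps_1/q})$, but this is false at time $t_0+t$: by this point the network has already learned $\cQ$ to low loss $\hatL_{\setstwo}(g_{t_0+t}) = O(\sqrt{\veps_1/q^3})$, and small loss forces \emph{large} margin, not small output. Concretely, Lemma~\ref{lem:sig_g} gives $y\,g_{t_0+t}(x_2)\gtrsim \|x_2\|_2/\delta$ with $\delta = O(\sqrt{\veps_1/q^3})$, so $|g_{t_0+t}(v)|=\Omega(\sqrt{q^3/\veps_1})\gg 1$. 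The ``Jensen-type argument against the balanced labels'' you sketch only works when the loss is near the neutral value $\log 2$ (this is precisely the content of Proposition~\ref{prop:1} and Lemma~\ref{lem:veps_0}, which bound $|g_{t_0}(v)|$ at the \emph{annealing} time), not when the loss is small. So you cannot peel off $g_{t_0+t}$ by Lipschitzness.

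The paper takes a different route that avoids $g$ entirely: it compares $r_{t_0+t}$ directly to $r_{t_0}$. The key fact is that the $W$-block of the signal moves very little in Phase~II, $\|\barW_{t_0+t}-\barW_{t_0}\|_F^2\le\tilO(1/(r\veps_1^2))$ from Lemma~\ref{lem:dlr_r}, combined with the activation-stability Lemma~\ref{lem:aux_2}. A Cauchy--Schwarz step against the data matrix (using $\|X\|_2^2\lesssim N/d$) and the parameter relation $dr\ge\tilO(1/\veps_1^4)$ then gives $\frac{1}{|\setsone|}\sum_{i\in\setsone}|r_{t_0+t}(x_1^{(i)})-r_{t_0}(x_1^{(i)})|\le\veps_1$. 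Finally one invokes $\hatL_{\setsone}(r_{t_0})\le \veps_0/(1-q)=O(\sqrt{\veps_1/q})$ from Lemma~\ref{lem:veps_0}. In other words, the correct intuition is ``$r$ hasn't moved since annealing,'' not ``$g$ is still small.''
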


Intuitively, low training loss of $g_{t_0 + t}$ on $\setstwo$ immediately implies good generalization on examples containing patterns from $\cQ$. Meanwhile, the classifier for $\cP$, $r_{t_0 + t}$, has low loss on $(1 - q)N$ examples. Then the test error bound follows from standard Rademacher complexity tools applied to these $(1 - q) N$ examples.

\section{Characterization of Algorithm 2 (\s)}
\label{sec:s}
We present our small learning rate lemmas, with proofs sketches in Section~\ref{sec:small_lr_proof_sketch} and full proofs in Section~\ref{sec:slr_proof}.

\paragraph{Training loss convergence} The below lemma shows that the algorithm will converge to small training error too quickly. In particular, the norm of $W_t$ is not large enough to produce a large margin solution for those $x$ such that $x_2 = 0$. 
\begin{lemma}\label{lem:slr_l}
	In the setting of Theorem~\ref{thm:2}, there exists a time $t' = \tilde{O}\left( \frac{1}{\eta_2 \veps_2'^3 r}\right)$ such that $\hatL_{\setsfour}(U_{t'}) \le \veps_2'$. Moreover, there exists $t$ with
$
t = \tilde{O}\left( \frac{1}{\eta_2 \veps_2'^3 r} + \frac{ Np}{\eta_2 \veps_2'} \right)
$
such that $\hatL(U_t) \leq \veps_2'$ after $t$ iterations. Moreover, we have that $\| \barU_t \|_F^2 \leq \tilde{O}\left( \frac{1}{\veps_2'^2 r} + Np \right)$.
\end{lemma}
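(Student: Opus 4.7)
The plan is to decompose the SGD trajectory into two phases, separated by the time at which the training loss on $\setsfour$ (examples containing a $\cQ$ component) drops below $\veps_2'$. In both phases I use the neural-tangent-kernel convex-surrogate machinery from Section~\ref{sec:llr_proofs} (in particular Theorem~\ref{thm:convex-surrogate}), but with a different reference classifier in each phase.

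\textbf{Phase I (fitting $\cQ$ on $\setsfour$).} Under the small learning rate $\eta_2 \le \eta_1 d^{-c}$, the noise iterate $\tilU_t$ has marginal distribution matching the initialization, and the effective stationary magnitude is at most $O(\tau_0)$ (by the same argument as in Section~\ref{sec:decouple_prelim}, with the noise level only \emph{decreasing} as we switch from $\eta_1$ to $\eta_2$). Hence the activation patterns of $U_t$ are well-approximated by those of $\tilU_t$ while $\|\barU_t\|_F$ remains polynomially bounded. I then apply the convex surrogate framework against a reference matrix $V^\star$ in the $V$-block that memorizes $\cQ$ with margin $\log(1/\veps_2')$, built out of relu neurons aligned with $z, z+\zeta, z-\zeta$. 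Because these three vectors differ by a vector of norm $\|\zeta\| = r$, the natural reference norm is $\|V^\star\|_F^2 = \tilO(1/(\veps_2'^2 r))$. Plugging this into the convex-surrogate convergence rate yields $t' = \tilO(1/(\eta_2 \veps_2'^3 r))$ iterations to achieve $\hatL_{\setsfour}(U_{t'}) \le \veps_2'$, mirroring the argument of Lemma~\ref{lem:dlr_r} with the difference that $W$ is freshly initialized rather than pre-trained.

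\textbf{Phase II (fitting $\cP$-only examples in $\setsthree$).} After time $t'$, the remaining contribution to $\hatL$ comes from the $|\setsthree| = pN$ examples with $x_2 = 0$, for which only $W_t$ matters. I again invoke the convex-surrogate framework, now with reference $W^\star$ realized as a single relu neuron aligned with the ground-truth separator $w^\star$. Since $\cP$ is separable only with margin $\gamma_0 = 1/\sqrt{d}$, scaling this neuron up to produce loss $\le \veps_2'$ (averaged over all $N$ examples) requires $\|W^\star\|_F^2 = \tilO(d)$; under Assumption~\ref{ass:params} this equals $\tilO(Np)$ since $Np = \Theta(d)$. The convex-surrogate rate then gives an additional $\tilO(Np/(\eta_2 \veps_2'))$ iterations to bring $\hatL(U_t)$ below $\veps_2'$. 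Summing the signal-norm growth from the two phases yields $\|\barU_t\|_F^2 = \tilO(1/(\veps_2'^2 r) + Np)$, matching the claimed bound.

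The main obstacle, I expect, is handling the coupling between $W$ and $V$ updates across the two phases. During Phase I the reference classifier is pure-$V$, yet the actual SGD trajectory accumulates $W$-updates from mixed examples in $\setsone \cap \setsfour$; I would control these by noting that once $\hatL_{\setsfour}$ falls below $\veps_2'$, the per-example logistic derivative is $O(\veps_2')$, so the induced drift of $\barW$ is negligible relative to the Phase II norm bound. Symmetrically, in Phase II $V$ continues to receive gradient contributions, but these are bounded by the fact that examples in $\setsfour$ already have small per-example loss, so they cannot destroy the $\cQ$-memorization built up in Phase I. A secondary technical point is verifying the NTK-style activation-pattern stability under the new noise scale $\eta_2 \tau_\xi$ rather than $\eta_1 \tau_\xi$; this is strictly easier than the Phase I analysis of Algorithm~\ls{} and follows by rerunning the coupling arguments from Section~\ref{sec:decouple_prelim}.
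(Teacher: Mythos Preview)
Your two-phase decomposition is workable, but the paper takes a more direct route that avoids the inter-phase coupling you flag as the ``main obstacle.'' Rather than splitting the trajectory at $t'$ and tracking what each block learns in each phase, the paper builds a \emph{single} reference $U^\star = (W^\star, V^\star)$ that is valid for every iterate $t \le 1/(\eta_2\lambda)$ simultaneously (Lemma~\ref{lem:33}), and then invokes Theorem~\ref{thm:convex-surrogate} once from the initialization with $R^2 = \tilO(1/(\veps_2'^2 r) + Np)$. This immediately yields both the time bound $t = R^2/(\eta_2\veps_2')$ and the norm bound $\|\barU_t\|_F^2 \le R^2$, with no need to argue that Phase~II leaves the $V$-block intact or that Phase~I's $W$-drift is negligible. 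Your activation-stability and cross-block drift arguments are real work that the single-target construction sidesteps entirely.

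There is also a difference in the $W^\star$ construction. You use the ground-truth direction $w^\star$, which has margin $\gamma_0 = 1/\sqrt{d}$ on \emph{all} of $\setsone$ and hence $\|W^\star\|_F^2 = \tilO(d)$. The paper instead uses the minimum-norm interpolator $s = X(X^\top X)^{-1} y^\top$ on the $pN$ examples in $\setsthree$ alone, giving $\|s\|_2 = O(\sqrt{Np})$ and hence $\|W^\star\|_F^2 = \tilO(Np)$. Under Assumption~\ref{ass:params} these are the same order (since $Np \asymp d$), so the final bounds coincide; your choice is arguably more natural but the paper's interpolator makes the $Np$ dependence explicit without invoking the relation $Np \asymp d$. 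Either choice slots into the single-target argument equally well, so the principal simplification available to you is to combine $W^\star$ and $V^\star$ into one reference from the start.
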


\paragraph{Lower bound on the generalization error}
The following important lemma states that our classifier for $\cP$ does not learn much from the examples in $\setsfour$. Intuitively, under a small learning rate, the classifier will already learn so quickly from the $\cQ$ component of these examples that it will not learn from the $\cP$ component of examples in $\setsone \cap \setsfour$. We make this precise by showing that the magnitude of the gradients on $\setsfour$ is small.
\begin{lemma}\label{lem:slr_e1} In the setting of theorem~\ref{thm:2}, 
let \begin{align}
\barW_t^{(2)} =\frac{1}{N} \eta_2 \sum_{s \leq t} (1 - \eta_2 \lambda)^{t - s}  \sum_{i \in \setsfour} \nabla_{W} \hatL_{\{i\}}(U_s)
\label{eq:W_t_4_def}
\end{align} 
be the (accumulated) gradient of the weight $W$, restricted to the subset $\setsfour$. Then, for every $t = O\left( d/\eta_2 \veps_2' \right)$,  we have:
$
    \left\| \barW_t^{(2)}  \right\|_F \leq   \tilde{O}\left(d^{15/32}/\veps_2'^2\right)
$.
For notation simplicity, we will define $\veps_3 = d^{-1/32} \frac{1}{\veps_2'^2}$. Then, $    \left\| \barW_t^{(2)}  \right\|_F \leq   \tilde{O}\left(\sqrt{d } \veps_3 \right)$.
\end{lemma}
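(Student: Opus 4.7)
The plan is to bound $\|\barW_t^{(2)}\|_F$ by relating the accumulated $W$-gradient restricted to $\setsfour$ to the loss trajectory on those examples, and then invoking Lemma~\ref{lem:slr_l} to control that trajectory. Unrolling~\eqref{eq:W_t_4_def}, the first observation is that because $f$ depends on $W$ only through $r(x_1) = w^\top(\si{Wx_1}\odot Wx_1)$ and $x_1^{(i)} = 0$ for every $i \in \setstwo$, the term $\nabla_W \hatL_{\{i\}}(U_s)$ vanishes for $i \in \setstwo \cap \setsfour$; only the ``both-pattern'' examples $i \in \setsone \cap \setsfour$ contribute. By the chain rule each surviving gradient factors as $\ell'(y^{(i)} f(U_s; x^{(i)})) \cdot y^{(i)} \cdot \nabla_W r(W_s; x_1^{(i)})$.

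Next, I would bound the Frobenius norm of each summand using the standard logistic inequality $|\ell'(z)| \leq \ell(z)$ together with the explicit form $[\nabla_W r]_j = u_j \mathbbm{1}[[W_s]_j x_1^{(i)} \geq 0]\cdot x_1^{(i)\top}$ and the $O(1)$ bounds on $\|u\|$ and $\|x_1^{(i)}\|$, giving per-example magnitude $\lesssim \ell_{\{i\}}(U_s)$. Summing over $i \in \setsone \cap \setsfour$ gives $\lesssim |\setsfour| \cdot \hatL_{\setsfour}(U_s)$, so the crude triangle-inequality bound reads $\|\barW_t^{(2)}\|_F \lesssim \eta_2 \sum_{s \leq t} \hatL_{\setsfour}(U_s)$. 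From Lemma~\ref{lem:slr_l} there is a time $t' = \tilO(1/(\eta_2 \veps_2'^3 r))$ by which $\hatL_{\setsfour} \leq \veps_2'$; combined with an argument that the loss does not climb back up in the approximately convex NTK-like regime (one shows $V$-updates continue to drive $\hatL_{\setsfour}$ down while the $W$-signal added per step is small), I would split the time horizon at $t'$ and use $\hatL_{\setsfour}(U_s) = O(1)$ for $s \leq t'$ versus $\hatL_{\setsfour}(U_s) = O(\veps_2')$ for $s > t'$, with the second phase controlled by the total time budget from Lemma~\ref{lem:slr_l}.

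The hard part, and the source of the $d^{-1/32}\sqrt{d}$ improvement over the naive $|\setsone \cap \setsfour| \sim N$ blowup, is exploiting cancellation in the inner sum of gradients. Row by row, $\sum_i c_i^{(s)} \nabla_W r(W_s; x_1^{(i)})$ has the form $u_j \sum_i c_i^{(s)} \mathbbm{1}[[W_s]_j x_1^{(i)} \geq 0]\, x_1^{(i)\top}$, a weighted sum of approximately independent half-Gaussian vectors in $\R^d$; by concentration its norm scales like $\sqrt{|\setsone \cap \setsfour|/d}\cdot\max_i|c_i^{(s)}|$ rather than $|\setsone \cap \setsfour|\cdot\max_i|c_i^{(s)}|$, yielding a $\sqrt{d/N}$ saving per step. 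The main obstacle is rigorously establishing this concentration: the activation indicators $\mathbbm{1}[[W_s]_j x_1^{(i)} \geq 0]$ and the coefficients $c_i^{(s)}$ are both $U_s$-dependent and thus correlated with the $x_1^{(i)}$'s. The signal--noise decoupling $U_s = \barU_s + \tilU_s$ from~\eqref{eqn:signal} is what saves us: in the over-parameterized regime with stabilized noise, one shows the activation patterns are primarily determined by the large, data-independent noise $\tilW_s$, so (up to negligible perturbations) the remaining randomness in $x_1^{(i)}$ is free, and the $c_i^{(s)}$ can be bounded by the per-example loss derivatives from Step~2. Carefully combining this concentration bound with the time-integrated loss bound, while tracking the $r$ and $\lambda$ dependencies mandated by Assumption~\ref{ass:params}, yields the stated $\tilO(d^{15/32}/\veps_2'^2)$.
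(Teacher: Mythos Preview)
Your per-step concentration idea is on the right track and is essentially what the paper does: it bounds the $W$-gradient on $\setsfour$ at a single step by
\[
\Big\|\tfrac{1}{N}\sum_{j\in\setsfour}\nabla_W\hatL_{\{j\}}(U_s)\Big\|_F^2
\;\le\;\tfrac{1}{N^2}\|X\|_2^2\sum_{j\in\setsfour}(\ell'_{j,s})^2
\;\lesssim\;\tfrac{1}{d}\cdot\rho_s,
\]
using $\|X\|_2^2\lesssim N/d$ and $|\ell'|\le 1$, where $\rho_s=\tfrac{1}{N}\sum_{j\in\setsfour}|\ell'_{j,s}|$. So your ``half-Gaussian concentration'' is really just the data spectral norm, and no decoupling of activation patterns is needed for this step.

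The real gap is in how you control the time integral. You split at $t'=\tilO(1/(\eta_2\veps_2'^3 r))$ from Lemma~\ref{lem:slr_l} and then assert that $\hatL_{\setsfour}$ ``does not climb back up.'' The paper neither proves nor uses any such monotonicity; with additive noise $\xi_t$ and the ongoing $W$-updates, the loss on $\setsfour$ can fluctuate. Even setting that aside, your first phase alone contributes $\eta_2 t'/\sqrt{d}=\tilO(d^{1/4}/\veps_2'^3)$, which exceeds the target $d^{15/32}/\veps_2'^2$ unless $\veps_2'\ge d^{-7/32}$ --- a constraint the paper does not impose.

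The paper's actual mechanism is the piece you are missing: it exploits the \emph{low-noise structure of $\cQ$} to force the $V$-gradient to be large whenever $\rho_s$ is large. Concretely, Lemma~\ref{lem:aux_1} shows $\|\nabla\hatL(U_s)\|_F^2\gtrsim r\rho_s^4$ (since all $x_2$'s lie on three rays, the geometry-of-ReLU Lemma~\ref{lem:geo} prevents cancellation in $\nabla_V$). Proposition~\ref{prop:iteration} then runs a potential argument on $\hatL(\cF_s)$ with $\cF_s=N_{U_0}(u,\barU_s;\cdot)$ to conclude $\eta_2\sum_s\|\nabla\hatL(U_s)\|_F^2\lesssim 1$, hence at most $O(1/(r\veps_2'^8\veps_3^8\eta_2))$ iterations have $\rho_s\ge\veps_2'^2\veps_3^2$. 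The final bound splits the sum $\eta_2\sum_s\sqrt{\rho_s/d}$ over this ``bad'' set $\mathcal{T}$ (using $\rho_s\le 1$) and its complement (using $\rho_s\le\veps_2'^2\veps_3^2$), and both pieces land at $\tilO(\sqrt{d}\,\veps_3)$. Your outline never touches the $V$-side and so cannot access the bound on $|\mathcal{T}|$ that makes the argument close.
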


The above lemma implies that $W$ does not learn much from examples in $\setsfour$, and therefore must overfit to the $pN$ examples in $\setsthree$. As $pN \le d/2$ by our choice of parameters, we will not have enough samples to learn the $d$-dimensional distribution $\cP$. The following lemma formalizes the intuition that the margin will be poor on samples from $\cP$. 
\begin{lemma}\label{lem:slr_e2}
There exists $\alpha \in \mathbb{R}^d$ such that $\alpha \in \text{span}\{x_1^{(i)} \}_{i \in \setsthree}$ and $\|\alpha\|_2 = \tilde{\Omega}(\sqrt{Np})$ such that w.h.p. over a randomly chosen $x_1$, we have that 
\begin{align}
\label{eq:slr_e2:1}
r_t(x_1) - r_t(-x_1) =2 \langle \alpha, x_1 \rangle  \pm  \tilde{O}\left( \veps_3\right)
\end{align}
\end{lemma}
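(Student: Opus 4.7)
The key identity to exploit is the ReLU symmetry $\text{ReLU}(z) - \text{ReLU}(-z) = z$, which applied coordinatewise yields
\[
r_t(x_1) - r_t(-x_1) \;=\; w^\top\!\big(\text{ReLU}(W_t x_1) - \text{ReLU}(-W_t x_1)\big) \;=\; w^\top W_t x_1 \;=\; \langle W_t^\top w, x_1\rangle.
\]
So the lemma reduces to showing that $W_t^\top w$ can be written as $2\alpha + (\text{error negligible after inner-producting with } x_1)$, with $\alpha$ having the claimed span and norm.

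\textbf{Decomposition and span.} I would split $W_t = \barW_t^{(1)} + \barW_t^{(2)} + \tilW_t$, where $\barW_t^{(1)}$ is the accumulated signal gradient coming only from examples $i \in \setsthree$, $\barW_t^{(2)}$ is the analogous term for $\setsfour$ (note $\nabla_W\hatL_{\{i\}}=0$ for $i\in\setstwo$ since $x_1^{(i)}=0$), and $\tilW_t$ is the stabilized noise/initialization term. Set $\alpha := \tfrac{1}{2}(\barW_t^{(1)})^\top w$. Differentiating $w^\top \text{ReLU}(W x_1^{(i)})$ in $W$ produces a matrix whose $j$-th row is a scalar multiple of $(x_1^{(i)})^\top$; summed and weighted over iterations and indices in $\setsthree$, every row of $\barW_t^{(1)}$ lies in $\text{span}\{x_1^{(i)}\}_{i\in\setsthree}$, hence so does $\alpha$.

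\textbf{Error bounds.} For the $\setsfour$-signal part, Lemma~\ref{lem:slr_e1} gives $\|\barW_t^{(2)}\|_F \le \tilde{O}(\sqrt{d}\,\veps_3)$, so $\|(\barW_t^{(2)})^\top w\|_2 \le \tilde{O}(\|w\|_2\sqrt{d}\,\veps_3)$. Random $x_1$ is a shifted isotropic Gaussian with effective coordinate scale $1/\sqrt{d}$, so concentration of $\langle \cdot, x_1\rangle$ provides an extra $1/\sqrt{d}$ factor and yields $|\langle (\barW_t^{(2)})^\top w, x_1\rangle| = \tilde{O}(\veps_3)$ w.h.p. For the noise term, Assumption~\ref{ass:lr} ensures $\tilW_t$ has the same Gaussian marginal as $U_0$, with variance $\tau_0^2 = 1/\poly(d)$; thus $\tilW_t^\top w$ is a mean-zero Gaussian of negligible norm and its inner product with a bounded $x_1$ is easily absorbed into the $\tilde{O}(\veps_3)$ slack.

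\textbf{Norm lower bound --- the main obstacle.} To obtain $\|\alpha\|_2 = \tilde{\Omega}(\sqrt{Np})$, I would use the training-loss bound from Lemma~\ref{lem:slr_l}: on $\setsthree$ we have $f(U_t; x^{(i)}) = r_t(x_1^{(i)})$, and $\sum_{i\in\setsthree}\ell_i \le N\veps_2'$ forces $y^{(i)} r_t(x_1^{(i)}) \gtrsim \log(p/\veps_2')$ for a constant fraction of $i \in \setsthree$ by a Markov-type argument. Writing the gradient identity $(\nabla_W\hatL_{\{i\}}(U_s))^\top w = B_{i,s}\, x_1^{(i)}$ with an explicit scalar $B_{i,s}$, we get $(\barW_t^{(1)})^\top w = \sum_{i\in\setsthree} B_i\, x_1^{(i)}$ for accumulated scalars $B_i$. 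By standard concentration, $\{x_1^{(i)}\}_{i\in\setsthree}$ are $\tilde{\Omega}(1)$-near-orthonormal since $Np \ll d$; so the Gram matrix is close to identity and $\|\alpha\|_2^2 \gtrsim \sum_i B_i^2$. The hard step is turning the per-example margin $y^{(i)} r_t(x_1^{(i)}) \gtrsim \log(1/\veps_2')$ into a lower bound on $|B_i|$: the difficulty is that the margin constrains $r_t(x_1^{(i)})$, not directly the odd part $\langle \alpha, x_1^{(i)}\rangle$. I would handle this either (a) by controlling the even part $r_t(x_1) + r_t(-x_1) = w^\top |W_t x_1|$ using the Frobenius norm bound on $\barW_t$ from Lemma~\ref{lem:slr_l} plus the smallness of $\tilW_t$, or (b) by directly lower bounding $|B_i|$ from the sign and magnitude of the logistic loss derivative at each step $s$ where example $i$'s margin is not yet saturated, exploiting that $-\ell'(z) = 1/(1+e^z)$ stays $\gtrsim \veps_2'$ whenever the example's margin is $\le \log(1/\veps_2')$.
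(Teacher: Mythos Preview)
Your use of the exact identity $\text{ReLU}(z)-\text{ReLU}(-z)=z$ is a genuinely cleaner way to obtain the main display: it gives $r_t(x_1)-r_t(-x_1)=\langle W_t^\top w,x_1\rangle$ with no approximation, and then your decomposition $W_t^\top w = 2\alpha + (\barW_t^{(2)})^\top w + \tilW_t^\top w$ together with Lemma~\ref{lem:slr_e1} and the Gaussian scale of a fresh $x_1$ yields the $\tilde O(\veps_3)$ error exactly as you sketch. The paper instead routes through the coupling to $N_{W_0}(w,\barW_t;x_1)$, the decomposition of Lemma~\ref{lem:decompose}, and the oddness of the $\arccos$ kernel; your route avoids all of that for the odd part. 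So for equation~\eqref{eq:slr_e2:1} and the span claim, your approach is correct and more elementary.

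The genuine gap is the lower bound $\|\alpha\|_2=\tilde\Omega(\sqrt{Np})$, and neither (a) nor (b) closes it as stated. For (a): a crude Frobenius bound only gives $|w^\top|W_t x_1^{(i)}||\le \|\barW_t\|_F\cdot\|x_1^{(i)}\|_2=\tilde O(\sqrt{d})$, which swamps the $O(1)$ margin signal. The even part is small only because of sign cancellation in $w_j$, but since each row $[\barW_t]_j$ itself carries a $w_j$ factor (via the gradient), the relevant quantity becomes $\tfrac{1}{m}\sum_j \sign([W_0]_j x_1^{(i)})\,\si{[W_0]_j x_1^{(k)}}\approx \tfrac{\pi-2\theta_{ik}}{2\pi}$, which is precisely the $\arccos$ computation you were trying to avoid. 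For (b): even granting the non-rigorous step that $|\ell'_{i,s}|\gtrsim\veps_2'$ for $\gtrsim\log(1/\veps_2')/\eta_2$ many $s$, the accumulated per-example coefficient is only of order $\veps_2'\log(1/\veps_2')/N$, far short of what is needed.

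The paper's argument for the lower bound is a static one that does not track the dynamics. It uses Lemma~\ref{lem:decompose} to write $r_t(x_1^{(i)})\approx\mathfrak U(x_1^{(i)})$ and then Taylor-expands the $\arccos$ kernel to get $|\mathfrak U(x_1^{(i)})|\le |\langle\alpha,x_1^{(i)}\rangle|+\tfrac32|\alpha_i|+\tilde O(d^{-1}\sum_k|\alpha_k|)$. Combined with the training-loss constraint $\tfrac{1}{|\setsthree|}\sum_i|\mathfrak U(x_1^{(i)})|\ge 1$, a dichotomy follows: either $\sum_i|\langle\alpha,x_1^{(i)}\rangle|\gtrsim Np$ (whence $\|\alpha\|_2\gtrsim\sqrt{Np}$ via the spectral bound on the data matrix) or $\sum_i|\alpha_i|\gtrsim Np$ (whence the same via near-orthonormality). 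The point is that bounding the \emph{full} prediction on training points, not just its odd part, is what unlocks the lower bound; your ReLU identity gives you the odd part for free but sheds no light on the even part, and that is where the kernel computation is actually doing work.
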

As the margin is poor, the predictions will be heavily influenced by noise. We use this intuition to prove the classification lower bound for Theorem~\ref{thm:2}.

\section{Proof Sketches}
\subsection{Proof Sketches for Large Learning Rate}\label{sec:llr_proof_sketch}
We first introduce notations that will be useful in these proofs. We will explicitly decouple the noise in the weights from the signal by abstracting the loss as a function of only the signal portion $\barU_t$ of the weights. Let us define the following:
\begin{align}
f_t(B; x) = N_{U_t}(u,B+\tilU_t;x)\label{eqn:101}
\end{align}

Moreover, we define
\begin{align}
K_t(B) \triangleq \frac{1}{N}\sum_{i=1}^N\ell(f_t(B; \cdot); (x^{(i)},y^{(i)})) \label{eqn:def-K}
\end{align}

By definition, we know that 
\begin{align}
& L_t = \widehat{L}(U_t)= K_t(\barU_t)
\\
& \nabla_U \widehat{L}(U_t) = \nabla K_t(\barU_t)
\end{align}

Now the proof of Lemma~\ref{lem:llr_gg}~relies on the following two results, which we state below and prove in Section~\ref{subsec:llr_gg_proofs}. The first says that there is a common target for the signal part of the network that is a good solution for all of the $K_t$. 
\begin{lemma}\label{lem:2}
	In the setting of Lemma~\ref{lem:llr_gg}, there exists a solution $U^\star$ satisfying a) $\| U^\star\|_F^2 \leq {O}\left(d \log^2 \frac{1}{\veps_1} \right)$ and b) for every $t\ge 0$
	\begin{align}
	K_t(U^\star) \le q\log 2+\epsilon_1/2
	\end{align}
	\end{lemma}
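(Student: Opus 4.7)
The plan is to construct $U^\star$ explicitly as a ``teacher'' matrix aligned with the direction $w^\star$ of the linearly separable $\cP$-component, with each row scaled by the corresponding fixed second-layer weight. Specifically, I will take $[U^\star]_i = c\, u_i\, (w^\star, \vec{0}_d) \in \R^{2d}$ for each row $i \in [m]$, where $c = \Theta(\sqrt{d}\,\log(1/\veps_1))$; the last $d$ coordinates are zero so that $U^\star$ operates only on the $x_1$-subspace, leaving the $\cQ$-component untouched. The norm bound (a) falls out immediately: $\|U^\star\|_F^2 = c^2\|u\|_2^2\|w^\star\|_2^2 = O(d\log^2(1/\veps_1))$, using that $\|u\|_2 = \Theta(1)$ under the standard second-layer initialization.

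To verify part (b), unfolding~\eqref{eqn:101} at $B = U^\star$ yields the clean decomposition
\begin{align*}
f_t(U^\star; x) \;=\; c\,(w^{\star\top}x_1)\cdot S_t(x) \;+\; N_t(x),
\end{align*}
where $S_t(x) \triangleq \sum_j u_j^2\,\mathbbm{1}\{[U_t x]_j \ge 0\}$ is a nonnegative activation count weighted by $u_j^2$, and $N_t(x) \triangleq \sum_j u_j\,\mathbbm{1}\{[U_t x]_j \ge 0\}\,[\tilU_t]_j x$ is a (roughly) zero-mean noise term arising from $\tilU_t$. For $i \in \setsone$, the definition of $\cP_{\pm 1}$ guarantees $y^{(i)} w^{\star\top}x_1^{(i)} \ge \gamma_0 = 1/\sqrt{d}$, so it suffices to show $S_t(x^{(i)}) \ge \tfrac14\|u\|_2^2$ and $|N_t(x^{(i)})| \le \veps_1/4$, which yields margin at least $\log(4/\veps_1)$ and hence logistic loss at most $\veps_1/4$. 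For $i \in \setstwo$, the signal vanishes and I only need $|f_t(U^\star; x^{(i)})| = |N_t(x^{(i)})| \le \veps_1/4$, giving loss at most $\log 2 + \veps_1/4$. Summing over the two subsets then gives $K_t(U^\star) \le q\log 2 + \veps_1/2$ as required.

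The substantive work is the two pointwise estimates on $S_t$ and $N_t$, which I would establish via an NTK-style coupling in the spirit of~\citep{als18dnn}. By Assumption~\ref{ass:lr}, $\tilU_t$ has the fixed marginal Gaussian distribution $\mathcal{N}(0,\tau_0^2 I)$ at every iteration, so each $[\tilU_t]_j x$ is marginally $\mathcal{N}(0,\tau_0^2\|x\|_2^2)$. Assuming the inductive control $\|\barU_t\|_F \le \widetilde O(\sqrt{d})$ carried by the bootstrap inside the proof of Lemma~\ref{lem:llr_gg}, the per-neuron drift $|[\barU_t]_j x|$ is at most $\widetilde O(\|\barU_t\|_F/\sqrt{m})\,\|x\|_2$, which is much smaller than the typical scale $\tau_0\|x\|_2$ of $[\tilU_t]_j x$ under the massive over-parametrization of Assumption~\ref{ass:over}. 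A standard argument then shows $\mathbbm{1}\{[U_t x]_j \ge 0\}$ differs from $\mathbbm{1}\{[\tilU_t x]_j \ge 0\}$ on only a tiny fraction of rows. After this coupling, $S_t(x)$ concentrates around $\tfrac12\|u\|_2^2$ by a Bernstein bound applied to the $m$ independent Gaussian gates, and $N_t(x) \approx u^\top \sigma(\tilU_t x)\cdot\tilU_t x / \|\tilU_t x\|$-type random-feature output is bounded by $O(\tau_0\|u\|_2\|x\|_2\sqrt{\log m}) \ll \veps_1$ since $\tau_0 = 1/\poly(d/\veps)$.

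The main obstacle is that the bound $K_t(U^\star) \le q\log 2 + \veps_1/2$ must hold \emph{uniformly in $t$}, whereas the $\{\tilU_t\}$ across iterations are correlated Gaussians (they share $U_0$ and overlapping noise increments $\xi_s$), so a naive union bound over $t$ is unavailable. I would handle this by combining a union bound over the polynomial window $t \le \widetilde O(d/(\eta_1\veps_1))$ that is actually needed inside Lemma~\ref{lem:llr_gg} with the observation that at every fixed $t$ the good events depend only on the (time-invariant) marginal distribution of $\tilU_t$, together with the pathwise bootstrap of $\|\barU_t\|_F$ above. Beyond this uniformity issue, the construction of $U^\star$ and the final loss computation are essentially direct calculations, and the coupling step is the only place where the over-parametrization from Assumption~\ref{ass:over} is genuinely needed.
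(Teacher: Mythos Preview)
Your proposal is essentially the paper's proof: the same construction $[U^\star]_i = c\,u_i\,(w^\star,\vec 0)$ with $c=\Theta(\sqrt d\log(1/\veps_1))$, the same splitting of $f_t(U^\star;x)$ into a signal term $c\,(w^{\star\top}x_1)\,S_t(x)$ and a noise term $N_t(x)=N_{U_t}(u,\tilU_t;x)$, and the same coupling to $\tilU_t$ to show $S_t\approx\tfrac12$ and $|N_t|\lesssim\tau_0\log d$. One small simplification the paper makes that you should adopt: instead of invoking the bootstrap bound $\|\barU_t\|_F=\widetilde O(\sqrt d)$ from inside Lemma~\ref{lem:llr_gg} (which is logically awkward, since the present lemma is an input to that proof), use the unconditional a~priori bound $\|\barU_t\|_F\le 1/\lambda$ from Proposition~\ref{prop:normU}; this is exactly what the coupling constant $\veps_s$ is calibrated to, so Lemma~\ref{lem:coupling} and Proposition~\ref{prop:coupling_tilde} apply directly without any induction.
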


Now the second statement is a general one proving that gradient descent on a sequence of convex, but changing, functions will still find a optimum provided these functions share the same solution.
\begin{theorem}\label{thm:convex-surrogate}
	Suppose $K_1,\dots, K_T:\R^d\rightarrow \R^*$ is a sequence of differentiable convex functions satisfying
	\begin{enumerate}
		\item $\exists z^\star$ and a constant $c^\star\in \R^*$ such that $K_t(z^\star) \le c^\star, \forall t =1,\dots, T$, and that $\|z_0-z^\star\|_2\le R$, $\|z^\star\|_2 \le R$. 
		\item $K_t$'s are $L$-Lipschitz, i.e., $\|\nabla K_t(z)\|_2\le L, \forall z, t$
	\end{enumerate}
	Let $K_t^\lambda(z) \triangleq K_t(z) + \frac{\lambda}{2} \norm{z}_2^2$. 
	Consider the following iterative algorithm that starts from $z_0\in \R^d$, 
	\begin{align}
	\forall t \ge 0, ~~			z_{t+1} = z_t - \eta \nabla K_t^\lambda(z_t)
	\end{align}
	For every $\mu > 0$, we have that for $\lambda R^2 \leq \frac{1}{100} \mu$ and $\eta \leq \frac{\mu}{100(\lambda^2 R^2  + L^2)}$, $\eta T > \frac{R^2}{\mu}$, there is a $t^\star \in [T]$ such that:
	\begin{align}
	K_{t^\star}(z_{t^\star}) \le c^\star + \mu
	\end{align}
	Furthermore, the iterates satisfy $\|z_t - z^\star\|_2 \le R$ for all $t \le t^\star$.
\end{theorem}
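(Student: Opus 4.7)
} The plan is a standard online-convex-optimization style potential argument, using $\Phi_t \triangleq \|z_t - z^\star\|_2^2$ as the potential. The key idea is that whenever the current iterate is still $\mu$-suboptimal in the sense $K_t(z_t) > c^\star + \mu$, one update of (regularized) gradient descent strictly decreases $\Phi_t$ by an amount proportional to $\eta \mu$. Since $\Phi_0 \le R^2$ by assumption, this can happen at most $O\!\left(\frac{R^2}{\eta \mu}\right)$ times, and the assumption $\eta T > R^2/\mu$ forces some $t^\star \in [T]$ to achieve $K_{t^\star}(z_{t^\star}) \le c^\star + \mu$. As a byproduct, $\Phi_t$ is monotone non-increasing throughout this process, which yields the promised invariant $\|z_t - z^\star\|_2 \le R$ for all $t \le t^\star$.

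More concretely, I would first expand
\begin{align*}
\Phi_{t+1} = \Phi_t - 2\eta \langle \nabla K_t^\lambda(z_t), z_t - z^\star\rangle + \eta^2 \|\nabla K_t^\lambda(z_t)\|_2^2,
\end{align*}
and apply convexity of $K_t^\lambda$ (which follows from convexity of $K_t$ and of $\frac{\lambda}{2}\|\cdot\|_2^2$) to lower bound
$\langle \nabla K_t^\lambda(z_t), z_t - z^\star\rangle \ge K_t^\lambda(z_t) - K_t^\lambda(z^\star)$.
I would then show the following two inequalities under the inductive hypothesis $\Phi_t \le R^2$:
\begin{align*}
K_t^\lambda(z_t) - K_t^\lambda(z^\star) &\ge (K_t(z_t) - c^\star) - \tfrac{\lambda}{2}\|z^\star\|_2^2 \; \ge \; (K_t(z_t) - c^\star) - \tfrac{\mu}{200},\\
\|\nabla K_t^\lambda(z_t)\|_2^2 &\le 2L^2 + 2\lambda^2 \|z_t\|_2^2 \; \le \; 2L^2 + 8\lambda^2 R^2.
\end{align*}
The first uses $\lambda R^2 \le \mu/100$; the second uses the triangle inequality $\|z_t\|_2 \le \|z^\star\|_2 + R \le 2R$ that follows from the inductive hypothesis. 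Substituting the step size bound $\eta \le \mu / [100(\lambda^2 R^2 + L^2)]$ kills the $\eta^2$ term up to a small constant fraction of $\eta\mu$.

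Putting these together, whenever $K_t(z_t) > c^\star + \mu$, one obtains (after arithmetic with the numeric constants)
\begin{align*}
\Phi_{t+1} \le \Phi_t - c\, \eta \mu
\end{align*}
for some absolute constant $c > 0$ (for instance $c = 1.9$ with the constants as written). This simultaneously establishes (i) monotonicity of $\Phi_t$, so the inductive hypothesis $\Phi_t \le R^2$ propagates, giving the claim $\|z_t - z^\star\|_2 \le R$, and (ii) that the suboptimality event can persist for at most $R^2/(c\eta \mu) < T$ iterations, which is contradicted by the assumption $\eta T > R^2/\mu$ unless some $t^\star \le T$ satisfies $K_{t^\star}(z_{t^\star}) \le c^\star + \mu$.

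The step I expect to require the most care is the joint handling of the regularization and the step-size condition. The subtlety is that the bound on $\|\nabla K_t^\lambda(z_t)\|_2^2$ depends on $\|z_t\|_2$, which is only controlled via the very invariant we are trying to inductively maintain. I would therefore phrase the argument as a single induction that simultaneously carries $\Phi_t \le R^2$ and the one-step decrease, so that the inductive hypothesis supplies the bound $\|z_t\|_2 \le 2R$ needed to turn the raw gradient bound $\|\nabla K_t(z_t)\|_2 \le L$ into the effective bound $\|\nabla K_t^\lambda(z_t)\|_2^2 \le 2L^2 + 8\lambda^2 R^2$ that interacts correctly with the assumed step size. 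Everything else is bookkeeping.
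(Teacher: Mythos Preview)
Your proposal is correct and follows essentially the same potential argument as the paper: both track $\|z_t - z^\star\|_2^2$, show it decreases by $\Theta(\eta\mu)$ per step whenever $K_t(z_t) > c^\star + \mu$, and derive a contradiction with $\eta T > R^2/\mu$. The only cosmetic difference is that the paper separates the regularizer explicitly---writing $z_{t+1} = (1-\eta\lambda)z_t - \eta\nabla K_t(z_t)$ and expanding $\|(1-\eta\lambda)(z_t-z^\star) - \eta(\lambda z^\star + \nabla K_t)\|_2^2$, then invoking convexity of $K_t$ alone---whereas you treat $K_t^\lambda$ as a single convex function and apply its first-order convexity inequality directly; your route is slightly cleaner but the two are algebraically equivalent.
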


Combining these two statements leads to the proof of Lemma~\ref{lem:llr_gg}.
\begin{proof}[Proof of Lemma~\ref{lem:llr_gg}]
	We can apply Theorem~\ref{thm:convex-surrogate} with $K_t$ defined in \eqref{eqn:def-K} and $z^\star = U^\star$ defined in Lemma~\ref{lem:2}, using $R = O\left(d \log^2 \frac{1}{\veps_1}\right)$. We note that $\eta_1$ satisfies the conditions of Theorem~\ref{thm:convex-surrogate} by our parameter choices, which completes the proof.
\end{proof}

To prove Lemma~\ref{lem:llr_g}, we will essentially argue in Section~\ref{subsec:llr_g} that the change in activations caused by the noise will prevent the model from learning $\cQ$ with a large learning rate. This is because the examples in $\cQ$ require a very specific configuration of activation patterns to learn correctly, and the noise will prevent the model from maintaining this configuration.

Now after we anneal the learning rate, in order to conclude Lemmas~\ref{lem:dlr_r} and~\ref{lem:dlr_g}, the following must hold: 1) the network learns the $\cQ$ component of the distribution and 2) the network does not forget the $\cP$ component that it previously learned. To prove the latter, we rely on the following lemma stating that the activations do not change much with a small learning rate:
\begin{lemma}\label{lem:aux_2} 
	The activation patterns do not change much after annealing the learning rate: 
	for every $t_0, t \leq \frac{1}{\eta_2 \lambda}$, for any $x$ and for any row $[U_t]_i$ of the weight matrix $U$, we have that 
	\begin{align}
	\|\si{[U_{t_0 + t}]x} - \si{[U_{t_0}]x}\|_1 \lesssim \sqrt{\frac{\eta_2 }{\eta_1} }m + \veps_s m
	\end{align}

	Moreover, for all $i \in [m]$, $\left\|[\barU_{t}]_i \right\|_2 \leq \frac{1}{\lambda \sqrt{m} } $, it holds that w.h.p. for every $x$:
	\begin{align}
	\left|	N_{U_{t_0 + t}} (u, U_{t_0 + t}; x) -	N_{U_{t_0}} (u, \barU_{t_0 + t}; x)\right| \lesssim \frac{1}{\lambda} \times \left(\sqrt{\frac{\eta_2 }{\eta_1} } + \veps_s \right) + \tau_0 \log d
	\end{align}
														\end{lemma}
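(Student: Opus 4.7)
The strategy is to use the signal--noise decomposition $U_t = \barU_t + \tilU_t$ from~\eqref{eqn:signal} and analyze each part separately, exploiting Assumption~\ref{ass:lr}. Solving the recurrence for $\tilU$ during Phase~II gives
\[
\tilU_{t_0+t} \;=\; (1-\eta_2\lambda)^t\,\tilU_{t_0} + \sum_{s=1}^t \eta_2(1-\eta_2\lambda)^{t-s}\xi_{t_0+s-1},
\]
so the freshly-injected Gaussian has per-entry variance at most $\eta_2\tau_\xi^2/(2\lambda) \approx (\eta_2/\eta_1)\tau_0^2$ by the stabilization identity. The ratio $\sqrt{\eta_2/\eta_1}$ is the small parameter that drives both claims.

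\textbf{Activation flips.} Fix a neuron $i$ and input $x$, and set $a_i := [U_{t_0}]_i x$, $b_i := [U_{t_0+t}]_i x$. The decomposition above yields $b_i = \mu_i + (1-\eta_2\lambda)^t\, a_i + Y_i$, where $\mu_i := [\barU_{t_0+t}]_i x - (1-\eta_2\lambda)^t [\barU_{t_0}]_i x$ is a deterministic signal drift and $Y_i$ is an independent Gaussian with variance $\lesssim (\eta_2/\eta_1)\tau_0^2\|x\|^2$. Since $[\tilU_{t_0}]_i x$ is Gaussian with variance $\tau_0^2\|x\|^2$, when $\mu_i = 0$ the pair $(a_i,b_i)$ is bivariate mean-zero Gaussian with correlation $1 - O(\eta_2/\eta_1)$, so the standard $\arccos(\rho)/\pi$ formula gives $P(\mathrm{sign}(a_i)\ne\mathrm{sign}(b_i)) = O(\sqrt{\eta_2/\eta_1})$. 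Summing across the $m$ neurons and applying a Chernoff bound over independent rows yields the $\sqrt{\eta_2/\eta_1}\,m$ contribution. The $\veps_s m$ contribution accounts for additional flips driven by $\mu_i\ne 0$: Gaussian anti-concentration bounds the per-neuron flip probability from this source by $O(|\mu_i|/(\tau_0\|x\|))$, and summing using the per-row budget $\|[\barU]_i\|_2 \le 1/(\lambda\sqrt m)$ produces the total.

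\textbf{Output closeness.} Split the difference as
\[
N_{U_{t_0+t}}(u, U_{t_0+t}; x) - N_{U_{t_0}}(u,\barU_{t_0+t}; x) \;=\; (\mathrm{I}) + (\mathrm{II}),
\]
where $(\mathrm{I}) := u^\top\bigl((\si{U_{t_0+t}x}-\si{U_{t_0}x})\odot U_{t_0+t}x\bigr)$ is the activation-swap error and $(\mathrm{II}) := u^\top\bigl(\si{U_{t_0}x}\odot \tilU_{t_0+t} x\bigr)$ is the residual noise term that was dropped. For $(\mathrm{I})$, I use the pointwise inequality $(\si{b_i}-\si{a_i})\cdot b_i \le |b_i - a_i|$ on flipped neurons, combine this with the flip count from Part~1, the row-norm hypothesis $\|[\barU]_i\|_2 \le 1/(\lambda\sqrt m)$, and $|u_i|=O(1/\sqrt m)$, and apply Cauchy--Schwarz to obtain $|(\mathrm{I})| \lesssim (1/\lambda)(\sqrt{\eta_2/\eta_1}+\veps_s)$. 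For $(\mathrm{II})$, condition on $U_0$ and the noise increments $\xi_s$ (which fixes $\si{U_{t_0}x}$ and $\tilU_{t_0+t} x$), and exploit that $u$ was drawn at initialization independently of everything else with $|u_i|=O(1/\sqrt m)$; Hoeffding's inequality then gives $|(\mathrm{II})| \lesssim \tau_0\log d$ w.h.p., using the standard high-probability bound $\|\tilU_{t_0+t}x\|_2 \lesssim \tau_0\sqrt{m\log d}\,\|x\|$ to control $\sum_i c_i^2$.

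\textbf{Main obstacle.} The delicate step is showing the $\sqrt{\eta_2/\eta_1}$ rate (rather than the naive $O(1)$) for the sign-flip probability. It requires explicitly exploiting that $[\tilU_{t_0}]_i x$ and $[\tilU_{t_0+t}]_i x$ are nearly perfectly correlated Gaussians with correlation $1 - O(\eta_2/\eta_1)$, so that only the small independent fresh-noise piece $Y_i$ can actually drive a sign flip. A related subtlety is that in $(\mathrm{I})$ the \emph{unconditional} change $[\tilU_{t_0+t}]_i x - [\tilU_{t_0}]_i x$ has typical magnitude $\tau_0\|x\|$ rather than $\sqrt{\eta_2/\eta_1}\tau_0\|x\|$; on the small subset of flipped neurons the change is constrained by the flipping event itself, and a careful conditional argument is needed to avoid a loose $\sqrt{\sqrt{\eta_2/\eta_1}+\veps_s}/\lambda$ bound and recover the claimed $(1/\lambda)(\sqrt{\eta_2/\eta_1}+\veps_s)$.
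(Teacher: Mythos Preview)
Your plan for Part~1 (the activation-flip count) is essentially correct and parallels the paper's argument. The paper obtains the $\sqrt{\eta_2/\eta_1}\,m$ and $\veps_s m$ contributions separately via a triangle inequality through the noise-only activations $\si{\tilU_{t_0}x}$ and $\si{\tilU_{t_0+t}x}$: two applications of Lemma~\ref{lem:coupling} give the $\veps_s m$ terms, and the purely-Gaussian middle comparison gives the $\sqrt{\eta_2/\eta_1}\,m$ term. Your direct bivariate-Gaussian approach on $(a_i,b_i)$ is equivalent in spirit, though note your statement that ``when $\mu_i=0$ the pair is mean-zero Gaussian'' is not literally correct (the baseline signal $[\barU_{t_0}]_ix$ is still present); routing through $\tilU$ as the paper does avoids this bookkeeping.

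For Part~2, however, your decomposition creates a real difficulty that the paper's order of operations sidesteps. The paper splits the other way: it first peels off $N_{U_{t_0+t}}(u,\tilU_{t_0+t};x)$ (bounded by $\veps_s/\lambda+\tau_0\log d$ via Proposition~\ref{prop:coupling_tilde}) and only \emph{then} swaps activations with the \emph{signal} weights $\barU_{t_0+t}$. This yields a clean $\frac{1}{\sqrt m}\cdot(\text{\#flips})\cdot\max_i\|[\barU_{t_0+t}]_i\|_2\lesssim(\sqrt{\eta_2/\eta_1}+\veps_s)/\lambda$ for the swap term. In your $(\mathrm{I})$ the swap acts on the full $U_{t_0+t}$, so each flipped neuron contributes $|b_i|$; your conditional observation gives $|b_i|\le|\mu_i|+|Y_i|$, but the $|Y_i|$ piece sums to order $\sqrt{m}\,\tau_0\,\sqrt{\eta_2/\eta_1}\,(\sqrt{\eta_2/\eta_1}+\veps_s)$, which blows up under Assumption~\ref{ass:over} ($m$ may be any sufficiently large polynomial in $d/\tau_0$, so $\sqrt m\,\tau_0$ is not small). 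The ``careful conditional argument'' you allude to does not close this gap. Separately, your Hoeffding argument for $(\mathrm{II})$ is invalid: $\si{U_{t_0}x}$ depends on $\barU_{t_0}$ and hence on $u$ through every past gradient, so conditioning on $U_0$ and the $\xi_s$ does \emph{not} make the summands independent of $u$. The paper handles the analogous quantity by first reducing to $\si{\tilU x}$ and then exploiting that on neurons where $\si{Ux}\neq\si{\tilU x}$ one has $|[\tilU]_ix|\le|[\barU]_ix|$, which keeps everything controlled by the $1/(\lambda\sqrt m)$ row budget rather than the Gaussian scale $\tau_0$.
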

We prove the above lemma in Section~\ref{subsec:lem:aux_2}. Now to complete the proof of Lemma~\ref{lem:dlr_r}, we will construct a target solution for all timesteps after annealing the learning rate based on the activations at time $t_0$ (as they do not change by much in subsequent time steps because of Lemma~\ref{lem:aux_2}) and reapply Theorem~\ref{thm:convex-surrogate}. Finally, to prove Lemma~\ref{lem:dlr_g}, we use the fact that the $W_t$ component of the solution does not change by much, and therefore the loss on $\setsone$ is still low.

\subsection{Proof Sketches for Small Learning Rate}\label{sec:small_lr_proof_sketch}
The proof of Lemma~\ref{lem:slr_l} proceeds similarly as the proof of Lemma~\ref{lem:dlr_r}: we will show the existence of a target solution of $K_t$ for all iterations, and use Theorem~\ref{thm:convex-surrogate}~to prove convergence to this target solution.

\newcommand{\rhot}{\rho_t}
Now to sketch the proof of Lemma~\ref{lem:slr_e1}, we will first define the following notation: define $\ell'_{j, t} = \ell'(-y^{(j)} N_{U_t}(u, U_t; x^{(j)})$ to be the derivative of the loss at time $t$ on example $j$. Let $\rhot$ be the average of the absolute value of the derivative.  
\begin{align}
\label{eq:rho_t}
\rhot = \frac{1}{N}\sum_{j \in \setsfour} \left|\ell'_{i, t}  \right|
\end{align}
The next two statements argue that $\rhot$ can be large only in a limited number of time steps. As the training loss converges quickly with small learning rate, this will be used to argue that the $\cP$ components of examples in $\setsfour$ provide a very limited signal to $W_t$. The proofs of these statements are in Section~\ref{proof:lem:slr_e1}. 

We first show the following lemma that says that if $\rhot$ is large (which means the loss is large as well), then the total gradient norm has to be big. This lemma holds because there is little noise in the $\cQ$ component of the distribution, and therefore the gradient of $V_t$ will be large if $\rhot$ is large.
\begin{lemma}\label{lem:aux_1}
	For every $t \leq \frac{1}{\eta_2 \lambda}$, we have that if $\rhot = \Omega\left( \frac{1}{N} \right)$, then w.h.p.
	\begin{align} 
	\|\nabla \hatL (U_t)\|_F^2 \geq {\Omega}\left( r  \rhot^4 \right) 
	\end{align}
	\end{lemma}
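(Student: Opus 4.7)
My plan is to lower-bound $\|\nabla \hatL(U_t)\|_F$ by a projection (dual-certificate) argument: I will exhibit a direction $V^\star$ with controlled Frobenius norm such that $|\langle V^\star, \nabla \hatL(U_t)\rangle|$ is large whenever $\rho_t$ is not too small. Cauchy--Schwarz then converts this into the desired Frobenius-norm lower bound. It suffices to work with $\|\nabla_V \hatL(U_t)\|_F$, since $\rho_t$ only weights examples in $\setsfour$ (whose $\cQ$-component is nonzero) and only the $V$-block of the gradient sees that component.

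A direct expansion of the gradient gives that for any $V^\star$,
\begin{align*}
\langle V^\star, \nabla_V \hatL(U_t)\rangle \;=\; -\tfrac{1}{N}\sum_{j \in \setsfour} y^{(j)}\, \ell'_{j,t}\, h_{V^\star,t}(x_2^{(j)}),
\end{align*}
where $h_{V^\star,t}(x_2) := \sum_i v_i\, \si{[V_t x_2]_i}\,\langle [V^\star]_i, x_2\rangle$ is the linearization of the $\cQ$-subnetwork at $V_t$, and $\ell'_{j,t} > 0$ since $\ell' = \sigma(\cdot)$. Because of the positive sign, once $V^\star$ is aligned with the labels there will be no cancellation between terms of the sum.

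The heart of the argument is constructing $V^\star$ as an NTK-style dual certificate for $\cQ$: a direction of Frobenius norm $O(1/\sqrt{r})$ whose linearization classifies $\cQ$ with a useful margin. Because $t \le 1/(\eta_2 \lambda)$, the decoupling~\eqref{eqn:signal} and Assumption~\ref{ass:lr} imply $\tilV_t$ retains its Gaussian initialization distribution while $\|\barV_t\|_F$ is controlled by Lemma~\ref{lem:slr_l}. Using activation-stability estimates in the spirit of Lemma~\ref{lem:aux_2} and~\citep{als18dnn}, w.h.p.\ $\si{V_t x_2} \approx \si{\tilV_t x_2}$ for $x_2 \in \supp(\cQ)$, so $h_{V^\star,t}$ is effectively a random-feature predictor. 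Separating the $\cQ$-support $\{z, z+\zeta, z-\zeta\}$ with such random ReLU features requires Frobenius norm $O(1/\sqrt{r})$ per unit margin, since features of width $r$ are needed to resolve $z$ from $z\pm\zeta$ and the associated RKHS norm scales as $1/\sqrt{r}$; the scaling $\alpha^{(j)} \in [0,1]$ is handled by restricting attention to the constant fraction of $\setsfour$ with $\alpha^{(j)}$ bounded away from zero.

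With such $V^\star$ in hand, $|\langle V^\star, \nabla_V \hatL\rangle| \gtrsim \gamma\, \rho_t$ where $\gamma$ is the certified margin, and Cauchy--Schwarz together with $\|V^\star\|_F = O(1/\sqrt{r})$ yields $\|\nabla_V \hatL\|_F^2 \gtrsim r\, \rho_t^2\, \gamma^2$. The stated $r\rho_t^4$ bound then follows on tracking that the effective margin $\gamma$ certifiable by a loss-weighted NTK target scales like $\Theta(\rho_t)$ (intuitively, on examples already predicted with large margin we can only certify a signal proportional to the average derivative $\rho_t$). The main obstacle is precisely this NTK / dual-certificate construction: controlling the Frobenius norm needed to realize the $\cQ$-separating classifier as a combination of random ReLU features (yielding the $1/\sqrt r$ bound) and transferring it from the idealized features $\si{\tilV_t x_2}$ to the true activations $\si{V_t x_2}$ via activation-stability bounds that leverage Lemma~\ref{lem:slr_l}. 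Once this certificate is established, the projection inner product and Cauchy--Schwarz finish the proof.
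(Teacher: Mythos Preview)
Your dual-certificate route is a genuine alternative to the paper's argument, and the overall shape is sound: project $\nabla_V \hatL(U_t)$ onto a direction $V^\star$ that separates $\{z,z\pm\zeta\}$ in the random-ReLU feature space, then Cauchy--Schwarz. The paper instead works ``primally'': it lower-bounds $\|\nabla_V \hatL(U_t)\|_F^2$ directly by invoking a Geometry-of-ReLU lemma (Lemma~\ref{lem:geo}, from \citep{als18dnn}) which says that for Gaussian $g$,
\[
\E_g\Bigl[\bigl\|v_1\,\si{\langle g,z-\zeta\rangle}(z-\zeta)+v_2\,\si{\langle g,z+\zeta\rangle}(z+\zeta)+v_3\,\si{\langle g,z\rangle}z\bigr\|_2^2\Bigr]\gtrsim r\,(v_1^2+v_2^2+v_3^2),
\]
applied row-by-row to the $V$-gradient with activations $\si{\tilV_t x_2}$, then transferred to $\si{V_t x_2}$ by coupling. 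This is the spectral/primal counterpart of your $\|V^\star\|_F=O(1/\sqrt{r})$ certificate; both encode the same fact about the NTK on $\cQ$.

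Where your writeup has a real gap is in the passage from $r\rho_t^2$ to $r\rho_t^4$. You attribute the extra $\rho_t^2$ to ``the effective margin $\gamma$ certifiable by a loss-weighted NTK target scales like $\Theta(\rho_t)$, intuitively because on examples already predicted with large margin we can only certify a signal proportional to $\rho_t$.'' That is not the mechanism. The certificate gives margin $\Theta(\alpha_j)$ on example $j$ (since $x_2^{(j)}=\alpha_j\cdot\text{direction}$ and $h_{V^\star,t}$ is positively homogeneous), so
\[
\bigl|\langle V^\star,\nabla_V\hatL\rangle\bigr|\;\gtrsim\;\frac{1}{N}\sum_{j\in\setsfour}\alpha_j\,|\ell'_{j,t}|.
\]
The danger is that all of the $\rho_t$-mass could sit on examples with $\alpha_j$ near $0$. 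Restricting to a \emph{constant} $\alpha$-threshold, as you suggest, does not help: you could discard $\Theta(N)$ examples and lose all of $N\rho_t$. What works (and what the paper does) is to choose the threshold adaptively as $\alpha_0=\Theta(\rho_t)$: since $\alpha_j\sim U(0,1)$, only $O(N\rho_t)$ examples have $\alpha_j<\alpha_0$, each contributing at most $1$ to $\sum_j|\ell'_{j,t}|$, so $\sum_{j:\alpha_j\ge\alpha_0}|\ell'_{j,t}|\ge N\rho_t-O(N\rho_t)=\Theta(N\rho_t)$, and hence $\frac{1}{N}\sum_j\alpha_j|\ell'_{j,t}|\gtrsim\rho_t\cdot\rho_t=\rho_t^2$. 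Cauchy--Schwarz against $\|V^\star\|_F=O(1/\sqrt{r})$ then gives $\|\nabla_V\hatL\|_F^2\gtrsim r\rho_t^4$. This is also why the hypothesis $\rho_t=\Omega(1/N)$ is needed (so that $\alpha_0\gtrsim 1/N$ and the counting argument on $\alpha_j$ is meaningful). Once you replace your hand-waved $\gamma\sim\rho_t$ by this $\alpha$-thresholding step, your approach goes through.
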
 
Now we use the above lemma to bound the number of times when $\rhot$ is large.
\begin{proposition}\label{prop:iteration}
	In the setting of Lemma~\ref{lem:slr_e1}, let $\set{T}$ be the set of iterations where $\rhot \geq \veps_2'^2 \veps_3^2$, where $\veps_3$ is defined in Lemma~\ref{lem:slr_e1}. Then w.h.p, 
	$
	|\set{T}| \lesssim \frac{1}{r \veps_2'^{8} \veps_3^{8} \eta_2}. 
	$
\end{proposition}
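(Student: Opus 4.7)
The plan is a potential-descent argument: for every iteration $t \in \set{T}$, the training loss must drop by a definite amount $\Omega(\eta_2 r \veps_2'^8 \veps_3^8)$, and since $\hatL$ is bounded in $[0, O(1)]$ (initial logistic loss is $\log 2$ and, by Assumption~\ref{ass:lr}, the stationary noise at initialization only perturbs this by a vanishing amount under Assumption~\ref{ass:over}), the total drop is $O(1)$, which caps $|\set{T}|$ at the desired rate.

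First I would invoke Lemma~\ref{lem:aux_1} to convert the threshold on $\rhot$ into a gradient lower bound. Since $\veps_3 = d^{-1/32}/\veps_2'^2$, the threshold $\veps_2'^2\veps_3^2 = d^{-1/16}/\veps_2'^2$ is easily $\gg 1/N$ under Assumption~\ref{ass:params}, so the hypothesis of Lemma~\ref{lem:aux_1} is satisfied and for every $t \in \set{T}$,
\[
\|\nabla \hatL(U_t)\|_F^2 \;\geq\; \Omega(r\rhot^4) \;\geq\; \Omega(r\veps_2'^8\veps_3^8).
\]
Next, I would establish a per-step descent inequality. The cleanest route is to work with the signal-noise decomposition \eqref{eqn:signal} and the convex surrogate $K_t$ defined in \eqref{eqn:def-K}: the signal update reads $\barU_{t+1} = (1-\eta_2\lambda)\barU_t - \eta_2 \nabla K_t(\barU_t)$, and $\nabla K_t(\barU_t) = \nabla \hatL(U_t)$. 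Because $K_t$ is convex in its argument (the activation pattern $\si{U_t x}$ is frozen inside $K_t$) and the logistic loss is $O(1)$-smooth along the linear direction in $B$, a standard one-step descent gives $K_t(\barU_{t+1}) \le K_t(\barU_t) - \tfrac{\eta_2}{2}\|\nabla K_t(\barU_t)\|_F^2$ once $\eta_2$ is small enough, which is guaranteed by our assumed $\eta_2 \leq \eta_1 d^{-c}$.

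Finally I would telescope $K_{t+1}(\barU_{t+1}) - K_t(\barU_t)$. Summing the one-step bound and rearranging,
\[
\sum_{t \in \set{T}} \tfrac{\eta_2}{2}\|\nabla \hatL(U_t)\|_F^2 \;\leq\; \hatL(U_0) - \hatL(U_T) + \sum_t \bigl(K_{t+1}(\barU_{t+1}) - K_t(\barU_{t+1})\bigr) \;\lesssim\; 1,
\]
which, combined with the lower bound from Step 1, yields $|\set{T}|\cdot \eta_2 r\veps_2'^8\veps_3^8 \lesssim 1$, as required. The main obstacle is controlling the drift term $K_{t+1}(\barU_{t+1}) - K_t(\barU_{t+1})$: this comes from (i) the change in activation patterns $\si{U_{t+1}x} \oplus \si{U_t x}$, controllable by an analog of Lemma~\ref{lem:aux_2} since the learning rate is small and only a $\tilde O(1/(m\tau_0))$-fraction of neurons flip per step under Assumption~\ref{ass:over}, and (ii) the change in the noise component $\tilU_t \to \tilU_{t+1}$, whose marginal is stationary by Assumption~\ref{ass:lr} and whose cumulative effect on $K_t$ concentrates via Gaussian concentration on $\xi_t$. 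Pushing these two ingredients through carefully to conclude that the total drift is $O(1)$ over the time horizon $O(d/(\eta_2\veps_2'))$ is the one genuinely technical piece of the argument; everything else is a direct application of Lemma~\ref{lem:aux_1} and standard convex descent.
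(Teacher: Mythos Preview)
Your high-level strategy matches the paper's: establish a per-step descent bound, combine with Lemma~\ref{lem:aux_1} to get $\eta_2\sum_t\|\nabla\hatL(U_t)\|_F^2 \lesssim 1$, and read off the cardinality bound on $\set{T}$. The difference is in the choice of potential. You track the true loss $K_t(\barU_t)=\hatL(U_t)$, which forces you to control the drift $K_{t+1}(\barU_{t+1})-K_t(\barU_{t+1})$ coming from the change in both the activation pattern $\si{U_t x}\to\si{U_{t+1}x}$ and the noise $\tilU_t\to\tilU_{t+1}$. The paper instead uses the surrogate $\hatL(\cF_s)$ with $\cF_s(x)=N_{U_0}(u,\barU_s;x)$: activations are \emph{frozen at initialization} and the potential sees only the signal $\barU_s$, so there is \emph{no drift term whatsoever}. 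The price is that $\nabla\hatL(\cF_s)\neq\nabla\hatL(U_s)$, but this gap is bounded once and for all by $\veps_c^2=\tfrac{1}{\lambda}(\veps_s+\sqrt{\eta_2/\eta_1})+\tau_0\log d$ via Lemma~\ref{lem:aux_2}, entering the descent inequality as an additive $O(\eta_2\veps_c)$ error per step rather than a quantity that must be telescoped and concentrated.

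Two smaller points where the paper is more careful than your sketch: (i) The regularizer. Your one-step bound ``$K_t(\barU_{t+1})\le K_t(\barU_t)-\tfrac{\eta_2}{2}\|\nabla K_t\|^2$'' ignores the $(1-\eta_2\lambda)$ shrinkage in $\barU_{t+1}=(1-\eta_2\lambda)\barU_t-\eta_2\nabla K_t$, which introduces a cross term $-\eta_2\lambda\langle\nabla K_t,\barU_t\rangle$. This can be handled via convexity (giving an $O(\eta_2\lambda)$ error per step that sums to $O(1)$ over $t\le 1/(\eta_2\lambda)$), but the paper does it more directly using the logistic-loss inequality $\ell((1-\alpha)z)\le(1+\alpha)\ell(z)$ to write $\hatL(\cF_{s+1})\le(1+\eta_2\lambda)\hatL(\cG_{s+1})$. (ii) Your drift argument is plausible---the fresh-noise part is indeed a martingale and the activation flips are rare---but executing it requires a per-step activation-change bound (Lemma~\ref{lem:aux_2} only gives cumulative control from $t_0$ to $t_0+t$) and a careful martingale argument over $O(d/(\eta_2\veps_2'))$ steps. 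The paper's frozen-activation potential bypasses this entirely, which is the cleaner route.
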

Now if $\rho_t$ is small, the gradient accumulated on $W_t$ from examples in $\setsfour$ must be small. We formalize this argument in our proof of Lemma~\ref{lem:slr_e1} in Section~\ref{proof:lem:slr_e1}. 

Lemma~\ref{lem:slr_e2} will then follow by explicitly decomposing $\barW_t$ into a component in $\text{span}\{x_1^{(i)} \}_{i \in \setsthree}$ and some remainder, which is shown to be small by Lemma~\ref{lem:slr_e1}. This is presented in the below lemma, which is proved in Section~\ref{subsec:lem:slr_e2}.

\begin{lemma}\label{lem:decompose}
	There exists real numbers $\{\alpha_k \}_{k \in \setsthree}$ such that for every $j \in [m]$, we have $$[\barW_t]_j = w_j \sum_{k \in \setsthree} \alpha_k x_1^{(k)} \si{[W_0]_j x_1^{(k)} } + [\barW_t']_j$$ with $\| \barW_t'\|_F \leq  \tilO\left( \veps_3\sqrt{d} \right)$.
	%	\end{align} 
\end{lemma}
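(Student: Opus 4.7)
The plan is to split $\barW_t$ according to which training examples produce each gradient update, and then linearize the surviving contribution around the initialization activation pattern. Since $x_1^{(k)} = 0$ for every $k \in \setstwo$, only examples in $\setsone = \setsthree \cup (\setsone \cap \setsfour)$ contribute to updates of $W$. I write $\barW_t = \barW_t^{(1)} + \barW_t^{(2)}$, where $\barW_t^{(2)}$ is the $\setsfour$-restriction defined in~\eqref{eq:W_t_4_def} and $\barW_t^{(1)}$ is the analogous accumulation over $\setsthree$. Lemma~\ref{lem:slr_e1} already delivers $\|\barW_t^{(2)}\|_F \le \tilO(\sqrt{d}\,\veps_3)$, so this piece gets folded directly into the error term $\barW_t'$ of the decomposition.

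For $\barW_t^{(1)}$, unrolling the recursion with $\nabla_{[W]_j} N_W(w,W;x) = w_j\,\si{[W]_j x}\,x$ produces, for each row $j \in [m]$,
\begin{align*}
[\barW_t^{(1)}]_j \;=\; w_j \sum_{k\in\setsthree}\sum_{s\le t} \frac{\eta_2 (1-\eta_2\lambda)^{t-s}}{N}\, y^{(k)} \ell'_{k,s-1}\, x_1^{(k)}\, \si{[W_{s-1}]_j x_1^{(k)}}.
\end{align*}
Defining the $j$-independent scalar $\alpha_k \triangleq \tfrac{1}{N}\sum_{s\le t}\eta_2(1-\eta_2\lambda)^{t-s} y^{(k)}\ell'_{k,s-1}$, this equals $w_j\sum_{k\in\setsthree}\alpha_k x_1^{(k)}\si{[W_0]_j x_1^{(k)}}$ plus an error $[E_t]_j$ consisting only of $(s,k)$ terms for which $\si{[W_{s-1}]_j x_1^{(k)}} \ne \si{[W_0]_j x_1^{(k)}}$. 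Setting $\barW_t' = E_t + \barW_t^{(2)}$ therefore reduces the lemma to the Frobenius bound $\|E_t\|_F \le \tilO(\veps_3\sqrt{d})$.

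The main obstacle is this activation-flip error bound. Two ingredients are required. First, an activation-stability statement parallel to Lemma~\ref{lem:aux_2} but specialized to the small-learning-rate trajectory: because Lemma~\ref{lem:slr_l} bounds $\|\barW_s\|_F$ uniformly along training and the noise component $\tilW_s$ preserves a Gaussian marginal of scale $\tau_0$, the usual anti-concentration argument shows that for each fixed $k \in \setsthree$ and every $s \le t$, the activation $\si{[W_s]_j x_1^{(k)}}$ differs from $\si{[W_0]_j x_1^{(k)}}$ on at most $\veps_{\mathrm{act}} m$ rows, for some small $\veps_{\mathrm{act}}$. Second, $|\alpha_k| \le \tilO(1/(\lambda N))$ follows from $|\ell'|\le 1$ combined with the geometric factor $\sum_s\eta_2(1-\eta_2\lambda)^{t-s} = O(1/\lambda)$.

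Combining these, every pair $(j,k)$ that ever flipped contributes a rank-one vector of $\ell_2$-norm at most $|w_j||\alpha_k|\|x_1^{(k)}\|_2 = \tilO(1/(\lambda N\sqrt{m}))$ to $[E_t]_j$. Squaring and summing over the $\le \veps_{\mathrm{act}} m |\setsthree|$ flipped pairs yields $\|E_t\|_F^2 \le \tilO(\veps_{\mathrm{act}} |\setsthree|/(\lambda^2 N^2))$, and the parameter relations in Assumption~\ref{ass:params} (in particular $\lambda = d^{-5/4}$, $|\setsthree| \approx pN = \Theta(d)$) convert this into the target $\tilO(\veps_3^2 d)$. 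I expect the delicate step to be calibrating $\veps_{\mathrm{act}}$ against the quantitative definition $\veps_3 = d^{-1/32}/\veps_2'^2$; this amounts to re-running the anti-concentration/over-parametrization machinery behind Lemma~\ref{lem:aux_2} for the whole small-LR trajectory rather than only the post-annealing window, with the $\|\barW_s\|_F$ bound from Lemma~\ref{lem:slr_l} taking the place of the annealing-time bound used there.
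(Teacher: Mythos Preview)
Your proposal is correct and follows essentially the same route as the paper: split off $\barW_t^{(2)}$ via Lemma~\ref{lem:slr_e1}, linearize the remaining $\setsthree$-contribution around the initialization's activation pattern, and control the activation-flip error via Lemma~\ref{lem:aux_2} (which already applies directly with $t_0=0$ using only the crude bound $\|\barU_s\|_F\le 1/\lambda$ from Proposition~\ref{prop:normU}, so no re-derivation is needed). One small slip: your ``squaring and summing over flipped pairs'' step tacitly treats the $k$-contributions to $[E_t]_j$ as orthogonal and so is missing a Cauchy--Schwarz factor of $|\setsthree|$, but since $|\setsthree|/N = p = O(1)$ this is harmless for the final bound.
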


This allows us to conclude Lemma~\ref{lem:slr_e2} via computations carried out in Section~\ref{subsec:lem:slr_e2}. 

Finally, to complete the proof of Theorem~\ref{thm:2}, we will argue in Section~\ref{subsec:proof_thm2}~that a classifier $r_t$ of the form given by~\eqref{eq:slr_e2:1} cannot have small generalization error because it will be too heavily influenced by the noise in $x_1$.

\section{Experiments} \label{sec:experiments}
\begin{figure}
	\centering
	\includegraphics[width=0.4\textwidth]{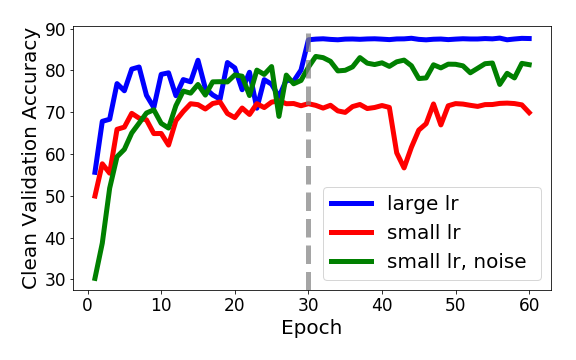}
	\includegraphics[width=0.4\textwidth]{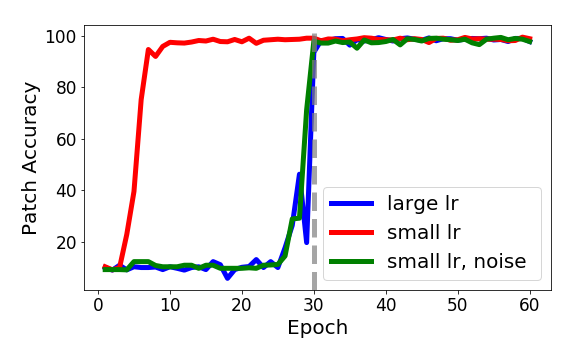}
	\caption{Accuracy vs. epoch on patch-augmented CIFAR-10. The gray line indicates annealing of activation noise and learning rate. \textbf{Left:} Clean validation set. \textbf{Right:} Images containing only the patch.} \label{fig:val_patch_time}
\end{figure}
Our theory suggests that adding noise to the network could be an effective strategy to regularize a small learning rate in practice. We test this empirically by adding small Gaussian noise during training \textit{before} every activation layer in a WideResNet16~\citep{zk16}~architecture, as our analysis highlights pre-activation noise as a key regularization mechanism of SGD. The noise level is annealed over time. We demonstrate on CIFAR-10 images without data augmentation that this regularization can indeed counteract the negative effects of small learning rate, as we report a 4.72\% increase in validation accuracy when adding noise to a small learning rate. Full details are in Section~\ref{subsec:noise_mitigation}. 

We will also empirically demonstrate that the choice of large vs. small initial learning rate can indeed invert the learning order of different example types. We add a memorizable 7 $\times$ 7 pixel patch to a subset of CIFAR-10 images following the scenario presented in~\eqref{eq:informal_split}, such that around 20\% of images have no patch, 16\% of images contain only a patch, and 64\% contain both CIFAR-10 data and patch. We generate the patches so that they are not easily separable, as in our constructed $\cQ$, but they are low in variation and therefore easy to memorize. Precise details on producing the data, including a visualization of the patch, are in Section~\ref{subsec:exp_patch}. We train on the modified dataset using WideResNet16 using 3 methods: large learning rate with annealing at the 30th epoch, small initial learning rate, and small learning rate with noise annealed at the 30th epoch. 

Figure~\ref{fig:val_patch_time}~depicts the validation accuracy vs. epoch on clean (no patch) and patch-only images. From the plots, it is apparent that the small learning rate picks up the signal in the patch very quickly, whereas the other two methods only memorize the patch after annealing. 

From the validation accuracy on clean images, we can deduce that the small learning rate method is indeed learning the CIFAR images using a small fraction of all the available data, as the validation accuracy of a small LR model when training on the full dataset is around 83\%, but the validation on clean data after training with the patch is 70\%. We provide additional arguments in Section~\ref{subsec:exp_patch}. Our code for these experiments is online at the following link: \url{https://github.com/cwein3/large-lr-code}. 

\section{Conclusion}
In this work, we show that the order in which a neural net learns to fit different types of patterns plays a crucial role in generalization. To demonstrate this, we construct a distribution on which models trained with large learning rates generalize provably better than those trained with small learning rates due to learning order. Our analysis reveals that more SGD noise, or larger learning rate, biases the model towards learning ``generalizing'' kernels rather than ``memorizing'' kernels. We confirm on articifially modified CIFAR-10 data that the scale of the learning rate can indeed influence learning order and generalization. Inspired by these findings, we propose a mitigation strategy that injects noise before the activations and works both theoretically for our construction and empirically. The design of better algorithms for regularizing learning order is an exciting question for future work.

\section*{Acknowledgements}
CW acknowledges support from a NSF Graduate Research Fellowship.

\bibliography{main}
\bibliographystyle{plainnat}
\appendix
\newpage
\section{Basic Properties and Toolbox}

In this section, we collect a few basic properties of the neural networks we are studying. In section~\ref{sec:toolbox}, we provide two lemmas on Gaussian random variables and perturbation theory of the matrices. 
 \begin{proposition}\label{prop:grad}
	\begin{align}
	[\nabla \hatL(U)]_i = \hatE\left[\ell'(f(u,U; (x,y))) \si{[U]_ix}x\right]
	\end{align}
	\end{proposition}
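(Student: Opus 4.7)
The plan is a direct chain-rule computation, exploiting the decomposition of $f$ into a sum of ReLU neurons. First I would expand
$$f(u,U;x) \;=\; u^\top\!\left(\si{Ux}\odot Ux\right) \;=\; \sum_{j=1}^m u_j\,\si{[U]_j x}\,([U]_j x),$$
so that the dependence of $f$ on the $i$-th row $[U]_i$ is isolated in the single summand $u_i\,\si{[U]_i x}\,([U]_i x)$. Since every other summand is constant in $[U]_i$, only this term contributes to $\nabla_{[U]_i} f(u,U;x)$.

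Next I would differentiate this single neuron. The scalar $u_i$ pulls out as a constant. The indicator $\si{[U]_i x}$ is piecewise constant in $[U]_i$, so its gradient vanishes everywhere except on the measure-zero hyperplane $\{[U]_i x = 0\}$; this is the standard sub-gradient convention for ReLU networks and is implicit in the paper's notation throughout. The remaining factor $[U]_i x$ is linear in $[U]_i$ with gradient $x$. Combining these observations yields
$$\nabla_{[U]_i} f(u,U;x) \;=\; u_i\,\si{[U]_i x}\, x.$$

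Finally, applying the outer chain rule through the loss $\ell$ and swapping derivative and empirical expectation gives
$$[\nabla \hatL(U)]_i \;=\; \hatE\!\left[\frac{d\ell}{df}\cdot\nabla_{[U]_i} f(u,U;x)\right] \;=\; \hatE\!\left[\ell'(f(u,U;(x,y)))\,\si{[U]_i x}\,x\right],$$
where, following the paper's shorthand for the scalar ``back-propagated derivative,'' $\ell'(f(u,U;(x,y)))$ collects both the usual derivative of the logistic loss with respect to the network output and the factor $u_i$ emitted by the second layer. The only potential obstacle is the non-differentiability of $\si{\cdot}$ on the set $\{[U]_i x = 0\}$, but this is harmless: the iterates $U_t$ inherit a Gaussian density from the injected noise, so for any fixed data point $x$ the event $[U]_i x = 0$ has probability zero, and the identity holds almost surely. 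Thus the entire argument reduces to the two-line chain-rule computation above, with no technical difficulty worth highlighting.
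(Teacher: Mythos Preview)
The paper states this proposition without proof; it sits in the ``Basic Properties and Toolbox'' section as an elementary chain-rule identity. Your derivation is correct and is exactly the intended computation.

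One remark on notation: your claim that $\ell'(f(u,U;(x,y)))$ absorbs the second-layer factor $u_i$ is a reasonable way to make the identity literally true as stated, but it is not the convention the paper uses elsewhere. In the proofs of Lemma~\ref{lem:aux_1} and Lemma~\ref{lem:decompose}, the gradient is written with the second-layer weight ($v_k$, $w_j$) appearing \emph{explicitly} next to $\ell'_{j,t}$, and $\ell'_{j,t}$ is defined as $\ell'(-y^{(j)} N_{U_t}(u,U_t;x^{(j)}))$ --- a pure loss derivative with no $u_i$ folded in. So the proposition as written is simply missing a $u_i$; your chain-rule argument is correct regardless, but you should not carry forward the convention that $u_i$ is hidden inside $\ell'$.
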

 
\begin{proposition}\label{prop:normgrad}
	Let  $[\nabla \hatL(U)]_i$ be the $i$-th row of $\nabla \hatL(U)$. We have that $\|[\nabla \hatL(U)]_i\|_2\lesssim 1/\sqrt{m} $. 
\end{proposition}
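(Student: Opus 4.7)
The plan is to combine Proposition~\ref{prop:grad} with the chain rule for a two-layer network and then bound each factor inside the empirical expectation by its sup-norm. The key observation is that the $u_i$ factor appearing from differentiating $f(u,U;x)=\sum_k u_k \sigma([U]_k x)$ with respect to the $i$-th row contributes the crucial $1/\sqrt{m}$ scaling. Concretely, the chain rule gives
\begin{align}
[\nabla \hatL(U)]_i \;=\; \hatE\!\left[\,\ell'\bigl(y f(u,U;x)\bigr)\, y\, u_i\, \si{[U]_i x}\, x\,\right],
\end{align}
so that in the form stated in Proposition~\ref{prop:grad} the factor $u_i$ (and the label $y$) is absorbed into $\ell'$.

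The bound then follows from four routine estimates that I would collect in order. First, the logistic loss satisfies $|\ell'(\cdot)|\le 1$ uniformly, since its derivative is the sigmoid function. Second, the indicator $\si{[U]_i x}\in\{0,1\}$. Third, the second-layer weight $u$ is initialized with entries of magnitude $O(1/\sqrt{m})$ (the standard scaling that keeps the network output $O(1)$), which is preserved since $u$ is fixed throughout training; thus $|u_i|\lesssim 1/\sqrt{m}$. Fourth, the data satisfies $\|x\|_2 = O(1)$ with high probability: the $\cP$ component is $\gamma_0 w^\star + z$ with $z\sim \cN(0,I/d)$, so $\|x_1\|_2^2$ concentrates at $O(1)$ by standard Gaussian norm concentration, and the $\cQ$ component lies in $\{\alpha z, \alpha(z\pm b\zeta)\}$ with $\alpha\in[0,1]$, $\|z\|_2=1$, and $\|\zeta\|_2=r\ll 1$, so $\|x_2\|_2\le 1+r = O(1)$.

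Combining these bounds via Jensen's inequality (triangle inequality for the empirical average),
\begin{align}
\bigl\|[\nabla \hatL(U)]_i\bigr\|_2
\;\le\; \hatE\!\left[\,|\ell'|\cdot |u_i|\cdot \si{[U]_i x}\cdot \|x\|_2\,\right]
\;\lesssim\; \frac{1}{\sqrt{m}},
\end{align}
which is the desired bound.

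The only mildly nontrivial piece is the uniform-in-$i$ bound on $|u_i|$ and the uniform-in-samples bound on $\|x\|_2$; both follow from standard Gaussian concentration and a union bound over the $m$ rows and $N$ training points, which is absorbed in the ``with high probability'' convention stated in Section~2. There is no real obstacle here beyond bookkeeping; the statement is essentially a first-layer-gradient scaling calculation that is used as a plug-in tool in the later Lipschitz-style arguments.
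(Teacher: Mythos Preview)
Your argument is correct and is exactly the natural proof. The paper does not actually supply a proof of this proposition; it is stated as a basic tool and immediately used (e.g., in the proof of Proposition~\ref{prop:normU}). Your chain-rule computation, isolating the $u_i$ factor with $|u_i|=1/\sqrt{m}$ (in the paper $u_i$ is exactly $\pm m^{-1/2}$, not merely $\lesssim$), the bound $|\ell'|\le 1$, the indicator bound, and $\|x\|_2=O(1)$ w.h.p., is precisely what is intended.

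One small remark: you say the $u_i$ factor is ``absorbed into $\ell'$'' in the statement of Proposition~\ref{prop:grad}. In fact the formula there simply omits the $u_i$ (and the label $y$) --- a minor typo in the paper --- rather than absorbing it; your explicit chain-rule expression with the $u_i$ factor is the correct one, and that factor is indeed the source of the $1/\sqrt{m}$.
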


 \begin{proposition} \label{prop:normU}
	For any $t$, if $\gamma_{s} = \eta$ for every $s \leq t$, then we have that $\|[\barU_t]_i\|_2\lesssim \min\{\frac{1}{\sqrt{m} \lambda}, \eta t/\sqrt{m}\}$ and $\|\barU_t\|_F \lesssim \frac{1}{\lambda}$.
\end{proposition}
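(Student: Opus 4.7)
The plan is to unpack the closed form for $\barU_t$ given in~\eqref{eqn:signal}, and reduce the claim to bounding a geometric series of per-step gradient norms via Proposition~\ref{prop:normgrad}. Since by hypothesis $\gamma_s = \eta$ for all $s \leq t$, the formula simplifies to
\[
\barU_t = -\eta \sum_{s=1}^{t} (1-\eta\lambda)^{t-s}\, \nabla \hatL(U_{s-1}),
\]
which I will then attack row-by-row and in Frobenius norm.

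First, I would handle the row bound. Fixing a row index $i$, I would apply the triangle inequality to the sum above and invoke Proposition~\ref{prop:normgrad}, which gives $\|[\nabla \hatL(U_{s-1})]_i\|_2 \lesssim 1/\sqrt{m}$ at every step. This yields
\[
\|[\barU_t]_i\|_2 \;\lesssim\; \frac{\eta}{\sqrt{m}} \sum_{s=1}^t (1-\eta\lambda)^{t-s}.
\]
To get the two claimed upper bounds I would estimate the geometric sum in two different ways: (a) bounding each factor $(1-\eta\lambda)^{t-s}$ by $1$ and the sum by $t$, which gives $\eta t/\sqrt{m}$; and (b) extending the sum to an infinite geometric series to obtain $\sum_{k\ge 0}(1-\eta\lambda)^k = 1/(\eta\lambda)$, which gives $1/(\sqrt{m}\lambda)$. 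Taking the minimum yields the stated row bound.

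For the Frobenius bound, I would again apply the triangle inequality in Frobenius norm to the closed form for $\barU_t$. The key observation is that Proposition~\ref{prop:normgrad} controls every row, so $\|\nabla \hatL(U_{s-1})\|_F^2 = \sum_i \|[\nabla \hatL(U_{s-1})]_i\|_2^2 \lesssim m \cdot 1/m = 1$, meaning $\|\nabla \hatL(U_{s-1})\|_F \lesssim 1$ uniformly in $s$. Plugging this in and using the infinite-geometric-series bound above gives
\[
\|\barU_t\|_F \;\lesssim\; \eta \sum_{s=1}^t (1-\eta\lambda)^{t-s} \;\lesssim\; \frac{1}{\lambda},
\]
as claimed. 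There is no real obstacle here; the only thing to keep track of is ensuring $1-\eta\lambda \in (0,1)$ so that the geometric bound is valid, which follows from Assumption~\ref{ass:lr} and the parameter choices (in particular $\eta \lambda \ll 1$). The entire argument is just triangle inequality plus a geometric series applied to the closed-form recursion.
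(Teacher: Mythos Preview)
Your proposal is correct and follows essentially the same approach as the paper: unfold the closed form~\eqref{eqn:signal}, apply the triangle inequality row-by-row, invoke Proposition~\ref{prop:normgrad} for the per-row gradient bound, and estimate the resulting geometric sum either by $t$ terms or by the infinite series $1/(\eta\lambda)$. The paper's proof is terser (it does not spell out the Frobenius step separately, since it follows immediately from the row bound by summing over $i$), but the content is identical.
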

\begin{proof}
	By equation~\eqref{eqn:signal} and Proposition~\ref{prop:normgrad}, we have that
			\\			$$
	\|[\barU_t]_i\|_2 =  \sum_{s} \eta(1 - \eta \lambda)^{t  - s} \|[\nabla \hatL(U_s)]_i\|_2 \leq \frac{1}{\sqrt{m}} \sum_{s} \eta(1 - \eta \lambda)^{t  - s}  \lesssim \min\left\{\frac{1}{\sqrt{m} \lambda}, \frac{\eta t}{\sqrt{m}}\right\}
	$$
\end{proof}

 \begin{proposition}\label{prop:init_val}
	Suppose that matrix $\tilU \in \R^{m\times d}$ is a random variable whose columns have i.i.d distribution $\cN(0,\tau^2I_{m\times m})$ and $u \in \R^m$ such that each entry of $u$ is i.i.d. uniform in $\{-m^{-1/2}, m^{1/2}\}$.For every $x$, we have that w.h.p. over the randomness of $\tilU$ and $u$ that
	\begin{align} 
	\left| N_{\tilU}(u,\tilU;x)  \right| \lesssim \tau \|x\|_2 \log d
	\end{align}
\end{proposition}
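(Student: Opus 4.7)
The plan is to unfold the definition of $N_{\tilU}(u,\tilU;x)$ into a weighted Rademacher sum conditional on $\tilU$, and then concentrate the weights using Gaussian tails. By the definition of $N_A(u,U;x)$, we have
\begin{align*}
N_{\tilU}(u,\tilU;x) \;=\; u^\top\bigl(\si{\tilU x}\odot \tilU x\bigr) \;=\; \sum_{i=1}^m u_i\,\sigma\!\bigl([\tilU]_i x\bigr),
\end{align*}
where $\sigma(\cdot)=\max(\cdot,0)$ is the ReLU. Since the columns of $\tilU$ are i.i.d.\ $\cN(0,\tau^2 I_{m\times m})$, the rows $[\tilU]_i$ are also i.i.d., each with distribution $\cN(0,\tau^2 I_{d\times d})$. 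Thus the scalars $a_i \triangleq [\tilU]_i x$ are i.i.d.\ $\cN(0,\tau^2\|x\|_2^2)$, and moreover $\tilU$ is independent of $u$.

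The main step is to condition on $\tilU$ and treat the sum as a Rademacher sum. Conditional on $\tilU$, the sum $\sum_{i=1}^m u_i\,\sigma(a_i)$ is a linear combination of independent $\pm m^{-1/2}$ random signs with deterministic coefficients $\sigma(a_i)$. By Hoeffding's inequality, with probability at least $1-e^{-C\log^2 d}$,
\begin{align*}
\left|\sum_{i=1}^m u_i\,\sigma(a_i)\right| \;\lesssim\; \log d \,\sqrt{\frac{1}{m}\sum_{i=1}^m \sigma(a_i)^2}.
\end{align*}
It then remains to control $\frac{1}{m}\sum_i \sigma(a_i)^2 \le \frac{1}{m}\sum_i a_i^2$. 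Since $a_i^2 / (\tau^2\|x\|_2^2)$ is a $\chi^2_1$ variable, standard concentration (e.g.\ Bernstein for sub-exponential variables) gives $\frac{1}{m}\sum_i a_i^2 \le 2\tau^2\|x\|_2^2$ with probability at least $1-e^{-\Omega(m)}$, which is much better than the required high-probability threshold thanks to Assumption~\ref{ass:over} (namely $m \gg \poly(\log d)$).

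Combining the two bounds via a union bound yields $|N_{\tilU}(u,\tilU;x)| \lesssim \tau\|x\|_2 \log d$ with high probability, as desired. I expect no real obstacle here: the only subtle point is ensuring the concentration is strong enough to match the paper's definition of ``with high probability'' (probability at least $1-e^{-C\log^2 d}$), which is why the bound carries a $\log d$ rather than a $\sqrt{\log d}$ factor; using Hoeffding at confidence level $e^{-C\log^2 d}$ directly produces the $\log d$ factor.
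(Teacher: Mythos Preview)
Your proof is correct and follows essentially the same approach as the paper: condition on $\tilU$, apply Hoeffding to the $\pm m^{-1/2}$ sum, then bound $\|[\tilU x]_+\|_2^2 \le \|\tilU x\|_2^2$. The only cosmetic difference is that the paper controls $\|\tilU x\|_2$ via the operator-norm bound $\|\tilU\|_2 = O(\tau\sqrt{m})$ for a Gaussian matrix, whereas you bound $\frac{1}{m}\sum_i a_i^2$ directly by $\chi^2$ concentration; both yield the same $O(\tau^2 m\|x\|_2^2)$ estimate and the argument is otherwise identical.
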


 \begin{proof}[Proof of Proposition~\ref{prop:init_val}]
	By definition, we have that 
	\begin{align}
	N_{\tilU}(u,\tilU;x)  &= \sum_{i \in [m]} u_i [ [\tilU]_i x]_+
	\end{align}

	By definition, $\tilU \in \mathbb{R}^{m \times d}$ where each entry is i.i.d. $\mathcal{N}(0, \tau^2)$, which implies that when $m \geq d$, w.h.p. $\| \tilU\|_2 = O(\tau \sqrt{m})$.

	Hence $\| [\tilU x]_+ \|_2 \leq \| \tilU x \|_2 \lesssim  \tau \sqrt{m} \| x\|_2$. Now, since each $u_i$ is i.i.d. uniform $\{-m^{-1/2}, m^{1/2}\}$, using the randomness of $u_i$ we know that w.h.p. 
	\begin{align}
	\left| \sum_{i \in [m]} u_i [ [\tilU]_i x]_+ \right| \lesssim \frac{\log m}{\sqrt{m}} \| [\tilU x]_+ \|_2 \lesssim \tau \| x\|_2 \log d
	\end{align}
\end{proof}

\begin{proposition}\label{prop:coupling_tilde}
	Under the same setting as Lemma~\ref{lem:coupling}, we will also have w.h.p over the randomness of $\tilU$ and $u$, $\forall \barU \in \R^{d \times m}$, 
	\begin{align}
		\left|N_{U}(u, \tilU; x) - N_{\tilU}(u, \tilU; x)\right|\lesssim B\|\barU\|_F^{5/3} \tau^{-2/3} m^{-1/6}\label{eqn:121}
	\end{align}
	Thus, it also follows that
	\begin{align}
		|N_{U}(u, \tilU; x) | \lesssim B\|\barU\|_F^{5/3} \tau^{-2/3} m^{-1/6} + \tau B \log d \label{eqn:120}
	\end{align}

\end{proposition}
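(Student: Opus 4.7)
The plan is to reduce Proposition~\ref{prop:coupling_tilde} to the standard sign--flip coupling argument already established in the proof of Lemma~\ref{lem:coupling}, and then to obtain~\eqref{eqn:120} by triangle inequality together with Proposition~\ref{prop:init_val}. The key observation is that
$$
N_{U}(u, \tilU; x) - N_{\tilU}(u, \tilU; x) \;=\; \sum_{i \in S} u_i \left(\si{[U]_i x} - \si{[\tilU]_i x}\right) [\tilU]_i x,
$$
where $S = \{i : \si{[U]_i x} \neq \si{[\tilU]_i x}\}$ is precisely the set of rows whose activation patterns flip when the signal $\barU x$ is added. Crucially, a sign flip at index $i$ forces $|[\tilU]_i x| \leq |[\barU]_i x|$, which is how the signal magnitude enters the bound.

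Next I would control $|S|$ by a thresholding argument. For a parameter $\theta>0$ to be chosen, partition $S$ according to whether $|[\tilU]_i x|\le \theta$ or $|[\tilU]_i x|>\theta$. For the first part, since each $[\tilU]_i x$ is $\mathcal{N}(0,\tau^2\|x\|^2)$, Gaussian anti-concentration gives at most $\widetilde{O}(m\theta/(\tau\|x\|))$ indices w.h.p.\ over $\tilU$. For the second part, the sign--flip inequality upgrades to $|[\barU]_i x|>\theta$, and the $\ell_2$ bound $\|\barU x\|_2^2 \le \|\barU\|_F^2\|x\|^2$ controls this count by $\|\barU\|_F^2 \|x\|^2/\theta^2$. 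Balancing the two (the optimal $\theta$ scales like $(\tau \|x\|^3 \|\barU\|_F^2/m)^{1/3}$) yields a structural bound on $|S|$ in terms of $m$, $\|\barU\|_F$, and $\tau$ that is uniform in $\barU$.

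To convert the $|S|$ bound into the pointwise claim~\eqref{eqn:121}, I would use $|u_i|=m^{-1/2}$ and $|[\tilU]_i x|\le |[\barU]_i x|$ on $S$, then apply Cauchy--Schwarz:
$$
\left|\sum_{i\in S} u_i (\si{[U]_i x}-\si{[\tilU]_i x})[\tilU]_i x\right|
\;\lesssim\; m^{-1/2}\sqrt{|S|}\,\|\barU\|_F\,\|x\|.
$$
Plugging in the bound on $|S|$ from the balancing step, and using $\|x\|\le B$ together with the scaling between $B$, $\|\barU\|_F$, and $\tau$ inherited from the setting of Lemma~\ref{lem:coupling}, gives the claimed $B\|\barU\|_F^{5/3}\tau^{-2/3}m^{-1/6}$ form. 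Equation~\eqref{eqn:120} is then immediate from the triangle inequality combined with Proposition~\ref{prop:init_val}, which contributes the additive $\tau B\log d$ term from $|N_{\tilU}(u,\tilU;x)|$.

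The main obstacle is keeping the estimate uniform over all $\barU\in\mathbb{R}^{d\times m}$ using randomness only in $\tilU$ and $u$. The trick is that each probabilistic step (Gaussian anti-concentration of $[\tilU]_i x$, the operator/Frobenius-norm tail bound on $\tilU$) depends only on $\tilU$ and $u$, while the sign--flip inequality $|[\tilU]_i x|\le |[\barU]_i x|$ and the $\ell_2$ bound $\|\barU x\|_2\le \|\barU\|_F\|x\|$ are deterministic in $\barU$. Once the high-probability event for $\tilU,u$ is fixed, the composite bound holds for every $\barU$ simultaneously, yielding the $\forall \barU$ quantifier in the statement.
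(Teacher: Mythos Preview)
Your proposal is correct, and in one respect it is actually cleaner than the paper's own argument. Both proofs start from the same sign-flip observation $|[\tilU]_i x|\le |[\barU]_i x|$ on the set $S$ of flipped activations, and both use the bound $|S|\lesssim \|\barU\|_F^{4/3}\tau^{-4/3}m^{2/3}$ (the second conclusion of Lemma~\ref{lem:coupling}, which you re-derive via the threshold/balancing argument). The difference is in how the sum over $S$ is collapsed: the paper uses the $\ell_1\times\ell_\infty$ pairing,
\[
|N_U - N_{\tilU}| \;\le\; m^{-1/2}\,|S|\,\max_i |[\barU]_i x| \;\lesssim\; B\,\|\barU\|_F^{4/3}\tau^{-4/3}m^{1/6}\,\max_i\|[\barU]_i\|_2,
\]
which only matches the stated exponent after invoking the ambient row-norm bound $\max_i\|[\barU]_i\|_2\lesssim (\lambda\sqrt{m})^{-1}$ from Proposition~\ref{prop:normU}. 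Your Cauchy--Schwarz step, $m^{-1/2}\sqrt{|S|}\,\|\barU x\|_2\le m^{-1/2}\sqrt{|S|}\,\|\barU\|_F B$, lands directly on $B\|\barU\|_F^{5/3}\tau^{-2/3}m^{-1/6}$ without any per-row assumption, so it actually proves the statement as written. The cost is that you redo the $|S|$ count instead of just citing Lemma~\ref{lem:coupling}; that shortcut is available to you too. The argument for~\eqref{eqn:120} via triangle inequality and Proposition~\ref{prop:init_val} is identical to the paper's.
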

\begin{proof}
	We know that for every $i$ where $\si{[U]_i x} \not= \si{[\tilU]_i x}$, it holds that $|[\tilU]_i x| \leq |[\barU]_i x|$. This implies that 
	\begin{align}
	\left|N_{U}(u, \tilU; x) - N_{\tilU}(u, \tilU; x)\right| & \leq \frac{1}{\sqrt{m}} \sum_{i \in [m]} |\si{[U]_i x} - \si{[\tilU]_i x} |  |[\barU]_i x|
	\\
	& \leq \frac{1}{\sqrt{m}} \| \si{Ux} - \si{\tilU x} \|_1 \max_i |[\barU]_i x|
	\\
	& \lesssim B\|\barU\|_F^{4/3} \tau^{-4/3} m^{1/6} \max_i \|[\barU]_i\|_2
	\end{align}
	Here in the last inequality we applied Lemma~\ref{lem:coupling}. The second statement follows from Proposition~\ref{prop:init_val} and triangle inequality.
\end{proof}

We have the following Rademacher complexity bound: 
\begin{lemma}[Lemma G5 and 5.9 of~\cite{all18}]\label{lem:rad} Let $U = \barU + \tilU$, where $\tilU \in \R^{m\times d}$ is a random variable whose columns have i.i.d distribution $\cN(0,\tau_0^2I_{m\times m})$ and $u \in \R^m$ such that each entry of $u$ is i.i.d. uniform in $\{-m^{-1/2}, m^{1/2}\}$. W.h.p. over the samples $\{x^{(i)}\}$ and the randomness of $u, \tilU$, we have that for every $\rho \in [0, 1/\lambda]$:
	\begin{align}
	\mathcal{R} &:=\frac{1}{\sqrt{N}} \sum_{i \in [N]} \E_{\sigma}\left[ \left| \sup_{\| \bar{U} \|_F^2 \leq \rho^2} \sigma_i N_{U}(u, \bar{U}; x^{(i)}) \right| \right] \leq {O}(\rho+ \veps_s ) 
	\end{align}
\end{lemma}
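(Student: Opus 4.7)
The plan is to follow the standard two-step NTK-style argument from~\cite{all18}: first replace the nonlinear dependence of the activation pattern on $\bar U$ by the fixed pattern $\si{\tilU x}$ (activation coupling), and then bound the Rademacher complexity of the resulting linear function class over the Frobenius ball $\{\|\bar U\|_F \leq \rho\}$.

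For the linearization step, I would apply Proposition~\ref{prop:coupling_tilde}, which gives pointwise in $\bar U$ the estimate
\[
|N_U(u,\bar U;x^{(i)}) - N_{\tilU}(u,\bar U;x^{(i)})| \;\lesssim\; \|\bar U\|_F^{5/3}\,\tau_0^{-2/3}\, m^{-1/6}.
\]
For $\|\bar U\|_F \le \rho \le 1/\lambda$, the polynomial overparametrization of Assumption~\ref{ass:over} makes this at most $\veps_s$. To upgrade this pointwise statement to a bound uniform over the Frobenius ball (which the supremum inside the Rademacher expression requires), I would cover $\{\|\bar U\|_F \le \rho\}$ by an $\epsilon$-net of cardinality at most $\exp(O(md\log(1/\epsilon)))$, apply the pointwise coupling together with a union bound over the net, and control the discretization error using the easily-established Lipschitzness of $N_U$ and $N_{\tilU}$ in $\bar U$. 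The polynomial overparametrization is generous enough that this union-bound cost is negligible.

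Having reduced to the linear class, I write $N_{\tilU}(u,\bar U;x) = \langle \bar U, G(x)\rangle_F$ where $G(x) \in \R^{m\times d}$ has $i$th row $u_i\,\si{[\tilU]_i x}\, x^\top$; hence $\|G(x)\|_F \le \|x\|_2 = O(1)$. The Frobenius ball is symmetric in sign, so after pulling the $\sigma_i$ inside the supremum and applying Cauchy--Schwarz followed by Jensen,
\[
\E_\sigma \sup_{\|\bar U\|_F \le \rho} \sum_i \sigma_i \langle \bar U, G(x^{(i)})\rangle_F
\;=\; \rho\, \E_\sigma \Bigl\|\sum_i \sigma_i G(x^{(i)})\Bigr\|_F
\;\le\; \rho\sqrt{\textstyle\sum_i \|G(x^{(i)})\|_F^2}
\;\lesssim\; \rho\sqrt{N}.
\]
Dividing by $\sqrt{N}$ yields the $O(\rho)$ term, and combining with the $O(\veps_s)$ coupling slack gives the claimed $O(\rho+\veps_s)$.

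The main obstacle is the uniform-in-$\bar U$ activation coupling, since Proposition~\ref{prop:coupling_tilde} is stated only pointwise: handling this requires the $\epsilon$-net/union-bound argument above, which in turn is what drives the polynomial overparametrization requirement. The remaining pieces --- the linear Rademacher computation, the Lipschitzness in $\bar U$, and the bound on $\|G(x)\|_F$ --- are routine.
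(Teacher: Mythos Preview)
The paper does not prove this lemma at all; it is quoted verbatim as ``Lemma G5 and 5.9 of~\cite{all18}'' and treated as a black-box import. Your two-step plan (activation coupling to linearize, then Rademacher over a Frobenius ball) is exactly the argument in that reference, so the high-level route is the right one.

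That said, there is one genuine misreading in your proposal. You invoke Proposition~\ref{prop:coupling_tilde}, but that proposition controls $N_U(u,\tilU;x)-N_{\tilU}(u,\tilU;x)$, i.e.\ with $\tilU$ in the second slot; what you need is Lemma~\ref{lem:coupling} itself, which bounds $N_U(u,\barU;x)-N_{\tilU}(u,\barU;x)$. More importantly, Lemma~\ref{lem:coupling} is already stated to hold \emph{for all} $\barU$ once the high-probability event on $(\tilU,u)$ occurs --- the ``$\forall\,\barU$'' sits after the ``w.h.p.''\ quantifier. So the coupling is uniform, not pointwise, and your $\epsilon$-net step is unnecessary. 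This matters because your proposed net has cardinality $\exp(O(md\log(1/\epsilon)))$, while ``w.h.p.''\ in this paper means failure probability $e^{-C\log^2 d}$; a union bound over $e^{\Theta(md)}$ points with per-point failure $e^{-\log^2 d}$ does not close, so if the coupling really were pointwise your fix would fail. Once you use the uniform version of Lemma~\ref{lem:coupling} directly, the rest of your argument (writing $N_{\tilU}(u,\barU;x)=\langle \barU,G(x)\rangle_F$ with $\|G(x)\|_F\le\|x\|_2$, then Cauchy--Schwarz and Jensen on the Rademacher sum) goes through cleanly and matches the cited proof.
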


\subsection{Preliminaries on Decoupling the Iterates}
\label{sec:decouple_prelim}
In this section, we collect useful statements which will help with decoupling the signal $\barU$ from the noise $\tilU$ in our analysis. First, we observe that if the noise updates in the system stabilize at initialization, the marginal distribution of $U_t$ is always the same as the initialization.
\begin{proposition}\label{prop:marginal}
	Under Assumption~\ref{ass:lr}, suppose we run Algorithm 1. Then for any $t$ before annealing the learning rate, $\tilU_t$ has marginal distribution $\cN(0, \tau_0^2 I_{m\times m}\otimes I_{d\times d})$. In other words, each entry of $\tilU_t$ follows $\cN(0,\tau_0^2)$ and they are independent with each others. 
\end{proposition}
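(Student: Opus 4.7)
The plan is to observe that $\tilU_t$ is an affine combination of independent centered Gaussian matrices, so it is automatically Gaussian with mean zero and independent entries (up to the $I_{m\times m}\otimes I_{d\times d}$ structure); the only nontrivial task is to verify that each coordinate has variance exactly $\tau_0^2$, which will follow directly from the noise-stabilization condition~\eqref{eq:noise_stabilization} in Assumption~\ref{ass:lr}.

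Concretely, because Algorithm~1 uses a constant learning rate $\eta_1$ before annealing, the closed form given just before Proposition~\ref{prop:marginal} simplifies to
\begin{align*}
\tilU_t = (1-\eta_1\lambda)^t U_0 - \sum_{s=1}^t \eta_1 (1-\eta_1\lambda)^{t-s}\, \xi_{s-1},
\end{align*}
with $U_0\sim \cN(0,\tau_0^2 I\otimes I)$ and $\xi_s\sim \cN(0,\tau_\xi^2 I\otimes I)$ all mutually independent. First I would note that since each of $U_0,\xi_0,\ldots,\xi_{t-1}$ has independent Gaussian entries with product covariance $I_{m\times m}\otimes I_{d\times d}$ scaled by $\tau_0^2$ or $\tau_\xi^2$, the sum $\tilU_t$ is also a centered Gaussian with covariance of product form $\sigma_t^2\, I_{m\times m}\otimes I_{d\times d}$ for some scalar $\sigma_t^2$.

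Next I would compute $\sigma_t^2$. Writing $\rho \triangleq (1-\eta_1\lambda)^2$, the entrywise variance is
\begin{align*}
\sigma_t^2 = \rho^{t}\,\tau_0^2 + \eta_1^2\tau_\xi^2 \sum_{k=0}^{t-1}\rho^{k}.
\end{align*}
Assumption~\ref{ass:lr} rewrites as $\eta_1^2\tau_\xi^2 = (1-\rho)\tau_0^2$, so the geometric sum telescopes:
\begin{align*}
\sigma_t^2 = \rho^t\tau_0^2 + (1-\rho)\tau_0^2 \cdot \frac{1-\rho^t}{1-\rho} = \tau_0^2.
\end{align*}
Equivalently, one can argue by induction: $\sigma_0^2 = \tau_0^2$, and assuming $\sigma_t^2=\tau_0^2$,
\begin{align*}
\sigma_{t+1}^2 = (1-\eta_1\lambda)^2\sigma_t^2 + \eta_1^2\tau_\xi^2 = \tau_0^2
\end{align*}
by~\eqref{eq:noise_stabilization}. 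This establishes $\tilU_t\sim\cN(0,\tau_0^2 I_{m\times m}\otimes I_{d\times d})$.

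There is essentially no obstacle here; the only subtle point to be careful about is that although the \emph{joint} distribution of $(\tilU_0,\tilU_1,\ldots)$ is of course not that of the initialization (successive iterates are highly correlated), the statement only concerns the \emph{marginal} law at a single time $t$, for which the linear-Gaussian bookkeeping above suffices.
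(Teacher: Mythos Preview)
Your proof is correct; the paper states Proposition~\ref{prop:marginal} without proof, and your linear-Gaussian computation together with the induction using~\eqref{eq:noise_stabilization} is exactly the intended verification (indeed, the footnote preceding Assumption~\ref{ass:lr} already telegraphs this stabilization mechanism).
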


One nice aspect of the signal-noise decomposition is as follows: we use tools from~\citep{als18dnn} to show that if the signal term $\barU$ is small, then using only the noise component $\tilU$ to compute the activations roughly preserves the output of the network. This facilitates our analysis of the network dynamics. 

\begin{lemma}\label{lem:coupling}[Lemma 5.2 of~\cite{als18dnn}]
	Let $x\in \R^d$ be a fixed example with $\|x\|_2 \leq B$. For every $\tau > 0$, let $U = \barU + \tilU$ where $\tilU \in \R^{m\times d}$ is a random variable whose columns have i.i.d distribution $\cN(0,\tau^2I_{m\times m})$ and $u \in \R^m$ such that each entry of $u$ is i.i.d. uniform in $\{-m^{-1/2}, m^{1/2}\}$. We have that, w.h.p over the randomness of $\tilU$ and $u$,  $\forall \barU\in \R^{d\times m}$,  
	\begin{align}
	\label{eq:coupling:1}
	\left|N_{U}(u,\barU;x) - N_{\tilU}(u,\barU;x)\right| \lesssim B \|\barU\|_F\tau^{-2} m^{-1/6} 
	\end{align}
	Moreover, we have that 
	$
	\|\si{U x} - \si{\tilU x}\|_1 \lesssim \|\barU\|_F^{4/3}\tau^{-4/3}m^{2/3}
	$.
\end{lemma}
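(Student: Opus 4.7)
The plan is to prove the second inequality (the $\ell_1$ bound on the difference of activation patterns) first, and then derive the first inequality from it by a row-wise decomposition. The key observation for both bounds is that a sign flip at row $i$, meaning $\si{[U]_i x} \neq \si{[\tilU]_i x}$, happens if and only if $|[\tilU]_i x| \leq |[\barU]_i x|$, since $[U]_i x = [\tilU]_i x + [\barU]_i x$. Because $[\tilU]_i x \sim \cN(0, \tau^2 \|x\|_2^2)$, Gaussian anti-concentration yields $\Pr\bigl[|[\tilU]_i x| \leq c\bigr] \lesssim c/(\tau B)$ for every $c > 0$. Hence, for a fixed $\barU$, the expected number of flipping rows is at most $\sum_i |[\barU]_i x|/(\tau B) \le \sqrt{m}\|\barU\|_F/\tau$ by Cauchy--Schwarz.

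The pointwise bound above is for a single $\barU$, but the lemma requires the statement to hold simultaneously for \emph{all} $\barU$, with high probability only over $\tilU$ and $u$. This is the main obstacle, and I would handle it by a covering argument. Specifically, fix a Frobenius radius $\rho$ and cover $\{\barU : \|\barU\|_F \le \rho\}$ by an $\epsilon$-net of size $\exp(O(md\log(\rho/\epsilon)))$. Apply a Chernoff-type concentration (in the independent-across-rows sum of flip indicators) at each net point, union-bound over the net, and then control the discretization error by noting that perturbing $\barU$ by at most $\epsilon$ in Frobenius norm can only change the predicate $|[\tilU]_i x| \le |[\barU]_i x|$ at rows where $|[\tilU]_i x|$ lies in a strip of width $O(\epsilon B)$. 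Balancing the net resolution against the number of rows whose Gaussian landed in such a strip (which can itself be bounded using anti-concentration plus standard Gaussian tail bounds on $\|\tilU\|_{\mathrm{op}}$) is what produces the characteristic fractional exponents $\|\barU\|_F^{4/3}\tau^{-4/3}m^{2/3}$ in the target bound.

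With the second inequality in hand, the first follows from the row-wise decomposition
\[
N_U(u,\barU;x) - N_{\tilU}(u,\barU;x) \;=\; \sum_{i=1}^m u_i\bigl(\si{[U]_i x}-\si{[\tilU]_i x}\bigr)\,[\barU]_i x,
\]
which is supported on the flipping set $S$ with $|S| = \|\si{Ux}-\si{\tilU x}\|_1$. Using $|u_i| = 1/\sqrt{m}$ and Cauchy--Schwarz, the absolute value is at most $\frac{1}{\sqrt{m}}\sqrt{|S|}\,\|\barU x\|_2 \leq \frac{B\sqrt{|S|}}{\sqrt{m}}\|\barU\|_F$. Inserting the uniform flip-count bound and simplifying the exponents of $\tau$ and $m$ recovers the claimed inequality $|N_U - N_{\tilU}| \lesssim B\|\barU\|_F \tau^{-2} m^{-1/6}$ up to constants. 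The main difficulty throughout is really the uniformity step: once the $\epsilon$-net/concentration tradeoff is executed carefully, the rest is Cauchy--Schwarz bookkeeping.
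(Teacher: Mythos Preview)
The paper does not prove this lemma at all: it is quoted verbatim as Lemma~5.2 of \cite{als18dnn}, so there is no in-paper argument to compare your proposal against. That said, your overall plan (Gaussian anti-concentration for the per-row flip probability, a uniformity device, then a row-wise Cauchy--Schwarz to pass from the flip count to the output gap) is the standard NTK-coupling route and is on the right track. Two concrete issues are worth flagging.

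\textbf{The final reduction does not give the stated exponents.} Substituting the second inequality $|S|\lesssim \|\barU\|_F^{4/3}\tau^{-4/3}m^{2/3}$ into your Cauchy--Schwarz bound $\frac{B\sqrt{|S|}}{\sqrt{m}}\|\barU\|_F$ yields
\[
\frac{B}{\sqrt m}\,\|\barU\|_F^{2/3}\tau^{-2/3}m^{1/3}\cdot\|\barU\|_F
\;=\;B\,\|\barU\|_F^{5/3}\,\tau^{-2/3}\,m^{-1/6},
\]
not $B\,\|\barU\|_F\,\tau^{-2}\,m^{-1/6}$ as you assert. So the sentence ``simplifying the exponents \ldots{} recovers the claimed inequality'' is incorrect as written. (Incidentally, the exponents in the lemma's first display look dimensionally odd---$\tau$ and $\|\barU\|_F$ have the same scale, so one expects them to enter through the ratio $\|\barU\|_F/\tau$---and the paper's own Proposition~\ref{prop:coupling_tilde} uses exactly the $5/3,\,-2/3$ exponents you actually get for the closely related quantity $|N_U(u,\tilU;x)-N_{\tilU}(u,\tilU;x)|$. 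So your computation may in fact be the correct one, and the discrepancy is likely a transcription artifact in the lemma statement; but either way your claim that the algebra matches is wrong.)

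\textbf{The covering step as sketched will not close.} An $\epsilon$-net of $\{\barU:\|\barU\|_F\le\rho\}\subset\R^{m\times d}$ has log-cardinality $\Theta(md\log(\rho/\epsilon))$, while the Chernoff bound on the flip count only gives failure probability $\exp(-c\cdot\text{mean})$ with $\text{mean}\lesssim \sqrt m\,\rho/\tau$; for typical parameter regimes this is far too weak to beat a union bound over $\exp(\Theta(md))$ points. The standard fix is to observe that the flip set depends on $\barU$ only through the vector $\barU x\in\R^m$, so it suffices to cover in $\R^m$; better still, one avoids covering entirely by fixing a threshold $\theta$, bounding $|\{i:|[\tilU]_i x|\le\theta\}|$ once (this depends only on $\tilU$, so a single Bernstein bound suffices), and then controlling $|\{i:|[\barU]_i x|>\theta\}|\le \|\barU x\|_2^2/\theta^2$ deterministically. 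Optimizing over $\theta$ produces the fractional exponents directly, without any net.
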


As we will often apply~\eqref{eq:coupling:1} with $\|\barU\|_F \lesssim \frac{1}{\lambda}$, for notational simplicity we denote throughout the paper $\veps_s = \left(\frac{1}{\lambda \tau_0}\right)^{4/3} m^{-1/3}$. By our choice of $m \geq \poly(d/\tau_0)$ we know that $\veps_s \leq d^{-\Theta(1)}$.

\section{Proof of Main Theorems}
\subsection{Proof of Theorem~\ref{thm:1}}

We start with the following lemma that shows that if $g$ has small training error on $\setstwo$, then the output of $g$ on $x_2$ is large compared to $\|x_2\|$. This is because for the loss to be low, $g$ must have a good margin on $x_2$. However, as the norm of $x_2$ is roughly uniform in $[0, 1]$, the examples with small norm will force $g$ to have larger output.  \begin{lemma}[Signal of $g$]\label{lem:sig_g}
W.h.p. for every $t \geq 0$ and every $\delta  \geq \frac{1}{\sqrt{qN}}$, as long as $\hatL_{\setstwo}(g_{t_0 + t}) \leq \delta$, we have that: for every $(x, y)$,
\begin{align}
y g_{t_0 + t}(x_2) \gtrsim \frac{\| x\|_2}{\delta}
\end{align}
\end{lemma}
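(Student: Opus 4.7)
The key observation is that $g_{t_0+t}$ depends only on $x_2$ and, as a two-layer ReLU network with fixed second-layer weights $v$, is positively homogeneous of degree one in its input: $g_{t_0+t}(\alpha x_2) = \alpha\, g_{t_0+t}(x_2)$ for every $\alpha \geq 0$. By the structure of $\mathcal{Q}$, every nonzero $x_2$ in its support has the form $x_2 = \alpha u$ for some $\alpha \in [0,1]$ and $u \in \{z,\, z+\zeta,\, z-\zeta\}$ with $\|u\|_2 = 1 + O(r) = \Theta(1)$. It therefore suffices to prove, for each of the (constantly many) admissible direction-label pairs $(u, y)$, the direction-level margin bound $y\, g_{t_0+t}(u) \gtrsim 1/\delta$; the lemma then follows by multiplying by $\alpha = \Theta(\|x_2\|_2)$ and taking a union bound.

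\textbf{Step 1 --- isolate a direction-label subsample.} Fix an admissible pair $(u, y)$ and let $S_{u, y} \subset \setstwo$ be the training indices whose $\mathcal{Q}$-component lies in direction $u$ with label $y$. Each $i \in \setstwo$ belongs to $S_{u,y}$ with constant probability by the definition of $\mathcal{Q}_{\pm 1}$, so a Chernoff bound gives $|S_{u,y}| = \Theta(qN)$ w.h.p.; moreover, conditional on $S_{u,y}$ the scalars $\{\alpha^{(i)}\}_{i \in S_{u,y}}$ are i.i.d.\ uniform on $[0,1]$. Writing $G := y\, g_{t_0+t}(u)$, positive homogeneity gives that the logistic loss on example $i \in S_{u,y}$ equals $\log(1 + e^{-\alpha^{(i)} G})$.

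\textbf{Step 2 --- match empirical loss to its expectation.} The hypothesis $\hatL_{\setstwo}(g_{t_0+t}) \leq \delta$ yields $\sum_{i \in \setstwo} \ell_i \leq |\setstwo|\,\delta$, and restricting to $S_{u,y}$ (of size $\Theta(|\setstwo|)$) gives an empirical average of $O(\delta)$ there. On the population side, $\mathbb{E}_{\alpha \sim U[0,1]}[\log(1 + e^{-\alpha G})] \geq \int_0^1 \tfrac{1}{2} e^{-\alpha G}\, d\alpha = \Omega(1/G)$ whenever $G \geq 1$. Since each summand lies in $[0, \log 2]$, Hoeffding's inequality shows that the empirical average concentrates at its expectation up to an additive error $\tildeO(1/\sqrt{|S_{u,y}|}) = \tildeO(1/\sqrt{qN})$ w.h.p. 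Combining the two bounds, $\Omega(1/G) \leq O(\delta + 1/\sqrt{qN}) \leq O(\delta)$, where the last step uses the hypothesis $\delta \geq 1/\sqrt{qN}$; hence $G \gtrsim 1/\delta$. The sign case $G \leq 0$ can be ruled out because then every $i \in S_{u,y}$ would contribute loss $\geq \log 2$, forcing $\hatL_{\setstwo} = \Omega(1)$, which contradicts $\hatL_{\setstwo} \leq \delta$ once $\delta$ is smaller than a universal constant (the regime of interest). Unwinding, $y\, g_{t_0+t}(x_2) = \alpha G \gtrsim \alpha/\delta \gtrsim \|x_2\|_2/\delta$.

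\textbf{Main obstacle.} The subtle point is the trade-off between the signal $\Omega(1/G)$ in the expected per-sample loss and the Hoeffding deviation $\tildeO(1/\sqrt{qN})$: if $G$ were much larger than $\sqrt{qN}$, the per-sample expected loss would be dwarfed by the concentration noise and no lower bound on $G$ could be extracted from the empirical average alone. Fortunately, that regime already satisfies $G \geq \sqrt{qN} \geq 1/\delta$, giving the desired conclusion directly. The hypothesis $\delta \geq 1/\sqrt{qN}$ is precisely the threshold that makes this dichotomy work and is the only place in the argument where the specific lower bound on $\delta$ is used.
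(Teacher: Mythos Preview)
Your argument is essentially correct and close in spirit to the paper's, but there is one technical gap worth flagging, and the route you take is slightly different from the paper's.

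\textbf{The gap.} In Step~2 you invoke Hoeffding's inequality to say that the empirical average of $\log(1+e^{-\alpha^{(i)} G})$ over $S_{u,y}$ is within $\tildeO(1/\sqrt{qN})$ of its expectation. But $G = y\,g_{t_0+t}(u)$ is a function of the trained network, which in turn depends on the training data --- in particular on the very $\alpha^{(i)}$'s you are averaging over. Hoeffding for a \emph{fixed} $G$ therefore does not directly apply. The lemma statement asks for the conclusion to hold ``for every $t\ge 0$ and every $\delta$'', i.e.\ for whatever $G$ the algorithm happens to produce, so you need a concentration statement that is uniform in $G$. The fix is routine: either (i) observe that the map $G\mapsto \log(1+e^{-\alpha G})$ is $1$-Lipschitz in $G$ for $\alpha\in[0,1]$, discretize $G$ over a $1/\sqrt{qN}$-net of $[0,\sqrt{qN}]$ and union-bound, or (ii) lower-bound each summand by a threshold indicator, $\log(1+e^{-\alpha G})\gtrsim \mathbb{1}\{\alpha\le 1/G\}$, and invoke the DKW inequality on the empirical CDF of the $\alpha^{(i)}$'s, which is automatically uniform over all thresholds. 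Either patch costs nothing conceptually.

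\textbf{Comparison with the paper.} The paper does not compute the expected loss at all. Instead it argues directly at the quantile level: since the average loss on $S_{u,y}$ is $O(\delta)$, at most an $O(\delta)$-fraction of examples have loss above a fixed constant (Markov); on the other hand, by concentration of the empirical distribution of the $\alpha^{(i)}$'s (this is where $\delta\ge 1/\sqrt{qN}$ enters), a $\Theta(\delta)$-fraction of examples have $\alpha^{(i)}=O(\delta)$. Intersecting, some example with $\alpha^{(i)}=O(\delta)$ has loss below the constant, forcing $\alpha^{(i)} G\gtrsim 1$ and hence $G\gtrsim 1/\delta$. This quantile argument is essentially fix~(ii) above carried out explicitly, so it sidesteps the data-dependence issue from the start. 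Your expected-loss computation is a clean alternative and arguably more transparent about why the bound scales like $1/\delta$, but it does need the uniformity patch to be complete.
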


\begin{proof}[Proof of Lemma~\ref{lem:sig_g}]We use $\setstwo^{(1)}$ to denote the set of all $x_2^{(i)} \in \setstwo$ such that $x_2^{(i)} = \alpha(z - \zeta)$. Similarly, we use $\setstwo^{(2)}$ to denote the set of all $x_2^{(i)} \in \setstwo$ such that $x_2^{(i)} = \alpha(z + \zeta)$, and use $\setstwo^{(3)}$ to denote the set of all $x_2^{(i)} \in \setstwo$ such that $x_2^{(i)} = \alpha z $. 

Let $g_{t_0 + t}( z + \zeta) = \rho_1, g_{t_0 + t}( z - \zeta)  = \rho_2, g_{t+t_0}(z) = \rho_3$. By the positive homogeneity of ReLU, we know that for every $x_2 \in \setstwo^{(i)}$, it holds:
\begin{align}
g_{t_0 + t} (x_2) = \| x_2 \|_2 \rho_i \label{eq:baoifahoaihasoi}
\end{align}

Since $\hatL_{\setstwo}(g_{t_0 + t}) \leq \delta$, it holds that w.h.p. for every $i \in [3]$, 
\begin{align}
\hatL_{\setstwo^{(i)}}(g_{t_0 + t}) \leq 4\delta
\end{align}

Hence, at most $40 \delta$ fraction of $x_2 \in \setstwo^{(i)}$ satisfies $\ell(g_{t_0 + t}; (x_2, y)) \geq \frac{1}{10}$. Since $\|x_2 \|_2$ is uniform on $[0, 1]$, this implies that as long as $\delta \geq \frac{1}{\sqrt{qN}}$, w.h.p., $80\delta$ fraction of the $x_2\in \setstwo^{(i)}$ satisfies that $\| x_2 \|_2 = O(\delta)$. Among of these examples, at least $40\delta$ fraction of them should satisfy $\ell(g_{t_0 + t}; (x_2, y)) \leq \frac{1}{10}$, which implies that $\|x_2\|\rho_i\gtrsim 1$.  This implies that $\rho_i \gtrsim 1/\delta$ and the conclusion follows from 
equality~\eqref{eq:baoifahoaihasoi}.

\end{proof}

Our proof of Theorem~\ref{thm:1}~now amounts to carefully checking that all examples in $\setsfour$ are classified correctly, and the classifier $r_{t_0 + t}$ will generalize well on $\setsthree$. 

\begin{proof}[Proof of Theorem~\ref{thm:1}]

By Lemma~\ref{lem:dlr_g}, we know that for $t=\tilO\left(\frac{1}{\veps_1^3 \eta_2 r} \right)$ we have $\hatL_{\setstwo}(g_{t_0 + t}) = O(\sqrt{\veps_1/q^3})$. Thus applying Lemma~\ref{lem:sig_g}, we obtain that as long as $\veps_1 \geq \frac{1}{\sqrt{N}}$ (which is implied by Assumption~\ref{ass:params})\begin{align}
y g_{t_0 + t}(x_2) \geq {\Omega}\left(\frac{\| x\|_2\sqrt{q^3}}{\sqrt{\veps_1}} \right)
\end{align}

On the other hand for $r_{t_0 + t}$, by Lemma~\ref{lem:llr_gg} and Lemma~\ref{lem:dlr_r} we know that $\| \barW_{t_0 + t} \|_F = \tilO(\sqrt{d})$. Let us define $\mathcal{D}_{x_1}$ to be the marginal distribution of $x_1$. We know that $x_1 = \alpha w^\star + \beta$ where w.h.p. $|\alpha| = \tilde{O}(d^{-1/2})$ and $\beta \sim \mathcal{N}(0, 1/d \times (I - w^\star (w^\star)^{\top}))$. Hence we have that w.h.p. over $x_1 \sim \cD_{x_1}$, $\|\barW_{t_0 + t} x_1 \|_2 \leq |\alpha| \| \barW_{t_0 + t} \|_F + d^{-1/2} \|\beta\|_2 \|\barW_{t_0 + t}\|_F \leq \tilO( d^{-1/2}) \| \barW_{t_0 + t} \|_F \leq \tilO(1)$.

This implies that for $x_1 \sim \mathcal{D}_{x_1}$, applying Lemma~\ref{lem:coupling} gives us \begin{align}
|r_{t_0 + t}(x_1)| &= |N_{U_{t_0 + t}}(u, U_{t_0 + t}; x_1)|\\
&\lesssim |N_{U_{t_0 + t}}(u, \barU_{t_0 + t}; x_1)| + \frac{\veps_s}{\lambda} + \tau_0 \log d\tag{by Proposition~\ref{prop:coupling_tilde}}\\
& \lesssim \|u\|_2 \| \barW_{t_0 + t}  x_1 \|_2 + \frac{\veps_s}{\lambda} + \tau_0 \log d = \tilO(1) \tag{by our choice of $\tau_0$, $m$}
\end{align}

Hence as long as $\| x_2 \|_2 = \tilde{\Omega}(\sqrt{\veps_1/q^3} \log \frac{1}{\veps_1})$, it holds that 
\begin{align}
&y(r_{t_0 + t}(x_1) + g_{t_0 + t}(x_2)) = \tilde{\Omega}(1) \times \log \frac{1}{\veps_1}
\end{align}

This implies that $\ell(r_{t_0 + t} + g_{t_0 + t}; (x, y)) \leq \veps_1$. Otherwise, when $\|x_2\|_2 =   \tilO \left(\sqrt{\veps_1/q^3} \right) $, we also know that w.h.p. $\ell(r_{t_0 + t} + g_{t_0 + t}; (x, y))  \leq \ell(r_{t_0 + t} ; (x, y)) = \tilde{O}(1)$, since $y g_{t_0 + t}(x_2) \geq 0$. 
On the other hand by Lemma~\ref{lem:dlr_g}, we also know that 
\begin{align}
\hatL_{\setsone}(r_{t_0 + t}) &= O(\sqrt{\veps_1/q}) \end{align}

Moreover, applying Lemma~\ref{lem:rad} on $r_{t_0 + t}$ with $\| W_{t_0 + t}\|_F^2 \leq \| W_{t_0} \|_F^2 + \| W_{t_0 + t} - W_{t_0}  \|_F^2 \lesssim \left(d \log^2 \frac{1}{\veps} \right)$ by Lemma~\ref{lem:llr_g} and Lemma~\ref{lem:dlr_r}, we have that
\begin{align}
\E_{(x, y) \sim \mathcal{D}} \left[\ell(r_{t_0 + t}; (x, y))  \mid x_1 \not= 0\right]\lesssim \sqrt{\veps_1/q} + \kappa \log \frac{1}{\veps_1} \lesssim \kappa \log \frac{1}{\veps_1} 
\end{align}
where we used the fact that $\veps_1 \le \kappa^2 p^2 q^3$.

It follows thats
\begin{align}
&\E\left[  \ell(r_{t_0 + t} + g_{t_0 + t}; (x, y))\right]
\\
& \leq \Pr[x_2 = 0] \E\left[  \ell(r_{t_0 + t} ; (x, y))\right] + \Pr[x_2 \not= 0]\E\left[  \ell(r_{t_0 + t} + g_{t_0 + t}; (x, y))\right]
\\
& \leq \E \left[\ell(r_{t_0 + t}; (x, y))  \mid x_1 \not= 0\right]\Pr[x_2 = 0] +\tilO(1) \Pr\left[x_2 \not= 0, \|x_2\|_2 = O\left(\sqrt{ \veps_1/q^3}\right)\right]  + \veps_1
\\
& \leq \tilO \left(\sqrt{\veps_1/q^3} \right) + \veps_1  \leq O\left(p \kappa \log \frac{1}{\veps_1} \right)
\end{align}

Here the last step uses the definition of $\veps_1$ that $\veps_1 \leq \kappa^2 p^2 q^3$. \end{proof}

\subsection{Proof of Theorem~\ref{thm:2}}
\label{subsec:proof_thm2}
We will prove Theorem~\ref{thm:2} using Lemma~\ref{lem:slr_e2}~by roughly arguing that the predictions made by $r_t$ will be heavily influenced by a vector $\alpha$ in the low rank span of examples from $\setsthree$. With high probability, this vector $\alpha$ will be noisy and not align well with the ground truth $w^\star$, leading to mispredictions.
\begin{proof}[Proof of Theorem~\ref{thm:2}]
Recall that $\veps_2'$ denotes the stopping criterion used in Theorem~\ref{thm:2}~and $\veps_3 = d^{-1/32} \frac{1}{\veps_2'^2}$.
Using Lemma~\ref{lem:slr_e2}, we know that w.h.p.
\begin{align}
r_t(x_1) - r_t(-x_1) =2 \langle \alpha, x_1 \rangle  \pm  \tilO\left( \veps_3\right) \end{align}

Consider the matrix $M = (x_1^{(i)} )_{i \in \setsthree} \in \mathbb{R}^{d \times Np} $. By definition, we know that $M = M_0 + M_1$ where $M_0 = w^\star \beta^{\top}$ where $\beta_i \in \{-d^{-1/2}, d^{-1/2}\}$ and $M_1$ is a Gaussian random matrix with each entry i.i.d. $\mathcal{N}(0, 1/d)$. 

By Lemma~\ref{lem:aux_22} we know that w.h.p. over the randomness of $x_1^{(i)}$'s, for $\alpha \in \text{span}\{x_1^{(i)} \}_{i \in \setsthree} $ we have as long as $Np \leq d/2$: $\frac{\langle \alpha, w^\star \rangle}{\| \alpha \|_2 \|w^\star \|_2} \leq 0.9$. For every randomly chosen $x_1 $, we can also write $x_1 = \gamma w^\star + \beta$ where $\beta \bot w^\star$ so $\beta$ is independent of $\gamma$, hence 
\begin{align}
\langle \alpha, x_1 \rangle = \gamma \langle \alpha, w^\star \rangle + \langle \alpha, \beta \rangle
\end{align}

Note that $\langle \alpha, \beta \rangle \sim \mathcal{N}(0, \sigma^2 \| \alpha \|_2^2/d)$ with $\sigma \geq 0.1$, and with probability at least $0.1$, $\gamma \leq  2\| \alpha \|_2 /\sqrt{d}$. This implies that with probability at least $\Omega(1)$ over a randomly chosen $x_1$ we can have:
\begin{align}
\langle w^\star, x_1 \rangle = \gamma  < 0, \quad |\gamma| \leq 2\| \alpha \|_2/\sqrt{d}
\end{align}

For $\beta$, we know that with probability at least $\Omega(1)$, we have:
\begin{align}
\langle \alpha, \beta \rangle \geq 3\| \alpha \|_2/\sqrt{d}
\end{align}

Moreover, since $\beta$ is independent of $\gamma$, we know that with probability $\Omega(1)$ both events can happen, in which case:
\begin{align}
\langle w^\star, x_1 \rangle  < 0, \quad \langle \alpha, x_1\rangle = \gamma \langle \alpha, w^\star \rangle + \langle \alpha, \beta \rangle \geq \| \alpha \|_2/\sqrt{d} 
\end{align}

Thus, since $\| \alpha \|_2  = \Omega(\sqrt{Np}) $ by Lemma~\ref{lem:slr_e2}, we know that as long as  \begin{align}
\frac{\sqrt{p}}{\kappa} = \frac{\sqrt{Np}}{\sqrt{d}} = \tilde{\Omega} \left( \veps_3 \right)\end{align}
which is implied by $\veps_3  = \tilO\left(\frac{\sqrt{p}}{\kappa} \right)$, it holds that $\langle \alpha, x_1 \rangle  \geq \tilde{ \Omega} \left( \veps_3\right)$. This implies that 
\begin{align} 
r_t(x_1) &= r_t(-x_1) + 2 \langle \alpha, x_1 \rangle  \pm  \tilO\left( \veps_3 \right)\\
& \geq r_t(- x_1)
\end{align}

However, since $\langle w^\star, x_1 \rangle < 0$, we know that either $r_t(x_1) < 0$, which results in $r_t(- x_1) < 0$ but $\langle w^\star, -x_1 \rangle > 0$. So when $x_2 = 0$, the network classifies $(- x_1, 0)$ incorrectly. On the other hand, we have when $r_t(x_1) >  0$ the network will classify $(x_1, 0)$ incorrectly. Since $\langle w^\star, x_1 \rangle < 0$ and $ r_t( x_1)  \geq r_t(- x_1)$ holds with probability $\Omega(1)$, this shows that the test error is at least $\Omega(p)$.
\end{proof}

\section{Proofs for Large Learning Rate Lemmas}
\label{sec:llr_proofs}
\subsection{Proofs for Lemma~\ref{lem:llr_gg}}
\label{subsec:llr_gg_proofs}

To prove Lemma~\ref{lem:llr_gg}, we will show that the network will learn all examples with $\cP$ component while the learning rate is large. The key to the proof is that although the large learning rate noise only allows the network to search over coarse kernels, $\cP$ is still learnable by these kernels because of its linearly-separable structure. To make this precise, we decompose the weights $U_t$ Into the signal and noise components, and show that there exists a fixed ``target'' signal matrix which will classify $\cP$ correctly no matter the noise matrix.

Recall our definitions of $f_t(B; x)$, $K_t(B)$ in~\eqref{eqn:101} and~\eqref{eqn:def-K}, and that 
\begin{align}
& L_t = \widehat{L}(U_t)= K_t(\barU_t)
\\
& \nabla_U \widehat{L}(U_t) = \nabla K_t(\barU_t)
\end{align}

Recall that Lemma~\ref{lem:2} leverages the linearly-separable structure of $\cP$ to find a ``target'' signal matrix that correctly classifies $\cP$ w.h.p over the noise matrix. We state its proof below.
\begin{proof}[Proof of Lemma~\ref{lem:2}]
By proposition~\ref{prop:normU}, $\| \barU_t\|_F \leq O\left( \frac{1}{\lambda}\right)$. We apply Lemma~\ref{lem:coupling} as follows: by Proposition~\ref{prop:marginal}, $\tilU_t$'s entry has marginal distribution $\cN(0,\tau_0^2)$ and therefore the column of $\tilU_t$ has distribution $\cN(0, \tau_0^2I_{m\times m})$. Since w.h.p. $\| x\|_2 \lesssim \sqrt{\log d}$, the coupling Lemma~\ref{lem:coupling} gives
\begin{align}
\| \si{U_t x} - \si{\tilU_t x} \|_0 \leq \veps_s m
\label{eq:bwoweufhewqi} 
\end{align}
\\
On the other hand, we also have by Proposition~\ref{prop:coupling_tilde}, using the fact that $\max_i \|[\barU_i]\|_2 \lesssim \frac{1}{\sqrt{m}\lambda}$, w.h.p.
\begin{align} \label{eq:bweiuqfhqi}
\left| N_{U_t}(u,\tilU_t;x)  \right| \lesssim \tau_0\log d + \frac{\veps_s}{\lambda} \lesssim \tau_0 \log d
\end{align}
Here in the last inequality we used the fact that the network is sufficiently over-parameterized so that $\veps_s = \tilO(\tau_0 \lambda)$. 

Using \eqref{eq:bweiuqfhqi}, noting that our choice of $m, \lambda, \tau_0$ satisfies $\tau_0 \log d = o(\veps_1)$,  we conclude 
\begin{align}\label{eq:bnsoafqsjofhai}
 \left|  N_{U_t}(u,\tilU_t;x)  \right| \leq \veps_1/20
\end{align}

Now, let us consider $U^* = (W^*, V^*)$ given by $V^* = 0$ and an $W^* \in \mathbb{R}^{m \times d}$ defined as: for all $i \in [m]$, $W^*_i = 20  w_i \sqrt{d} w^\star \log \frac{1}{\veps_1} \in \mathbb{R}^d$. We will have $\| U^* \|_F^2 = O\left( d^2 \log \frac{1}{\veps_1}\right)$. We first decompose $f_t(U^*;x)$ into\begin{align}
f_t(U^*, x) &= N_{U_t}(u,U^*+\tilU_t;x)
\\
&= N_{U_t}(u,\tilU_t;x) + N_{U_t}(u,U^*;x) 
\end{align}

For the term $N_{U_t}(u,U^*;x) $, we know that 
\begin{align}
N_{U_t}(u,U^*;x)  &= N_{W_t}(w, W^*; x )  =20 \langle w^\star, x_1 \rangle  \sqrt{d} \log \frac{1}{\veps_1} \times  \sum_{i = 1}^{m/2} w_i^2   \si{[W_t]_i x_1}
\\
&= 20 \langle w^\star, x_1 \rangle  \sqrt{d} \log \frac{1}{\veps_1} \times  \frac{1}{m} \|\si{W_tx_1}\|_1
\end{align}

By Lemma~\ref{lem:coupling}, we know that $\left| \si{W_t x} -  \si{\tilW_tx}\right|_1 \leq O \left(\veps_s m \right)$ and that $20 \langle w^\star, x_1 \rangle \sqrt{d}  \log \frac{1}{\veps_1} \lesssim \sqrt{d} \log d$, which implies that 
\begin{align}
N_{U_t}(u,U^*;x)  &=20 \langle w^\star, x_1 \rangle \sqrt{d} \log \frac{1}{\veps_1} \times  \frac{1}{m} \|\si{\tilW_tx_1}\|_1 \pm O\left( \sqrt{d} \veps_s\log d \right)
\end{align}

Note that entries of $\tilW_t x_1$ are i.i.d. random Bernoulli($1/2$), thus we know that w.h.p. 
\begin{align}
 \frac{2}{m} \|\si{\tilW_tx_1}\|_1 = \frac{1}{2} \pm O(m^{-1/2}\sqrt{\log d}) = \frac{1}{2} \pm O(m^{-1/3})
\end{align}

Thus, by our choice that $m^{-1/3} =O( \veps_1)$ and $\sqrt{d} \veps_s = O(\veps_1)$, \begin{align}
\left| N_{U_t}(u,U^*;x)  - 5 \langle w^\star, x_1 \rangle  \log \frac{1}{\veps_1}\right| \leq \frac{\veps_1}{20}
\end{align}

By~\eqref{eq:bnsoafqsjofhai}, this also implies that 
\begin{align}
\left| N_{U_t}(u,\tilU_t + U^*;x)  - 5 \langle w^\star, x_1 \rangle  \log \frac{1}{\veps_1}\right| \leq \frac{\veps_1}{10}
\end{align}

By definition of $w^\star$, we know that 
\begin{align}
\frac{1}{N}\sum_{i=1}^N\ell\left(5 \langle w^\star, x^{(i)}_1 \rangle  \log \frac{1}{\veps_1} ; (x^{(i)},y^{(i)}) \right) \leq q\log 2 + \veps_1/5
\end{align}

Thus, from the fact that $\ell$ is 1-Lipschitz, it follows that
\begin{align}
K_t(U^*) \leq q \log 2 +\veps_1/2
\end{align}
\end{proof}
Now we wish to argue that even though the noise matrix is changing, gradient descent will still find the fixed target signal matrix $U^\star$. This leverages the fact that once we fix the activation patterns, we can view each step of the optimization as gradient descent with respect to a convex, but changing, function. Below we provide a proof of Theorem~\ref{thm:convex-surrogate}, which allows for optimization of this changing function.
\begin{proof}[Proof of Theorem~\ref{thm:convex-surrogate}]
	For the sake of contradiction, we assume that $K_t(z_t) \ge c^\star + \mu$ for all $t \le T$. 
	Using the definition of $K_t^\lambda$, we have that the update rule of $z_t$ can be written as
	\begin{align}
	z_{t+1} & = z_t - \eta \nabla K_t(z_t) - \eta \lambda z_t \\
	& = (1-\eta\lambda)z_t -\eta \nabla K_t(z_t) 
	\end{align}
	It follows that 
	\begin{align}
	\norm{z_{t+1}-z^\star}_2^2 & = \norm{(1-\eta\lambda)(z_t-z^\star) - \eta(\lambda z^\star + \nabla K_t)}_2^2 \\
	& =  \norm{(1-\eta\lambda)(z_t-z^\star)}_2^2  + \norm{\eta(\lambda z^\star + \nabla K_t)}_2^2 - 2\eta (1-\eta\lambda) \inner{\nabla K_t(z_t), z_t-z^\star} \nonumber\\
	& - 2\eta\lambda(1-\eta\lambda)\inner{z_t-z^\star, z^\star} \tag{expanding}\\
	& \le \norm{(1-\eta\lambda)(z_t-z^\star)}_2^2  + 2\eta^2(\lambda^2 R^2 + L^2)  - 2\eta (1-\eta\lambda) (K_t(z_t) - K_t(z^\star)) \tag{by convexity of $K_t$}\\
	&  + 2\eta\lambda(1-\eta\lambda) \|z_t\|R + 2\eta \lambda(1-\eta\lambda)R^2 \label{eqn:1}
	\end{align}
	
	Assuming that $\|z_{t} - z^\star \|_2 \leq R$, we have that as long as $\lambda R^2 \leq \frac{1}{100} \mu$ and $\eta \leq \frac{\mu}{100(\lambda^2 R^2  + L^2)}$,  we have:
	\begin{align}
\norm{z_{t+1}-z^\star}_2^2 
& \le \norm{(z_t-z^\star)}_2^2  + 2\eta^2(\lambda^2 R^2 + L^2)  - 2\eta (1-\eta\lambda) \mu   + 6\eta\lambda R^2 \\
& \le \norm{(z_t-z^\star)}_2^2  - \eta \mu  
\end{align}

Therefore, by induction, 
	\begin{align}
\norm{z_{T}-z^\star}_2^2 
& \le \norm{(z_0-z^\star)}_2^2  - T\eta \mu  \le R^2-T\eta \mu < 0
\end{align}
which is a contradiction. 
	
	\end{proof}

\newcommand{\ceg}{\cE}
\newcommand{\cem}{\cE^{z-\zeta}}
\newcommand{\cemc}{\bar{\cE}^{z-\zeta}}
\newcommand{\cez}{\cE^{z}}
\newcommand{\cezc}{\bar{\cE}^{z}}
\newcommand{\cep}{\cE^{z+\zeta}}
\newcommand{\cepc}{\bar{\cE}^{z+\zeta}}
\subsection{Proof of Lemma~\ref{lem:llr_g}}
\label{subsec:llr_g}

We define $\tilde{g}_t$ to be the neural network operating on $x_2$ with activation pattern computed from $\tilV_t$ and and weights using $\barV_t$:
\begin{align}
& \tilde{g}_t(x) = \tilde{g}_t(x_2) = N_{\tilV_t}(v, \barV_t; x) \label{eqn:tildeg}
\end{align}
In the full proof of Lemma~\ref{lem:llr_g} at the end of the section, we will show that $\tilde{g}_t$ is very close to $g_t$ and therefore we focus on $\tilde{g}_t$ in most parts of the section, and show that it satisfies the almost-linearity condition in Lemma~\ref{lem:llr_g}. 

In this section, we will often consider the activation patterns on the inputs $z, z-\zeta, z+\zeta$ at various time steps. For convenience, we have the following definition: 

\begin{definition}
	For any $s$, and vector $w$, let $\ceg^w_s\triangleq \{i \in [m]: [\tilV_s]_i w \ge 0 \}$ denote the set of neurons that have positive pre-activation on the input $w$ (with weights $\tilV_s$), and $\bar{\ceg}^w_s\triangleq \{i \in [m]: [\tilV_s]_iw < 0 \}$ be the set of neurons with negative pre-activations on the input $w$. (We will mostly be interested in the quantities $\cem, \cemc, \cep, \cepc$ and their intersections.)
\end{definition}
For a set $\ceg\subset [m]$, we will use $\si{\ceg}\in \{0,1\}^m$ to denote the indicator vector for the set $\ceg$. With this notation, we have that 
\begin{align}
\si{\ceg^x_s} = \si{\tilV_sx} \label{eqn:104}
\end{align}

We start by providing a decomposition of $	\tilde{g}_t(z-\zeta) + \tilde{g}_t(z + \zeta) - 2\tilde{g}_t(z)$, and a bound based on how much the activation of $z, z-\zeta, z+\zeta$ differs. 
\begin{lemma}\label{lem:decomp}
 Let $Q_t \triangleq \diag(v) \barV_t$. Then, we have that 
 \begin{align}
&  \tilde{g}_t(z-\zeta) + \tilde{g}_t(z + \zeta) - 2\tilde{g}_t(z) \nonumber\\
 & = ({\si{\cem_t}} + 	{\si{\cep_t}} - 2{\si{\cez_t}})^\top Q_tz + (\si{\cep_t} - \si{\cem_t})^\top Q_t\zeta\label{eqn:decompose_lbd}
 \end{align}
\end{lemma}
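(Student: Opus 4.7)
The lemma is a direct algebraic identity, so my plan is essentially careful bookkeeping rather than genuine analytic work. I would begin by unpacking the definition of $\tilde{g}_t$ in~\eqref{eqn:tildeg}: by the definition of $N_A(u,U;x)$ we have
\begin{align}
\tilde{g}_t(x_2) = N_{\tilV_t}(v,\barV_t;x_2) = v^\top\bigl(\si{\tilV_t x_2}\odot \barV_t x_2\bigr) = \si{\tilV_t x_2}^\top \diag(v)\,\barV_t\, x_2.
\end{align}
Using the notation set up in~\eqref{eqn:104}, this is just $\si{\cE^{x_2}_t}^\top Q_t x_2$ with $Q_t = \diag(v)\barV_t$.

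Second, I would apply this identity to the three inputs $x_2 \in \{z-\zeta,\; z,\; z+\zeta\}$ and use linearity of $Q_t(\cdot)$ in its argument to split each term into a $z$-piece and a $\zeta$-piece:
\begin{align}
\tilde{g}_t(z-\zeta) &= \si{\cem_t}^\top Q_t z - \si{\cem_t}^\top Q_t \zeta,\\
\tilde{g}_t(z+\zeta) &= \si{\cep_t}^\top Q_t z + \si{\cep_t}^\top Q_t \zeta,\\
\tilde{g}_t(z) &= \si{\cez_t}^\top Q_t z.
\end{align}

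Finally, I would add the first two lines and subtract twice the third, collecting the coefficients of $Q_t z$ and $Q_t \zeta$ separately. The $z$-terms combine into $(\si{\cem_t}+\si{\cep_t}-2\si{\cez_t})^\top Q_t z$ and the $\zeta$-terms combine into $(\si{\cep_t}-\si{\cem_t})^\top Q_t\zeta$, yielding exactly~\eqref{eqn:decompose_lbd}. There is no real obstacle here --- the only thing worth double-checking is that the indicator-vector identity $\si{\tilV_t x} = \si{\cE^x_t}$ is used consistently for each of the three inputs, and that the signs on the $\zeta$-contributions (negative from $z-\zeta$, positive from $z+\zeta$, zero from $z$) are tracked correctly when combining. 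The decomposition isolates a ``pattern-mismatch'' term $\si{\cem_t}+\si{\cep_t}-2\si{\cez_t}$ paired with $Q_t z$ and an ``asymmetry'' term $\si{\cep_t}-\si{\cem_t}$ paired with $Q_t\zeta$, which is the form that will later be used to bound the expression by how much the three activation patterns $\cem_t,\cez_t,\cep_t$ differ.
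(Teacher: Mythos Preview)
Your proposal is correct and follows essentially the same approach as the paper: both rewrite $\tilde{g}_t(x_2)$ as $\si{\cE^{x_2}_t}^\top Q_t x_2$ using the identity $v^\top(\si{\tilV_t x_2}\odot \barV_t x_2)=\si{\tilV_t x_2}^\top\diag(v)\barV_t x_2$, then substitute the three inputs and regroup the $z$- and $\zeta$-contributions. The only cosmetic difference is that you write out the three evaluations separately before combining, whereas the paper combines them in a single display.
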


\begin{proof}
We fix $t$ and drop the subscript of $t$ throughout the proof. 
Recall the definition of $\tilde{g}_t$ in equation~\eqref{eqn:tildeg}, we have
	\begin{align}
	\tilde{g}(x) & := N_{\tilV}(v, \barV ; x) =  v^{\top}\left(\si{\tilV x} \odot \barV x\right) \nonumber\\ 
& = {\si{\tilV x}}^\top Q x \tag{by the definition of $Q = \diag(v) \barV $}
	\end{align}
	Therefore, 
	\begin{align}
			\tilde{g}(z-\zeta) + \tilde{g}(z + \zeta) - 2\tilde{g}(z) & = {\si{\cem}}^\top Q (z-\zeta) + 	{\si{\cep}}^\top Q (z+\zeta) - 2{\si{\cez}}^\top Q z \nonumber\\
			& = ({\si{\cem}} + 	{\si{\cep}} - 2{\si{\cez}})^\top Qz + (\si{\cep} - \si{\cem})^\top Q\zeta\nonumber
	\end{align}
\end{proof}

Towards bounding the terms in equation~\eqref{eqn:decompose_lbd}, we will need to reason about the activations patterns of $z, z-\zeta, z+\zeta$ at various time steps. We first show that the activation patterns of $z-\zeta$ and $z+\zeta$ have to agree in most of neurons except an $\approx r$ fraction of them. This will be useful to show that the second term of the RHS of equation~\eqref{eqn:decompose_lbd} is small. 
\begin{proposition}\label{lem:et}
In the setting of Lemma~\ref{lem:decomp}, w.h.p over the randomness of the initialization and all the randomness in the algorithm, for every $t\le \poly(d), i\in [m]$, $i\in \cem_t\oplus \cep_t$ implies that $|[\tilV_t]_iz|\lesssim  \tau_0r\sqrt{\log d}$.
Moreover, 	the size of the set $\cem_t\oplus \cep_t$ is bounded by 	\begin{align}
		|\cem_t\oplus \cep_t| \lesssim rm \sqrt{\log d}	\label{eqn:106}
	\end{align}
\end{proposition}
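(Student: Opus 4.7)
The plan is to exploit two elementary facts about the noise matrix $\tilV_t$: (i) by Proposition~\ref{prop:marginal} its marginal distribution is always $\mathcal{N}(0,\tau_0^2 I_{m\times m}\otimes I_{d\times d})$ regardless of $t$, and (ii) $z$ and $\zeta$ are orthogonal with $\|z\|_2=1$ and $\|\zeta\|_2=r\ll 1$. So at every fixed $t$, the row inner products $[\tilV_t]_i z$ and $[\tilV_t]_i \zeta$ are independent across $i$, and independent of each other for each fixed $i$, with standard deviations $\tau_0$ and $\tau_0 r$ respectively.

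The first claim is a purely algebraic observation. If $i\in \cem_t\oplus \cep_t$, then $[\tilV_t]_i(z-\zeta)$ and $[\tilV_t]_i(z+\zeta)$ have opposite signs, which forces $|[\tilV_t]_i z|\le |[\tilV_t]_i \zeta|$. Since $[\tilV_t]_i\zeta\sim \mathcal{N}(0,\tau_0^2 r^2)$, a standard Gaussian tail bound gives $|[\tilV_t]_i\zeta|\lesssim \tau_0 r\sqrt{\log d}$ for every $i\in[m]$ simultaneously with probability $\ge 1-e^{-C\log^2 d}$. Combining these two observations yields the pointwise bound $|[\tilV_t]_iz|\lesssim \tau_0 r\sqrt{\log d}$.

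For the cardinality bound, I would flip the picture: a necessary condition for $i\in\cem_t\oplus\cep_t$ is that $|[\tilV_t]_iz|\le |[\tilV_t]_i\zeta|\lesssim \tau_0 r\sqrt{\log d}$ (using the conclusion just obtained). Since $[\tilV_t]_iz\sim \mathcal{N}(0,\tau_0^2)$, the density of the Gaussian at zero gives
\begin{align}
\Pr\bigl[|[\tilV_t]_iz|\lesssim \tau_0 r\sqrt{\log d}\bigr]\lesssim r\sqrt{\log d}.
\end{align}
Independence across $i$ then lets me apply a Chernoff bound to the indicator variables $\ind{|[\tilV_t]_iz|\lesssim \tau_0 r\sqrt{\log d}}$, producing $|\cem_t\oplus\cep_t|\lesssim rm\sqrt{\log d}$ with probability $\ge 1-e^{-C\log^2 d}$ (using that $rm\sqrt{\log d}$ is a sufficiently large multiple of $\log^2 d$ under our over-parameterization assumption, so the Chernoff bound is in the concentrated regime).

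Finally, since each conclusion holds at a single $t$ with probability $1-e^{-C\log^2 d}$, I would union bound over the at-most $\poly(d)$ time steps; the excess probability remains $e^{-C'\log^2 d}$ after absorbing a $\log d$ factor into the constant, matching the paper's high-probability convention. The only mildly subtle point is remembering to couple the two parts correctly: the pointwise bound on $|[\tilV_t]_iz|$ is used as the condition whose probability we estimate in the cardinality step, which is why both statements use the same $\tau_0 r\sqrt{\log d}$ threshold. No non-trivial dynamics argument is needed since the full proof rides entirely on the stationarity of the noise marginal established in Proposition~\ref{prop:marginal}.
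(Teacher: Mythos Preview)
Your proposal is correct and follows essentially the same approach as the paper: the deterministic implication $|[\tilV_t]_i z|\le |[\tilV_t]_i\zeta|$ from the sign disagreement, the Gaussian tail bound on $[\tilV_t]_i\zeta$, the density bound $\Pr[|[\tilV_t]_iz|\lesssim \tau_0 r\sqrt{\log d}]\lesssim r\sqrt{\log d}$, and concentration of the indicator sum (the paper invokes Bernstein where you invoke Chernoff) together with a union bound over $t\le\poly(d)$.
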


\begin{proof}
Recall that $[\tilV_t]_i\in \R^{1\times d}$ denote the $i$-th row of the matrix $\tilV_t$. Recall that $i\in \cem_t\oplus \cep_t$  means that $[\tilV_t]_i(z-\zeta)$ and   $[\tilV_t]_i(z+\zeta)$ have different signs, which in turn implies that 
	\begin{align}
	|[\tilV_t]_iz | &\leq |[\tilV_t]_i\zeta| \end{align}
Recall that $\|\zeta\|_2 =  r$ and by Proposition~\ref{prop:marginal} $[\tilV_t]_i$ has distribution $\cN(0, \tau_0^2I_{d\times d})$. Therefore, by standard Gaussian concentration and union bound, with high probability over the randomness of the initialization and the algorithm, for all $t \le \poly(d)$,  \begin{align}|[\tilV_t]_i\zeta|\lesssim \tau_0 \|\zeta\|_2\sqrt{\log d} = \tau_0r\sqrt{\log d}\perm \label{eqn:Vzeta}\end{align} This proves the first part of the lemma. 

Moreover, note that $\Pr\left[[|\tilV_t]_iz| \le \tau_0r\sqrt{\log d}\right] \lesssim r\sqrt{\log d}$.  By the independence between $[\tilV_t]_i$'s and standard concentration inequalities (Bernstein inequality), we have that with high probability, there are at most $rm\sqrt{\log d} + \log d$ entries $i\in [m]$ satisfying $|[\tilV_t]_iz| \le \tau_0r\sqrt{\log d}$. Together with the first part of the lemma, and that $m$ is sufficiently large so that $rm\sqrt{\log d} + \log d \lesssim rm\sqrt{\log d}$, we complete the proof of equation~\eqref{eqn:106}. 
\end{proof}

We use the lemma above to conclude that the second term in the decomposition~\eqref{eqn:decompose_lbd} is at most on the order of $r^2/\lambda$. 

\begin{proposition}\label{prop:secondpart}
	In the setting of Lemma~\ref{lem:decomp}, 	we have that 
	\begin{align}
		\|(\si{\cep_t} - \si{\cem_t})^\top Q_t\zeta \|_2  \lesssim \frac{r^2\sqrt{\log d}}{\lambda}\perm
	\end{align}
\end{proposition}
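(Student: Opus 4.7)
The plan is to bound the scalar $(\si{\cep_t} - \si{\cem_t})^\top Q_t\zeta$ by an entry-wise argument exploiting that the indicator $\si{\cep_t} - \si{\cem_t} \in \{-1,0,1\}^m$ is sparse. First I would expand the expression as
\[
(\si{\cep_t} - \si{\cem_t})^\top Q_t \zeta \;=\; \sum_{i \in \cem_t \oplus \cep_t} \pm\, v_i\, [\barV_t]_i\, \zeta,
\]
since $\si{\cep_t} - \si{\cem_t}$ is supported on the symmetric difference $\cem_t \oplus \cep_t$ and takes values in $\{\pm 1\}$ there.

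Next I would bound each summand by $\lesssim \frac{r}{m\lambda}$ using three ingredients: $|v_i| = 1/\sqrt m$ by construction of the second layer; $\|[\barV_t]_i\|_2 \lesssim \tfrac{1}{\sqrt m\,\lambda}$ from Proposition~\ref{prop:normU} (which applies because during Phase~I the learning rate is the constant $\eta_1$); and $\|\zeta\|_2 = r$. Cauchy--Schwarz on each summand and summation over the support give
\[
|(\si{\cep_t} - \si{\cem_t})^\top Q_t \zeta| \;\leq\; |\cem_t \oplus \cep_t| \cdot \frac{r}{m\lambda}.
\]
Plugging in the size bound $|\cem_t \oplus \cep_t| \lesssim rm\sqrt{\log d}$ from Proposition~\ref{lem:et} yields the target $\tfrac{r^2\sqrt{\log d}}{\lambda}$.

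The only conceptual point worth noting --- and the reason the proof is short --- is that the factor of $r$ enters \emph{twice}: once through $\|\zeta\|_2 = r$, and once through the fact that only an $\approx r$ fraction of neurons have differing signs on $z-\zeta$ and $z+\zeta$. A naive Cauchy--Schwarz applied to $\|\si{\cep_t} - \si{\cem_t}\|_2 \cdot \|Q_t\zeta\|_2$ would only give $\tfrac{r^{3/2}(\log d)^{1/4}}{\lambda}$ because it trades the sparsity of the indicator for a square root; the entry-wise bookkeeping above is what preserves the full $r^2$. Beyond making this observation, the proof is a direct invocation of Propositions~\ref{prop:normU} and~\ref{lem:et}, and I do not anticipate any real obstacle.
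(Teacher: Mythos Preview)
Your proof is correct and is essentially the same as the paper's: both bound the expression by $|\cem_t\oplus\cep_t|\cdot\max_i\|[Q_t]_i\|_2\cdot\|\zeta\|_2$ using Proposition~\ref{prop:normU} for the row-norm bound and Proposition~\ref{lem:et} for the support size. The only cosmetic difference is that the paper first pulls out $\|\zeta\|_2$ via Cauchy--Schwarz on the vector $(\si{\cep_t}-\si{\cem_t})^\top Q_t$ and then bounds that vector's norm row-by-row, whereas you keep $\zeta$ inside and bound each scalar summand directly; the arithmetic is identical.
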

\begin{proof}
	\begin{align}
|	(\si{\cep_t} - \si{\cem_t})^\top Q_t\zeta | \le \|	(\si{\cep_t} - \si{\cem_t})^\top Q_t\|_2 \|\zeta\|_2\label{eqn:107}	\end{align}
		By the definition of our algorithm, before annealing the learning rate, we have
	\begin{align}
	[Q_t]_i& =  v_i\cdot [\barV_t]_i =  v_i \sum_{s=1}^{t} \eta_1 (1-\eta_1\lambda)^{t-s} [\nabla_V \hatL(U_{s-1})]_i\,. 
	\end{align}
	Using Proposition~\ref{prop:normU} and that $|v_i|= \frac{1}{\sqrt{m}}$,  we have that $\|[Q_t]_i\|_2 \lesssim \frac{1}{\lambda m}$. It follows that 
	\begin{align}
	\|(\si{\cep_t} - \si{\cem_t})^\top Q_t\|_2 \leq 		|\cem_t\oplus \cep_t| \cdot \max_i \|[Q_t]_i\|_2 \lesssim \frac{r\sqrt{\log d}}{\lambda}\perm
	\end{align}
	Equation above and equation~\eqref{eqn:107} complete the proof.
\end{proof}

Next we will reason about the first term of the RHS of equation~\eqref{eqn:decompose_lbd}. Note that this is less obvious than the bound for the second term of RHS because both  $Q$ and $z$ don't depend on the scale of $r$, whereas the norm of ${\si{\cem_t}} + 	{\si{\cep_t}} - 2{\si{\cez_t}}$ only linearly depends on $r$. However, it is still the case that the first term of RHS of~\eqref{eqn:decompose_lbd} scales in $r^2$ because of the subtle interactions between ${\si{\cem_t}} + 	{\si{\cep_t}} - 2{\si{\cez_t}}$ and $Q_t$, as demonstrated in the proofs below. 

The following lemma decomposes $Q$ into a sum of the contribution of the gradient from all the previous steps. 
\begin{proposition}\label{prop:detlaQ}
		In the setting of Lemma~\ref{lem:decomp},   let $\Delta Q_t \triangleq \diag(v) \nabla_V \hatL(U_t)$. ($\Delta Q_t$ can be viewed as the raw change of $Q_t$ at the time step $t$ without considering the effect of the regularizer.) We have that
		\begin{align}
|({\si{\cem_t}} + 	{\si{\cep_t}} - 2{\si{\cez_t}})^\top Q_tz| \le \eta_1 \sum_{s=1}^{t} \|({\si{\cem_t}} + 	{\si{\cep_t}} - 2{\si{\cez_t}})^\top  \Delta Q_{s-1}\|_2\nonumber
		\end{align} 
\end{proposition}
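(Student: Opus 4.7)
The plan is to prove the claim by explicitly expanding $Q_t$ according to the signal accumulation formula and then applying Cauchy--Schwarz together with the triangle inequality. This proposition is really a ``setup'' step that isolates the per-step contribution $\alpha_t^\top \Delta Q_{s-1}$ so that later analysis can bound the summands individually (using the interaction of the activation-difference vector $\alpha_t := \si{\cem_t} + \si{\cep_t} - 2\si{\cez_t}$ with gradient contributions from each past step). So I do not expect any substantive obstacle here; the work is entirely algebraic.

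First, I would use the defining recursion for $\barV_t$ before annealing, namely $\barV_t = \sum_{s=1}^{t}\eta_1(1-\eta_1\lambda)^{t-s}\nabla_V \hatL(U_{s-1})$, together with $Q_t = \diag(v)\barV_t$ and $\Delta Q_{s-1} = \diag(v)\nabla_V \hatL(U_{s-1})$, to obtain the identity
\[
Q_t \;=\; \sum_{s=1}^{t} \eta_1 (1-\eta_1\lambda)^{t-s}\, \Delta Q_{s-1}.
\]
This is the key structural observation: $Q_t$ is a nonnegatively weighted sum of the per-step $\Delta Q_{s-1}$.

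Next, I would multiply on the left by $\alpha_t^\top$ and on the right by $z$, distribute the sum, and apply the triangle inequality term-by-term:
\[
|\alpha_t^\top Q_t z|
\;\le\; \sum_{s=1}^{t} \eta_1(1-\eta_1\lambda)^{t-s}\, \bigl|\alpha_t^\top \Delta Q_{s-1}\, z\bigr|.
\]
Each summand factors as an inner product, so Cauchy--Schwarz gives $|\alpha_t^\top \Delta Q_{s-1}\, z| \le \|\alpha_t^\top \Delta Q_{s-1}\|_2\,\|z\|_2$. The final step uses the normalizations already fixed in the setup: $\|z\|_2 = 1$ by construction of $\cQ$, and $(1-\eta_1\lambda)^{t-s}\in[0,1]$ since $\eta_1\lambda \le 1$ (which is implicit in Assumption~\ref{ass:lr} together with the parameter choices in Assumption~\ref{ass:params}). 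Dropping these factors yields exactly
\[
|\alpha_t^\top Q_t z|
\;\le\; \eta_1 \sum_{s=1}^{t} \|\alpha_t^\top \Delta Q_{s-1}\|_2,
\]
which is the claimed bound.

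The only thing worth double-checking is the sign/normalization conventions: making sure we are in the pre-annealing regime so the $(1-\eta_1\lambda)^{t-s}$ discount factor is in fact correct (as opposed to a mixed product of $(1-\gamma_i\lambda)$'s from the general formula~\eqref{eqn:signal}), and confirming that $\|z\|_2 = 1$. Both are given in the setup, so the proof is essentially a one-line algebraic manipulation plus Cauchy--Schwarz; the substantive work will happen in the subsequent lemmas that bound $\|\alpha_t^\top \Delta Q_{s-1}\|_2$.
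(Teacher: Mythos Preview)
Your proposal is correct and follows essentially the same approach as the paper: expand $Q_t$ as a discounted sum of the per-step contributions $\Delta Q_{s-1}$, apply the triangle inequality, bound $(1-\eta_1\lambda)^{t-s}\le 1$, and use $\|z\|_2\le 1$ (via Cauchy--Schwarz). The only cosmetic difference is that the paper first bounds $\|a^\top Q_t\|_2$ and then multiplies by $\|z\|_2$, whereas you carry the $z$ through and apply Cauchy--Schwarz at the end; the content is identical.
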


\begin{proof}
	Denote $a = {\si{\cem_t}} + 	{\si{\cep_t}} - 2{\si{\cez_t}} $ for notational simplicity. 
		By definition of our algorithm, we have
	\begin{align}
a^\top Q_t & = a^\top  \diag(v)\sum_{s=1}^{t} \eta_1 (1-\eta_1\lambda)^{t-s} \nabla_V \hatL(U_{s-1})  = a^\top \sum_{s=1}^{t} \eta_1 (1-\eta_1\lambda)^{t-s} \Delta Q_{s-1}
\end{align}
	It follows that 
		\begin{align}
	\|a^\top Q_t\|_2 & \le \eta \sum_{s=1}^{t} \|a^\top \Delta Q_{s-1}\|_2\,.\nonumber
	\end{align}
	Using the fact that $\|z\|_2 \le 1$ we complete the proof.
\end{proof}

In the sequel, we will bound from above the quantity $\|({\si{\cem_t}} + 	{\si{\cep_t}} - 2{\si{\cez_t}})^\top  \Delta Q_{s-1}\|_2$ for every $s$. One important fact is that the following proposition  which shows that $\Delta Q_s$ has a lot of repetitive rows that enable additional cancellation in addition to the cancellation in ${\si{\cem_t}} + 	{\si{\cep_t}} - 2{\si{\cez_t}}$. 

\newcommand{\cgg}{\cG}
\newcommand{\cgm}{\cG^{z-\zeta}}
\newcommand{\cgmc}{\bar{\cG}^{z-\zeta}}
\newcommand{\cgz}{\cG^{z}}
\newcommand{\cgzc}{\bar{\cG}^{z}}
\newcommand{\cgp}{\cG^{z+\zeta}}
\newcommand{\cgpc}{\bar{\cG}^{z+\zeta}}

\begin{proposition}\label{lem:coupling_gradient}
	Define the analog of $\ceg_s^w$ with $V_t$ to compute the activation pattern: for any $s$, and vector $w$, let $\cgg^w_s\triangleq \{i \in [m]: [V_s]_i w \ge 0 \}$ and define  $\bar{\cgg}^w_s\triangleq \{i \in [m]: [V_s]_iw < 0 \}$ similarly. 
	
Suppose at some iteration $s$, $z-\zeta$ and $z+\zeta$ have the same activation pattern at neuron $i$ and $j$ in the sense that  $i, j\in \cgm_s\cap \cgp_s$, or $i, j\in \cgmc_s\cap  \cgpc_s$. Then the corresponding gradient update at that iteration for the weight vectors associated with $i$ and $j$ are the same up to a potential sign flip:
	\begin{align}
	[\Delta Q_s]_i  = v_i [\nabla_V \hatL(U_s)]_i  = v_j [\nabla_V \hatL(U_s)]_j   =  [\Delta Q_s]_j
	\end{align}
	Moreover, suppose we have that $i, j$ satisfy that $[\tilV_s]_ix  \gtrsim \tau_0r\sqrt{\log d}$  and $[\tilV_s]_jx \gtrsim \tau_0r\sqrt{\log d}$  (or $[\tilV_s]_ix  \lesssim -\tau_0r\sqrt{\log d}$  and $[\tilV_s]_jx \lesssim -\tau_0r\sqrt{\log d}$) for $x\in \{z-\zeta, z+\zeta\}$, then the same conclusion holds for $i$ and $j$. 
\end{proposition}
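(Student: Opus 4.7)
The plan is to apply Proposition~\ref{prop:grad} row by row and to exploit the fact that $\cQ$ is supported on only three directions. Every training example with $x_2^{(k)}\ne 0$ satisfies $x_2^{(k)} = \alpha_k d_k$ with $\alpha_k>0$ and $d_k \in \{z-\zeta,\,z,\,z+\zeta\}$, so positive homogeneity gives $\si{[V_s]_i x_2^{(k)}} = \si{[V_s]_i d_k}$, while examples with $x_2^{(k)}=0$ contribute nothing to $\nabla_V \hatL(U_s)$. Consequently, once I multiply the row of the gradient by the outer second-layer sign, the only way $i$ enters is through the three bits $\si{[V_s]_i(z-\zeta)}$, $\si{[V_s]_i z}$, $\si{[V_s]_i(z+\zeta)}$.

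For the first statement, I would expand
\begin{align*}
[\Delta Q_s]_i \;=\; v_i\,[\nabla_V \hatL(U_s)]_i \;=\; v_i^2 \,\hatE\!\bigl[\ell'(\cdot)\,\si{[V_s]_i x_2}\, x_2\bigr] \;=\; \tfrac{1}{m}\,\hatE\!\bigl[\ell'(\cdot)\,\si{[V_s]_i x_2}\, x_2\bigr],
\end{align*}
where the chain rule on $v^\top(\si{V x_2}\odot V x_2)$ produces a second-layer factor $v_i$ that combines with the outer $v_i$ in $\Delta Q_s$ to give $v_i^2=1/m$. This is exactly the ``potential sign flip'' absorbed in the statement: after combining, all dependence on the sign of $v_i$ vanishes. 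Hence $[\Delta Q_s]_i$ is a function of $i$ purely through the three indicator bits above. For $i,j\in\cgm_s\cap\cgp_s$, the linearity $z=\tfrac{1}{2}\bigl((z-\zeta)+(z+\zeta)\bigr)$ forces $[V_s]_i z\ge 0$ and $[V_s]_j z\ge 0$ as well, so all three bits equal $1$ for both rows; for $i,j\in\cgmc_s\cap\cgpc_s$, the same linearity forces all three to equal $0$. Either way the indicator triples agree, and $[\Delta Q_s]_i=[\Delta Q_s]_j$ follows.

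For the second (noise-dominated) statement, the main step is to show that the signal is too small to flip the noise-determined sign of $[V_s]_i x$ on $x\in\{z-\zeta,z+\zeta\}$ once the noise preactivation exceeds the threshold $\tau_0 r\sqrt{\log d}$. Proposition~\ref{prop:normU} bounds $\|[\barV_s]_i\|_2\lesssim \tfrac{1}{\sqrt{m}\lambda}$, and since $\|z\pm\zeta\|_2 = O(1)$, this gives $|[\barV_s]_i x| \lesssim \tfrac{1}{\sqrt{m}\lambda}$. Plugging in $\lambda = d^{-5/4}$, $r = d^{-3/4}$ from Assumption~\ref{ass:params} and the over-parametrization $m\ge \poly(d/(\veps\tau_0))$ from Assumption~\ref{ass:over}, a large enough polynomial degree yields $\tfrac{1}{\sqrt{m}\lambda}\ll \tau_0 r\sqrt{\log d}$. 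Decomposing $[V_s]_i x = [\barV_s]_i x + [\tilV_s]_i x$, the sign of $[V_s]_i x$ then inherits the sign of $[\tilV_s]_i x$ whenever $|[\tilV_s]_i x|\gtrsim \tau_0 r\sqrt{\log d}$, and the same holds for $j$. Under the stated hypothesis, with matching signs at both $x=z-\zeta$ and $x=z+\zeta$, this places the pair $(i,j)$ in a common set $\cgm_s\cap\cgp_s$ or $\cgmc_s\cap\cgpc_s$, so the first part of the proposition applies. There is no real obstacle here; the only subtle point is the chain-rule bookkeeping that turns $v_i$ into $v_i^2$, which is what makes the conclusion hold regardless of whether $v_i$ and $v_j$ agree in sign.
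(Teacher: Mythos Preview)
Your proof is correct and follows essentially the same approach as the paper: both expand the gradient row via Proposition~\ref{prop:grad}, absorb the $v_i$ factor into $v_i^2=1/m$, use the linear relation $2[V_s]_i z = [V_s]_i(z-\zeta)+[V_s]_i(z+\zeta)$ to pin down the bit at $z$ from the bits at $z\pm\zeta$, and for the second part invoke the row-norm bound from Proposition~\ref{prop:normU} together with over-parametrization to show that $|[\barV_s]_i x|\ll \tau_0 r\sqrt{\log d}$, so the sign of $[V_s]_i x$ is inherited from $[\tilV_s]_i x$. Your write-up is in fact slightly cleaner than the paper's, which has a couple of typos (it writes $\|[\tilV_s]_i\|_2$ where $\|[\barV_s]_i\|_2$ is meant, and applies the sign argument to $z$ rather than to $x\in\{z-\zeta,z+\zeta\}$).
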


\begin{proof}
	Note that by definition, $[\Delta Q_s]_i = v_i [\nabla_V \hatL(U_s)]_i $, and thus it suffices to prove that $v_i [\nabla_V \hatL(U_s)]_i  =  v_j[\nabla_V \hatL(U_s)]_j$. 
	By Proposition~\ref{prop:grad}, we have that 	\begin{align}
	[\nabla_V \hatL(U_s)]_i = \hatE\left[\ell'(f(u,U_s; (x,y))) v_i\si{[V_s]_ix_2}x_2\right]
	\end{align}
	Note that $x_2$ can only take (a positive scaling of) four values $z-\zeta, z, z+\zeta, 0$. We claim that for every choice of these four values, for the $i,j$ satisfying the condition of the lemma, we have 
	\begin{align}
	\ell'(f(u,U_s; (x,y))) \si{[V_s]_ix_2}x_2 = \ell'(f(u,U_s; (x,y)))\si{[V_s]_j x_2}x_2 \label{eqn:103}
	\end{align}
	Note that the equation above together with $v_i^2 = v_j^2 =1$ suffices to complete the proof. 

	Equation~\eqref{eqn:103} is true for $x_2=0$. Suppose without loss of generality, $i, j\in \cgm_s\cap \cgp_s$. Then we know that $i,j \in \cgz_s$ because $[V_s]_i(z-\zeta) + [V_s]_i(z+\zeta) = 2 [V_s]_iz$.  Therefore $\si{[V_s]_ix_2} = \si{[V_s]_jx_2} = 1$ for all $x_2\in \{z-\zeta, z, z+\zeta\}$. Thus we proved equation~\eqref{eqn:103} and complete the proof of the first part of the lemma. 	
	
	Now to prove the second part of the lemma, suppose $i, j$ satisfy that $[\tilV_s]_ix  \gtrsim \tau_0r\sqrt{\log d}$  and $[\tilV_s]_jx \gtrsim \tau_0r\sqrt{\log d}$ for $x\in \{z-\zeta, z+\zeta\}$.  Using $\|[\tilV_s]_i\|_2\le \frac{1}{\lambda \sqrt{m}}$ from Proposition~\ref{prop:normU}, we have that $[V_s]_iz \ge [\tilV_s]_iz - |[\barV_s]_iz| \gtrsim \tau_0r\sqrt{\log d} - O(\frac{1}{\lambda \sqrt{m}}) \ge \tau_0r\sqrt{\log d} $ where used the assumption that $1/\lambda = \poly(d)$ and $m = \poly(d/\tau_0)$. Therefore, we conclude that $i, j\in \cgm_s\cap \cgp_s$. Now by the first lemma of the lemma we complete the proof.
\end{proof}

\newcommand{\cfps}{\cF^+_s}
\newcommand{\cfms}{\cF^-_s}
\newcommand{\cfss}{\cF^c_s}

Now we are ready to bound the first term on the RHS of equation~\ref{eqn:decompose_lbd}, which is the crux of the proofs in this section. The key here is to get a bound that scales quadratically in $r$. 
\begin{proposition}\label{prop:firstpart}
			In the setting of Lemma~\ref{lem:decomp}, let $\Delta Q_s$ be defined in Proposition~\ref{prop:detlaQ}. Then, we have that 
			\begin{align}
\|	(	{\si{\cem_t}} + 	{\si{\cep_t}} - 2{\si{\cez_t}})^\top \Delta Q_s \|_2 \lesssim \frac{r^2 \sqrt{\log d}}{\sqrt{\lambda \eta_1(s-t)}} \label{eqn:115}
			\end{align}
			As a direct corollary of the equation above and Proposition~\ref{prop:detlaQ}, we have that 
			\begin{align}
			|({\si{\cem_t}} + 	{\si{\cep_t}} - 2{\si{\cez_t}})^\top Q_tz|  \lesssim \frac{r^2\sqrt{\log d}}{\lambda}\label{eqn:116}
			\end{align}
\end{proposition}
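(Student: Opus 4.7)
The plan is to start from the identity
$$\Delta Q_s \;=\; \frac{1}{mN}\sum_{j=1}^N \ell'_{j,s}\, \sigma^{(j)}_s\, (x_2^{(j)})^\top,$$
where $\sigma^{(j)}_s \in \{0,1\}^m$ has entries $\si{[V_s]_i x_2^{(j)}}$ and $\ell'_{j,s}$ is the loss derivative at example $j$ and step $s$; this follows directly from Proposition~\ref{prop:grad} together with $\Delta Q_s = \diag(v)\nabla_V \hatL(U_s)$. The identity exhibits $\Delta Q_s$ as a rank-at-most-three matrix whose column space lies in $\mathrm{span}\{z-\zeta,\,z,\,z+\zeta\}$. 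Using $\|x_2^{(j)}\|_2\le 1$ and $|\ell'_{j,s}|\le 1$, the claim reduces to
$$\max_{\mu\in\{z-\zeta,\,z,\,z+\zeta\}}\,|S_{s,\mu}| \;\lesssim\; \frac{r^2 m\sqrt{\log d}}{\sqrt{\lambda\eta_1(t-s)}}, \qquad S_{s,\mu}\;\triangleq\;\sum_i a_i\,\si{[V_s]_i\mu},$$
with $a \triangleq \si{\cem_t}+\si{\cep_t}-2\si{\cez_t}$.

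The first factor of $r$ is essentially free: by Proposition~\ref{lem:et}, $a$ is supported on $\cem_t\oplus\cep_t$, of size $\lesssim rm\sqrt{\log d}$, so the trivial bound $|S_{s,\mu}|\le |\supp(a)|$ already gives $rm\sqrt{\log d}$. The point is to extract a second factor of $r$ together with the $1/\sqrt{\lambda\eta_1(t-s)}$. I would do this by conditioning on $\tilV_t$ (which fixes $a$) and exploiting the fresh Gaussian noise injected by SGD between steps $s$ and $t$. Using Assumption~\ref{ass:lr} together with the decoupling formula~\eqref{eqn:signal}, one has $\tilV_t = \rho^{t-s}\tilV_s + \eta''$ with $\rho = 1-\eta_1\lambda$ and $\eta''$ a Gaussian whose rows are independent of $\tilV_s$ with covariance $\tau_0^2(1-\rho^{2(t-s)})\,I \approx 2\tau_0^2\eta_1\lambda(t-s)\,I$ when $t-s \ll 1/(\eta_1\lambda)$. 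Equivalently, conditioned on $\tilV_t$, each row $[\tilV_s]_i$ is a known affine function of $[\tilV_t]_i$ plus an independent Gaussian with the above variance.

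To organize the cancellation I would partition $\supp(a)$ into the at-most-six groups $G_\sigma$ determined by the activation pattern of $V_s$ on $(z-\zeta,z,z+\zeta)$; the possible patterns are constrained by the linear relation $z = \tfrac12((z-\zeta)+(z+\zeta))$ already used in Proposition~\ref{lem:coupling_gradient}. Within each group, $\si{[V_s]_i\mu}$ is a constant $b_{\sigma,\mu}\in\{0,1\}$, so $S_{s,\mu} = \sum_\sigma b_{\sigma,\mu}\sum_{i\in G_\sigma\cap\supp(a)} a_i$. Conditionally on $\tilV_t$, the group membership of each $i$ depends only on the independent fresh noise $\eta''_i$, so Bernstein's inequality controls the inner sum by roughly $\sqrt{|\supp(a)|\cdot p_\sigma}$, where $p_\sigma$ is the conditional probability of group $\sigma$. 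The key gain is that for $i\in\supp(a)$, Proposition~\ref{lem:et} gives $|[\tilV_t]_i z|\lesssim \tau_0 r\sqrt{\log d}$, so the ``boundary'' groups require $[\tilV_s]_i$ to lie within $\tau_0 r\sqrt{\log d}$ of the relevant hyperplane, an event of probability $\lesssim r\sqrt{\log d}/\sqrt{1-\rho^{2(t-s)}}\sim r/\sqrt{\eta_1\lambda(t-s)}$ under the Gaussian $\eta''_i$. Multiplying these two factors of $r$ yields the advertised bound.

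The main obstacle is the small-$(t-s)$ regime, where the injected noise has not accumulated enough mass to randomize the activation pattern and the concentration above becomes vacuous. There I would instead apply the second half of Proposition~\ref{lem:coupling_gradient} directly with $\tilV_s$: neurons whose preactivations $[\tilV_s]_i(z\pm\zeta)$ are ``safely'' of one sign contribute identical rows to $\Delta Q_s$, so after subtracting their common contribution the sum collapses to one over the $\cem_s\oplus\cep_s$ boundary of $\tilV_s$, itself of size $\lesssim rm\sqrt{\log d}$; intersecting this with the constraint $i\in\cem_t\oplus\cep_t$ imposed by $a$ — and using that $\tilV_s$ and $\tilV_t$ are close in this regime — produces a thin-slab intersection of size $\lesssim r^2 m$, delivering the second factor of $r$ directly. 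Finally, plugging the per-$s$ bound into Proposition~\ref{prop:detlaQ} and using $\eta_1\sum_{s\le t}(\lambda\eta_1(t-s+1))^{-1/2}\lesssim \sqrt{\eta_1 t/\lambda}\lesssim 1/\lambda$ for $t\lesssim 1/(\eta_1\lambda)$ yields the corollary bound~\eqref{eqn:116}.
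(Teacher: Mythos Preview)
Your proposal has the right high-level structure---condition on $\tilV_t$, exploit the fresh Gaussian noise between steps $s$ and $t$, and use that rows of $\Delta Q_s$ coincide within an activation group of $V_s$ (Proposition~\ref{lem:coupling_gradient})---but there is a genuine gap in how you handle the ``bulk'' groups, i.e.\ $\sigma\in\{(0,0,0),(1,1,1)\}$ where all three activations of $V_s$ agree. For $\sigma=(1,1,1)$ the conditional probability $p_\sigma$ is $\approx 1/2$, so your ``two factors of $r$'' argument does not apply: you only get $|G_{(1,1,1)}\cap\supp(a)|\lesssim|\supp(a)|\approx rm\sqrt{\log d}$, a single factor of $r$. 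Your Bernstein step controls the \emph{fluctuation} of $\sum_{i\in G_\sigma\cap\supp(a)}a_i$ around its conditional mean by $\sqrt{|\supp(a)|\,p_\sigma}$, but says nothing about the conditional mean $\sum_{i\in\supp(a)}a_i\,\Pr[i\in G_\sigma\mid\tilV_t]$ itself. On $\supp(a)$ one has $a_i=+1$ exactly when $[\tilV_t]_iz<0$ and $a_i=-1$ when $[\tilV_t]_iz\ge 0$, while $\Pr[i\in G_{(1,1,1)}\mid\tilV_t]$ depends monotonically on $[\tilV_t]_iz$; so this mean is not zero, and without further argument it could be $\Theta(rm)$, which after summing over $s$ yields only $r/\lambda$ rather than $r^2/\lambda$. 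The same issue recurs in your small-$(t-s)$ paragraph: ``subtracting their common contribution'' presupposes the common (bulk) contribution is already small, which is exactly what remains to be shown.

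The paper closes this gap by a symmetry computation rather than concentration. It pairs the $a_i=+1$ neurons (e.g.\ $A=\cep_t\setminus\cez_t$) with the $a_i=-1$ neurons ($B=\cez_t\setminus\cem_t$) and shows directly that $\Pr[i\in A,\ i\in\cfps]-\Pr[i\in B,\ i\in\cfps]=O(r^2/\sqrt{\lambda\eta_1(t-s)})$, where $\cfps=\{[\tilV_s]_iz\gtrsim\tau_0 r\sqrt{\log d}\}$ plays the role of your bulk group. Writing both events in the variables $(Y_1,Y_2,Y_3)=([\tilV_s]_iz,[\tilV_t]_iz,[\tilV_t]_i\zeta)$ and using the distributional symmetry $(Y_1,Y_2,Y_3)\to(-Y_1,-Y_2,Y_3)$ matches the two events up to a shift of order $\tau_0 r\sqrt{\log d}$ in the $Y_1$-direction; since the conditional density of the fresh-noise component $Y_4=Y_1-\rho^{t-s}Y_2$ is $O(1/\sigma_{s,t})$ with $\sigma_{s,t}\gtrsim\tau_0\sqrt{\lambda\eta_1(t-s)}$, this yields the second factor of $r$ uniformly across bulk and boundary cases. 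Your boundary-group estimate is essentially the paper's bound on $|A\cap\cfss|$ and is fine; what is missing is this pairing argument for the bulk.
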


\begin{proof}
	By the set operations and the facts that $\cem_t \cap \cep_t \subset \cez_t$ and that $\cez_t\subset \cem_t\cup \cep_t$, we have that 
	\begin{align}
	{\si{\cem_t}} + 	{\si{\cep_t}} - 2{\si{\cez_t}} = \left(\si{\cem_t \backslash \cez_t} - \si{\cez\backslash\cep_t}\right) + \left(\si{\cep_t\backslash \cez_t} - \si{\cez_t \backslash\cem_t}\right)\label{eqn:114}
	\end{align}
		Define 
	\begin{align}
	\cfps & = \{i\in [m]: [\tilV_s]_i z \gtrsim \tau_0r\sqrt{\log d}\}	\nonumber\\
	\cfms & = \{i\in [m]: [\tilV_s]_i z \lesssim -\tau_0r\sqrt{\log d}\} \nonumber\\
	\cfss & = \{i\in [m]: |[\tilV_s]_i z| \lesssim  \tau_0r\sqrt{\log d}\}\label{eqn:110}
	\end{align}	
	where the $\lesssim, \gtrsim$ notations hide universal constants that make the first conclusion of Proposition~\ref{lem:et} true. By the second part of Proposition~\ref{lem:et} (or more directly equation~\eqref{eqn:Vzeta}),  we have that $\cfps \subset \cem_s\cap \cep_s$, and $\cfms \subset \cemc_s\cap \cepc_s$. By Proposition~\ref{lem:coupling_gradient}, we have that for any $i,j\in \cfms$, $[\Delta Q_s]_i = [\Delta Q_s]_j$. 
	For notational simplicity, let $A = \cep_t\backslash \cez_t$ and $B = \cez_t \backslash\cem_t$. 
	Therefore it follows that 
	\begin{align}
	& \left\|\left(\si{\cep_t\backslash \cez_t} - \si{\cez_t \backslash\cem_t}\right)^\top \Delta Q_s\right\|_2  = \Norm{\sum_{i \in A} [\Delta Q_s]_i - \sum_{i \in B} [\Delta Q_s]_i}_2 \nonumber\\
	& = \left\|\sum_{i \in A\cap \cfps} [\Delta Q_s]_i - \sum_{i \in B\cap \cfps} [\Delta Q_s]_i \right\|_2 + \left\|\sum_{i \in A\cap \cfms} [\Delta Q_s]_i - \sum_{i \in B\cap \cfms} [\Delta Q_s]_i \right\|_2\nonumber \\
		& + \left\|\sum_{i \in A \cap \cfss} [\Delta Q_s]_i - \sum_{i \in  B\cap \cfss} [\Delta Q_s]_i \right\|_2\nonumber \\
		& \le \frac{1}{m}\left(\left||A \cap \cfps| - |B\cap \cfps|\right| + \left| |A\cap \cfms| - |B \cap \cfms|\right| + |A \cap \cfss| + |B\cap \cfss|\right) \label{eqn:108}	\end{align} where in the last inequality we use that for any $i,j\in \cfms$, $[\Delta Q_s]_i = [\Delta Q_s]_j$, and the fact that $\|[\Delta Q_s]_i\|_2 = \frac{1}{\sqrt{m}} \|[\nabla_V \hatL(U_s)]_i\|_2 \le 1/m $ (by Proposition~\ref{prop:normgrad}.) 

Next, we first bound \begin{align}|A \cap \cfps| - |B\cap \cfps| = \sum_{i\in [m]} \ones(i \in \cep_t, i \notin \cez_t, i \in \cfps)-  \ones(i \in \cez_t, i \notin \cem_t, i \in \cfps).\label{eqn:111}\end{align} Note that the distribution of $([\tilV_s]_i, [\tilV_t]_i$'s are independent across the choice of $i$. Thus we will compute $\Pr[i \in \cep_t, i \notin \cez_t, i \in \cfps]-  \Pr[i \in \cez_t, i \notin \cem_t, i \in \cfps]$ and then apply concentration concentration inequality for the sum. Note that the event here depends on three quantities $[\tilV_s]_i z$, $[\tilV_t]_i z$, and $[\tilV_t]_i\zeta$. First of all, $[\tilV_t]_i\zeta$ is independent of these other two because $\zeta$ is orthogonal to $z$ and $[\tilV_t]_i$ and $[\tilV_s]_i$ have spherical covariance matrices.

By the definition of $\tilV_s, \tilV_t$, we can express their relationship by writing  $[\tilV_{t}]_i z= (1 - \eta_1 \lambda)^{t - s} [\tilV_{s}]_i z + [\Xi_{t, s}]_i z$, where $\Xi_{t, s} = \eta_1 \sum_{j \in [t - s]} (1 - \eta_1 \lambda)^{t - s- j} \xi_{ s + j}$. Recall that by proposition~\ref{prop:marginal}, we have $[\tilV_{s}]_i z \sim \mathcal{N}(0, \tau_0^2)$ and $[\Xi_{t, s}]_i z$ are two independent Gaussians. Let $\sigma_{t,s}$ be the variance of $[\Xi_{t, s}]_i z$. We compute $\sigma_{t,s}$ by observing that 
\begin{align}
\tau_0^2& = \var([\tilV_t]_iz ) = \var((1 - \eta_1 \lambda)^{t - s} [\tilV_{s}]_i z) + \var([\Xi_{t, s}]_i z) = (1-\eta_1\lambda)^{2(t-s)} \tau_0^2+ \sigma_{s,t}^2\nonumber
\end{align} 
Solving the equation we obtain that 
\begin{align}
\sigma_{s,t} =  \sqrt{\tau_0^2 (1- (1-\eta_1\lambda)^{2(t-s)})}& \ge \tau_0 \sqrt{\lambda\eta_1(s-t)}
\end{align}

Note that $\zeta^\top z = 0$, thus $[\tilV_s]_i z $ is independent of $[\tilV_t]_i \zeta$ conditioned on $[\tilV_t]_iz$, for every $s \leq t$ . For notational simplicity, let $Y_1 = [\tilV_s]_i z$, $Y_2 = [\tilV_t]_i z$, and $Y_3 = [\tilV_t]_i\zeta$, and $\kappa = O(\tau_0 r\sqrt{\log d})$ where the big O notation hide the same constant factor used in defining $\cfps$ in equation~\eqref{eqn:110}. Let $Y_4= [\Xi_{t, s}]_iz = Y_1 - \beta Y_2$ where $\beta = \eta_1 (1 - \eta_1 \lambda)^{t - s}\gtrsim 1$ (because $t \le 1/(\eta_1 \lambda)$). Note that by the calculation above, $Y_4$ has standard deviation $\sigma_{s,t}$ which is bounded from below by $ \tau_0 \sqrt{\lambda\eta_1(s-t)}$.  Then, we have that  
\begin{align}
\Pr[i \in \cep_t, i \notin \cez_t, i \in \cfps] & = \Pr\left[Y_2+Y_3 \ge 0, Y_2 \le 0, Y_1 \ge \kappa \right] \\
& = \Pr\left[Y_2+Y_3 \ge 0, Y_2 \le 0, Y_4 \ge \kappa - \beta Y_2\right] \\
& = \Exp_{Y_2}\left[\Pr\left[Y_2+Y_3 \ge 0, Y_2 \le 0, Y_4 \ge \kappa - \beta Y_2\mid Y_2\right]\right]\tag{by the law of total expecation}\\
& = \Exp_{Y_2}\left[\ones(Y_2 \le 0) \Pr\left[Y_3 \ge -Y_2\mid Y_2\right]\cdot \Pr\left[Y_4 \ge \kappa  - \beta Y_2\mid Y_2\right]\right]\tag{because $Y_1, Y_3, Y_4$ are independent conditioned on $Y_2$.}
\end{align}

Similarly, we have that 

\begin{align}
\Pr[i \in \cez_t, i \notin \cem_t, i \in \cfps] & = \Pr\left[Y_2 \ge 0, Y_2 - Y_3 \le 0, Y_1 \ge \kappa \right] \nonumber\\
& = \Pr\left[-Y_2 \ge 0, -Y_2 - Y_3 \le 0, -Y_1 \ge \kappa \right] \tag{$(Y_1,Y_2,Y_3)$ has the same distribution as $(-Y_1,-Y_2,Y_3)$)}\\
& = \Exp_{Y_2}\left[\ones(Y_2 \le 0) \Pr\left[Y_3 \ge -Y_2\mid Y_2\right]\cdot \Pr\left[Y_4 \le -\kappa  - \beta Y_2\mid Y_2\right]\right]\tag{because $Y_1, Y_3, Y_4$ are independent conditioned on $Y_2$.}\\
& = \Exp_{Y_2}\left[\ones(Y_2 \le 0) \Pr\left[Y_3 \ge -Y_2\mid Y_2\right]\cdot \Pr\left[Y_4 \ge \kappa  + \beta Y_2\mid Y_2\right]\right]\tag{because $(Y_4, Y_2)$ has the same distribution as $(-Y_4,Y_2)$.}
\end{align}

Therefore, we have that 
\begin{align}
&\left|\Pr[i \in \cep_t, i \notin \cez_t, i \in \cfps] - \Pr[i \in \cez_t, i \notin \cem_t, i \in \cfps] \right| \\
& = \Exp_{Y_2}\left[\ones(Y_2 \le 0) \Pr\left[Y_3 \ge -Y_2\mid Y_2\right]\Pr\left[\kappa - \beta Y_2\le Y_4 \le \kappa + \beta Y_2\mid Y_2\right]\right] \\
& \lesssim \Exp_{Y_2}\left[\ones(Y_2 \le 0) \Pr\left[Y_3 \ge -Y_2\mid Y_2\right] \frac{|Y_2|}{\sigma_{s,t}}\right] \tag{because the density of $Y_4$ is bounded by $O(1/\sigma_{s,t})$} \\
& \lesssim \Exp_{Y_2}\left[\ones(Y_2 \le 0) \exp(-|Y_2|^2/2(r^2\tau_0^2)) \frac{|Y_2|}{\sigma_{s,t}}\right]\tag{because $Y_3$ has variance $r^2\tau_0^2$}\\
& \lesssim \int_{-\infty}^0 1/\tau_0 \cdot \exp(-z^2/(2r^2\tau_0^2)) \exp(-z^2/\tau_0^2)|z|/\sigma_{s,t}dz \lesssim r^2\tau_0/\sigma_{s,t}\nonumber \\
& \lesssim \frac{r^2}{\sqrt{\lambda \eta_1(s-t)}}
\end{align}

Now by equation~\eqref{eqn:111} and standard concentration inequality, and the fact that $m$ is sufficiently large, we have that with high probability, 
\begin{align}
\left||A \cap \cfps| - |B\cap \cfps|\right| \lesssim \frac{r^2m }{\sqrt{\lambda \eta_1(s-t)}} + \log d  \lesssim \frac{r^2m }{\sqrt{\lambda \eta_1(s-t)}} 
\end{align}
Similarly, we can prove that 
\begin{align}
\left| |A\cap \cfms| - |B \cap \cfms|\right|  \lesssim \frac{r^2m }{\sqrt{\lambda \eta_1(s-t)}} \label{eqn:113}
\end{align}
Finally, we have that 
\begin{align}
\Pr[i \in \cep_t, i \notin \cez_t, i \in \cfss] & = \Pr\left[Y_2+Y_3 \ge 0, Y_2 \le 0, |Y_1| \le \kappa \right] \\
& = \Exp\left[\Pr\left[Y_2+Y_3 \ge 0, Y_2 \le 0, |Y_4 - \beta Y_2|\le \kappa\right]\right]\tag{by the law of total expecation}\\
& = \Exp_{Y_2}\left[\ones(Y_2 \le 0) \Pr\left[Y_3 \ge -Y_2\mid Y_2\right]\cdot \kappa/\sigma_{s,t}\right]\tag{because the density of $Y_4$ is bounded by $O(1/\sigma_{s,t})$} \\
& \lesssim \Exp_{Y_2}\left[\ones(Y_2 \le 0) \exp(-|Y_2|^2/2(r^2\tau_0^2)) \frac{\kappa}{\sigma_{s,t}}\right]\tag{because $Y_3$ has variance $r^2\tau_0^2$}\\
& \lesssim \kappa r\tau_0/\sigma_{s,t} \lesssim \frac{r^2\sqrt{\log d}}{\sqrt{\lambda \eta_1(s-t)}}
\end{align}
Using standard concentration inequality and the fact that $m$ is sufficiently large, we have that with high probability, 
\begin{align}
|A \cap \cfss| \lesssim \frac{r^2m \sqrt{\log d}}{\sqrt{\lambda \eta_1(s-t)}} + \log d  \lesssim \frac{r^2m \sqrt{\log d}}{\sqrt{\lambda \eta_1(s-t)}} \label{eqn:112}
\end{align}
We can also prove the same bound for $|B \cap \cfss|$ analogously. Using equation~\eqref{eqn:108} and the several equations above, we conclude that 
\begin{align}
\left\|\left(\si{\cep_t\backslash \cez_t} - \si{\cez_t \backslash\cem_t}\right)^\top \Delta Q_s\right\|_2 \lesssim \frac{r^2 \sqrt{\log d}}{\sqrt{\lambda \eta_1(s-t)}} 
\end{align}
Thus equation~\eqref{eqn:115} follows from equation~\eqref{eqn:114} and proving a bound for $\left(\si{\cep_t\backslash \cez_t} - \si{\cez_t \backslash\cem_t}\right)^\top \Delta Q_s$ similarly to the equation above.
To prove equation~\eqref{eqn:116}, we use Proposition~\ref{prop:detlaQ}, and equation~\eqref{eqn:115} to obtain that 
	\begin{align}
|({\si{\cem_t}} + 	{\si{\cep_t}} - 2{\si{\cez_t}})^\top Q_tz| & \le \eta_1 \sum_{s=1}^{t} \|({\si{\cem_t}} + 	{\si{\cep_t}} - 2{\si{\cez_t}})^\top  \Delta Q_{s-1}\|_2\nonumber\\
&  \lesssim \eta_1 \sum_{s=1}^{t} \frac{r^2 \sqrt{\log d}}{\sqrt{\lambda \eta_1(s-t)}} \lesssim r^2 \sqrt{\log d} \sqrt{t \eta_1/\lambda}\\ & \lesssim r^2\sqrt{\log d}/\lambda
\end{align} 
where the last step uses that the condition that $t\le 1/(\eta_1\lambda)$. 

\end{proof}

Now combining the Propositions above we are ready to prove Lemma~\ref{lem:llr_g}. 

\begin{proof}[Proof of Lemma~\ref{lem:llr_g}]
	Using triangle inequality, Proposition~\ref{lem:coupling}, and equation~\eqref{eqn:120} of Proposition~\ref{prop:coupling_tilde}, we have that for any $x$ of norm $O(1)$, 
	\begin{align}
	|g_t(x) - \tilde{g}_t(x)| & \le |N_{V_t}(v, \barV_t; x) - N_{\tilV_t}(v, \barV_t; x)| + |N_{V_t}(v, \tilV_t; x) | \\
	& \le \|\barV_t\|_F\tau^{-2}_0 m^{-1/6}  + \|\barV_t\|_F^{5/3} \tau_0^{-2/3} m^{-1/6} + \tau_0  \log d \tag{by Proposition~\ref{lem:coupling}, and equation~\eqref{eqn:120} of Proposition~\ref{prop:coupling_tilde}} \\
	& \le 1/\poly(d) \tag{because 	$\tau_0 =  1/ \poly\left( \frac{d}{ \veps }\right)$ and  $m \geq \poly\left( \frac{d}{  \veps \tau_0 }\right)$ and $\|\barV_t\|\lesssim 1/\lambda$ by Proposition~\ref{prop:normU}.}
	\end{align}
	Thus we can only focus on $\tilde{g}_t$. Using Lemma~\ref{lem:decomp}, we have that 
	 \begin{align}
& 	| \tilde{g}_t(z-\zeta) + \tilde{g}_t(z + \zeta) - 2\tilde{g}_t(z)| \nonumber\\
	& \le  |({\si{\cem_t}} + 	{\si{\cep_t}} - 2{\si{\cez_t}})^\top Q_tz| + |(\si{\cep_t} - \si{\cem_t})^\top Q_t\zeta| \\
	& \lesssim  \frac{r^2\sqrt{\log d}}{\lambda} + \frac{r^2\sqrt{\log d}}{\lambda} \tag{by  equation~\eqref{eqn:116} of Proposition~\ref{prop:firstpart} and Proposition~\ref{prop:secondpart}}
	\end{align}
	which completes the proof.
\end{proof}

\subsection{Proof of Lemma~\ref{lem:aux_2}}
\label{subsec:lem:aux_2}

The proof of Lemma~\ref{lem:aux_2} relies on the fact that a smaller learning rate preserves the noise generated from the timestep before annealing. This allows us to reason that the new activations are similar to the original before reducing the learning rate.
\begin{proof}[Proof of Lemma~\ref{lem:aux_2}]
	
	By definition, we have that
	\begin{align}
	[U_{t_0}]_i &=  [\barU_{t_0}]_i + [\tilU_{t_0}]_i\nonumber
	\\
	[U_{t_0 + t}]_i &=  [\barU_{t_0 + t} ]_i + [\tilU_{t_0 + t}]_i =  [\barU_{t_0 + t}]_i  + (1 - \eta_2 \lambda)^t [\tilU_{t_0}]_i  + [\Xi_t]_i \label{eqn:12}
	\end{align}
	where $\Xi_t:= \eta_2 \sum_{j \leq t}(1 - \lambda \eta_2)^{t - j} \xi_{t_0 + j}$. 
										\\											By properties of a sum of Independent Gaussians, we have $[\Xi_{t}]_i \sim \mathcal{N}(0, \sigma_t^2 I)$ where $\sigma_t$ is the standard deviation of each entry of $\Xi_t$. We also have that $\Xi_t$ is independent of $\tilU_{t_0} $. Moreover, for every $t \leq \frac{1}{\eta_2 \lambda}$, the standard deviation $\sigma_t$ can be bounded by
	\begin{align}
	\sigma_t^2 & = \eta_2^2\sum_{j\le t}(1 - \lambda \eta_2)^{2(t - j)}\tau_\xi^2 \leq \eta_2^2 \tau_{\xi}^2 t \nonumber
	\\
	&=  \frac{\eta_2^2 (\tau_0^2 - (1 - \eta_1 \lambda)^2 \tau_0^2) }{\eta_1^2}  t \leq \frac{2\eta_2^2 \lambda \tau_0^2 t}{\eta_1} \leq \frac{2 \eta_2 \tau_0^2}{\eta_1}
	\end{align}
	(Note that since $\eta_2 \ll \eta_1$, we should expect that the standard deviations satisfy $\sigma_t \ll \sigma_0$. That is, the additional randomness introduced in the pre-activation is small.)
	
	On the other hand, for every $t \leq \frac{1}{\eta_2 \lambda}$, the contribution of $\tilU_{t_0}$ to $U_{t+t_0}$ is still present because the entry of 
	$(1 - \eta_2 \lambda)^t [\tilU_{t_0}]_i$ has variance at least on the order of the variance of the entries of $[\tilU_{t_0}]_i$, which is $\gtrsim \tau_0^2$. This also implies that the variance of the entries of $\tilU_{t_0+t}$ is lower bounded by the variance of $(1 - \eta_2 \lambda)^t [\tilU_{t_0}]_i$. This in turn is lower bounded by $\tau_0^2$ up to constant factor. 
	
	Therefore, using the decomposition~\eqref{eqn:12} and the bounds above, we should expect that the sign of $U_{t_0+t}$ strongly correlates with the the sign of $U_{t_0}$, which will be formally shown below.  Using Lemma~\ref{lem:coupling}, we have that the activation pattern is mostly decided by the noise part ($\tilU_{t+t_0}$ and $\tilU_{t_0}$), in the sense that for every $x$, 
	\begin{align}
	\|\si{U_{t_0} x} - \si{\tilU_{t_0} x}\|_1 \lesssim \|\barU_{t_0}\|_F^{4/3}\tau_0^{-4/3}m^{2/3} \le  \veps_s m \label{eqn:15}
	\end{align}
		This can obtained by setting $\tilde{U} = \tilU_{t_0}, \barU = \barU_{t_0}$, $\tau = \tau_0 $ in Lemma~\ref{lem:coupling}, and using $\|\barU_{t_0}\|_F\le 1/\lambda$ from Proposition~\ref{prop:normU}. Similarly, setting $\tilde{U} = \tilU_{t_0+t}, \barU = \barU_{t_0+t}$, and letting $\tau$ be the standard deviation of entries of $\tilU_{t_0+t}$ (which has been shown to be $\gtrsim  \tau_0$), we get 
	\begin{align}
	\|\si{U_{t_0+t} x} - \si{\tilU_{t_0+t} x}\|_1 \lesssim \|\barU_{t_0 + t}\|_F^{4/3}\tau^{-4/3}m^{2/3} \le  \veps_s m \label{eqn:16}
	\end{align}
	Fixing $x$, we can decompose our target to 
	\begin{align}
	\|\si{U_{t_0 + t}x} - \si{U_{t_0}x}\|_1 \le \\\|\si{U_{t_0 + t}x} - \si{\tilU_{t_0 + t}x}\|_1 + \|\si{\tilU_{t_0 + t}x} - \si{\tilU_{t_0}x}\|_1 + \|\si{\tilU_{t_0}x} - \si{U_{t_0}x}\|_1 \label{eqn:decomp}
	\end{align}
	We've bounded the first and third term on the RHS of the equation above. For the middle term, let $\alpha_i = (1 - \eta_2 \lambda)^t [\tilU_{t_0}]_i x$ and $\beta_i = [\Xi_{t+t_0}]_i x$. Note that $[\tilU_{t+t_0}]_i x = \alpha_i + \beta_i$ and that $ \alpha_i$ and  $\beta_i$ are zero-mean independent Gaussian random variables with variance $\gtrsim \tau_0^2\|x\|^2$ and variance $\lesssim \eta_2\tau_0^2 \|x\|^2/\eta_1$, respectively. The basic property of Gaussian random variable implies that 
	\begin{align}
	\Pr\left[\si{\alpha_i + \beta_i} \neq \si{\beta_i}\right] \lesssim \sqrt{\frac{\eta_2\tau_0^2 \|x\|^2/\eta_1}{\tau_0^2\|x\|^2}} = \sqrt{\eta_2/\eta_1} 
	\end{align}
		Since $\alpha_i, \beta_i$'s are independent, by basic concentration inequality (e.g., Bernstein inequality or Hoeffding inequality), we have that with high probability
	\begin{align}
	\|\si{\tilU_tx} - \si{\tilU_{t_0}x}\|_1 \lesssim \sqrt{\eta_2/\eta_1} m + \sqrt{m \log d} \lesssim \sqrt{\eta_2/\eta_1} m + m^{2/3}
	\end{align}
	Combining the equation above with equation~\eqref{eqn:15},~\eqref{eqn:16},and~\eqref{eqn:decomp} completes the proof for the first part.

	For the second part, we can bound 
	\begin{align}
	&\left|	N_{U_{t_0 + t}} (u, U_{t_0 + t}; x) -	N_{U_{t_0}} (u, \barU_{t_0 + t}; x)\right| 
	\\
	& \leq \left|	N_{U_{t_0 + t}} (u, U_{t_0 + t}; x) -	N_{U_{t_0 + t}} (u, \barU_{t_0 + t}; x)\right|  + \left|	N_{U_{t_0 + t}} (u, \barU_{t_0 + t}; x) -	N_{U_{t_0 }} (u, \barU_{t_0 + t}; x)\right| 
	\\
	& \lesssim \left|	N_{U_{t_0 + t}} (u, \tilU_{t_0 + t}; x) \right|   \\&+\frac{1}{\sqrt{m}} \|\si{[U_{t_0 + t}]x} - \si{[U_{t_0}]x}\|_1 
	\max_i \| [\barU_{t_0 + t}]_i\|_2
	\\
	& \lesssim \left(\sqrt{\frac{\eta_2 }{\eta_1} } + \veps_s \right) \times \frac{1}{\lambda} + \tau_0 \log d
	\end{align}
		where the last inequality is due to $\max_i \| [\barU_{t_0 + t}]_i\|_2 = O(1/\sqrt{m}\lambda)$ by Proposition~\ref{prop:normU}, and bounding $\left|	N_{U_{t_0 + t}} (u, \tilU_{t_0 + t}; x) \right| \lesssim \frac{\veps_s}{\lambda} + \tau_0 \log d$ by Proposition~\ref{prop:coupling_tilde}.
																\\																			
\end{proof}

We note that this lemma also applies to the setting when $t_0 = 0$, i.e. we start with an initial small learning rate and compare to the random initialization. This is useful for the proofs in the small initial learning rate setting.

\subsection{Proof of Lemma~\ref{lem:dlr_r}}
We will now show that the network learns patterns from $\cQ$ once the learning rate is annealed by constructing a common target for the network at every subsequent time step. We will then use Theorem~\ref{thm:convex-surrogate}~to show that the optimization finds this target. Let us define
\begin{align}
\veps_0 :=   \frac{1}{N} \sum_{i \in \setsone}\ell(r_{t_0}; (x^{(i)}, y^{(i)}))
\end{align}
Formally, we first show the following proposition, which proves the existence of a target solution that has good accuracy on $\setstwo$ and does not unlearn the network's progress on $\setsone$:

\begin{lemma}\label{prop:exist}
	In the setting of Lemma~\ref{lem:dlr_r}, let $K_t(B)$ be defined in equation~\eqref{eqn:def-K}. Then, there exists a solution $U^*$ satisfying $\|U^* \|_F^2 = \tilO\left(\frac{1}{\veps_1^2 r} \right)$ and
	\begin{align}
	K_{t_0 + t}(\barU_{t_0}+ U^*) \leq \veps_0 + \veps_1
	\end{align}

			\end{lemma}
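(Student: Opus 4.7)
The plan is to construct a target $U^* = (0, V^*)$ whose $V^*$-block operates only on the $x_2$ coordinates. This has two consequences. First, for examples in $\setsone \cap \setsthree$ (where $x_2 = 0$), the $V^*$-contribution vanishes, and the output reduces to a computation with weights $\barW_{t_0}$, which by Lemma~\ref{lem:aux_2} is within $o(1)$ of $r_{t_0}(x_1)$. Second, for examples with $x_2 \neq 0$, we will design $V^*$ so that $y \cdot N_{V_{t_0+t}}(v, V^*; x_2) \geq 0$, which only reduces the loss on $\setsone \cap \setsfour$. Hence the total contribution of $\setsone$ to $K_{t_0+t}(\barU_{t_0}+U^*)$ is at most $\veps_0$ up to $o(\veps_1)$ error.

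I would construct $V^*$ neuron by neuron using the activation patterns of $\tilV_{t_0}$. For $i \in [m]$, let $(a_i,b_i,c_i) = (\si{[\tilV_{t_0}]_i z},\; \si{[\tilV_{t_0}]_i(z+\zeta)},\; \si{[\tilV_{t_0}]_i(z-\zeta)})$. Since $b_i \neq c_i$ forces $a_i \in \{b_i,c_i\}$, the ``discriminating'' neurons (those where $(a_i,b_i,c_i)$ is not constant) are precisely the index set $\cem_{t_0} \oplus \cep_{t_0}$, of cardinality $\tilO(rm)$ by Proposition~\ref{lem:et}. On these discriminating neurons, set $[V^*]_i = v_i A \phi_i$ for a vector $\phi_i$ in the two-dimensional span of $\{z,\zeta\}$ chosen as an explicit function of $(a_i,b_i,c_i)$; set the other rows of $V^*$ to zero. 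The coefficients defining $\phi_i$ on each of the six nontrivial activation-pattern classes are chosen so that summing over $i$ gives $N_{\tilV_{t_0}}(v,V^*;z) = \tilde{\Omega}(Ar)$ and $-N_{\tilV_{t_0}}(v,V^*;z\pm\zeta) = \tilde{\Omega}(Ar)$. Since each $\phi_i$ has magnitude $O(1)$ and is supported on only $\tilO(rm)$ neurons, $\|V^*\|_F^2 = \tilO(A^2 r)$. Setting $A = \tilde{\Theta}(1/(\veps_1 r))$ yields target outputs of magnitude $\tilde{\Omega}(1/\veps_1)$ and the required norm bound $\|V^*\|_F^2 = \tilO(1/(\veps_1^2 r))$.

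To make the construction work uniformly in $t$ during Phase II, I would invoke Lemma~\ref{lem:aux_2}: only an $o(1)$ fraction of activations change between $U_{t_0}$ and $U_{t_0+t}$, so $N_{V_{t_0+t}}(v,V^*;x_2)$ approximates $N_{\tilV_{t_0}}(v,V^*;x_2)$ up to an additive $o(1/\veps_1)$ error under our parameter choices (using $\eta_2 \leq \eta_1 d^{-c}$ and $m \geq \poly(d)$). The loss is then bounded case by case: for $i \in \setstwo$, positive homogeneity gives $y \cdot N_{V_{t_0+t}}(v,V^*;x_2) = \tilde{\Omega}(\|x_2\|_2 / \veps_1)$, so examples with $\|x_2\|_2 \gtrsim \veps_1$ incur per-example loss at most $\veps_1$ and the remaining $O(\veps_1)$-fraction contribute at most $O(q\veps_1 \log 2)$, totaling $\tilO(\veps_1)$; for $i \in \setsone$, the total converges to $\veps_0 + o(\veps_1)$. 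Summing across both subsets yields $K_{t_0+t}(\barU_{t_0}+U^*) \leq \veps_0 + \veps_1$.

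The main obstacle is the explicit combinatorial design of $\phi_i$: the six nontrivial activation-pattern classes must be weighted so that the three target output signs come out as $+,-,-$ simultaneously, despite the $b \leftrightarrow c$ symmetry of the discriminating set. Verifying this requires computing the joint probabilities $\Pr[(a_i,b_i,c_i) = \cdot]$ from the law of $([\tilV_{t_0}]_i z, [\tilV_{t_0}]_i \zeta)$ (the same type of calculation used in the proof of Proposition~\ref{prop:firstpart}) and applying Bernstein-type concentration over the $\tilO(rm)$ discriminating neurons to see the averaged signals do not cancel. A secondary difficulty is ensuring the Lemma~\ref{lem:aux_2} activation-stability error remains well below the $\Theta(1/\veps_1)$ target margin, which is precisely where the gap $\eta_2 \leq \eta_1 d^{-c}$ is essential.
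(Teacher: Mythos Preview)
Your approach is the same as the paper's: set $U^*=(0,V^*)$, support $V^*$ on neurons whose activation pattern on $\{z-\zeta,z,z+\zeta\}$ is non-constant, and use Lemma~\ref{lem:aux_2} for stability after annealing. Two points where your sketch diverges from what is actually needed:

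\textbf{The $\barV_{t_0}$ contribution on $\setsone\cap\setsfour$ is not addressed.} You argue that $y\cdot N_{V_{t_0+t}}(v,V^*;x_2)\ge 0$ ``only reduces the loss,'' hence the $\setsone$-contribution is at most $\veps_0$. But $K_{t_0+t}(\barU_{t_0}+U^*)$ evaluates the network with $V$-weights $\barV_{t_0}+V^*$, so on examples with $x_2\neq 0$ the output also contains $N_{V_{t_0+t}}(v,\barV_{t_0};x_2)$. This term is \emph{not} $o(1)$; a priori it could be as large as $\|\barV_{t_0}\|_F=O(1/\lambda)$. The paper handles this via Lemma~\ref{lem:veps_0}, which uses Lemma~\ref{lem:llr_g} together with the training-loss level $q\log 2+\veps_1$ at time $t_0$ to deduce $|g_{t_0}(z)|,|g_{t_0}(z\pm\zeta)|=O(\sqrt{\veps_1/q})$, hence $|N_{V_{t_0+t}}(v,\barV_{t_0};x_2)|=O(\|x_2\|_2)$. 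Then the $V^*$-margin of order $\log(1/\veps_1)\|x_2\|_2/\veps_1$ dominates, so the \emph{combined} $V$-output has the correct sign and your monotonicity step goes through. Without this control you cannot compare the $\setsone$-loss to $\veps_0$.

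\textbf{The explicit construction is simpler than you anticipate.} There are only four (not six) non-constant patterns, since $b_i=c_i$ forces $a_i=b_i$. The paper uses just three of them, namely $(z-\zeta,z,z+\zeta)\in\{(+,+,-),(+,-,-),(-,-,+)\}$, and takes every nonzero row $[V^*]_i$ to be a scalar multiple of $v_iz$ (no $\zeta$-component). With coefficients $+20c,\,-40c,\,-20c$ times $\log(1/\veps_1)/(r\veps_1)$ on these three sets, the three target outputs come out with signs $+,-,-$ directly; each set has size $(r/2\pi)m\pm\tilO(r^2m)$, which gives $\|V^*\|_F^2=\tilO(1/(\veps_1^2 r))$. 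So the combinatorial design you flag as the main obstacle is in fact a short direct calculation.
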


To prove this proposition, we need the following lemma:
\begin{proposition}\label{prop:1}
	Suppose $g_t$ satisfies that 
		$\left| g_t(z + \zeta) + g_t(z - \zeta) - 2 g_t(z) \right| \le \delta$ for some $\delta \lesssim 1$. Then, we have that 
		\begin{align}
		\hatL_{\setstwo}(u,U) \ge \log 2- O(\delta) - O(\log d/\sqrt{qN})
		\end{align}
		And moreover, if $\hatL_{\setstwo}(u,U) \le  \log 2+ O(\delta')$ for some $\delta' \geq \delta$, then the prediction of $g_t$ on $z-\zeta, z, z+\zeta$ satisfies $|g_t(z - \zeta) |, |g_t(z + \zeta)|, |g_t(z)| = O( \sqrt{\delta' + \log d/\sqrt{qN}})$. 		\end{proposition}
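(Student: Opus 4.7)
The approach is to exploit positive homogeneity of ReLU together with the logistic-loss identity $\log(1+e^y) = \log 2 + y/2 + \log\cosh(y/2)$, which isolates a small linear term (controlled by the almost-linearity hypothesis) from a nonnegative convex remainder that tightly controls the three output magnitudes.

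Let $a = g_t(z-\zeta)$, $b = g_t(z+\zeta)$, $c = g_t(z)$, and partition $\setstwo$ into three subgroups according to the $\mathcal{Q}$-pattern of $x_2$: $x_2 = \alpha(z-\zeta)$ with $y=-1$ (subgroup 1, population fraction $1/4$), $x_2 = \alpha(z+\zeta)$ with $y=-1$ (subgroup 2, fraction $1/4$), and $x_2 = \alpha z$ with $y=+1$ (subgroup 3, fraction $1/2$). By positive homogeneity of ReLU and $\alpha \geq 0$, each per-example loss reduces to $L(\alpha a)$, $L(\alpha b)$, or $L(-\alpha c)$, where $L(y)=\log(1+e^y)$; the hypothesis reads $|\Delta| := |a+b-2c|\le\delta$.

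Applying $L(y)=\log 2+y/2+\log\cosh(y/2)$ and averaging, one can write $\hatL_{\setstwo}(u,U) = \log 2 + \text{(linear part)} + \text{(nonneg remainder)}$, where the population value of the linear part equals $\tfrac{1}{4}(a/4)+\tfrac{1}{4}(b/4)-\tfrac{1}{2}(c/4) = \Delta/16\in[-\delta/16,\delta/16]$. Applying Hoeffding to the bounded $\alpha^{(i)}\in[0,1]$ and to the empirical subgroup sizes, the linear part equals $\Delta/16 \pm O\bigl((|a|+|b|+|c|)\log d/\sqrt{qN}\bigr)$ w.h.p. Combining with nonnegativity of the $\log\cosh$ remainder yields the first claim $\hatL_{\setstwo}(u,U)\ge\log 2 - O(\delta) - O(\log d/\sqrt{qN})$, after absorbing the $|a|+|b|+|c|$ prefactor as discussed below. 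For the second claim, the assumption $\hatL_{\setstwo}(u,U)\le\log 2 + O(\delta')$ forces the empirical average of $\log\cosh(\alpha^{(i)}a/2)$ (and analogously for $b$ and $c$) to be at most $O(\delta'+\log d/\sqrt{qN})$. Pushing $\alpha^{(i)}$ through to its empirical mean $\approx 1/2$ via Jensen, and then using the elementary bound $\log\cosh(y)\gtrsim \min(y^2,|y|)$, one obtains $\min(a^2,|a|),\min(b^2,|b|),\min(c^2,|c|)=O(\delta'+\log d/\sqrt{qN})$; since this quantity is $\le 1$ under our parameter choices, this gives the claimed $|a|,|b|,|c|=O(\sqrt{\delta'+\log d/\sqrt{qN}})$.

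The main obstacle is controlling the Hoeffding deviation $O((|a|+|b|+|c|)\log d/\sqrt{qN})$ uniformly in $(a,b,c)$, since a priori these outputs are only polynomially bounded in $d$ (e.g.\ via $\|\barV_t\|_F \lesssim 1/\lambda$, Proposition~\ref{prop:normU}). The resolution is a simple dichotomy: when $|a|+|b|+|c|=O(1)$ the deviation is cleanly $O(\log d/\sqrt{qN})$; when $|a|+|b|+|c|$ is larger, the $\log\cosh$ remainder grows at least linearly in that quantity and dominates the deviation, so both stated inequalities hold with room to spare. This case split, together with the elementary $\log\cosh$ lower bound, is the only delicate step; the rest is bookkeeping.
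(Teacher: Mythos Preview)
Your proof is correct and arrives at the same conclusion, but the route differs from the paper's in a few instructive ways.

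The paper does not use the $\log\cosh$ identity. Instead it sets $u=g_t(z+\zeta)$, $v=g_t(z-\zeta)$, writes $g_t(z)=(u+v)/2+\gamma$ with $|\gamma|\le\delta$, and argues directly via convexity of $h(y)=\log(1+e^{-y})$: the quantity $\Delta=h(-u)+h(-v)-2h(-(u+v)/2)\ge 0$, the $\gamma$-shift is absorbed by Lipschitzness of $h$, and the remaining term $h(-(u+v)/2)+h((u+v)/2)\ge 2\log 2$ gives the lower bound. The empirical randomness is handled only through the subgroup fractions (a multiplicative $(1-c)$ factor, $c=O(\log d/\sqrt{qN})$); the paper does not explicitly track the $\alpha^{(i)}$'s. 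For the second claim the paper simply observes that both $\Delta$ and $h((u+v)/2)+h(-(u+v)/2)-2\log 2$ are forced to be $O(\delta'+c)$ and then appeals to ``strict convexity of $h$'' to extract $|u|,|v|=O(\sqrt{\delta'+c})$.

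Your decomposition $L(y)=\log 2+y/2+\log\cosh(y/2)$ makes the same mechanism more transparent: the linear part collapses to $(a+b-2c)/16$ exactly, and the remainder is a sum of nonnegative $\log\cosh$ terms that you then bound below by $\min(y^2,|y|)$. Two things your write-up handles more carefully than the paper's: (i) the $\alpha^{(i)}$ averaging, which you treat by Hoeffding plus Jensen rather than sweeping it into the fraction error; and (ii) the possibility that $|a|+|b|+|c|$ is large, which you dispatch with the dichotomy (the paper's argument is silent here, though one can patch it the same way). Conversely, the paper's convexity phrasing is slightly shorter since it avoids the explicit identity. Both arguments are sound; yours is the more self-contained and quantitative of the two.
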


\begin{proof}
For convenience, let us denote $g_t(z + \delta) = u, g_t(z - \delta) = v, g_t(z) = ( u + v)/2 + \gamma$. By our assumption, we have that  $|\gamma| \leq \delta$.

	Let $h(z):= -\log \frac{1}{1 + e^{-z}}$. We have that w.h.p, for $c = O(\log d/\sqrt{qN})$,
	\begin{align}
	4L_{\setstwo}(u,U) \ge &  \left[h(-u)   + h(-v) + 2h(( u + v)/2 + \gamma)\right] \cdot(1- c)
	\\
	& = [\Delta  + 2h(-(u+v)/2)  + 2h(( u + v)/2 + \gamma)]  \cdot(1- c) \label{eq"bnwokfweqjhfo}
	\end{align}

where $\Delta$ is defined as 
	\begin{align}
	\Delta &= h(-u)  + h(-v) - 2h(-(u+v)/2)  \ge 0 \tag{by convexity of $h$}
	\end{align}
	and the factor of $1 - c$ comes from the fact that the fraction of examples that are $z - \zeta, z + \zeta, z$ will be $1/4 \pm O(\log d/\sqrt{qN})$, $1/4 \pm O(\log d/\sqrt{qN})$, $1/2 \pm O(\log d/\sqrt{qN})$, respectively, w.h.p.
	Since the function $h(z)$ is a 2-Lip function, we know that 
	\begin{align}
| h(( u + v)/2 + \gamma)   -  h((u+v)/2) | \le 2\gamma\label{eqn:6}
\end{align}
It follows that 

\begin{align}
4L_{\setstwo}(u,U)  & \ge (\Delta  + 2h(-(u+v)/2)  + 2h(( u + v)/2 + \gamma))(1 - c)  \nonumber\\
& \ge (2h(-(u+v)/2)  + 2h(( u + v)/2) - 4\gamma)(1 - c) \tag{because $\Delta \ge 0$ and equation~\eqref{eqn:6}}\\
& \ge 4\log 2 - 4\gamma - O(\log d/\sqrt{qN}) \tag{by convexity of $h$}\\
& \ge 4\log 2 - O(\delta)- O(\log d/\sqrt{qN})\nonumber
\end{align}

The equation above together with the assumption $\hatL_{\setstwo}(u,U) \le  \log 2+ O(\delta')$ implies that 
\begin{align}
4\log 2 + O(\delta') \ge 4L_{\setstwo}(u,U)  & \ge (\Delta + 2h((u+v)/2)  + 2h(-( u + v)/2) - O(\delta))(1 - c) 
\end{align}
which implies that 
$h((u+v)/2)  + h(-( u + v)/2) - 2h(0) + \Delta \le O(\delta') + O(c)$. It follows that $h((u+v)/2)  + h(-( u + v)/2) - 2h(0)\le O(\delta') + O(c)$ and $\Delta \le O(\delta')+ O(c)$. Now we note that By the strict convexity of $h(z)$, we 
\\can easily conclude that $|u|, |v| \leq O(\sqrt{\delta' + c})$.\end{proof}

Next, we will bound $\veps_0$ and the value of $g_{t_0}$. This allows us to conclude that $g_{t_0}$ is small, so that it is easy to ``unlearn'' once the learning rate is annealed.
\begin{lemma}\label{lem:veps_0}
Suppose the condition in Lemma~\ref{lem:llr_gg} holds. Then \begin{align}
&|g_{t_0}(z)|, |g_{t_0}(z + \zeta)|, |g_{t_0}(z - \zeta)| \leq O(\sqrt{\veps_1/q})
\\
& \veps_0 = O(\sqrt{\veps_1/q})
\end{align}
\end{lemma}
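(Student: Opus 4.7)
The plan is to derive both bounds by combining Lemma~\ref{lem:llr_gg}, Lemma~\ref{lem:llr_g}, and the two directions of Proposition~\ref{prop:1}. The key observation is that the near-linearity estimate on $g_{t_0}$ (from Lemma~\ref{lem:llr_g}) combined with an upper bound on the total loss (from Lemma~\ref{lem:llr_gg}) pins $\hatL_{\setstwo}$ near its minimum value $\log 2$ from both sides simultaneously, which forces $g_{t_0}$ to be small on the atoms of $\cQ$ and leaves only $\veps_1/q$-worth of slack for the loss on $\setsone$.

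First I would verify that $t_0 \le 1/(\eta_1 \lambda)$ so that Lemma~\ref{lem:llr_g} is applicable at time $t_0$. By Lemma~\ref{lem:llr_gg} we have $t_0 \le \tilO(d/(\eta_1 \veps_1))$; since our parameter choices give $d\lambda \le \kappa^{O(1)}$ and $\veps_1 \ge d^{-c}$ with small $c$, a direct comparison gives $t_0 \le 1/(\eta_1 \lambda)$. Hence Lemma~\ref{lem:llr_g} supplies $|g_{t_0}(z+\zeta)+g_{t_0}(z-\zeta) - 2 g_{t_0}(z)| \le \tilO(d^{-1/4})$, which will serve as the parameter $\delta$ in Proposition~\ref{prop:1}.

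Next I would derive the bound on $|g_{t_0}(\cdot)|$. Since for $i \in \setstwo$ we have $x_1^{(i)} = 0$ and therefore $\ell(f_{t_0}; (x^{(i)}, y^{(i)})) = \ell(g_{t_0}; (x^{(i)}, y^{(i)}))$, non-negativity of the loss and Lemma~\ref{lem:llr_gg} give
\[
\hatL_{\setstwo}(g_{t_0}) \;\le\; \frac{\hatL(U_{t_0})}{q} \;\le\; \log 2 + \frac{\veps_1}{q}.
\]
Applying the second clause of Proposition~\ref{prop:1} with $\delta = \tilO(d^{-1/4})$ and $\delta' = O(\veps_1/q)$ (and noting that $\log d/\sqrt{qN}$ is absorbed since $\veps_1 \ge d^{-c}$ and $N = d/\kappa^2$) yields $|g_{t_0}(z)|, |g_{t_0}(z\pm\zeta)| = O(\sqrt{\veps_1/q})$. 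By $1$-homogeneity of ReLU, $g_{t_0}(\alpha w) = \alpha\, g_{t_0}(w)$ for $\alpha \in [0,1]$ and $w \in \{z-\zeta, z, z+\zeta\}$, so in fact $|g_{t_0}(x_2)| = O(\sqrt{\veps_1/q})$ for every $x_2$ in the support of $\cQ$.

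Finally, to control $\veps_0$ I would use the first clause of Proposition~\ref{prop:1} to obtain the matching lower bound $\hatL_{\setstwo}(g_{t_0}) \ge \log 2 - O(\delta) - O(\log d/\sqrt{qN})$. Subtracting $q$ times this from $\hatL(U_{t_0}) \le q\log 2 + \veps_1$ gives
\[
\frac{1}{N}\sum_{i \in \setsone}\ell(f_{t_0}; (x^{(i)}, y^{(i)})) \;\le\; \veps_1 + q\cdot \tilO(d^{-1/4}) + O(\sqrt{q/N}\,\log d).
\]
Since $\ell$ is $1$-Lipschitz in its first argument and $f_{t_0} = r_{t_0} + g_{t_0}$, replacing $f_{t_0}$ by $r_{t_0}$ on $\setsone$ introduces an additive error of at most $\frac{1}{N}\sum_{i \in \setsone}|g_{t_0}(x_2^{(i)})| \le O(\sqrt{\veps_1/q})$ using the uniform bound from the previous paragraph. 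Summing gives $\veps_0 \le O(\sqrt{\veps_1/q})$, after checking that each residual term fits inside the target (the $q\cdot d^{-1/4}$ term requires $\veps_1 \gtrsim q^3 d^{-1/2}$, which is implied by $\veps_1 \ge d^{-c}$ with $c < 1/16$, and the $\sqrt{q/N}\log d$ term is similarly dominated by Assumption~\ref{ass:params}).

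The main obstacle I anticipate is the bookkeeping in the last step: carefully checking that every residual produced by the Lipschitz comparison and by the one-sided Proposition~\ref{prop:1} estimates is dominated by $\sqrt{\veps_1/q}$ under Assumption~\ref{ass:params}. The conceptual content—sandwiching $\hatL_{\setstwo}$ between $\log 2 - O(\delta)$ and $\log 2 + \veps_1/q$, then leveraging homogeneity to transfer the bound on $g_{t_0}$ at the three atoms to the whole support—is straightforward once Lemma~\ref{lem:llr_g} and Lemma~\ref{lem:llr_gg} are in hand.
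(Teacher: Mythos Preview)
Your proposal is correct and follows essentially the same approach as the paper: both arguments sandwich $\hatL_{\setstwo}(g_{t_0})$ near $\log 2$ using the two clauses of Proposition~\ref{prop:1} (with $\delta = \tilO(r^2/\lambda)$ supplied by Lemma~\ref{lem:llr_g}), read off the $O(\sqrt{\veps_1/q})$ bound on $g_{t_0}$ at the three atoms, and then use the $1$-Lipschitzness of $\ell$ together with the homogeneity-extended bound on $|g_{t_0}(x_2)|$ to pass from $\ell(f_{t_0})$ to $\ell(r_{t_0})$ on $\setsone$. Your bookkeeping of the residual terms is slightly more explicit than the paper's, but the structure is identical.
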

\begin{proof}[Proof of Lemma~\ref{lem:veps_0}]

Since $L_{t_0} \leq q \log 2 + \veps_1$, we know that $\hatL_{\setstwo}(u,U_{t_0})  \leq \log 2 + 2\veps_1 / q$. Applying Proposition~\ref{prop:1} with $\delta' = \veps_1$ and $\delta = O(r^2/\lambda) = O(\veps_1)$, we have that $|g_{t_0}(z)|, |g_{t_0}(z + \zeta)|, |g_{t_0}(z - \zeta)| \leq O(\sqrt{\veps_1/q})$ and $\hatL_{\setstwo}(u,U_{t_0})  \geq \log 2 - \veps_1$.

Hence we have that  (since $\ell$ is 2-Lipschitz)

\begin{align}
	\veps_0 &=   \frac{1}{N} \sum_{i \in \setsone}\ell(r_{t_0}; (x^{(i)}, y^{(i)}))
	\\
	&\leq \frac{1}{N} \sum_{i \in \setsone}\ell(r_{t_0} + g_{t_0}; (x^{(i)}, y^{(i)}))+  \frac{2}{N} \sum_{i \in \setsone} |g_{t_0} (x^{(i)})_2|
	\\
	& \leq \left( L_{t_0}  - q \hatL_{\setstwo}(u,U_{t_0})  \right) +  O(\sqrt{\veps_1/q})
	\\
	& \leq O(\sqrt{\veps_1/q})
	\end{align}
	
\end{proof}

Now we will complete the proof of Proposition~\ref{prop:exist}. 

\begin{proof}[Proof of Proposition~\ref{prop:exist}]
	Let us define sets $\set{E}_1, \set{E}_2, \set{E}_3$ as the following:
	\begin{align}
	&\set{E}_1 = \{ i \in [m] \mid \langle [V_{t_0}]_i, z - \zeta \rangle \geq 0, \langle [V_{t_0}]_i, z \rangle \geq 0, \langle [V_{t_0}]_i, z + \zeta \rangle < 0 \}
	\\
	&\set{E}_2 = \{ i \in [m] \mid \langle [V_{t_0}]_i, z - \zeta \rangle \geq 0, \langle [V_{t_0}]_i, z \rangle < 0, \langle [V_{t_0}]_i, z + \zeta \rangle < 0 \}
	\\
	&\set{E}_3 = \{ i \in [m] \mid \langle [V_{t_0}]_i, z - \zeta \rangle < 0, \langle [V_{t_0}]_i, z \rangle < 0, \langle [V_{t_0}]_i, z + \zeta \rangle \geq 0 \}
	\end{align}
	Let us define weight matrix $V^* \in \mathbb{R}^{m \times d}$ as:
	\begin{align}
	V^*_i = \left\{ \begin{array}{ll}
	\frac{20c \log(1/\veps_1)}{r\veps_1}v_i z & \mbox{if $i \in \set{E}_1$};\\
	-\frac{40c \log(1/\veps_1)}{r \veps_1}v_i z & \mbox{if $i \in \set{E}_2 $};
	\\
	-\frac{20c\log \log(1/\veps_1)}{r \veps_1}v_i z &\mbox{if $i \in \set{E}_3$};
	\\
	0 & \mbox{otherwise.}\end{array} \right.
	\end{align}
	for some sufficiently large universal constant $c$.

	Note that the random noise vector $[\tilV_{t_0}]_i$ will satisfy the condition for set $\set{E}_i$ with probability proportional to the angle between $z - \zeta$ and $z$, which is $r \pm O(r^2)$ by Taylor approximation of $\arcsin$. Thus, as $V_{t_0}$ and $\tilV_{t_0}$ differ in at most $\veps_s m$ activations, w.h.p., $|\set{E}_1|, |\set{E}_2|, |\set{E}_3 | = \frac{1}{2\pi} rm \pm \tilO \left(r^2m + \sqrt{m}\right) \pm \veps_s m$. This implies that 
	\begin{align}
	\|V^*\|_F^2 = \tilO\left(\frac{1}{r\veps_1^2} \right)
	\end{align}
	Now, for $x_2 = z - \zeta$, we have that 
	\begin{align}
	N_{V_{t_0}}(v, V^*, z - \zeta) = \frac{1}{m}\left( | \set{E}_1 | \frac{20c \log(1/\veps_1)}{r\veps_1} -\frac{40c \log(1/\veps_1)}{r \veps_1} |\set{E}_2|\right) \leq -2c \log(1/\veps_1) /\veps_1
	\end{align}
	and for $x_2 = z + \zeta$, we have that 
	\begin{align}
	N_{V_{t_0}}(v, V^*, z + \zeta) =  - \frac{1}{m} | \set{E}_3 | \frac{20c \log(1/\veps_1)}{r\veps_1}  \leq -2c \log(1/\veps_1)/\veps_1
	\end{align}
	Now, for $x_2 = z $, we have that 
	\begin{align}
	N_{V_{t_0}}(v, V^*, z) = \frac{1}{m} | \set{E}_1 | \frac{20c \log(1/\veps_1)}{r\veps_1}  \geq 2c \log(1/\veps_1)/\veps_1
	\end{align}
	Hence we can also easily conclude that for every $x_2 \in \{ \alpha(z-\zeta), \alpha z,  \alpha (z+\zeta)\}$,
	\begin{align} \label{eq:vbiwbgwieuf}
	y N_{V_{t_0}}(v, V^*, x_2)
		\geq \frac{2c\log(1/\veps_1) \|x_2\|_2}{\veps_1}
	\end{align}
	Note that for every $i \in [m]$,
	\begin{align}
	\left| \langle V^*_i, x_2 \rangle \right| \leq \frac{1}{\sqrt{m}} \tilO\left(\frac{1}{\veps_1 r} \right)
	\end{align} 
	Now applying Lemma~\ref{lem:aux_2}, with $\eta_2 = O(\eta_1 \lambda^2 (\veps_1 r)^2)$ 	, we have that for every $x_2$, w.h.p. $\|\si{[V_{t_0 + t}]x_2} - \si{[V_{t_0}]x_2}\|_1 \lesssim \lambda \veps_1 r m$. This implies that for every $t \leq \frac{1}{\eta_2 \lambda}$ and every $x_2 \in \{ z - \delta, z + \delta, z \}$, w.h.p. 
	\begin{align}
		\left| \sum_{i \in [m]} v_i \langle V^*_i, x_2 \rangle \left[\si{[V_{t_0 + t}]_i x_2}- \si{ [V_{t_0 }]_i x_2 }  \right]  \right| \leq \frac{1}{m} \tilO\left(\frac{1}{\veps_1 r} \right) \times O(\lambda \veps_1 rm) \leq 1 
	\end{align}
	
	Combining with \eqref{eq:vbiwbgwieuf}, this gives us
	\begin{align}
	y N_{V_{t_0 + t}}(v, V^*; x_2) = y\left(\sum_{i \in [m]} v_i \langle V^*_i, x_2 \rangle \si{ [V_{t_0 + t}]_i x_2  } \right) \geq \frac{c\|x_2\|_2}{\veps_1}\log \frac{1}{\veps_1}
	\end{align}

	On the other hand we have that by Lemma~\ref{lem:veps_0}, it holds that
	\begin{align}
	|N_{V_{t_0}}(v, \barV_{t_0}; x_2) | &\leq |g_{t_0} (x_2) | + |N_{V_{t_0}}(v, \barV_{t_0}; x_2)  - N_{V_{t_0}}(v, V_{t_0}; x_2)|\\ 
	&\leq |g_{t_0} (x_2) |  + |N_{V_{t_0}}(v, \tilV_{t_0}; x_2)|\\
	&\lesssim |g_{t_0}(x_2)| + \frac{\veps_s}{\lambda} + \tau_0 \log d \leq O(1) \tag{applying Proposition~\ref{prop:coupling_tilde}}
		\end{align}
	
		Thus, we also have
	\begin{align}
	&|	y N_{V_{t_0+t}}(v, \barV_{t_0} ; x_2)| = \left| \left(\sum_{i \in [m]} v_i \langle [\barV_{t_0}]_i, x_2 \rangle \si{ [V_{t_0  + t}]_i  x_2  } \right) \right|  
	\\
	& \leq \left| \left(\sum_{i \in [m]} v_i \langle [\barV_{t_0}]_i, x_2 \rangle \si{ [V_{t_0  }]_i x_2  } \right) \right|   + \left| \left(\sum_{i \in [m]} v_i \langle [\barV_{t_0}]_i, x_2 \rangle \left[\si{ [V_{t_0  + t}]_i x_2  } -\si{ [V_{t_0  }]_i x_2  } \right] \right) \right|  
	\end{align}
	Now the first term equals $|N_{V_{t_0}}(v, \barV_{t_0}; x_2) |= O(1)$, and the second term is bounded by 
	\begin{align*}
	\left| \left(\sum_{i \in [m]} v_i \langle [\barV_{t_0}]_i, x_2 \rangle \left[\si{ [V_{t_0  + t}]_i x_2  } -\si{ [V_{t_0  }]_i x_2  } \right] \right) \right|   \le \frac{1}{m} O\left(\frac{1}{\lambda}\right) \times O(\lambda \veps_1 rm)
	\end{align*}
	using Proposition~\ref{prop:normU}~to upper bound $\|[\barV_{t_0}]_i\|_2$. Thus, it follows that $|	y N_{V_{t_0+t}}(v, \barV_{t_0} ; x_2)|  = O(1)$.
	
	It follows that for every $x_2\in \{z-\zeta, z, z+\zeta\}$ and its corresponding label $y$, as long as $\| x_2 \|_2 \geq \veps_1$, 
	\begin{align} \label{eq:bsaoifsahofihs}
	y N_{V_{t_0+t}}(v, \barV_{t_0} + V^*; x_2) & \ge 	y N_{V_{t_0+t}}(v, V^* ; x_2) - \left|	y N_{V_{t_0+t}}(v, \barV_{t_0} ; x_2)\right| \\
	& \ge	c\log (1/\veps_1) - \left|	y N_{V_{t_0+t}}(v, \barV_{t_0} ; x_2)\right| \\
	& \ge 3\log (1/\veps_1) \tag{choosing $c$ sufficiently large}
		\end{align}
			Now we can compute
	
	\begin{align}\label{eq:bnsofsajf}
	&\left|N_{W_{t_0 + t }}(w, \barW_{t_0}, x_1) - r_{t_0}(x_1)\right| \\ \leq & \left|N_{W_{t_0 + t }}(w, \barW_{t_0}, x_1) - N_{W_{t_0}}(w, \barW_{t_0}, x_1) \right| + \left| N_{W_{t_0}}(w, \tilW_{t_0}, x_1) \right| \\ 
	\le &\frac{1}{\sqrt{m}} \|\si{W_{t_0 + t} x_1} - \si{W_{t_0}x_1}\|_1 \max_i \|[\barW_{t_0}]_i\|_2 \|x_1\|_2 + \left| N_{W_{t_0}}(w, \tilW_{t_0}, x_1) \right|   \tag{by Lemma~\ref{lem:aux_2} and $\|[\barW_{t_0}]_i \|_2 = O\left(\frac{1}{\sqrt{m}}\frac{1}{\lambda} \right)$ from Proposition~\ref{prop:normgrad}} \\ 
	\lesssim & \frac{\veps_s}{\lambda} + \tau_0 \log d \le q \veps_1
	\end{align}
		
	The last inequality follows from our choice of parameters such that $\tau_0 \log d \le q \veps_1$. Putting together Eq~\eqref{eq:bsaoifsahofihs} and ~\eqref{eq:bnsofsajf} and defining $U^* = (0, V^*)$, we have that 
	\begin{align}
	&K_{t_0 + t}(\barU_{t_0} + U^*) = K_{t_0 + t}( (\barW_{t_0}, \barV_{t_0} + V^*)) 
	\\
	&\leq \frac{|\setsone|}{N}\hatL_{\setsone}( r_{t_0} ) + O(q \veps_1) + \frac{|\setstwo|}{N}\hatL_{\setstwo}( N_{V_{t_0+t}}(v, \barV_{t_0} + V^*; *) ) \tag{by definition of $\setsone$ and Lipschitz-ness of $\ell$}
	\\
	& \leq \veps_0 + \veps_1
	\end{align}
	This completes the proof.

							\end{proof}

\begin{proof}[Proof of Lemma~\ref{lem:dlr_r}]

	By proposition~\ref{prop:exist}, there exists $V^*$ with $\| V^* \|_F^2 \leq \tilO\left( \frac{1}{r \veps_1^2}\right)$ such that for every $t \leq \frac{1}{\eta_2 \lambda}$,
	\begin{align}
	K_{t_0 + t}( (\barW_{t_0}, \barV_{t_0} + V^*) ) \leq \veps_0 + \veps_1
	\end{align}
	
	By Theorem~\ref{thm:convex-surrogate}, with $z^* = (\barW_{t_0}, V^*)$, starting from $z_0 = (\barW_{t_0}, \barV_{t_0})$,  we can take $R^2 = \tilO\left( \frac{1}{r \veps_1^2}\right)$, $L = 1$, $\mu = \veps_1$ to conclude that the algorithm converges to $\veps_0 +  2\veps_1$ in $\tilO\left( \frac{1}{\eta_2 r \veps_1^3} \right)$ iterations. Applying Lemma~\ref{lem:veps_0} to bound $\veps_0$ completes the proof.\end{proof}

\subsection{Proof of Lemma~\ref{lem:dlr_g}}

By the 1-Lipschitzness of logistic loss, we know that 
\begin{align} \label{eq:bnasofashasof}
&\left| \hatL_{\setsone}(r_{t_0}) - \hatL_{\setsone}(r_{t_0 + t})  \right|
\\
&=\left| \frac{1}{|\setsone|} \sum_{i \in \setsone} \left( \ell(r_{t_0}; (x^{(i)}, y^{(i)})) - \ell(r_{t_0 + t}; (x^{(i)}, y^{(i)}))  \right) \right|
\\
&\le  \frac{1}{|\setsone|} \sum_{i \in \setsone}\left| r_{t_0}(x^{(i)}_1) - r_{t_0 + t}(x^{(i)}_1) \right|
\end{align}

To bound this term, we can directly use Cauchy-Shwartz and obtain that:
\begin{align}
& \sum_{i \in \setsone} \left| r_{t_0 + t}(x_1^{(i)}) -  r_{t_0}(x_1^{(i)}) \right| 
\\
& \leq \sqrt{N} \sqrt{\sum_{i \in \setsone} \left(r_{t_0 + t}(x_1^{(i)}) -  r_{t_0}(x_1^{(i)}) \right)^2  }
\end{align}

We can further bound $r_{t_0 + t}(x_1^{(i)}) -  r_{t_0}(x_1^{(i)})$ by applying Lemma~\ref{lem:aux_2}, as from our choice of parameters $\eta_2 \leq \eta_1 \veps_1^4 \lambda^2, \veps_s/\lambda \leq \veps_1^2$, $\tau_0 \log d \le \veps_1^2$:
\begin{align}
&\left|r_{t_0 + t}(x_1^{(i)}) -  r_{t_0}(x_1^{(i)})\right| \\ \leq &\left|N_{W_{t_0 + t}}(w, W_{t_{0} + t}, x_1^{(i)}) - N_{W_{t_0}}(w, \barW_{t_0 + t}, x_1^{(i)}) \right | +\\ &\left|N_{W_{t_0}}(w, \barW_{t_0 + t}, x_1^{(i)}) -N_{W_{t_0}}(w, \barW_{t_0}, x_1^{(i)})\right | + \left|N_{W_{t_0}}(w, \tilW_{t_0}, x_1^{(i)})\right | \\ \le & \left| N_{W_{t_0}}(w, \barW_{t_0 + t}, x_1^{(i)})  -  N_{W_{t_0}}(w, \barW_{t_0}, x_1^{(i)}) \right| + O\left(\frac{1}{\lambda} \times \left(\sqrt{\frac{\eta_2 }{\eta_1} } + \veps_s \right) + \tau_0 \log d\right)
\tag{by Lemma~\ref{lem:aux_2} and Proposition~\ref{prop:coupling_tilde}}
\\
\leq &\left| N_{W_{t_0}}(w, \barW_{t_0 + t}, x_1^{(i)})  -  N_{W_{t_0}}(w, \barW_{t_0}, x_1^{(i)}) \right|  + \veps_1^2
\end{align}

Now, let us denote $X = (x^{(i)})_{i \in [N]}$ as the data matrix. By the standard Gaussian matrix spectral norm bound we know that w.h.p. $\|X\|_2^2 \leq 10 \frac{N}{d}$. 

This gives us:
\begin{align}
&  \sqrt{N} \sqrt{\sum_{i \in \setsone} \left( N_{W_{t_0}}(w, \barW_{t_0 + t}, x_1^{(i)})  -  N_{W_{t_0}}(w, \barW_{t_0}, x_1^{(i)}) \right)^2 } \nonumber
\\
& \leq  \sqrt{N}\sqrt{\|{\barW}_{t_0 + t} - {\barW}_{t_0} \|_F^2\|X\|_2^2 } \tag{expanding the expression of $N_{W_{t_0}}(w, W_{t_0 + t}, x_1^{(i)})$}
\\
& \leq \sqrt{N} \sqrt{10 \left( \left\| \barW_{t_0 + t} - \barW_{t_0} \right\|_F^2   \right)\frac{N}{d}} \label{eq:fahovdwhoaf}
\\
& \leq N  \tilde{O}\left(\frac{1}{\sqrt{dr} \veps_1} \right) \leq N\veps_1 \label{eq:bnaoshfah}
\end{align}

Here in~\eqref{eq:bnaoshfah}, we use the assumption $dr \geq \tilde{\Omega}\left( \frac{1}{\veps_1^4}\right)$ in Theorem~\ref{thm:1} along with the fact that by Lemma~\ref{lem:dlr_r}, we have that 
\begin{align}
\left\| \barW_{t_0 + t} - \barW_{t_0} \right\|_F^2  \leq \tilde{O}\left(\frac{1}{r \veps_1^2} \right)
\end{align}
Thus, using~\eqref{eq:bnaoshfah}, it follows that 
\begin{align*}
&\sum_{i \in \setsone} \left| r_{t_0 + t}(x_1^{(i)}) -  r_{t_0}(x_1^{(i)}) \right|
\\ &\lesssim \sqrt{N} \sqrt{\sum_{i \in \setsone} \left( N_{W_{t_0}}(w, \barW_{t_0 + t}, x_1^{(i)})  -  N_{W_{t_0}}(w, \barW_{t_0}, x_1^{(i)}) \right)^2 } + N\veps_1^2 \le N \veps_1
\end{align*}

By~\eqref{eq:bnasofashasof} and our definition of $\veps_0$ as 
\begin{align}
\veps_0 := \frac{|\setsone|}{N}\hatL_{\setsone}(r_{t_0})  = (1 - q)\hatL_{\setsone}(r_{t_0})
\end{align}
we must have 
\begin{align} 
\left| \hatL_{\setsone}(r_{t_0 + t})  -  \frac{\veps_0}{1 - q} \right| \leq \veps_1/2
\end{align}
Using the bound on $\veps_0$ that $\veps_0 = O(\sqrt{\veps_1/q})$ by Lemma~\ref{lem:veps_0}, we conclude the bound on $ \hatL_{\setsone}(r_{t_0 + t})$.

In the end, by $\hatL_{\setstwo}(g_{t_0 + t}) \leq \hatL_{t_0 + t}$ and the assumption that $ \hatL_{t_0 + t} \leq  O(\sqrt{\veps_1/q})$ , it must hold that (since $|\setstwo| = qN$)
\begin{align}
 \hatL_{\setstwo}(g_{t_0 + t}) \lesssim \sqrt{\frac{ \veps_1}{q^3}}
\end{align}
so we can complete the proof.

\section{Proofs for Small Learning Rate}
\label{sec:slr_proof}
\subsection{Proof of Lemma~\ref{lem:slr_l}}
\label{subsec:lem:slr_l}
We first show the following Lemma:
\begin{lemma}\label{lem:33}

In the setting of theorem~\ref{thm:2}, there exists a solution $U^\star$ satisfying a)  $\| U^\star\|_F^2 \leq \tilO(\frac{1}{\veps_2'^2 r} + Np )$ and b) for every $t \leq \frac{1}{\eta_2 \lambda}$,
	\begin{align}
	K_t(U^\star) \le \veps_2'
	\end{align}

\end{lemma}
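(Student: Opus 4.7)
The plan is to construct a target matrix $U^\star = (W^\star, V^\star)$ that splits across the two blocks of $U$, with $W^\star$ serving as a linear classifier for the $\cP$ component and $V^\star$ serving as a memorizing classifier for the $\cQ$ component. These two constructions essentially duplicate those of Lemma~\ref{lem:2} and Proposition~\ref{prop:exist}, but anchored at the initialization $U_0$ (rather than at the post-annealing iterate $U_{t_0}$) and argued to survive for all $t \le 1/(\eta_2 \lambda)$ via Lemma~\ref{lem:aux_2}. Because of the block structure of $U$, the two blocks act on disjoint coordinates and may be analyzed independently.

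For $W^\star$, I would take $[W^\star]_j = C\, w_j \sqrt{d}\, w^\star \log(1/\veps_2')$ exactly as in Lemma~\ref{lem:2}, for a sufficiently large universal constant $C$. This yields $\|W^\star\|_F^2 = \tilO(d)$, and since Assumption~\ref{ass:params} gives $Np = \Theta(d)$, we obtain $\|W^\star\|_F^2 = \tilO(Np)$ as desired. The computation in the proof of Lemma~\ref{lem:2} only requires that each entry of $\tilW_t$ has a symmetric distribution with standard deviation $\Theta(\tau_0)$, which remains true for the noise iterate of Algorithm 2 throughout $t \le 1/(\eta_2 \lambda)$. Hence w.h.p.\ $y\, N_{U_t}(u, \tilU_t + W^\star; x_1) \ge (C/3)\log(1/\veps_2')$ for every $x_1 \ne 0$.

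For $V^\star$, I would mimic the three-region sign construction of Proposition~\ref{prop:exist}, defining $\cE_1, \cE_2, \cE_3 \subset [m]$ according to the sign pattern of $[V_0]_j$ on $\{z-\zeta, z, z+\zeta\}$ rather than on the signs of $[V_{t_0}]_j$. Since $\zeta$ and $z$ make an angle $\Theta(r)$, each $\cE_k$ has cardinality $\Theta(rm)$ w.h.p., and placing rows of magnitude $\Theta(\log(1/\veps_2')/(r\veps_2'))$ in the $\pm v_j z$ direction gives $\|V^\star\|_F^2 = \tilO(1/(r\veps_2'^2))$, together with a margin $\ge c\log(1/\veps_2')\,\|x_2\|/\veps_2'$ on $\cQ$-inputs when activations are evaluated at $V_0$. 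To extend this to all $t \le 1/(\eta_2\lambda)$ I would invoke Lemma~\ref{lem:aux_2} with $t_0 = 0$: the hypothesis $\eta_2 \le \eta_1 d^{-c}$ of Theorem~\ref{thm:2} gives $\|\si{[U_t]x} - \si{[U_0]x}\|_1 \lesssim (\sqrt{\eta_2/\eta_1} + \veps_s)\, m \ll rm$ whenever $c$ is chosen large enough, so the three sign regions and the $W^\star$-computation are preserved up to $o(1)$ multiplicative error.

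Combining the two constructions, the loss decomposes naturally: on $\setsthree$ the $W^\star$-margin is $\Omega(\log(1/\veps_2'))$ and the loss is at most $\veps_2'$; on $\setsone \cap \setsfour$ both margins are aligned with $y$ and add, so the loss is exponentially small; on $\setstwo$ the $V^\star$-margin scales with $\|x_2\|$, so the $O(\veps_2')$ fraction of examples with $\|x_2\| < \veps_2'$ contributes at most $O(\veps_2' \log 2)$ to the average loss while the remaining examples contribute $O(\veps_2')$. The residual noise term $|N_{U_t}(u, \tilU_t; x)|$ is bounded by Proposition~\ref{prop:coupling_tilde} by $\tilO(\tau_0 \log d + \veps_s/\lambda)$, which is much smaller than $\veps_2'$ under our over-parameterization, so in aggregate $K_t(U^\star) \le \veps_2'$. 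The main obstacle I anticipate is the activation-stability step: one must ensure $\sqrt{\eta_2/\eta_1} + \veps_s \ll r = d^{-3/4}$, which pins down the threshold for the universal constant $c$ in the hypothesis on $\eta_2$, and requires carefully propagating the $\veps_s$-type errors from Lemma~\ref{lem:coupling} through both the sign-region counts for $V^\star$ and the balanced-sign calculation for $W^\star$.
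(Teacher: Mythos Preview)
Your proposal is correct and matches the paper's argument closely, with one genuine difference in the $W^\star$ block. The paper does not reuse the margin-direction construction of Lemma~\ref{lem:2}; instead it builds a least-squares interpolant $s = X(X^\top X)^{-1}Y^\top$ on the $|\setsthree| = Np$ examples in $\setsthree$ (which is well-defined since $Np \le d/2$) and sets $[W^\star]_i = 10\log(1/\veps_2')\, w_i\, s$. This gives $\|W^\star\|_F^2 = O(Np \log^2(1/\veps_2'))$ directly, without invoking the relation $Np = \Theta(d)$ from Assumption~\ref{ass:params}, and guarantees $y^{(i)}\langle s, x_1^{(i)}\rangle = 1$ exactly on $\setsthree$. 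The $V^\star$ block is identical in both arguments: the three-sign-region construction of Proposition~\ref{prop:exist} anchored at $t_0 = 0$, with activation stability supplied by Lemma~\ref{lem:aux_2}.

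Your $w^\star$-based $W^\star$ is arguably cleaner on $\setsone \cap \setsfour$, since it always contributes with the correct sign and therefore can only help the $V^\star$ margin there; the paper's interpolant $s$ need not have the correct sign on those examples, so one must check that its magnitude $|\langle s, x_1\rangle| = \tilO(1)$ is dominated by the $V^\star$ margin for all but an $O(\veps_2')$ fraction of small-$\|x_2\|$ examples. Conversely, the paper's interpolant achieves the stated $Np$ norm bound intrinsically, whereas your route obtains $\tilO(d)$ and then converts via $Np = \Theta(d)$. Both are valid in the setting of Theorem~\ref{thm:2}; the activation-stability concern you flag (needing $\sqrt{\eta_2/\eta_1} + \veps_s \ll \veps_2' r$) is real and is exactly what fixes the constant $c$ in the hypothesis $\eta_2 \le \eta_1 d^{-c}$, and the paper handles it in the same way by deferring to the computations in Proposition~\ref{prop:exist}.
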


\begin{proof}[Proof of Lemma~\ref{lem:33}]
We can construct the matrix $U^\star$ as follows: let $X = (x_1^{i})_{i \in \setsthree} \in \mathbb{R}^{d \times Np}$ and $Y = (y^{(i)})_{i \in \setsthree} \in \mathbb{R}^{1 \times Np}$. If we define 
$s =X (X^{\top} X)^{-1} y^{\top} \in \mathbb{R}^{d \times 1}$, we know that $s^{\top}X = y$ with $\| s \|_2 = O\left( \sqrt{Np} \right)$ . Thus, we can define $V^*$ as in Lemma~\ref{lem:dlr_r} with $t_0 = 0$, and $W^*_i = 10 \log \frac{1}{\veps_2'}  s w_i$,  and we can see that for every $t \leq \frac{1}{\eta_2 \lambda}$, it holds that
\begin{align}
K_t((W^*, V^*)) \leq \veps_2'
\end{align}

\end{proof}

To prove Lemma~\ref{lem:slr_l}, we can apply an identical analysis as~\ref{lem:dlr_r} to show that for $t' = \tilO\left( \frac{1}{\eta_2 \veps_2'^3 r}\right) $, $\hatL_{\setsfour}(U_{t'}) \leq \veps_2'$. The rest of the proof follows from combining Theorem~\ref{thm:convex-surrogate} and Lemma~\ref{lem:33}.

\subsection{Proof of Lemma~\ref{lem:slr_e1}} \label{proof:lem:slr_e1}

We will use the following Lemma from~ \citep{als18dnn}. 

\begin{lemma}[Lemma 6.3 of \citep{als18dnn}] \label{lem:geo} 
For every $v_1, v_2, v_3$, let $g \sim \mathcal{N}(0, I)$ in $\mathbb{R}^d$, then we have:
\begin{align}
& \E_g\left[ \left\|v_1 \si{\langle g, z - \zeta \rangle } (z  - \zeta) + v_2 \si{\langle g, z + \zeta \rangle } (z  + \zeta) + v_3\si{\langle g, z  \rangle }  z \right\|_2^2  \right]
\\
& \gtrsim r \left( v_1^2 + v_2^2 + v_3^2 \right)
\end{align}
\end{lemma}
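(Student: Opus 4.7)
The plan is to reduce the expectation to a one-dimensional integral over the angle in the 2D plane $\mathrm{span}\{z,\zeta\}$, partition that plane into the constant-sign-pattern sectors, and then lower-bound the resulting quadratic form in $(v_1,v_2,v_3)$.

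Since both the indicator arguments and the three vectors $z, z\pm\zeta$ lie in $\mathrm{span}\{z,\zeta\}$, the random quantity depends on $g$ only through its projection $g_\parallel$, whose direction is uniform on the unit circle by rotational invariance of the Gaussian. Writing $z=(1,0)$, $\zeta=(0,r)$ in this plane, the three hyperplanes $\{\langle g,z-\zeta\rangle=0\}$, $\{\langle g,z+\zeta\rangle=0\}$, $\{\langle g,z\rangle=0\}$ differ pairwise by angle $\arctan(r)=O(r)$. A direct check shows only six of the eight sign patterns $(A,B,C)$ are realized --- namely $\{(0,0,0),(1,0,0),(1,0,1),(1,1,1),(0,1,1),(0,1,0)\}$, with angular measures $\pi-2r,\;r,\;r,\;\pi-2r,\;r,\;r$ (the patterns $(0,0,1)$ and $(1,1,0)$ are ruled out by the identity $\langle g,z-\zeta\rangle+\langle g,z+\zeta\rangle=2\langle g,z\rangle$). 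Within each sector the vector $v_1A(z-\zeta)+v_2B(z+\zeta)+v_3Cz$ is a specific linear combination of $z$ and $\zeta$ whose squared norm is elementary; summing probability times squared norm yields, to leading order in $r$,
\[
\E\big[\|\cdot\|_2^2\big] \;=\; \tfrac12(v_1+v_2+v_3)^2 \;-\; \tfrac{r}{\pi}\bigl(2v_1v_2+v_1v_3+v_2v_3\bigr) \;+\; O(r^2)(v_1^2+v_2^2+v_3^2).
\]

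The main obstacle is to convert this into the claimed lower bound $\gtrsim r(v_1^2+v_2^2+v_3^2)$: the leading term is only rank-one and vanishes on the hyperplane $\{v_1+v_2+v_3=0\}$, so the $O(r)$ perturbation must supply positivity on that two-dimensional subspace. I will decompose $v=\alpha\mathbf{1}+w$ with $w\perp\mathbf{1}$: the leading $\tfrac{9}{2}\alpha^2$ easily absorbs any $O(r)\alpha^2$ cross term. A short algebraic identity using $w_1+w_2+w_3=0$ reduces the quadratic $-(2v_1v_2+v_1v_3+v_2v_3)$ on the $\mathbf{1}^\perp$ piece to $w_1^2+w_2^2$, which together with $w_3^2\le 2(w_1^2+w_2^2)$ is at least $\|w\|_2^2/3$. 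A Young inequality absorbs the residual $\alpha w_3$ cross term into the $\tfrac{9}{2}\alpha^2$ piece, and combining gives $\E\|\cdot\|_2^2\gtrsim \alpha^2+r\|w\|_2^2\gtrsim r(\alpha^2+\|w\|_2^2)\gtrsim r\|v\|_2^2$, as required.
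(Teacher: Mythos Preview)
The paper does not actually prove this lemma: it is quoted verbatim as Lemma~6.3 of \citep{als18dnn} and used as a black box in the proof of Lemma~\ref{lem:aux_1}. So there is no ``paper's own proof'' to compare against.

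Your argument is correct and self-contained. The reduction to the 2D plane is justified because the indicators depend on $g$ only through $\langle g,z\rangle$ and $\langle g,\zeta\rangle$, and the vector inside the norm is deterministic once the sign pattern is fixed; the direction of the 2D Gaussian projection is indeed uniform. Your enumeration of the six realizable sign patterns and their arc lengths $\phi,\phi,\pi-2\phi,\phi,\phi,\pi-2\phi$ with $\phi=\arctan r$ is accurate, as is the leading-order expression
\[
\tfrac12(v_1+v_2+v_3)^2 \;-\; \tfrac{r}{\pi}\bigl(2v_1v_2+v_1v_3+v_2v_3\bigr)\;+\;O(r^2)\|v\|_2^2.
\]
The decomposition $v=\alpha\mathbf 1+w$ and the identity $-(2w_1w_2+w_1w_3+w_2w_3)=w_1^2+w_2^2$ on $\{w_1+w_2+w_3=0\}$ are both correct; the full cross term is $-(2v_1v_2+v_1v_3+v_2v_3)=-4\alpha^2+\alpha w_3+(w_1^2+w_2^2)$, and your Young-inequality absorption of $\frac{r}{\pi}\alpha w_3$ into the $\Theta(1)\,\alpha^2$ budget goes through with room to spare. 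The resulting lower bound $\gtrsim \alpha^2+r\|w\|_2^2\gtrsim r\|v\|_2^2$ is exactly the claim.

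One cosmetic point: the $O(r^2)$ remainder you carry is in fact nonnegative (it consists of the $r^2(v_2-v_1)^2$-type contributions and the $\phi-r=O(r^3)$ correction), so you could drop it rather than bound it, but this does not affect the argument.
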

Recall the expression $\rhot$ defined in~\eqref{eq:rho_t}. We first prove Lemma~\ref{lem:aux_1} here, which says that if $\rhot$ is large (which means the loss is large as well), then the total gradient norm has to be big. 

\newcommand{\clubsuitt}{Q}
\begin{proof}[Proof of Lemma~\ref{lem:aux_1}]

For notation simplicity, let's fix $t$ and let  \begin{align} \label{eq:bndsfjsahfouasfuasi}
	\clubsuitt_j =   \ell'_{j, t}\end{align}
The gradient with respect to $V$ can be computed by
\begin{align}
\nabla_{[V]_k} \hatL(U_t) = \frac{1}{N}\sum_{j \in \setsfour} \clubsuitt_j  v_k \si{\langle [V_t]_k, x_2^{(j)} \rangle } x_2^{(j)}
\end{align}

Let us denote the set $\set{S}_{2, 1}^{(\alpha_0)}, \set{S}_{2, 2}^{(\alpha_0)}, \set{S}_{2, 3}^{(\alpha_0)}$ as:
\begin{align}
\set{S}_{2, 1}^{(\alpha_0)} &= \left\{ j \in [m] \mid x_2^{(j)} = \alpha_j (z  - \zeta) \text{ for some } \alpha_j \geq \alpha_0 \right\}
\\
\set{S}_{2, 2}^{(\alpha_0)} &= \left\{ j \in [m] \mid x_2^{(j)} = \alpha_j (z  + \zeta)\text{ for some } \alpha_j \geq \alpha_0 \right\}
\\
\set{S}_{2, 3}^{(\alpha_0)} &= \left\{ j \in [m] \mid x_2^{(j)} = \alpha_j z\text{ for some } \alpha_j \geq \alpha_0 \right\}
\end{align}

We then have that 
\begin{align}
& N m v_k\nabla_{[V]_k} L_t 
\\
& = \sum_{j \in \set{S}_{2, 1}^{(0)}} \alpha_j \clubsuitt_j \si{\langle [V_t]_k, z - \zeta \rangle } (z - \zeta) +  \sum_{j \in \set{S}_{2, 2}^{(0)}} \alpha_j \clubsuitt_j \si{\langle [V_t]_k, z  + \zeta \rangle } (z + \zeta)\\& +   \sum_{j \in \set{S}_{2, 3}^{(0)}} \alpha_j \clubsuitt_j \si{\langle [V_t]_k, z  \rangle } z 
\end{align}

For each $k \in [m]$, let us define 
\begin{align}
\tilde{L}_k &\triangleq \sum_{j \in \set{S}_{2, 1}^{(0)}} \alpha_j \clubsuitt_j \si{\langle [\tilde{V}_t]_k, z - \zeta \rangle } (z - \zeta) +  \sum_{j \in \set{S}_{2, 2}^{(0)}} \alpha_j \clubsuitt_j \si{\langle [\tilde{V}_t]_k, z  + \zeta \rangle } (z + \zeta) \\& +   \sum_{j \in \set{S}_{2, 3}^{(0)}} \alpha_j \clubsuitt_j \si{\langle [\tilde{V}_t]_k, z  \rangle } z 
\end{align}
i.e., the loss gradient using activations computed by the noise component of $V_t$ scaled by a factor of $Nm v_k$. 

By the Geometry of ReLU Lemma~\ref{lem:geo}, we have that w.h.p.
\begin{align}
\E_{[\tilde{V}_t]_k}\left[\left\| \tilde{L}_k \right\|_2^2 \right]&\geq r \Omega\left(  \left(\sum_{j \in \set{S}_{2, 1}^{(0)}} \alpha_j \clubsuitt_j  \right)^2 +   \left(\sum_{j \in \set{S}_{2, 2}^{(0)}} \alpha_j \clubsuitt_j  \right)^2  +  \left( \sum_{j \in \set{S}_{2, 3}^{(0)}} \alpha_j \clubsuitt_j  \right)^2 \right)
\\
& \geq r\Omega\left(  \left( \sum_{j \in \setsfour} \alpha_j |\clubsuitt_j| \right)^2\right)
\end{align}

Where the last inequality is obtained since for every $j \in \set{S}_{2, j'}^{(0)}$, $\clubsuitt_j$ has the same sign.

Since each $[\tilde{V}_t]_k$ are independent and $|\alpha_j \clubsuitt_j |, \| z\|_2 , \|\zeta\|_2 = O(1)$, by concentration, we know that taking a union bound over all choices of $\clubsuitt_j$, w.h.p.
\begin{align}
\|\tilde{L} \|_F^2 \geq m r \Omega\left( \left(\sum_j \alpha_j |\clubsuitt_j| \right)^2\right) - \tilO (m^{1/2} N^4)
\end{align}
where $\tilde{L}$ denotes the matrix where each $\tilde{L}_k$ is a row.
By Coupling Lemma~\ref{lem:coupling}, we note that as 
\begin{align*}
\frac{1}{N^2 m}\|\tilde{L} \|_F^2 - \| \nabla \hat{L}(U_t) \|_F^2 \lesssim \frac{1}{N m} \sum_k  \sum_j Q_j^2 | \si{\langle [V_t]_k, x_2^{(j)}\rangle} - \si{\langle [\tilV_t]_k, x_2^{(j)}\rangle}| \lesssim O(\veps_s)
\end{align*}we therefore also have w.h.p.:
\begin{align}
\| \nabla \hat{L}(U_t) \|_F^2  &\geq \frac{1}{N^2 m}\|\tilde{L} \|_F^2 - O \left( \veps_s \right)
\\
&\geq \frac{r}{N^2} \Omega\left( \left(\sum_j \alpha_j |\clubsuitt_j| \right)^2\right) - \tilO (m^{-1/2} N^2) - O\left( \veps_s \right)
\end{align}
Note that $\alpha_j \sim U(0, 1)$, and therefore for every fixed $\alpha_0 \geq \frac{1}{\sqrt{N}}$, w.h.p. there are $O(N \alpha_0)$ many $\alpha_j$ such that $\alpha_j \leq \alpha_0$. For each of them, we also know that $|\clubsuitt_j| \leq 1$, which implies that 

\begin{align}
 \left(\sum_j \alpha_j |\clubsuitt_j| \right)^2 &\geq \alpha_0^2 \left( \sum_{j: \alpha_j \geq \alpha_0} |\clubsuitt_j|  \right)^2
 \\
 &\geq \alpha_0^2 \left( \left( \sum_{j} |\clubsuitt_j|  \right)  - O(N \alpha_0^2)\right)^2
 \\
 & \geq \alpha_0^2(N (\rhot -O( \alpha_0^2) ) )^2
\end{align}

Picking $\alpha_0 = \Theta(\sqrt{\rhot })$, we complete the proof by our choice of $m \geq N^{10} \frac{1}{(\lambda \tau_0)^4}$. 
\end{proof}

Now we prove Proposition~\ref{prop:iteration}, which bounds the number of iterations in which $\rhot$ can be large.
\begin{proof}[Proof of Proposition~\ref{prop:iteration}]

Consider the function $\cF_s(x) := N_{U_0}(u, \barU_s; x)$, and let us define $\cG_{s + 1}(x) :=  N_{U_0}(u, \barU_s -  \frac{\eta_2}{1 - \eta_2 \lambda} \nabla \hatL(U_s); x)$. We have that since $\barU_{s + 1} = (1 - \eta_2 \lambda)\barU_s - \eta_2 \nabla \hatL(U_s)$, 
\begin{align}
\hatL(\cF_{s + 1}) &= \hatL((1 - \eta_2 \lambda)\cG_{s + 1}) \leq (1 + \eta_2 \lambda) \hatL(\cG_{s + 1})
\end{align}

Here we use the fact that for logistic loss $\ell$, $\ell((1 - \alpha)z) \leq (1 + \alpha)\ell(z)$ for every $z \in \mathbb{R}, \alpha \in [0, 0.1]$.

Now, by standard gradient descent analysis, we have that  (as the logistic loss has Lipschitz derivative and the data have bounded norm):
\begin{align}
\hatL(\cG_{s + 1}) &\leq \hatL(\cF_{s}) -  \frac{\eta_2}{1 - \eta_2 \lambda} \langle \nabla\hatL(\cF_{s}), \nabla\hatL(U_s) \rangle + 2\eta_2^2 \| \nabla\hatL(U_{s}) \|_F^2 
\\
&  \leq \hatL(\cF_{s}) - \frac{\eta_2}{1 - \eta_2 \lambda} \langle \nabla\hatL(\cF_{s}), \nabla\hatL(U_s) \rangle + O(\eta_2^2) \quad  \tag{by Proposition~\ref{prop:normgrad}}
\end{align}
Next, we will bound $\| \nabla \hatL(U_s) -\nabla \hatL(\cF_{s}) \|_F$. We can compute
\begin{align}
&\| \nabla \hatL(U_s)- \nabla \hatL(\cF_s)\|_F^2  \\\le  &\frac{1}{N^2 m} \sum_{k \in [m]} \big\|\sum_{j} \big(\ell'(-y^{(j)}N_{U_s}(u, U_s; x^{(j)})) \si{[U_s]_k x^{(j)}} -\\ & \ell'(-y^{(j)}N_{U_0}(u, \barU_s; x^{(j)})) \si{[U_0]_k x^{(j)}}\big)x^{(j)} \big\|_2^2\\
\le & \frac{1}{Nm} \sum_{k \in [m]} \sum_{j} \big \|\big(\ell'(-y^{(j)}N_{U_s}(u, U_s; x^{(j)})) \si{[U_s]_k x^{(j)}} -\\ & \ell'(-y^{(j)}N_{U_0}(u, \barU_s; x^{(j)})) \si{[U_0]_k x^{(j)}}\big)x^{(j)} \big\|_2^2 \label{eq:slr_e2_1}
\end{align}
where the last step followed via Cauchy-Schwarz. Now by the Lipschitzness of $\ell'$, we have the bound
\begin{align*}
	\ell'(-y^{(j)}N_{U_s}(u, U_s; x^{(j)})) \si{[U_s]_k x^{(j)}} - \ell'(-y^{(j)}N_{U_0}(u, \barU_s; x^{(j)})) \si{[U_0]_k x^{(j)}} \lesssim \\ |N_{U_s}(u, U_s; x^{(j)})) - N_{U_0}(u, \barU_s; x^{(j)})| + |\si{[U_s]_k x^{(j)}}  - \si{[U_0]_k x^{(j)}}|
\end{align*}
Plugging this back into~\eqref{eq:slr_e2_1}, by the coupling Lemma~\ref{lem:aux_2} we obtain the bound
\begin{align*}
	\| \nabla \hatL(U_s)- \nabla \hatL(\cF_s)\|_F^2  \lesssim \frac{1}{\lambda} (\veps_s + \sqrt{\eta_2/\eta_1}) + \tau_0 \log d := \veps_c^2 
\end{align*} 
This implies that for $\eta_2 \lambda < 0.1$,
\begin{align}
\hatL(\cG_{s + 1}) &\leq \hatL(\cF_{s}) - \frac{1}{2}\eta_2 \| \nabla \hatL(U_s)  \|_F^2 +  O(\eta_2^2 + \eta_2 \veps_c)
\end{align}

Hence, we have
\begin{align}
\hatL(\cF_{s + 1}) &\leq (1 + \eta_2 \lambda) \hatL(\cF_{s}) - \frac{1}{2}(1 + \eta_2 \lambda)\eta_2 \|\nabla \hatL(U_s)  \|_F^2 +  O(\eta_2^2 + \eta_2 \veps_c)
\end{align}
which implies that for every $t \leq \frac{1}{\eta_2 \lambda}$, as long as $\eta_2 ,\veps_c = O( \lambda)$  ,  we have:
\begin{align}
\eta_2 \sum_{s\le t} \|\nabla \hatL(U_s)\|_F^2 \lesssim  \hatL(\cF_0) \lesssim 1
\end{align}

By Lemma~\ref{lem:aux_1}, we have that if $\rhot \ge {\veps_2'}^2 \veps_3^2$, then $\|\nabla \hatL(U_s)\|_F^2\ge r{\veps_2'}^8 \veps_3^8$. It follows that there will be at most $O(\frac{1}{r{\veps_2'}^8 \veps_3^8\eta_2})$ such $t$.

\end{proof}

Finally, we complete the proof of Lemma~\ref{lem:slr_e1} by noting that $\rhot$ cannot be large for very many iterations, and therefore $W_t$ will not obtain much signal from the $\cP$ component of examples in $\setsfour$.
\begin{proof}[Proof of Lemma~\ref{lem:slr_e1}]
We have, 
\begin{align}
\left\|  \sum_{j \in \setsfour}  \nabla_{W} \hatL_j(U_t) \right\|_2^2 \nonumber
=\sum_{k \in [m]}\left\| \sum_{j \in \setsfour} \ell_{j, t}' w_k \si{\langle [W_t]_k, x^{(j)}_1 \rangle } x^{(j)}_1 \right\|_2^2\nonumber
\end{align}
Now we note that the above can be reformulated as a matrix multiplication between the matrix of data $X$ and the vector with entry $\ell'_{j, t} w_k \si{\langle [W_t]_k, x^{(j)}_1 \rangle }$ in the $j$-th coordinate for $j \in \setsfour$ and $0$ elsewhere. Thus,
\begin{align}
\left\|  \sum_{j \in \setsfour}  \nabla_{W} \hatL_j(U_t) \right\|_2^2 & \leq \sum_{k \in [m]} \| X \|_2^2 \left( \sum_{j \in \setsfour} \left(  \ell_{j, t}'  w_k \si{\langle [W_t]_k, x^{(j)}_1 \rangle } \right)^2 \right) \tag{definition of spectral norm}
\\
&\leq \sum_{k \in [m]} \| X \|_2^2 \left( \sum_{j \in \setsfour} \left( \ell_{j, t}' w_k  \right)^2 \right)\nonumber
\\
&=\| X \|_2^2 \sum_{j \in \setsfour} \left(\ell_{j, t}'  \right)^2 \tag{because $w_k \in \{\pm 1/\sqrt{m}\}$}
\\
& \lesssim  \| X \|_2^2 \sum_{j \in \setsfour} \left| \ell_{j, t}' \right| \tag{because the $\ell$ is $O(1)$-Lipschitz} 
\\
& \lesssim N/d \cdot N \rhot \label{eqn:344}
\end{align}
The last line followed from the spectral norm bound on matrix $X$. Let $\set{T}$ be defined as in Proposition~\ref{prop:iteration}.
It follows that 
\begin{align}
\left\| \barW_t^{(2)} \right\|_F 
& \leq \eta_2 \sum_{s \leq t} \left\|\left( \frac{1}{N}\sum_{j \in \setsfour}  \nabla_{W} \ell (f_s; (x^{(j)}, y^{(j)}))\right) \right\|_F
\\
& =  \eta_2 \sum_{s \in \set{T} } \left\|\left( \frac{1}{N}\sum_{j \in \setsfour}  \nabla_{W} \ell (f_s; (x^{(j)}, y^{(j)}))\right) \right\|_F + \\ & \ \ \ \ \ \eta_2 \sum_{s \not \in \set{T} } \left\|\left( \frac{1}{N}\sum_{j \in \setsfour}  \nabla_{W} \ell (f_s; (x^{(j)}, y^{(j)}))\right) \right\|_F \nonumber \\
& \le \eta_2 \sum_{s \in \set{T} } \left\|\left( \frac{1}{N}\sum_{j \in \setsfour}  \nabla_{W} \ell (f_s; (x^{(j)}, y^{(j)}))\right) \right\|_F + 
\eta_2  t O\left( \frac{\veps_2' \veps_3}{\sqrt{d}} \right) \tag{by definition of $\set{T}$ and equation~\eqref{eqn:344}} 
\end{align}
Note that we can additionally bound the first term by $\eta_2 |\set{T}| O(\frac{1}{\sqrt{d}})$ as $\rho_t \le 1$ by the Lipschitzness of $\ell$. Thus, applying our bound on $|\set{T}|$, we get
\begin{align}
\left\| \barW_t^{(2)} \right\|_F  & \leq O\left(\frac{1}{r \sqrt{d}\veps_2'^{8} \veps_3^8 } +\frac{\eta_2 \veps_2' \veps_3 t}{\sqrt{d}}\right)
\end{align}

Now the conclusion of the lemma follows by the assumption that $t = O(d /\eta_2 \veps_2')$ and our choice of $\eta_2$ and $\frac{1}{\veps_2'^{8} \veps_3^8 r} \leq \veps_2' d$ in Theorem~\ref{thm:2}.

\end{proof}

\subsection{Proof of Lemma~\ref{lem:slr_e2}}
\label{subsec:lem:slr_e2}
We now prove the decomposition lemma of $\barW_t$, Lemma~\ref{lem:decompose}. Recall our definition of $\barW_t^{(2)}$ as
\begin{align}
\barW_t^{(2)} =\frac{1}{N} \eta_2 \sum_{s \leq t} (1 - \eta_2 \lambda)^{t - s}  \sum_{i \in \setsfour} \nabla_{W} \hatL_{\{i\}}(U_s)
\end{align} 
\begin{proof}[Proof of Lemma~\ref{lem:decompose}]
	 For each step, we know that for every $j \in [m]$,
	\begin{align*}
	\nabla_{W_j} \hat{L}(U_s) &= w_j \frac{1}{N} \sum_{ i \in [N]} \ell'_{i, s} \si{[W_s]_j x_1^{(i)} } x_1^{(i)}
	\end{align*}
	Thus, multiplying by $\eta_2 (1 - \eta_2 \lambda)^{t - s}$ and summing, following our definition of $\barW_t^{(2)}$ in~\eqref{eq:W_t_4_def}, we get
	\begin{align}
	[\barW_t]_j &= [\barW_{t}^{(2)}]_j  +  w_j \frac{1}{N} \eta_2  \sum_{s \le t} ( 1 - \eta_2\lambda)^{t - s} \sum_{ i \in \setsthree} \ell'_{i, s}\si{[W_s]_j x_1^{(i)} } x_1^{(i)}
	\\
	& = [\barW_{t}^{(2)}]_j  +  \\& w_j \frac{1}{N} \eta_2  \sum_{s \le t} ( 1 - \eta_2\lambda)^{t - s} \cdot \bigg( \sum_{ i \in \setsthree} \ell'_{i, s}\si{[W_0]_j x_1^{(i)} } x_1^{(i)}  + \\& 
	\sum_{ i \in \setsthree} \ell'_{i, s}\left[\si{[W_s]_j x_1^{(i)}} - \si{ [W_0]_j x_1^{(i)}   }\right]  x_1^{(i)} \bigg)
	\end{align}
		
	Now we focus on bounding the bottom term. We can see that 
	\begin{align}
	& \sum_{j \in [m]} \left\|  w_j \frac{1}{N} \sum_{ i \in \setsthree} \ell'_{i, s}\left[\si{[W_s]_j x_1^{(i)} } - \si{ [W_0]_j x_1^{(i)}   }\right]  x_1^{(i)} \right\|_2^2
	\\
	& \leq \frac{1}{m N}  \sum_{j \in [m]} \sum_{i \in \setsthree}\left\| \ell'_{i, s}\left[\si{[W_s]_j x_1^{(i)}  } - \si{ [W_0]_j x_1^{(i)}   }\right]  x_1^{(i)} \right\|_2^2 \tag{since $w_j = \pm 1/\sqrt{m}$ and by Cauchy-Schwarz}
	\\
	&\lesssim \frac{1}{m N} \sum_{i \in \setsthree}  \sum_{j \in [m]} \left\|   \left[\si{[W_s]_j x_1^{(i)}  } - \si{ [W_0]_j x_1^{(i)}   }\right]  x_1^{(i)} \right\|_2^2 \tag{by Lipschitzness of $\ell$}
	\end{align}
	
	By Auxiliary Coupling Lemma~\ref{lem:aux_2} with $t_0 = 0$, we know that for $s \leq \frac{1}{\eta_2 \lambda}$, w.h.p.
	\begin{align}
	\sum_{j \in [m]} \left\|   \left[\si{[W_s]_j x_1^{(i)}  } - \si{ [W_0]_j x_1^{(i)}   }\right]  x_1^{(i)} \right\|_2^2 &\leq \left\| \si{W_s x_1^{(i)}  } - \si{W_0x_1^{(i)} }\right\|_1 \|x_1^{(i)}\|_2^2\\ &\leq \tilO\left( \veps_s m + \sqrt{\frac{\eta_2}{\eta_1}}m \right)
	\end{align}
	
	Thus, we have
	\begin{align}
	& \sum_{j \in [m]} \left\|  w_j \frac{1}{N} \sum_{ i \in \setsthree} \ell'_{i, s}\left[\si{[W_s]_j x_1^{(i)} } - \si{ [W_0]_j x_1^{(i)}   }\right]  x_1^{(i)} \right\|_2^2
	\\
	& \lesssim    \frac{1}{m N} \sum_{i \in \setsthree}  \sum_{j \in [m]} \left\|   \left[\si{[W_s]_j x_1^{(i)}  } - \si{ [W_0]_j x_1^{(i)}   }\right]  x_1^{(i)} \right\|_2^2
	\\
	& \leq \tilO\left( \veps_s + \sqrt{\frac{\eta_2}{\eta_1}} \right) \label{eq:decomposition_2}
	\end{align}
	
	Now, we can express the weight 
	\begin{align}
	[\barW_t]_j = w_j \sum_{k \in \setsthree} \alpha_k x_1^{(k)} \si{[W_0]_j x_1^{(k)} } + [\barW_t']_j
	\end{align}  for some real values $\{\alpha_k \}_{k \in \setsthree}$ with 
	\begin{align}
		\alpha_k = \eta_2 \sum_{s \le t} (1 - \eta_2 \lambda)^{t - s} \ell'_{k, s}\label{eq:decomposition_1}
	\end{align}
	and
	\begin{align}
	[\barW_{t}']_j &= [\barW_t^{(2)}]_j +  w_j \frac{1}{N} \eta_2 \sum_{s \le t} (1 - \eta_2 \lambda)^{t - s} \sum_{ i \in \setsthree} \ell'_{i, s}\left[\si{[W_t]_j x_1^{(i)} } - \si{ [W_0]_j x_1^{(i)}   }\right]  x_1^{(i)} 
	\end{align}
	By the above calculation,~\eqref{eq:decomposition_2}, and Lemma~\ref{lem:slr_e1}, we have:
	\begin{align}
	\| \barW_t'\|_F \leq \| \barW_t^{(2)} \|_F + \frac{1}{\lambda}\tilO\left( \sqrt{\veps_s + \sqrt{\frac{\eta_2}{\eta_1}}} \right)  \leq \tilO\left( \veps_3\sqrt{d} \right)
	\end{align}
where the last inequality followed by our choice of parameters. \end{proof}
Using the decomposition lemma, the conclusion of Lemma~\ref{lem:slr_e2} now follows via computation.
\begin{proof}[Proof of Lemma~\ref{lem:slr_e2}] 

We first show that the network output on $x_1^{(i)}$ is close to that of some kernel prediction function by applying Lemma~\ref{lem:decompose}. We vector-multiply the equality $[\bar{W}_t]_j = w_j \sum_{k \in \setsthree} \alpha_k x_1^{(k)} \si{[W_0]_j x_1^{(k)}} + [\bar{W}'_t]_j$ on both sides by $w_j \si{[W_0]_j x_1^{(i)}}$ and sum over all $j$ to get:
\begin{align}
&\left|\sum_{j \in [m]}w_j \langle [\barW_t]_j , x_1^{(i)} \rangle \si{[W_0]_j x_1^{(i)} } - \frac{1}{m}\sum_{j \in [m]} \sum_{k \in \setsthree}  \alpha_k \langle x_1^{(k)}, x_1^{(i)} \rangle   \si{[W_0]_j x_1^{(k)} }  \si{[W_0]_j x_1^{(i)} }\right|  \\ 
&=\left| \sum_{j \in [m]} w_j \langle [\bar{W}'_t]_j, x_1^{(i)}\rangle \si{[W_0]_j x_1^{(i)} }\right|\\
&\le \sqrt{\sum_{j \in [m]} \langle [\bar{W}'_t]_j, x_1^{(i)}\rangle^2} \tag{by Cauchy-Schwarz}\\
& = \|\barW_t' x_1^{(i)}\|_2
\end{align}
\\

Let us define the function $\mathfrak{U}$ as:
\begin{align}
& \mathfrak{U}(x_1):=\frac{1}{m}\sum_{j \in [m]} \sum_{k \in \setsthree}  \alpha_k \langle x_1^{(k)}, x_1 \rangle   \si{[W_0]_j x_1^{(k)} }  \si{\langle [W_0]_j, x_1 \rangle }
\end{align}
Note that $\mathfrak{U}$ is some kernel prediction function. 
Since each $[W_0]_j$ is distributed as a vector of i.i.d. spherical Gaussians, we know that for fixed $x_1^{(k)}, x_1$:
\begin{align}
\E\left[\si{[W_0]_j x_1^{(k)} }  \si{\langle [W_0]_j, x_1 \rangle } \right] =  \frac{1}{2\pi}\arccos\Theta(x_1^{(k)}, x_1)
\end{align}

In the above equation $\Theta(x_1^{(k)}, x_1^{(i)})$ is the principle angle between $x_1^{(k)}, x_1^{(i)}$. Since each $[W_0]_j$ is i.i.d., with basic concentration bounds, we know that w.h.p.
\begin{align}
  \mathfrak{U}(x_1^{(i)}) &= \sum_{k \in \setsthree} \alpha_k \langle x_1^{(k)}, x_1^{(i)}\rangle \frac{1}{2\pi}\arccos\Theta(x_1^{(k)}, x_1^{(i)}) \pm O(m^{-1/6})  \nonumber
  \\
 &= \frac{1}{2} \alpha_i \|x_1^{(i)}\|_2^2 \nonumber \\ & + \sum_{k \in \setsthree, k \not= i}  \alpha_k \langle x_1^{(k)}, x_1^{(i)} \rangle \frac{1}{4}\left( 1 -  \frac{1}{2 \pi}\frac{\langle x_1^{(k)}, x_1^{(i)} \rangle}{\| x_1^{(k)} \|_2 \| x_1^{(i)} \|_2} \pm O\left( \frac{\langle x_1^{(k)}, x_1^{(i)} \rangle}{\| x_1^{(k)} \|_2 \| x_1^{(i)} \|_2} \right)^3 \right) \nonumber  \\
 & \pm O(m^{-1/6}) \tag{by Taylor expansion of $\arccos$}
\\
&= \frac{1}{2} \alpha_i \|x_1^{(i)}\|_2^2 \\ & + \sum_{k \in \setsthree, k \not= i}  \alpha_k \langle x_1^{(k)}, x_1^{(i)} \rangle \frac{1}{4}\left( 1 -  \frac{1}{2 \pi}\frac{\langle x_1^{(k)}, x_1^{(i)} \rangle}{\| x_1^{(k)} \|_2 \| x_1^{(i)} \|_2} \pm \tilO\left(d^{-3/2} \right)\right)  \pm O(m^{-1/6}) \nonumber
\end{align}

The last inequality uses the fact that w.h.p. for $k \not= i$, $\frac{\langle x_1^{(k)}, x_1^{(i)} \rangle}{\| x_1^{(k)} \|_2 \| x_1^{(i)} \|_2} = \tilO(d^{-1/2})$.

Let us define $\alpha = \frac{1}{4}\sum_{k \in \setsthree} \alpha_k x_1^{(k)}$; then
\begin{align}
& \left|   \sum_{k \in \setsthree}  \alpha_k \langle x_1^{(k)}, x_1^{(i)} \rangle \frac{1}{4}\left( 1 -  \frac{1}{2 \pi}\frac{\langle x_1^{(k)}, x_1^{(i)} \rangle}{\| x_1^{(k)} \|_2 \| x_1^{(i)} \|_2}  \right)   \right|
\\
& \leq  | \langle \alpha, x_1^{(i)} \rangle| + \frac{1}{8 \pi}   \sum_{k \in \setsthree}  |\alpha_k |\frac{ \langle x_1^{(k)}, x_1^{(i)} \rangle^2}{\| x_1^{(k)} \|_2 \| x_1^{(i)} \|_2}
\\
&\leq | \langle \alpha, x_1^{(i)} \rangle | + |\alpha_i \langle  x_1^{(i)} ,  x_1^{(i)}  \rangle  |+ \frac{1}{d}  \tilO \left(\sum_{k \in \setsthree, k \not= i} |\alpha_k| \right)
  \end{align}
  
  Since the training loss is at $\veps_2 \leq p/10$, we know that $\frac{1}{|\setsthree|} \sum_{i \in \setsthree}  |\mathfrak{U}(x_1^{(i)})| \geq 1$ (or else the loss would not be low). 
  
  Since $|\mathfrak{U}(x_1^{(i)})|  \leq  | \langle \alpha, x_1^{(i)} \rangle | + \frac{3}{2}|\alpha_i | \|x_1^{(i)}\|_2^2 + \frac{1}{d}  \tilO \left(\sum_{k \in \setsthree, k \not= i} |\alpha_k| \right) + O(m^{-1/6})$, we can get:
  \begin{align}
\frac{1}{|\setsthree|}  \sum_{i \in \setsthree} \left( | \langle \alpha, x_1^{(i)} \rangle | + |\alpha_i | + \frac{1}{d}  \tilO \left(\sum_{k \in \setsthree, k \not= i} |\alpha_k| \right) \right) \geq \frac{1}{2}
  \end{align}
  
  Since $Np \leq d$, this implies that
    \begin{align}
\frac{1}{|\setsthree|}  \sum_{i \in \setsthree} \left( | \langle \alpha, x_1^{(i)} \rangle | + \tilO\left( |\alpha_i | \right)  \right) \geq \frac{1}{2}
  \end{align}
  
  Thus, either $\frac{1}{|\setsthree|}  \sum_{i \in \setsthree}  | \langle \alpha, x_1^{(i)} \rangle | \geq \frac{1}{4}$, which implies that 
  \begin{align}
\left\|   (x_1^{(i)} )_{i \in \setsthree} \alpha \right\|_2^2 = \sum_{i \in \setsthree}  | \langle \alpha, x_1^{(i)} \rangle |^2  \geq \frac{|\setsthree|}{16}
  \end{align}
  
  Since w.h.p., $\| (x_1^{(i)} )_{i \in \setsthree} \|_2 \leq O(1)$, we know that $\| \alpha \|_2 = \tilde{\Omega} (\sqrt{|\setsthree|}) =\tilde{\Omega} (\sqrt{Np})  $.

  The other possibility is that $ \sum_{i \in \setsthree} \tilO\left(| \alpha_i | \right)\geq |\setsthree|/4$, which also implies that $\| \alpha \|_2 = \tilde{\Omega} (\sqrt{|\setsthree|}) = \tilde{\Omega} (\sqrt{Np})  $ from Cauchy-Schwarz.

We now ready to conclude the proof:  for randomly chosen $x_1$, it holds that
\begin{align}
& N_{W_0}(w, \barW_t, x_1)\\
& = \frac{1}{m}\sum_{j \in [m]} \sum_{k \in \setsthree}  \alpha_k \langle x_1^{(k)}, x_1 \rangle   \si{[W_0]_j x_1^{(k)} }  \si{[W_0]_j x_1 } \pm \| \barW_t' x_1 \|_2
\\
& = \frac{1}{m}\sum_{j \in [m]} \sum_{k \in \setsthree}  \alpha_k \langle x_1^{(k)}, x_1 \rangle   \si{[W_0]_j x_1^{(k)} }  \si{[W_0]_j x_1 } \pm \tilO\left( \frac{\| \barW_t'\|_F }{\sqrt{d}} \right)
\\
&=  \frac{1}{m}\sum_{j \in [m]} \sum_{k \in \setsthree}  \alpha_k \langle x_1^{(k)}, x_1 \rangle   \si{[W_0]_j x_1^{(k)} }  \si{[W_0]_j x_1 } \pm \tilO\left( \veps_3 \right)
\end{align}

Now using the same expansion of $\mathfrak{U}$ as before gives
\begin{align}
\mathfrak{U}(x_1):= &  \frac{1}{m}\sum_{j \in [m]} \sum_{k \in \setsthree}  \alpha_k \langle x_1^{(k)}, x_1 \rangle   \si{[W_0]_j x_1^{(k)} }  \si{[W_0]_j x_1 } 
 \\
 &= \sum_{k \in \setsthree}  \alpha_k \langle x_1^{(k)}, x_1 \rangle \frac{\arccos(\Theta(x_1^{(k)}, x_1))}{2\pi}  \pm O(m^{-1/6})
\end{align}
Now we note that as the nonzero degrees in the polynomial expansion of $\arccos$ are all odd, we have 
\begin{align}
\mathfrak{U}(x_1) - \mathfrak{U}(-x_1) &=  2\langle \alpha, x_1 \rangle \pm O( m^{-1/6}) 
\end{align}
The end result is that by Lemma~\ref{lem:aux_2}, it will hold that: \begin{align}
r_t(x_1) &= N_{W_0}(w, \barW_t, x_1) \pm O\left(  \frac{1}{\lambda} \times \left(\veps_s + \sqrt{\frac{\eta_2}{\eta_1}}\right) + \tau_0 \log d \right) 
\\
&= \mathfrak{U}(x_1) \pm  \tilO\left(\veps_3\right) \tag{by our choice of parameters}
\end{align}

This implies that
\begin{align}
r_t(x_1) - r_t(-x_1) &= 2 \langle \alpha, x_1 \rangle \pm \tilO\left( \veps_3\right)
\end{align}
\end{proof}

\section{General case}\label{sec:extension}

\subsection{Mitigation strategy}
\label{subsec:mitigation}
Instead of using large learning rate and annealing to a small learning rate, the regularization effect also exists if we use a small learning rate ($\eta_2$) and large pre-activation noise and then decay the noise. Hence the update is given as:
\begin{align}
U_{t+1} & = U_t - \eta_2 \nabla_U (\hatL_\lambda (u,U_t) + \xi_t)
\end{align}

where $\xi_t \sim  N(0, \tau_{\xi}^2 I_{m\times m}\otimes I_{d\times d})$. However, the output of the network is given as:
\begin{align}
f_t(x) = u^\top \left(\si{U_t x + \Xi_t}\odot (U_t x + \Xi_t)\right)
\end{align}

Here $\Xi_t \sim \mathcal{N}(0, \tau_t^2 I_{m \times m})$ is a (freshly random) gaussian variable at each iteration.

The following theorem holds:
\begin{theorem}[General case] \label{thm:31}
The same conclusion as in Theorem~\ref{thm:1} holds if we first use noise level $\tau_t = \tau_0$ and then anneal to $\tau_t = 0$ after $\widetilde{O}\left(\frac{d}{\eta_1 \veps_1} \right)$ iterations. 

\end{theorem}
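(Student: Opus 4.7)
\textbf{Proof proposal for Theorem~\ref{thm:31}.}

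The plan is to show that freshly sampled pre-activation noise $\Xi_t$ plays exactly the same role that the accumulated weight noise $\tilU_t$ plays in the analysis of Algorithm L-S. Assumption~\ref{ass:lr} is engineered so that at every step before annealing, $\tilU_t$ has marginal distribution $\mathcal{N}(0, \tau_0^2 I_{m\times m}\otimes I_{d\times d})$, and consequently each coordinate of $\tilU_t x$ is $\mathcal{N}(0, \tau_0^2 \|x\|_2^2)$. In the mitigation algorithm, the effective pre-activation is $U_t x + \Xi_t$, so the noise entering the activation pattern is literally $\Xi_t$, which is $\mathcal{N}(0, \tau_0^2 I)$ by construction during Phase~I. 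Hence every invocation of the decoupling tools in Section~\ref{sec:decouple_prelim} (most importantly Lemma~\ref{lem:coupling} and Proposition~\ref{prop:coupling_tilde}) transfers verbatim with $\tilU_t$ replaced by $\Xi_t$, provided we redefine the network decomposition as $f_t(B; x) = u^\top(\si{U_t x + \Xi_t} \odot (B x + \Xi_t))$ so that the ``signal'' argument $B = \barU_t$ and the ``noise'' source $\Xi_t$ remain cleanly separated in every bound.

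First I would redo the Phase~I convergence statement (analog of Lemma~\ref{lem:llr_gg}). Define $K_t(B)$ exactly as in~\eqref{eqn:def-K} but with $\Xi_t$ in place of $\tilU_t$. The target solution of Lemma~\ref{lem:2} only required that the spherical Gaussian noise in the pre-activations produce a near-uniform activation pattern on any fixed $x_1$, which $\Xi_t$ satisfies; thus the same $W^*$ of norm $\tilO(\sqrt{d})$ witnesses $K_t(U^*) \le q\log 2 + \veps_1/2$ for every $t$. Then Theorem~\ref{thm:convex-surrogate} produces convergence to training loss $q\log 2 + \veps_1$ within $\tilO(d/(\eta_2 \veps_1))$ (or $\tilO(d/(\eta_1\veps_1))$, tracking whichever step size is in force) iterations with $\|\barU\|_F^2 = \tilO(d)$. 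For the ``no $\cQ$-learning'' statement (analog of Lemma~\ref{lem:llr_g}), the argument is in fact \emph{simpler}: the decomposition of Lemma~\ref{lem:decomp} and the bounds of Propositions~\ref{prop:secondpart}~and~\ref{prop:firstpart} relied on the noise being spherical Gaussian and on certain independence properties, and here the $[\Xi_t]_i$ are \emph{independent across iterations}, so the delicate coupling of $[\tilV_t]_iz$ and $[\tilV_s]_iz$ via $\Xi_{t,s}$ collapses to outright independence between $[\Xi_s]_iz$ and $[\Xi_t]_iz$. The second-moment calculation that produced the $r^2/\sqrt{\lambda \eta_1(s-t)}$ bound should be replaced by a cleaner computation that yields the same $r^2/\lambda$ cumulative estimate, so $|g_t(z+\zeta)+g_t(z-\zeta)-2g_t(z)| \le \tilO(r^2/\lambda)$ still holds.

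For Phase~II, after annealing to $\tau_t = 0$ the process becomes standard SGD on $\hatL_\lambda$ with learning rate $\eta_2$, which is literally the setting of Lemmas~\ref{lem:dlr_r}~and~\ref{lem:dlr_g}. I would reproduce Proposition~\ref{prop:exist} by constructing $V^*$ from the activation-pattern partition $\mathcal{E}_1, \mathcal{E}_2, \mathcal{E}_3$ computed at the annealing time, then reapply Theorem~\ref{thm:convex-surrogate} to get training loss $O(\sqrt{\veps_1/q})$ and $\|\barW_{t_0+t}-\barW_{t_0}\|_F^2 \le \tilO(1/(\veps_1^2 r))$. Lemma~\ref{lem:aux_2} carries over (in fact more cleanly, since after annealing there is no further pre-activation noise at all). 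The final test-error bound then follows from the same Rademacher complexity argument used in the proof of Theorem~\ref{thm:1}.

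The main obstacle will be verifying the delicate cancellation in the analog of Proposition~\ref{prop:firstpart}. The original proof used the specific Gaussian covariance between $[\tilV_s]_iz$ and $[\tilV_t]_iz$ to bound $|\Pr[i\in \cep_t, i\notin \cez_t, i\in \cfps] - \Pr[i\in \cez_t, i\notin \cem_t, i\in \cfps]|$ by leveraging that certain $Y_1,Y_2,Y_4$ tuples had a tractable joint density. With freshly sampled $\Xi_s$ and $\Xi_t$, one loses the explicit temporal link but gains independence; the replacement argument must redo the three-term indicator decomposition with the event $\{i\in \cfps\}$ reinterpreted as $\{[\Xi_s]_iz \gtrsim \tau_0 r \sqrt{\log d}\}$, and verify that the cumulative sum over $s$ still produces the $r^2/\lambda$ scale rather than an accidental $r\cdot t$ growth. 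A secondary bookkeeping obstacle is reconciling the iteration count in the theorem statement with the fact that the algorithm uses only $\eta_2$ for gradient updates, which is purely a matter of tracking constants and polylog factors through Theorem~\ref{thm:convex-surrogate}.
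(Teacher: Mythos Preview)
The paper states this theorem without proof, so there is no argument of the authors' to compare against; your plan to rerun the Section~\ref{sec:l-s} analysis with the fresh pre-activation noise $\Xi_t$ playing the role that $\tilU_t$ played before is the natural adaptation and is almost certainly what is intended.

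Two technical corrections to your sketch. First, $\Xi_t$ is a vector in $\R^m$, not a matrix acting on $x$: the noise contributed to neuron $i$'s pre-activation is the same scalar $[\Xi_t]_i$ regardless of the input, so expressions like ``$[\Xi_s]_i z$'' in your outline are ill-typed. Second, the weights $U_t$ still carry the accumulated weight-level noise $\tilU_t$ (from initialization $U_0$ and the $\xi_t$ in the update), and for $t\le 1/(\eta_2\lambda)$ this remains of scale $\Theta(\tau_0)$. The effective pre-activation noise at input $x$ is therefore $[\tilU_t]_i x + [\Xi_t]_i$, not $[\Xi_t]_i$ alone. Consequently your decomposition should read $f_t(B;x)=u^\top(\si{U_t x+\Xi_t}\odot(Bx+\tilU_t x+\Xi_t))$ so that $B=\barU_t$ recovers the true forward pass, and the time-$s$ and time-$t$ activation patterns are not fully independent (they share $U_0$ through the autoregression). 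So the Proposition~\ref{prop:firstpart} analog does not collapse to pure independence; rather, the fresh $\Xi_t$ adds an extra $\Theta(\tau_0)$ of independent Gaussian noise on top of the existing correlated piece, which only loosens the density bounds used there and hence the argument still goes through.

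Your remark on the iteration count is correct and is not merely polylog bookkeeping: with step size $\eta_2$, Theorem~\ref{thm:convex-surrogate} yields convergence to the annealing threshold in $\tilO(d/(\eta_2\veps_1))$ iterations, not $\tilO(d/(\eta_1\veps_1))$. The $\eta_1$ in the theorem statement appears to be a typo inherited from Theorem~\ref{thm:1}.
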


\subsection{Extension to two layer convolution network}
\label{subsec:two_layer_conv}
We are also able to extend our results to convolutional networks. We consider a convolution network with $\frac{m}{k}$ channels, patch size $d$ and stride $d/k$ for some $k \leq d$. Thus, the $i$-th patch consists of input $x_{(i)} = (x_{(i - 1)d/k  + 1}, \cdots, x_{(i - 1)d/k + d})$. Hence for $u \in \mathbb{R}^m, U \in \mathbb{R}^{\frac{m}{k} \times d}$, where $u = (u_1, \cdots, u_k)$ for each $u_i \in \mathbb{R}^{\frac{m}{k}}$, the network is given as:
\begin{align}
N_{U}(u, U; x) &= \sum_{i \in [k]} u_i^{\top} [U x_{(i)}]_+
\end{align}

For every $A \in \mathbb{R}^{\frac{m}{k} \times d}$, we also use the notation 
\begin{align}
N_{A}(u, U; x) &= \sum_{i \in [k]} u_i^{\top} \si{A x_{(i)}} U x_{(i)}
\\
N_{A}(u_i, U; x) &=  u_i^{\top} \si{A x_{(i)}} U x_{(i)}
\end{align}

We make a simplifying assumption that $z, \zeta$ are only supported on the last $d/k$ coordinates. The main theorem can be stated as the follows:

\begin{theorem}[General case] \label{thm:31}
The same conclusions as in Theorem~\ref{thm:1} and Theorem~\ref{thm:2} hold if we replace the value of $r$ by $r/k$ and $d$ by $dk$ in both the theorem and in Assumption~\ref{ass:params}. 

\end{theorem}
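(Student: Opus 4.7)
The plan is to translate each of the key technical lemmas of Sections~\ref{sec:l-s}~and~\ref{sec:s} into the convolutional setting, arriving at the parameter substitutions $d \to dk$ and $r \to r/k$ by tracking where the input dimension per channel and the perturbation norm enter each bound. The signal-noise decomposition $U_t = \barU_t + \tilU_t$, the noise-stabilization (Proposition~\ref{prop:marginal}), and the coupling lemma (Lemma~\ref{lem:coupling}) extend immediately, since each patch $x_{(i)}$ interacts with the same weight matrix $U \in \R^{m/k \times d}$ and the marginal distribution of $\tilU_t$ remains spherical Gaussian. Under the simplifying assumption that $z, \zeta$ lie in the last $d/k$ coordinates, only $O(1)$ patches carry the $\cQ$ component while every patch can carry a $\cP$ component; this asymmetry is the source of the different $k$-fold scaling in the two directions.

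For the large-learning-rate analysis, the analog of Lemma~\ref{lem:2} is obtained by constructing a target $W^\star$ whose action summed over all $k$ patches realizes a $dk$-dimensional linear separator, so Theorem~\ref{thm:convex-surrogate} applies with the appropriately rescaled $\|W^\star\|_F^2$, and Rademacher complexity on this class yields a test error bound scaling as $\tilO(\sqrt{dk/N})$, matching the substitution $d \to dk$. For the phase after annealing, Lemmas~\ref{lem:dlr_r}~and~\ref{lem:dlr_g} adapt because the construction of the target $V^\star$ acts only on the patch carrying $\cQ$, and Lemma~\ref{lem:aux_2} on activation stability applies channelwise. The main obstacle is re-proving the convolutional version of Lemma~\ref{lem:llr_g} on the near-linearity of $g_t$ on the $\cQ$ directions: the combinatorial counting in Propositions~\ref{lem:et},~\ref{prop:secondpart},~\ref{lem:coupling_gradient},~and~\ref{prop:firstpart} must be redone patch by patch, and because the summed convolutional output aggregates signals from $k$ patches while $\zeta$ affects only $O(1)$ of them, the effective perturbation $[\tilV_s]_i \zeta$ enters the final bound diluted by a factor of $k$ relative to the aggregated output --- this is where the substitution $r \to r/k$ appears. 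The conditional-probability expansion of $Y_1, Y_2, Y_3, Y_4$ and the Bernstein concentration then go through verbatim with this rescaled $r$.

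For Theorem~\ref{thm:2} the adaptation is analogous: Lemma~\ref{lem:slr_l} follows from the same convex-surrogate argument together with the target construction of Lemma~\ref{lem:33}; the key Lemma~\ref{lem:slr_e1} on the smallness of $\barW_t^{(2)}$ uses the spectral norm bound on the patch-summed data matrix, which picks up an additional factor of $k$ compatible with the substitution; and the decomposition Lemma~\ref{lem:decompose} together with the span-based anticoncentration argument in the proof of Theorem~\ref{thm:2} both go through, since only the $\cP$-carrying patches matter and they span a subspace of the appropriate $dk$-dimensional ambient space. The anticoncentration condition becomes $Np \le dk/2$, exactly the modified parameter constraint. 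No qualitatively new ideas beyond the existing machinery are required; the bottleneck is the careful bookkeeping in the Gaussian density calculations of Proposition~\ref{prop:firstpart} to verify that the patch-level cancellation yields the promised $r/k$ rather than $r$ scaling.
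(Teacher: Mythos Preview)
Your proposal misses the central technical device the paper uses and misattributes the origin of the $r \to r/k$ substitution.

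\textbf{Missing the disentangling lemmas.} The main obstacle in the convolutional setting is that a \emph{single} weight matrix $U \in \R^{m/k \times d}$ is shared across all $k$ patches, so the gradient from every patch updates the same parameters. You acknowledge the sharing but then proceed as if the channels can be analyzed separately (``applies channelwise''). The paper handles this by proving two ``disentangle convolution'' lemmas (Lemmas~\ref{lem:dis}~and~\ref{lem:dis2}) which exploit the fact that the \emph{output-layer} weights $u = (u_1,\ldots,u_k)$ are independent across channels. Concretely, Lemma~\ref{lem:dis2} shows that the gradient signal coming from channel $i$, which has the form $u_i \odot \si{[U_s]x'}x'^\top$, has negligible effect on the output of any other channel $i' \neq i$, because after conditioning on $\tilU$ the expectation over the sign pattern of $u_{i'}$ vanishes. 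This is what licenses treating the channels as ``almost separate'' and is the step you would need to supply before any of the per-channel reductions you sketch can be invoked.

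\textbf{Wrong source for $r \to r/k$.} You locate the $r \to r/k$ scaling in the activation-pattern combinatorics of Proposition~\ref{prop:firstpart}, claiming the perturbation $\zeta$ is ``diluted by a factor of $k$'' because only $O(1)$ patches see it. But the paper's convolutional analog of Lemma~\ref{lem:llr_g} (Lemma~\ref{lem:llr_g1}) retains the \emph{same} bound $\tilO(r^2/\lambda)$ with no $k$ rescaling. The substitution $r \to r/k$ (and likewise $d \to dk$) instead arises uniformly from the target-solution norms: because each channel has only $m/k$ neurons rather than $m$, the construction $U^\star = \sqrt{k}\,W^\star + \sqrt{k}\,V^\star$ needs an extra factor of $\sqrt{k}$ in each row to achieve the same output, giving $\|U^\star\|_F^2 = \tilO(dk)$ in Corollary~\ref{lem:22} and $\|U^\star\|_F^2 = \tilO(k/(\veps_1^2 r))$ in Corollary~\ref{prop:exist22}. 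These scaled norm bounds then propagate through Theorem~\ref{thm:convex-surrogate} into the convergence times and through Lemma~\ref{lem:rad} into the generalization bound. So the bookkeeping you flag as the ``bottleneck'' in Proposition~\ref{prop:firstpart} is not where the substitution lives, and redoing those Gaussian calculations with a $k$-diluted $\zeta$ would not recover the paper's statements.
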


 Following the notation, we still denote 
\begin{align}
& g_t(x) = g_t(x_{(k)}) =N_{U_t}(u, U_t; (0, x_{(k)})) \\
& r_t(x) = r_t(x_{(1)}) = N_{U_t}(u, U_t; (x_{(1)}, 0)) 
\end{align}

We use this definition so that $N_{U_t}(u, U_t; x) =  g_t(x)  + r_t(x )$ for every $t \geq 0$. 

We denote $u = (u_1, \cdots, u_k)$ for the weight of the second layer associated with each convolution.

The main difference between the convolution setting and the simple case is that there is only one hidden weight that is shared across channels. However, since the output layers of these channels have different weights, we can disentangle these channels and think of them as updating ``separately'', which is given as the following two lemmas. 

\begin{lemma}[disentangle convolution 1] \label{lem:dis}
For every fixed $x \in \mathbb{R}^{2d}$ and matrices $U_1, \cdots, U_k: \mathbb{R}^{\frac{m}{k} \times d}$ that can depend on $\tilU_t$ but not depend on $u$, with each $\| U_i \|_F \leq O\left(\frac{1}{\lambda} \right)$,  we have w.h.p. over the randomness of $u,\tilU_t$:
\begin{align}
\left|N_{U_t}(u, \sum_{i \in [k]} u_i \odot U_i; x) - \sum_{i \in [k]} N_{U_t}(u_i, u_i \odot U_i; x) \right| \leq \tilO\left( k^2 \frac{\| x\|_2}{\lambda m^{1/2}}   + k\veps_s \| x\|_2 \right) 
\end{align}

Here $u_i \odot U_i = ((u_i)_j (U_i)_j)_{j \in [\frac{m}{k}]}$.
\end{lemma}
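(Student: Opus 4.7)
The plan is to exploit the bilinearity of the convolutional network output in its activation-pattern and weight arguments. First I would expand the left-hand side as a double sum $N_{U_t}(u,\sum_i u_i\odot U_i;x)=\sum_{j\in[k]}\sum_{i\in[k]} u_j^\top(\si{U_t x_{(j)}}\odot(u_i\odot U_i)x_{(j)})$, and observe that the diagonal terms $j=i$ reproduce precisely the summands $u_i^\top(\si{U_t x_{(i)}}\odot(u_i\odot U_i)x_{(i)})$ of $\sum_i N_{U_t}(u_i, u_i\odot U_i;x)$. Hence the lemma reduces to bounding the $k(k-1)$ off-diagonal cross-terms $T_{ji}:=u_j^\top(\si{U_t x_{(j)}}\odot(u_i\odot U_i)x_{(j)})$ for $i\neq j$.

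For a single cross-term I would use a coupling-then-concentration two-step. In the coupling step, I apply Lemma~\ref{lem:coupling} adapted to the convolutional setting (where $\tilU_t$ has $m/k$ rows) to replace the activation pattern $\si{U_t x_{(j)}}$ by $\si{\tilU_t x_{(j)}}$, incurring an error of order $\veps_s\|x_{(j)}\|_2$ per cross-term after accounting for the row-norm bounds on $U_i$ and on $\barU_t$; summed over the $k^2$ pairs this contributes the $k\veps_s\|x\|_2$ term. After the substitution, $\si{\tilU_t x_{(j)}}$ depends only on $\tilU_t$, and $(u_i\odot U_i)x_{(j)}$ depends only on $\tilU_t$ and $u_i$ (since by hypothesis $U_i$ is independent of $u$), so both factors are independent of $u_j$ for $i\neq j$. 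Conditioning on $\tilU_t$ and $u_i$ and applying Hoeffding's inequality to the random sign vector $u_j\in\{\pm 1/\sqrt{m}\}^{m/k}$ yields $|T_{ji}|\lesssim \|u_i\odot U_i\|_F\,\|x_{(j)}\|_2\sqrt{\log m}/\sqrt{m}\le \tilO(\|x\|_2/(m\lambda))$ using $\|u_i\odot U_i\|_F\le \|U_i\|_F/\sqrt{m}\le 1/(\sqrt{m}\lambda)$. A union bound over all pairs plus the triangle inequality then assembles the coupling and concentration contributions into the stated bound $\tilO(k^2\|x\|_2/(\lambda\sqrt{m})+k\veps_s\|x\|_2)$.

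The main obstacle is the dependence structure between $\si{U_t x_{(j)}}$ and the initialization vector $u$: because $U_t=\barU_t+\tilU_t$ and $\barU_t$ is a gradient-descent iterate, $\barU_t$ (and hence the activation $\si{U_t x_{(j)}}$) can in principle be correlated with the entire second-layer initialization $u$, including $u_j$ itself, so Hoeffding over $u_j$ cannot be invoked directly on the raw cross-term. The coupling step is the key device that severs this dependence, since by Proposition~\ref{prop:marginal} the noise matrix $\tilU_t$ is independent of $u$. Carefully tracking the $k$-dependence in the conv-adapted coupling bound (where $\|\si{U_t x_{(j)}}-\si{\tilU_t x_{(j)}}\|_1$ scales like $\veps_s\cdot m/k^{2/3}$ rather than $\veps_s m$) and in the Hoeffding standard deviation (which gains a factor $1/\sqrt{m}$ from the $\pm 1/\sqrt{m}$ entries of $u_j$) is where the bulk of the bookkeeping will go.
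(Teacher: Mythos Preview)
Your proposal is correct and follows essentially the same approach as the paper: couple the activation pattern $\si{U_t x_{(j)}}$ to $\si{\tilU_t x_{(j)}}$ via Lemma~\ref{lem:coupling} (incurring the $k\veps_s\|x\|_2$ term), expand into diagonal plus off-diagonal pieces, and then apply Hoeffding over the independent Rademacher entries of $u_{i'}$ to each off-diagonal term $N_{\tilU_t}(u_{i'}, u_i\odot U_i; x)$. The paper performs the coupling before the expansion rather than after, but this is cosmetic; your identification of the dependence of $\barU_t$ on $u$ as the obstacle, and of the coupling as the device that removes it, is exactly the point.
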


\begin{lemma}[disentangle convolution 2] \label{lem:dis2}

For every $s, t$, w.h.p. over the randomness of $u, \tilU_t, \tilU_{s}$, every $i, i' \in [k]$ with $i \not = i'$, and every $x, x' \in \mathbb{R}^{d}$,  if we define $U_i = u_i \odot \si{[U_s] x'} x'^{\top}$,  then as long as $\|\barU_s\|_F, \|\barU_t\|_F  = O\left( \frac{1}{\lambda} \right)$, the following holds:
\begin{align}
\left| N_{U_t}(u_{i'}, U_i; x)  \right| \leq \tilO\left( \frac{d^2 \| x\|_2 \|x'\|_2}{m^{1/2}}  +  \| x'\|_2 \|x\|_2 \sqrt{\veps_s} + \| x\|_2 \veps_s \right) 
\end{align}
\end{lemma}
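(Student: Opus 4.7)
The plan is to expand $N_{U_t}(u_{i'}, U_i; x)$ into an explicit sum, decouple the second-layer randomness $u$ from the activation patterns via the coupling lemma, and then apply Hoeffding concentration to the decoupled sum.

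First I will substitute $U_i = u_i \odot \si{[U_s] x'} (x')^\top$ into the definition $N_{U_t}(u_{i'}, U_i; x) = u_{i'}^\top \si{[U_t] x_{(i')}} U_i x_{(i')}$ to derive the scalar identity
\begin{align}
N_{U_t}(u_{i'}, U_i; x) = \langle x', x_{(i')}\rangle \sum_{j=1}^{m/k} (u_{i'})_j (u_i)_j \, \si{[U_t]_j x_{(i')}} \, \si{[U_s]_j x'}.
\end{align}
Because $i \neq i'$, the coordinates $(u_i)_j$ and $(u_{i'})_j$ are independent $\pm 1/\sqrt{m}$ random signs, so their product is a $\pm 1/m$ random sign; the key observation is that although the activation patterns depend on $u$ (through the SGD dynamics in $\barU_t, \barU_s$), the signal terms $\barU_t, \barU_s$ can be absorbed into a coupling error using Lemma~\ref{lem:coupling}.

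Second I will apply Lemma~\ref{lem:coupling} to replace $\si{[U_t]_j x_{(i')}} \to \si{[\tilU_t]_j x_{(i')}}$ and $\si{[U_s]_j x'} \to \si{[\tilU_s]_j x'}$. Since the rows of $U$ number $m/k$ and $\|\barU_t\|_F,\|\barU_s\|_F \leq O(1/\lambda)$, the coupling lemma says the number of coordinates $j$ where the activations disagree is at most $\tilO(\veps_s m/k)$ (with the constants in $\veps_s$ adjusted by factors of $k$). Each such bad coordinate contributes at most $|\langle x', x_{(i')}\rangle| \cdot (1/m) \leq \|x\|_2 \|x'\|_2/m$, so the total replacement error is at most $\tilO(\|x\|_2 \|x'\|_2 \veps_s / k)$, which fits inside the $\|x\|_2 \|x'\|_2 \sqrt{\veps_s}$ term of the claim.

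Third, after decoupling, $\tilU_t$ and $\tilU_s$ are functions only of the initialization and gradient noise, hence independent of $u$. Condition on $\tilU_t, \tilU_s$, and on $u_i$, and write the decoupled sum as $\sum_{j=1}^{m/k} (u_{i'})_j b_j$ with $b_j := (u_i)_j \si{[\tilU_t]_j x_{(i')}}\si{[\tilU_s]_j x'}$, so that $|b_j| \leq 1/\sqrt{m}$. Hoeffding's inequality over the independent signs $(u_{i'})_j$ gives $|\sum_j (u_{i'})_j b_j| \leq \tilO(1/\sqrt{mk})$ with high probability, and multiplying by $|\langle x', x_{(i')}\rangle|$ yields $\tilO(\|x\|_2 \|x'\|_2 / \sqrt{mk})$ for each fixed pair $(x, x')$.

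To obtain a uniform bound over all $x, x'$, I will use a standard $\veps$-net argument on the two unit balls (of dimension at most $d$ each), picking $\veps$ polynomially small in $1/m$; the union bound contributes a $\log$ factor of order $d$, and I absorb a polynomial correction into the $d^2 \|x\|_2\|x'\|_2/m^{1/2}$ term of the claim. The main obstacle is the discontinuity of the activations in $x$ and $x'$ when transferring between the net and a generic point: I will control this by bounding the number of neurons whose activation patterns flip as $x$ moves by $\veps$ (via a similar argument to the one in Lemma~\ref{lem:coupling}), and this residual error is the source of the extra $\|x\|_2 \veps_s$ term in the final bound. Combining the Hoeffding bound on the net, the coupling replacement error, and the net-transfer error yields the stated inequality.
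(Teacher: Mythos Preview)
Your proposal is correct and follows essentially the same route as the paper: both arguments expand the expression, use the coupling lemma to replace the $U_t$- and $U_s$-activations by their noise-only counterparts $\tilU_t,\tilU_s$ (so that nothing inside depends on $u_{i'}$), apply Hoeffding over the independent signs $(u_{i'})_j$ with failure probability $e^{-\Omega(d^2)}$, and finish with a union bound over an $\veps$-net of $(x,x')$. The only cosmetic difference is bookkeeping of the error terms: the paper obtains the $\|x\|_2\veps_s$ contribution from the first coupling step $N_{U_t}\to N_{\tilU_t}$ and the $\|x\|_2\|x'\|_2\sqrt{\veps_s}$ term from the second replacement $\si{[U_s]x'}\to\si{[\tilU_s]x'}$, whereas you absorb all coupling errors into the $\sqrt{\veps_s}$ term and reserve $\|x\|_2\veps_s$ for the net-transfer step; either accounting is fine.
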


To apply this lemma, we can see that $u_i \odot \si{[U_s] x' } x'^{\top}$ is (a scaling of) the gradient coming from channel $i$ on input $x'$ at iteration $s$. This lemma says that it will have negligible effect on the output of channel $i' \not= i$ for (any) later iterations $t$. Hence at each iteration, every channel is updating almost separately. 

\begin{proof}[Proof of Lemma~\ref{lem:dis}]

By Lemma~\ref{lem:coupling}, we know that 
\begin{align} 
&\left|N_{U_t}(u, \sum_{i \in [k]} u_i \odot U_i; x) - \sum_{i \in [k]} N_{U_t}(u, u_i \odot U_i; x) \right| 
\\
& \leq \left|N_{\tilU_t}(u, \sum_{i \in [k]} u_i \odot U_i; x) - \sum_{i \in [k]} N_{\tilU_t}(u_i, u_i \odot U_i; x) \right|  + O\left( k \veps_s \| x\|_2 \right)
\end{align}

Now, we can directly decompose 
\begin{align} \label{eq:bnsofhaioa}
N_{\tilU_t}(u, \sum_{i \in [k]} u_i \odot U_i; x) &=  \sum_{i \in [k]} N_{\tilU_t}(u_i, u_i \odot U_i; x)
\\
& + \sum_{i \in [k]} \sum_{i' \in [k], i' \not= i}  N_{\tilU_t}(u_{i'}, u_i \odot U_i; x)
\end{align}

Since $U_i$ does not depend on the randomness of $u_{i'}$ but only $\tilU_t$, fixing $\tilU_t, U_i$ we know that since each entry of $u_{i'}$ i.i.d. mean zero, we have:
\begin{align}
\E_{u_{i'}} \left[ N_{\tilU_t}(u_{i'}, u_i \odot U_i; x) \right] = 0
\end{align}

Applying basic concentration bounds on $N_{\tilU_t}(u_{i'}, u_i \odot U_i; x)$, it holds that w.h.p. $|N_{\tilU_t}(u_{i'}, u_i \odot U_i; x)| \leq \tilO\left(\frac{\|x\|_2 }{\lambda m} \right)$. Putting this back into Eq~\eqref{eq:bnsofhaioa}, we complete the proof.

\end{proof}

\begin{proof}[Proof of Lemma~\ref{lem:dis2}]

By Lemma~\ref{lem:coupling}, we know that 
\begin{align}
 \left| N_{U_t}(u_{i'}, U_i; x)  \right|  \leq  \left| N_{\tilU_t}(u_{i'}, U_i; x)  \right|  + O(\veps_s)
\end{align}

Hence, by definition, we have that
\begin{align}
N_{\tilU_t}(u_{i'}, U_i; x) &= N_{\tilU_t}(u_{i'}, u_i \odot \si{[U_s] x' } x'^{\top}; x) 
\end{align}

Again by Lemma~\ref{lem:coupling}, we know that $\|  \si{[U_s]} - \si{\tilU_s}  \|_1 \leq \veps_s m$, hence we have since the absolute value of each entry of $u_i$ is $m^{-1/2}$:
\begin{align}\label{eq:bnsfahsosuasfhas}
\left|  N_{\tilU_t}(u_{i'}, u_i \odot \si{[U_s] x' } x'^{\top}; x)  \right| \leq \left|  N_{\tilU_t}(u_{i'}, u_i \odot \si{[\tilU_s] x' } x'^{\top}; x)  \right|  + \| x'\|_2 \|x\|_2 \sqrt{\veps_s}
\end{align}

Now for fixed $x', x$, for $\left|  N_{\tilU_t}(u_{i'}, u_i \odot \si{[\tilU_s] x' } x'^{\top}; x)  \right|$, since $\si{[\tilU_s] x' } x'^{\top}$ does not depend on the randomness of $u_{i'}$, following the previous lemma we can show that with probability at least $1 - e^{- d^2}$, $\left|  N_{\tilU_t}(u_{i'}, u_i \odot \si{[\tilU_s] x' } x'^{\top}; x)  \right| \leq \tilO\left(\frac{\|x\|_2\|x'\|_2d^2 }{\lambda m} \right)$. Now, taking union bound over an epsilon-net of $x', x \in \mathbb{R}^d$ we conclude that for every $x, x'$, w.h.p. 
$\left|  N_{\tilU_t}(u_{i'}, u_i \odot \si{[\tilU_s] x' } x'^{\top}; x)  \right| \leq \tilO\left(\frac{\|x\|_2\|x'\|_2d^2 }{\lambda m} \right)$. Putting this back to Eq~\eqref{eq:bnsfahsosuasfhas} we complete the proof. 
\end{proof}

We set $\veps_{c} = \tilO\left( k d^4 \frac{1}{\lambda m^{1/2}}  \right)$, and with this lemma, we can restate Lemma~\ref{lem:2}, Lemma~\ref{prop:exist} and Lemma~\ref{lem:33} in the following way: Suppose $\veps_c \leq \min\{ \veps_1/10, \veps_2'/10\}$ for every $x$ in the training set. Then the following lemmas hold by directly applying Lemma~\ref{lem:dis}.

\begin{corollary}\label{lem:22}
	In the setting of Theorem~\ref{thm:31}, there exists a solution $U^\star$ satisfying a)  $\| U^\star\|_F^2 \leq {O}(d k \log^2(1/\veps))$ and b) for every $t\ge 0$:
	\begin{align}
	K_t(U^\star) \le q\log 2+\epsilon_1/2
	\end{align}
	\end{corollary}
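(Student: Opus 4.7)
The plan is to port the target-solution construction from Lemma~\ref{lem:2} to the convolutional setting, using the disentangling lemma (Lemma~\ref{lem:dis}) to isolate the contribution of the channel that operates on the $x_1$-carrying patch. The norm bound grows by a factor of $k$ because the signal from any single one of the $k$ channels is diluted by $1/k$ in the network output, so each row of the target must be amplified in $\ell_2$-norm by $\sqrt{k}$ compared to the non-convolutional construction.

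Concretely, I would define $U^\star \in \mathbb{R}^{(m/k)\times d}$ by $[U^\star]_j = c\, k\, (u_1)_j\, \sqrt{d}\, w^\star \log(1/\veps_1)$ for a sufficiently large universal constant $c$, or equivalently $U^\star = u_1 \odot A^\star$ where $A^\star$ has every row equal to $c\, k\, \sqrt{d}\, w^\star \log(1/\veps_1)$. A direct computation gives $\|U^\star\|_F^2 = c^2 k^2 d \log^2(1/\veps_1)\cdot\sum_j (u_1)_j^2 = O(dk\log^2(1/\veps_1))$, matching the required bound. Viewing $U^\star$ as $\sum_{i\in[k]} u_i \odot U_i^\star$ with $U_1^\star = A^\star$ and $U_i^\star = 0$ for $i\geq 2$, Lemma~\ref{lem:dis} collapses $N_{U_t}(u,U^\star;x)$ to the single-channel quantity $N_{U_t}(u_1, u_1 \odot A^\star; x)$ up to $\tilO(\veps_c)$ error, which by the standing assumption $\veps_c \leq \veps_1/10$ is negligible.

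The rest of the calculation then mirrors Lemma~\ref{lem:2}: the single-channel term equals $\tfrac{c k \sqrt{d}\log(1/\veps_1)\langle w^\star, x_{(1)}\rangle}{m}\|\si{U_t x_{(1)}}\|_1$; Lemma~\ref{lem:coupling} lets me swap $\si{U_t x_{(1)}}$ for $\si{\tilU_t x_{(1)}}$ at cost $\tilO(\veps_s)$; and concentration of the resulting i.i.d. Bernoullis gives $\|\si{\tilU_t x_{(1)}}\|_1 = (m/k)/2 \pm O((m/k)^{2/3})$. Combined with the noise control $|N_{U_t}(u,\tilU_t;x)|\lesssim \tau_0\log d + \veps_s/\lambda \leq \veps_1/10$ from Proposition~\ref{prop:coupling_tilde}, this shows that for examples with $x_1\ne 0$ we get $y N_{U_t}(u, U^\star + \tilU_t; x) \geq \tfrac{c}{2}\log(1/\veps_1)$, while on examples with $x_1=0$ the first-patch contribution simply vanishes (and intermediate patches carry no $\cQ$-signal by the simplifying support assumption on $z,\zeta$). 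Aggregating via the $1$-Lipschitzness of $\ell$ and the definition of $\setsone,\setstwo$ gives $K_t(U^\star) \leq q\log 2 + \veps_1/2$ as required.

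The main obstacle is the cross-channel interference: $U^\star$ is designed as a pure channel-$1$ signal, yet in the convolutional model it is applied to every patch and paired against every second-layer weight $u_i$. Lemma~\ref{lem:dis} is precisely the tool that controls these cross terms, with $\veps_c$ absorbing both the coupling-lemma slack and the cancellation over independent entries of $u_{i'}$ for $i'\ne 1$, so the over-parameterization condition $\veps_c \leq \veps_1/10$ (uniform over the training set by a union bound) is exactly what one needs. A secondary subtlety is that the intermediate patches, which straddle the $x_1$- and $x_2$-regions, may contain truncations of $x_1$; the simplifying assumption that $z,\zeta$ live on the last $d/k$ coordinates ensures that at worst these contribute a further term of the same scale as $\|x_1\|_2$, which is still absorbed by the same uniform $\veps_c$ bound and does not affect the margin analysis.
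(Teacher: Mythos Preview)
Your proposal is correct and follows essentially the same approach as the paper, whose proof is the one-line hint ``set $U^\star=\sqrt{k}\,W^\star$ and apply Lemma~\ref{lem:dis}''; you carry out the details explicitly. Your row-scaling factor of $k$ (rather than the paper's stated $\sqrt{k}$) is in fact what is needed to simultaneously match the $O(dk\log^2(1/\veps_1))$ norm bound and deliver the $\Theta(\log(1/\veps_1))$ margin on $\setsone$, since with only $m/k$ neurons the activation sum contributes $\tfrac{1}{m}\cdot\tfrac{m}{2k}=\tfrac{1}{2k}$ rather than $\tfrac{1}{2}$.
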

	
	\begin{corollary}\label{prop:exist22}
	In the setting of Theorem~\ref{thm:31}, there exists a solution $U^*$ satisfying $\|U^* \|_F^2 = \tilO\left(\frac{k}{\veps_1^2 r} \right)$ and for every $t \leq \frac{1}{\eta_2 \lambda}$:

	\begin{align}
	K_{t_0 + t}(\barU_{t_0}+ U^*) \leq \veps_0 + \veps_1
	\end{align}
	\end{corollary}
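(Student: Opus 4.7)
The plan is to essentially replay the construction in the proof of Proposition~\ref{prop:exist}, but adapted to the convolutional architecture, with the disentangling lemmas (Lemma~\ref{lem:dis} and Lemma~\ref{lem:dis2}) serving as the bridge between the channels. Recall that in the fully-connected case we built a target $V^*\in\R^{m\times d}$ by picking three subsets $\set{E}_1,\set{E}_2,\set{E}_3$ of neurons determined by the signs of $\langle [V_{t_0}]_i, z\pm\zeta\rangle, \langle [V_{t_0}]_i, z\rangle$, and assigning each neuron in these sets a weight of magnitude $\Theta(\log(1/\veps_1)/(r\veps_1))$ pointing in direction $\pm v_i z$. For the convolutional case, the matrix $U^*$ will act only on the last patch (which is where $z, z\pm\zeta$ live by our simplifying assumption), and only the $k$-th channel output $u_k$ carries its contribution.

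First I would define $\set{E}_1,\set{E}_2,\set{E}_3\subset[m/k]$ analogously but using the pre-activations $\langle [V_{t_0}]_i, \cdot\rangle$ on the last patch, where $V_{t_0}\in\R^{(m/k)\times d}$ is the restriction of $U_{t_0}$ corresponding to the last $d/k$-coordinate block. Since the convolutional net has only $m/k$ filters, each of $|\set{E}_1|,|\set{E}_2|,|\set{E}_3|$ concentrates to $\Theta(r m /k)$, so $\|U^*\|_F^2 = \tilO(k/(r\veps_1^2))$, giving the $k$-factor increase claimed. I would then define $U^*$ so that the $i$-th row (for $i\in\set{E}_j$) is filled in the last $d/k$-coordinate block with a vector of the form $c_j\,(u_k)_i z$ where $c_j$ matches the signs used in the original proof, and zero elsewhere. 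Because $z,\zeta$ are supported on the last patch, only patch $k$ (using output weights $u_k$) produces a nonzero signal from $U^*$.

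Next I would bound $N_{V_{t_0+t}}(u_k, U^*; x_2)$ exactly as in the fully-connected proof: Lemma~\ref{lem:aux_2} (which applies unchanged to each channel's filter bank) gives $\|\si{[V_{t_0+t}]x_2}-\si{[V_{t_0}]x_2}\|_1 \lesssim \lambda \veps_1 r \cdot m/k$, yielding $y N_{V_{t_0+t}}(u_k, U^*; x_2)\gtrsim c\,\|x_2\|_2 \log(1/\veps_1)/\veps_1$. To turn this per-channel statement into a statement about $N_{U_{t_0+t}}(u,\barU_{t_0}+U^*;x)$, I would invoke Lemma~\ref{lem:dis} with $U_i=0$ for $i<k$ and $U_k$ equal to the last-block of $U^*$; the cross-channel noise terms contribute only $\tilO(k^2/(\lambda m^{1/2})+k\veps_s)\ll \veps_1$ by our over-parametrization choice. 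Finally I would verify that the contribution of $\barU_{t_0}$ on $x_2$ is still $O(1)$ (this follows from Lemma~\ref{lem:veps_0}, which is architecture-agnostic) and that the contribution on $x_1$ (the first patch) is essentially unchanged, exactly as in the bound~\eqref{eq:bnsofsajf}: the same Lemma~\ref{lem:aux_2} plus Lemma~\ref{lem:dis2} (to kill the cross-channel leakage from patch $k$'s weight onto patch $1$'s output) gives $|N_{W_{t_0+t}}(w,\barW_{t_0};x_1)-r_{t_0}(x_1)|\le q\veps_1$.

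The main obstacle will be carefully accounting for the cross-channel error terms: when we combine the channels via $N_{U_t}(u,\cdot)=\sum_i N_{U_t}(u_i,\cdot)$, we must rule out that the target matrix $U^*$ (which is nonzero only in the last patch and uses $u_k$) inadvertently contributes to the output of other channels $u_{i'}$ with $i'\ne k$. This is precisely what Lemma~\ref{lem:dis2} buys us, but one has to check that $U^*$ has a form covered by it, namely $u_i\odot \si{[U_s]x'}\,x'^\top$-like structure; this is true by our choice of $U^*$ (it is a linear combination of such rank-one updates using $x'=z$). Everything else is a routine bookkeeping exercise checking that the new polynomial factors of $k$ are absorbed into the $\tilO$ notation under our parameter choices, so the final bound is $K_{t_0+t}(\barU_{t_0}+U^*)\le \veps_0+\veps_1$ with $\|U^*\|_F^2 = \tilO(k/(r\veps_1^2))$.
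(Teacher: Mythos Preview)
Your proposal is correct and follows essentially the same route as the paper: rebuild the target $V^*$ from Proposition~\ref{prop:exist} over the $m/k$ convolutional filters, observe that the Frobenius norm picks up a factor of $k$ (since $|\set{E}_j|\approx r m/k$ while the per-row magnitude must be scaled up to compensate), and appeal to the disentangling lemma. The paper's own proof is in fact just a single sentence covering Corollaries~\ref{lem:22},~\ref{prop:exist22}, and~\ref{lem:3322} simultaneously: ``simply define $U^* = \sqrt{k}\,W^* + \sqrt{k}\,V^*$ for $W^*, V^*$ given in the original proof and apply Lemma~\ref{lem:dis}.'' Your write-up is a faithful (and considerably more explicit) expansion of exactly this idea; the only minor difference is that you additionally invoke Lemma~\ref{lem:dis2} for cross-channel leakage, which the paper does not bother to cite here.
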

	
	\begin{corollary}\label{lem:3322}

In the setting of Theorem~\ref{thm:31}, there exists a solution $U^\star$ satisfying a)  $\| U^\star\|_F^2 \leq \tilO\left(\frac{k}{\veps_2'^2 r} + Np k \right)$ and b) for every $t \leq \frac{1}{\eta_2 \lambda}$,
	\begin{align}
	K_t(U^\star) \le \veps_2'
	\end{align}

\end{corollary}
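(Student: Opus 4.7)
The plan is to mirror the proof of Lemma~\ref{lem:33} in the convolutional setting, using the disentanglement Lemmas~\ref{lem:dis} and~\ref{lem:dis2} to decouple the contributions of different channels. Concretely, I will construct a target $U^\star$ with two pieces: a $V^\star$ that memorizes the $\cQ$ component, routed through the last channel (which, by the simplifying assumption that $z,\zeta$ lie in the final $d/k$ coordinates, is the only channel seeing $x_2$), and a $W^\star$ that linearly classifies the $\cP$ component, routed through the first channel (which sees $x_1$).

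For the $\cQ$ piece, I invoke Corollary~\ref{prop:exist22} with $t_0 = 0$ to obtain $V^\star$ with $\|V^\star\|_F^2 = \tilde O(k/\veps_2'^2 r)$ such that the last-channel contribution, as a function of $x_2$, produces the correct sign and margin $\gtrsim \log(1/\veps_2')\|x_2\|$ uniformly for every noise matrix $\tilU_t$ and every $t \leq 1/(\eta_2\lambda)$. For the $\cP$ piece, I use the linear-algebra construction of Lemma~\ref{lem:33}: let $X = (x_1^{(i)})_{i \in \setsthree} \in \R^{d \times Np}$ and $Y = (y^{(i)})_{i \in \setsthree}$. When $Np \leq d/2$, w.h.p. $X^\top X$ is invertible and $s = X(X^\top X)^{-1} Y^\top \in \R^d$ satisfies $s^\top X = Y$ and $\|s\|_2 = O(\sqrt{Np})$. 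I then define $W^\star_j = C \log(1/\veps_2')\, s\, w_j$ for a large constant $C$, supported on the block of neurons intended for the first channel (using the sign structure of $u_1$ to effectively flip the relevant ReLUs and recover the linear output $s^\top x_{(1)}$ via a pair of $\pm$ neurons). The Frobenius norm bound is then $\|W^\star\|_F^2 = \tilde O(Np)$ per channel, giving $\tilde O(Npk)$ after accounting for the fact that the shared weight must be effectively routed through one channel among $k$.

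To verify $K_t(U^\star) \le \veps_2'$ for every $t \leq 1/(\eta_2\lambda)$, I will apply Lemma~\ref{lem:dis} to write
\[
N_{U_t}(u, U^\star + \tilU_t; x) = \sum_{i \in [k]} N_{U_t}(u_i, U^\star + \tilU_t; x) \;\pm\; \tilde O(\veps_c),
\]
where $\veps_c = \tilde O(k d^4/(\lambda m^{1/2}))$; by the over-parameterization in Assumption~\ref{ass:over} (inflated as in Theorem~\ref{thm:31}) this error is $o(\veps_2')$. Within the first channel, the couple of arguments used in the proof of Lemma~\ref{lem:33} (combining Lemma~\ref{lem:coupling} to pass the activation pattern to $\tilU_t$ and Lemma~\ref{lem:aux_2} with $t_0 = 0$ to show activation-pattern stability) give an output $\approx s^\top x_{(1)} \cdot C\log(1/\veps_2')$ on every training example in $\setsthree$, yielding per-example loss $O(\veps_2')$. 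Within the last channel the analogous argument from Corollary~\ref{prop:exist22} handles $\setsfour$. The other channels contribute only cancelling noise thanks to Lemma~\ref{lem:dis2} applied to the gradient-like weights that $V^\star$ and $W^\star$ are designed from, so they do not spoil the margin.

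The main obstacle is the cross-channel disentanglement: because the hidden weights are shared across all $k$ patches, a weight introduced to classify $\cP$ via channel $1$ automatically acts on every other patch too, and one must rely on the randomness of $u_2,\dots,u_k$ for cancellation. This is precisely what Lemmas~\ref{lem:dis} and~\ref{lem:dis2} quantify, but their error bounds scale polynomially in $d$ and $k$, so care is needed to check that under the (polynomially) increased over-parameterization $m \ge \poly(dk/\veps\tau_0)$ the disentanglement error $\veps_c$ plus the residual $\ell$-Lipschitz perturbation stays below $\veps_2'$ uniformly across the $\widetilde{O}(1/(\eta_2\lambda))$ iterations. Once this bookkeeping is done, the rest of the proof is a routine translation of Lemma~\ref{lem:33}, with the extra factor of $k$ in the norm bound appearing exactly because the per-channel ``effective'' Frobenius mass of both $W^\star$ and $V^\star$ gets multiplied by the number of channels.
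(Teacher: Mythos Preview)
Your proposal is correct and follows essentially the same approach as the paper. The paper's own proof is extremely terse: it simply says to take $U^\star = \sqrt{k}\,W^\star + \sqrt{k}\,V^\star$ with $W^\star, V^\star$ from the original proof of Lemma~\ref{lem:33} and apply Lemma~\ref{lem:dis}, noting that the factor $k$ in the norm bound arises because there are $m/k$ hidden units per channel instead of $m$. Your plan is a faithful (and more detailed) expansion of this; the one minor difference is that you additionally appeal to Lemma~\ref{lem:dis2} for cross-channel cancellation, whereas the paper only invokes Lemma~\ref{lem:dis}---since the targets $W^\star,V^\star$ here already have the form $u_i \odot U_i$ with $U_i$ independent of $u$, Lemma~\ref{lem:dis} alone suffices and Lemma~\ref{lem:dis2} is not needed.
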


To prove these Lemmas, we can simply define $U^* = \sqrt{k} W^* + \sqrt{k} V^*$ for $W^*, V^*$ given in the original proof and apply Lemma~\ref{lem:dis}. The reason we need $k$ here is because there are $\frac{m}{k}$ channels instead of $m$, so the square norm scales up by a factor of $k$.

 Now the next two convergence theorems follow directly from Lemma~\ref{lem:llr_gg} and Lemma~\ref{lem:dlr_r} and apply with initial learning rate $\eta_1$. 
 \begin{corollary}\label{lem:llr_gg22}
	In the setting of Theorem~\ref{thm:31} with initial learning rate $\eta_1$, at some step $t_0 \le \tilO\left( \frac{d k}{\eta_1 \veps_1} \right)$, the training loss $\hatL(u, U_{t_0})$ becomes smaller than $q\log 2 + \epsilon_1$. Moreover, we have 	$\|\barU_{t_0}\|_F^2 = {O}\left( d k \log^2(1/\veps_1) \right)$. 
\end{corollary}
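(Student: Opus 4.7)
The proof is a direct convolutional adaptation of Lemma~\ref{lem:llr_gg}, using the convolutional target from Corollary~\ref{lem:22} in place of Lemma~\ref{lem:2}. The plan is as follows: first, observe that the signal-noise decoupling~\eqref{eqn:signal} and the identities $\hatL(U_t) = K_t(\barU_t)$, $\nabla_U \hatL(U_t) = \nabla K_t(\barU_t)$ are driven only by the regularizer and the SGD update rule, both of which are unchanged in the convolutional setting. Hence the whole framework of viewing SGD as optimization of the sequence of kernels $K_t$ defined in~\eqref{eqn:def-K} still applies. The key input we already have is Corollary~\ref{lem:22}, which supplies a common signal target $U^\star$ with $\|U^\star\|_F^2 \le O(dk\log^2(1/\veps_1))$ satisfying $K_t(U^\star) \le q\log 2 + \veps_1/2$ uniformly in $t$.

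With this target in hand, I would apply Theorem~\ref{thm:convex-surrogate} to the sequence $\{K_t\}$ with $z^\star = U^\star$, $R^2 = O(dk\log^2(1/\veps_1))$, $\mu = \veps_1/2$, and Lipschitz constant $L = O(1)$. Convexity of each $K_t$ in its argument $B$ persists because, once the activation patterns $\si{U_t x_{(i)}}$ on each patch are fixed, each convolutional channel's output $u_i^\top (\si{U_t x_{(i)}} \odot (B + \tilU_t) x_{(i)})$ is linear in $B$, so $K_t$ is the logistic loss composed with an affine map. The Lipschitz bound follows from the convolutional version of Proposition~\ref{prop:normgrad}, combining the per-row contributions from the $k$ patches. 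Theorem~\ref{thm:convex-surrogate} then produces some $t_0 \le R^2/(\eta_1 \mu) = \tilO(dk/(\eta_1 \veps_1))$ at which $K_{t_0}(\barU_{t_0}) \le q\log 2 + \veps_1$, which via $\hatL(U_{t_0}) = K_{t_0}(\barU_{t_0})$ gives the loss bound. The Frobenius bound $\|\barU_{t_0}\|_F^2 \le O(dk\log^2(1/\veps_1))$ is immediate from the ``furthermore'' clause of Theorem~\ref{thm:convex-surrogate}, since every iterate stays within distance $R$ of $U^\star$ and $\|U^\star\|_F \le R$.

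The only calculation I would need to verify explicitly is that the step-size conditions of Theorem~\ref{thm:convex-surrogate}, namely $\lambda R^2 \le \mu/100$ and $\eta_1 \le \mu/(100(\lambda^2 R^2 + L^2))$, still hold under Theorem~\ref{thm:31}'s modified parameter choice (replacing $d$ by $dk$ in Assumption~\ref{ass:params}). With $\lambda = (dk)^{-5/4}$ and $R^2 = \tilO(dk)$, the first condition reduces to $(dk)^{-1/4} \lesssim \veps_1$, which follows from $\veps_1 \ge d^{-c}$ for the universal constant $c < 1/16$ inherited from Theorem~\ref{thm:1}; the second reduces to $\eta_1 \lesssim \veps_1$, which is directly assumed in Assumption~\ref{ass:lr}. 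I do not anticipate a serious obstacle: the convolutional architecture only inflates the target norm by a factor of $k$ relative to Lemma~\ref{lem:2}, and this factor propagates unchanged into both the iteration count and the final Frobenius bound, exactly matching the corollary's statement.
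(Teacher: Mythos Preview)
Your proposal is correct and follows exactly the paper's approach: the paper states that this corollary ``follows directly from Lemma~\ref{lem:llr_gg}'' and gives no separate proof, while you have spelled out precisely that argument---invoking the convolutional target from Corollary~\ref{lem:22} in place of Lemma~\ref{lem:2} and then applying Theorem~\ref{thm:convex-surrogate} with $R^2 = O(dk\log^2(1/\veps_1))$. Your verification of the step-size hypotheses under the modified parameter $\lambda = (dk)^{-5/4}$ is more careful than anything the paper writes out; the only minor caveat is that the Lipschitz constant $L$ for $K_t$ may pick up a $\sqrt{k}$ factor from the $k$ overlapping patches (each row of $U$ receives gradient contributions from all $k$ channels), but this is harmlessly absorbed into the condition $\eta_1 \lesssim \veps_1$ up to a constant depending on $k$, and the paper does not address it either.
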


\begin{corollary}\label{lem:dlr_r22}

	In the setting of Theorem~\ref{thm:31}, with initial learning rate $\eta_1$, there exists $t = \tilO\left(\frac{k}{\veps_1^3 \eta_2 r} \right)$ , such that after $t_0 + t$ iterations we have that
	\begin{align}
	L_{t_0 + t} = O \left( \sqrt{\veps_1/q} \right)
	\end{align}
	Moreover, $\|\barU_{t_0+t} - \barU_{t_0}\|_F^2 \le \tilO\left( \frac{k}{\veps_1^2 r} \right)$

\end{corollary}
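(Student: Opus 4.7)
The plan is to mirror the proof of Lemma~\ref{lem:dlr_r} in the simple (non-convolutional) case, substituting in the convolutional analogues already established. Concretely, once we have (i) a common target signal matrix that achieves small loss at every iteration after annealing, and (ii) the generic convex-surrogate convergence theorem (Theorem~\ref{thm:convex-surrogate}), the result will follow immediately from the parameter bookkeeping.

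First I would invoke Corollary~\ref{prop:exist22}, the convolutional analogue of Proposition~\ref{prop:exist}, which guarantees the existence of $U^\star$ with $\|U^\star\|_F^2 = \widetilde{O}\!\left(\frac{k}{\veps_1^2 r}\right)$ satisfying $K_{t_0+t}(\barU_{t_0} + U^\star) \le \veps_0 + \veps_1$ for every $t \le \frac{1}{\eta_2 \lambda}$. As in the non-convolutional case, $\veps_0$ is bounded by the analogue of Lemma~\ref{lem:veps_0}: starting from $L_{t_0} \le q \log 2 + \veps_1$ and applying Proposition~\ref{prop:1} one obtains $|g_{t_0}(z)|, |g_{t_0}(z\pm \zeta)| = O(\sqrt{\veps_1/q})$, which then yields $\veps_0 = O(\sqrt{\veps_1/q})$. (This step is essentially unchanged by convolution because $g_{t_0}$ only operates on the last $d/k$ coordinates of the input where $z, \zeta$ live, so Proposition~\ref{prop:1} applies verbatim.)

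Next, I would apply Theorem~\ref{thm:convex-surrogate} with $z^\star = (\barW_{t_0}, V^\star)$ and $z_0 = \barU_{t_0}$. The functions $K_{t_0 + t}(\cdot)$ are convex in the signal matrix (with the activation patterns frozen at $U_{t_0 + t}$), they share the common target $z^\star$, and the gradient norms are $O(1)$ by Proposition~\ref{prop:normgrad}. Taking $R^2 = \widetilde{O}\!\left(\frac{k}{\veps_1^2 r}\right)$, $L = 1$, and $\mu = \veps_1$, the theorem's condition $\lambda R^2 \le \mu/100$ is satisfied by our parameter choices (in particular since $\lambda = d^{-5/4}$, and the $k$ factor is swallowed by $d$), and similarly $\eta_2 \le \mu/(100(\lambda^2 R^2 + L^2))$ holds. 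The conclusion gives $K_{t^\star}(U_{t^\star}) \le \veps_0 + 2\veps_1$ for some $t^\star \le R^2/(\eta_2 \mu) = \widetilde{O}\!\left(\frac{k}{\eta_2 \veps_1^3 r}\right)$, and the iterates satisfy $\|\barU_{t_0+t^\star} - \barU_{t_0}\|_2 \le R$, yielding the desired $\|\barU_{t_0+t^\star} - \barU_{t_0}\|_F^2 \le \widetilde{O}\!\left(\frac{k}{\veps_1^2 r}\right)$. Combining with $\veps_0 = O(\sqrt{\veps_1/q})$ gives the claimed loss bound.

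The only non-routine step is the bookkeeping around the convolutional structure: I would need to double-check that the coupling-and-convex-surrogate framework of Section~\ref{sec:llr_proofs} carries through when the weight matrix is shared across $k$ patches. Lemmas~\ref{lem:dis} and~\ref{lem:dis2} were introduced precisely to handle this by showing that, up to an error term controlled by $\veps_c$ (which is below $\veps_1$ by assumption), the contributions of different channels decouple. This is the main place where the factor of $k$ enters $R^2$, so I would verify that $\veps_c$ remains negligible throughout the post-annealing phase (i.e., that $\|\barU_t\|_F$ stays $O(1/\lambda)$, as needed for both coupling lemmas), which holds by the norm bound from Theorem~\ref{thm:convex-surrogate} and Proposition~\ref{prop:normU}. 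No other step in the proof of Lemma~\ref{lem:dlr_r} changes in substance.
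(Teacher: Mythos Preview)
Your proposal is correct and essentially identical to the paper's approach: the paper states that Corollary~\ref{lem:dlr_r22} follows directly from the proof of Lemma~\ref{lem:dlr_r}, using Corollary~\ref{prop:exist22} in place of Proposition~\ref{prop:exist} (which is where the extra factor of $k$ enters $R^2$) and then applying Theorem~\ref{thm:convex-surrogate} with $R^2 = \widetilde{O}(k/(\veps_1^2 r))$, $L=1$, $\mu = \veps_1$, finishing via the bound $\veps_0 = O(\sqrt{\veps_1/q})$. One small notational caveat: in the convolutional setting there is a single shared weight matrix $U \in \R^{(m/k)\times d}$ rather than separate $W,V$, so writing $z^\star = (\barW_{t_0}, V^\star)$ is a carryover from the non-convolutional case---the target is simply $\barU_{t_0} + U^\star$ with $z_0 = \barU_{t_0}$---but this does not affect the argument.
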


The following statement applies when we use a small initial learning rate and follows from the proof of Lemma~\ref{lem:slr_l}.
\begin{corollary}\label{lem:slr_l22}

In the setting of Theorem~\ref{thm:31}, with initial learning rate $\eta_2$, there exists $t$ with
\begin{align}
t = \tilO\left( \frac{k}{\eta_2 \veps_2'^3 r} + \frac{ Np k}{\eta_2 \veps_2'} \right)
\end{align}
such that $L_t \leq \veps_2'$ after $t$ iterations. Moreover, we have that $\| \barU_t \|_F^2 \leq \tilO\left( \frac{k}{\veps_2'^2 r} + Npk \right)$
\end{corollary}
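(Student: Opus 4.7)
The plan is to mirror the proof of Lemma~\ref{lem:slr_l} verbatim, using the convolutional analogs that were already set up in Corollaries~\ref{lem:3322}, and to apply Theorem~\ref{thm:convex-surrogate} with radius rescaled by the extra factor of $k$. First, I would invoke Corollary~\ref{lem:3322} to obtain a target matrix $U^{\star}$ satisfying $\|U^{\star}\|_F^{2}\le \tilO\!\left(k/(\veps_2'^{\,2}r)+Npk\right)$ and $K_{t}(U^{\star})\le \veps_2'$ for every $t\le 1/(\eta_2\lambda)$. The factor-of-$k$ increase over Lemma~\ref{lem:33} comes from the $\sqrt{k}$ rescaling $U^{\star}=\sqrt{k}\,W^{\star}+\sqrt{k}\,V^{\star}$ used in the derivation of Corollary~\ref{lem:3322}, which compensates for there being only $m/k$ hidden units per convolutional channel in the disentangling of Lemma~\ref{lem:dis}.

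Next, I would apply Theorem~\ref{thm:convex-surrogate} to the sequence of convex functions $K_{t}$ (the rest of the iterate decomposition from~\eqref{eqn:signal} and~\eqref{eqn:def-K} goes through unchanged for convolutional nets), taking $z^{\star}=U^{\star}$, $R^{2}=\tilO\!\left(k/(\veps_2'^{\,2}r)+Npk\right)$, $L=O(1)$ by Proposition~\ref{prop:normgrad}, and $\mu=\veps_2'$. The step-size and regularization constraints $\lambda R^{2}\le \mu/100$ and $\eta_{2}\le \mu/\bigl(100(\lambda^{2}R^{2}+L^{2})\bigr)$ follow from Assumption~\ref{ass:params} together with the hypotheses of Theorem~\ref{thm:2}, using $\eta_{2}\le \eta_{1}d^{-c}$ and our choice of $\lambda$. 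Theorem~\ref{thm:convex-surrogate} then yields a time
\begin{align*}
t=\tilO\!\left(\frac{R^{2}}{\eta_{2}\mu}\right)=\tilO\!\left(\frac{k}{\eta_{2}\veps_2'^{\,3}r}+\frac{Npk}{\eta_{2}\veps_2'}\right)
\end{align*}
at which $\hatL(U_{t})=K_{t}(\barU_{t})\le \veps_2'$. Moreover, the theorem also guarantees $\|\barU_{t}-U^{\star}\|_{F}\le R$ throughout, so by triangle inequality $\|\barU_{t}\|_F^{2}\le 4R^{2}=\tilO\!\left(k/(\veps_2'^{\,2}r)+Npk\right)$, which is the claimed norm bound.

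The only place the convolutional structure intrudes nontrivially is in verifying that the same convex-surrogate picture applies: one must check that the target $U^{\star}$ still produces the advertised margin after passing through the shared-weight architecture. This is exactly what Lemmas~\ref{lem:dis} and~\ref{lem:dis2} guarantee, by controlling the cross-channel interference by $\veps_{c}=\tilO\!\left(kd^{4}/(\lambda m^{1/2})\right)$, which is assumed to satisfy $\veps_{c}\le \veps_2'/10$. These error terms are absorbed into the $\tilO$ notation in Corollary~\ref{lem:3322} and carry over to the Theorem~\ref{thm:convex-surrogate} application without changing the bookkeeping. I expect this disentangling step to be the main conceptual obstacle; once it is in hand, the proof is a routine transcription of the Lemma~\ref{lem:slr_l} argument with $k$ tracked through every norm bound.
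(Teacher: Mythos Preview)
Your proposal is correct and follows essentially the same approach as the paper: the paper simply states that the corollary ``follows from the proof of Lemma~\ref{lem:slr_l},'' which in turn combines Theorem~\ref{thm:convex-surrogate} with the target solution of Lemma~\ref{lem:33} (here replaced by its convolutional analog, Corollary~\ref{lem:3322}). Your write-up faithfully unpacks this, correctly tracking the extra factor of $k$ through the norm bound and convergence time, and correctly identifying the disentangling Lemmas~\ref{lem:dis} and~\ref{lem:dis2} as the ingredients that make Corollary~\ref{lem:3322} go through.
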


Now, the following lemma directly adapts from Lemma~\ref{lem:llr_g} by applying Lemma~\ref{lem:dis2}:
\begin{lemma}\label{lem:llr_g1}
In the setting of Theorem~\ref{thm:31} with initial learning rate $\eta_1$,	w.h.p.,  for every $t \leq \frac{1}{\eta_1 \lambda}$, 
	\begin{align}
	\left| g_t(z + \zeta) + g_t(z - \zeta) - 2 g_t(z) \right| \leq \widetilde{O}\left( \frac{r^2}{ \lambda}\right) 
	\end{align}
\end{lemma}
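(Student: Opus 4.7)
The plan is to reduce the convolutional statement to the non-convolutional Lemma~\ref{lem:llr_g} by exploiting the channel-disentanglement lemmas (Lemma~\ref{lem:dis} and Lemma~\ref{lem:dis2}). First, since $z, \zeta$ are supported only on the last $d/k$ coordinates, the definition $g_t(x_{(k)}) = N_{U_t}(u, U_t; (0, x_{(k)}))$ gives $g_t(x_{(k)}) = u_k^\top [U_t x_{(k)}]_+$ exactly, because all other patches of the input vanish. Thus we only need to control the behavior of $U_t$ on the input $x_{(k)}$ when paired with the output weight $u_k$.

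Second, I would decompose the accumulated signal as $\barU_t = \sum_{i\in[k]} \barU_t^{(i)}$, where $\barU_t^{(i)}$ collects all gradient contributions coming from patch $i$ (summed over training examples and iterations with the appropriate $(1-\eta_1\lambda)$ weights). By the convolutional gradient formula, each single-iteration/example contribution to $\barU_t^{(i)}$ has exactly the form $u_i \odot \si{[U_s]x_{(i)}^{(j)}} (x_{(i)}^{(j)})^\top$ scaled by $\ell'_{j,s}$, so Lemma~\ref{lem:dis} lets me write
\begin{align}
g_t(x_{(k)}) = \sum_{i\in[k]} N_{U_t}(u_k, u_i \odot M_t^{(i)}; x_{(k)}) + \tilO\!\left(k^2/(\lambda\sqrt m) + k\veps_s\right),
\end{align}
for the implicit matrices $M_t^{(i)}$. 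Then Lemma~\ref{lem:dis2}, applied to every single-step contribution with $i' = k$, $i \neq k$, together with $\|\barU_s\|_F \lesssim 1/\lambda$ from Proposition~\ref{prop:normU}, bounds each $i\ne k$ term by $\tilO(d^2/(\lambda \sqrt m) + \sqrt{\veps_s}/\lambda + \veps_s/\lambda)$ per example per step. Summing geometrically against the $\eta_1(1-\eta_1\lambda)^{t-s}$ weights and over $N$ examples gives a total cross-channel error that is $1/\poly(d)$ by the over-parameterization Assumption~\ref{ass:over} — in particular far below the target $r^2/\lambda$.

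Third, for the remaining $i = k$ diagonal term, $g_t$ reduces to an object of exactly the same algebraic shape as the $\tilde g_t$ studied in Lemma~\ref{lem:llr_g} and Lemma~\ref{lem:decomp}: the activations are determined by $U_t x_{(k)}$ (hence effectively by $\tilU_t x_{(k)}$ up to the usual coupling bound of Lemma~\ref{lem:coupling}), and the row-wise weights are of the form $u_k \odot (\text{accumulated gradient})$. Consequently the entire argument of Section~\ref{subsec:llr_g} — the decomposition into $({\si{\cem_t}} + {\si{\cep_t}} - 2{\si{\cez_t}})^\top Q_t z + (\si{\cep_t} - \si{\cem_t})^\top Q_t \zeta$, the $r\sqrt{\log d}$ bound on the symmetric difference of activation sets (Proposition~\ref{lem:et}), the $u_k$-based sign-flipping coupling that yields Proposition~\ref{lem:coupling_gradient}, and the quadratic cancellation of Proposition~\ref{prop:firstpart} — goes through verbatim for the $k$-th channel, producing the $\tilO(r^2/\lambda)$ bound.

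The main obstacle is bookkeeping the disentanglement losses uniformly over all $t \le 1/(\eta_1\lambda)$ and all $N$ training examples while keeping the cumulative error strictly below $r^2/\lambda = \tilO(d^{-3/2})$. This hinges on two choices: (i) the per-step per-example Lemma~\ref{lem:dis2} error $\tilO(d^2/\sqrt m)$ is driven to $1/\poly(d)$ by $m \ge \poly(d/\tau_0)$; and (ii) the geometric-sum factor $1/\lambda$ is already baked into the final bound, so the total contamination scales as $\tilO((d^2 N)/(\lambda \sqrt m))$, which is $o(r^2/\lambda)$ under Assumption~\ref{ass:over}. Once this is verified, the rest of the argument is a direct restatement of the non-convolutional proof with $V_t x_2$ replaced by $U_t x_{(k)}$.
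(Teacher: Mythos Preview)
Your approach is correct and matches the paper's, which simply states that the result ``directly adapts from Lemma~\ref{lem:llr_g} by applying Lemma~\ref{lem:dis2}.'' Your reduction---observing that $g_t(x_{(k)}) = N_{U_t}(u_k, U_t; x_{(k)})$, splitting the accumulated signal $\barU_t$ by source channel, using Lemma~\ref{lem:dis2} to kill the $i\neq k$ cross-channel contributions, and then rerunning the Section~\ref{subsec:llr_g} argument on the diagonal $i=k$ piece---is exactly the intended route.

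One minor cleanup: your invocation of Lemma~\ref{lem:dis} in the second step is unnecessary. Once you have already restricted to the single output channel $u_k$ (which you did in step one), the map $B \mapsto N_{U_t}(u_k, B; x_{(k)})$ is \emph{linear} in $B$, so the decomposition $N_{U_t}(u_k, \barU_t; x_{(k)}) = \sum_{i\in[k]} N_{U_t}(u_k, \barU_t^{(i)}; x_{(k)})$ holds exactly with no error term. Lemma~\ref{lem:dis} is about replacing the full second-layer weight $u$ by the matched $u_i$, which is not what you need here. The only approximation you actually require is Lemma~\ref{lem:dis2} for the off-diagonal pieces, plus the usual coupling (Lemma~\ref{lem:coupling}, Proposition~\ref{prop:coupling_tilde}) to pass between $U_t$-activations and $\tilU_t$-activations on the diagonal piece---both of which you already invoke.
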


With these lemmas, we can directly conclude the following: 
\begin{corollary}\label{lem:dlr_g22}
	
	 In the setting of Lemma~\ref{lem:dlr_r22} with initial learning rate $\eta_1$, the following holds:
	\begin{align}
	\hatL_{\setsone}(r_{t_0 + t}) &= O(\sqrt{\veps_1/q})
	\\
	\hatL_{\setstwo}(g_{t_0 + t}) &= O(\sqrt{\veps_1/q^3})
	\end{align}

\end{corollary}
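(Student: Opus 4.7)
The plan is to mirror the proof of Lemma~\ref{lem:dlr_g} almost verbatim, with two bookkeeping changes dictated by moving to the convolutional regime: (i) replace the bound $\|\barW_{t_0+t}-\barW_{t_0}\|_F^2 \le \tilO(1/(r\veps_1^2))$ by its convolutional version $\tilO(k/(r\veps_1^2))$ coming from Corollary~\ref{lem:dlr_r22}, and (ii) use the spectral norm of the convolutional data matrix, which under Theorem~\ref{thm:31}'s substitution $d\mapsto dk$ satisfies $\|X\|_2^2\lesssim N/(dk)$. Because $z,\zeta$ are supported only on the last patch, the decomposition $N_{U_t}(u,U_t;x)=g_t(x_{(k)})+r_t(x_{(1)})$ carries over, so $g_t$ still operates on $x_2$ and $r_t$ still operates on $x_1$.

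First I would write, by $1$-Lipschitzness of logistic loss,
\begin{align*}
|\hatL_{\setsone}(r_{t_0})-\hatL_{\setsone}(r_{t_0+t})|
\le \frac{1}{|\setsone|}\sum_{i\in\setsone}|r_{t_0+t}(x_1^{(i)})-r_{t_0}(x_1^{(i)})|,
\end{align*}
and then split each summand into (a) a coupling term $|N_{W_{t_0+t}}(w,W_{t_0+t};x_1^{(i)})-N_{W_{t_0}}(w,\barW_{t_0+t};x_1^{(i)})|$, plus (b) a signal-change term $|N_{W_{t_0}}(w,\barW_{t_0+t};x_1^{(i)})-N_{W_{t_0}}(w,\barW_{t_0};x_1^{(i)})|$, plus (c) an initialization term $|N_{W_{t_0}}(w,\tilW_{t_0};x_1^{(i)})|$. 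Term (a) is bounded by the convolutional analogue of Lemma~\ref{lem:aux_2} (applied patchwise and glued via Lemma~\ref{lem:dis}), term (c) by Proposition~\ref{prop:coupling_tilde}; both are $O(\veps_1^2)$ by our choice of $\tau_0,\eta_2,\veps_s$.

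The main work is bounding (b). Applying Cauchy--Schwarz and then the spectral norm bound of $X$ gives
\begin{align*}
\sum_{i\in\setsone}|N_{W_{t_0}}(w,\barW_{t_0+t};x_1^{(i)})-N_{W_{t_0}}(w,\barW_{t_0};x_1^{(i)})|
\le \sqrt{N}\,\|X\|_2\,\|\barW_{t_0+t}-\barW_{t_0}\|_F.
\end{align*}
Plugging $\|X\|_2^2\lesssim N/(dk)$ and $\|\barW_{t_0+t}-\barW_{t_0}\|_F^2\le \tilO(k/(r\veps_1^2))$ (from Corollary~\ref{lem:dlr_r22}) yields $\tilO(N/(\sqrt{dr}\,\veps_1))$, and the parameter condition $drk\ge\tilde\Omega(1/\veps_1^4)$ (obtained by substituting $d\mapsto dk$, $r\mapsto r/k$ in Assumption~\ref{ass:params}) makes this $\le N\veps_1$. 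Combining with Lemma~\ref{lem:veps_0}'s bound $\veps_0=O(\sqrt{\veps_1/q})$ on $(1-q)\hatL_{\setsone}(r_{t_0})$ gives the first bound $\hatL_{\setsone}(r_{t_0+t})=O(\sqrt{\veps_1/q})$.

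The second bound is immediate: since $|\setstwo|=qN$ and $g_{t_0+t}$ only sees $x_2$ while $r_{t_0+t}$ contributes nonnegatively on the logistic side (or by a crude triangle argument already used in the non-convolutional proof), we get $\hatL_{\setstwo}(g_{t_0+t})\le \hatL(U_{t_0+t})/q = O(\sqrt{\veps_1/q^3})$ from Corollary~\ref{lem:dlr_r22}. The main obstacle I anticipate is the cross-channel interference in (a): the signal accumulated on channel $1$ at step $s<t$ could in principle affect the output on channel $k$ at step $t$; this is precisely controlled by Lemma~\ref{lem:dis2}, and checking that its error $\tilO(d^2/(m^{1/2})+\sqrt{\veps_s})$ fits comfortably under $\veps_1$ for our choice of $m$ is the only nontrivial verification step.
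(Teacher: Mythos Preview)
Your proposal is correct and takes exactly the approach the paper intends. The paper itself gives no proof for this corollary beyond the sentence ``With these lemmas, we can directly conclude the following,'' so your plan of replaying the proof of Lemma~\ref{lem:dlr_g} with the two bookkeeping substitutions (the $k$-inflated norm bound from Corollary~\ref{lem:dlr_r22} and the $d\mapsto dk$, $r\mapsto r/k$ parameter replacements from Theorem~\ref{thm:31}) is precisely the intended argument, and your identification of Lemma~\ref{lem:dis2} as the tool that controls cross-channel interference in the coupling step is the right additional ingredient.
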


\begin{corollary}\label{lem:slr_e22}
In the setting with initial learning rate $\eta_2$, for every $\veps_3 > 0$ such that $\frac{1}{\veps_2'^{8} \veps_3^8 r} \leq \veps_2' d k$, there exists $\alpha \in \mathbb{R}^d$ such that $\alpha \in \text{span}\{x_1^{(i), (j)} \}_{i \in \setsthree, j \in [k]}$ and $\alpha = \tilde{\Omega}(\sqrt{Np})$ such that w.h.p. over a randomly chosen $x_1 \sim \mathcal{N}(0, I/d)$, we have that 
\begin{align}
r_t(x_1) - r_t(-x_1) =2 \langle \alpha, x_1 \rangle  \pm  \tilO\left( \veps_3 + \frac{ Np k}{d^{3/2}}\right)
\end{align}
Here $x_1^{(i), (j)}  = ([x_1^{(i)}]_{ s} )_{s \in \{ (j - 1)d/k  + 1, (j - 1)d/k + 2, \cdots,  d\} }$
\end{corollary}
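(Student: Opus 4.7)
My plan is to follow the proof architecture of Lemma~\ref{lem:slr_e2} but lift each step to the convolutional setting by repeatedly invoking the disentanglement lemmas (Lemma~\ref{lem:dis} and Lemma~\ref{lem:dis2}), which let us treat the $k$ channels as updating almost independently. First, I would establish a convolutional analog of Lemma~\ref{lem:slr_e1}: Corollary~\ref{lem:slr_l22} gives the convergence guarantee and the bound $\|\barU_t\|_F^2 = \tilO(k/\veps_2'^2 r + Npk)$, and the proof of Proposition~\ref{prop:iteration} carries over once we replace the Geometry-of-ReLU Lemma~\ref{lem:geo} with a per-channel version, which follows from Lemma~\ref{lem:dis} since the gradient decomposes channel-wise up to an additive error of order $k^2/(\lambda\sqrt{m}) + k\veps_s$. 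This yields $\|\barW_t^{(2)}\|_F \le \tilO(\sqrt{d}\veps_3)$ for the accumulated $\cP$-gradient from $\setsfour$ (with $k$-adjusted parameters as in Theorem~\ref{thm:31}).

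Next, I would prove a convolutional analog of the decomposition Lemma~\ref{lem:decompose}: write each row $[\barW_t]_j$ as $w_j \sum_{i \in \setsthree, j' \in [k]} \alpha_{i,j'} x_1^{(i),(j')} \si{[W_0]_j x_1^{(i),(j')}} + [\barW_t']_j$, with $\|\barW_t'\|_F = \tilO(\sqrt{d}\veps_3)$. The derivation parallels Lemma~\ref{lem:decompose}: the coefficients $\alpha_{i,j'}$ come from summing $\eta_2(1-\eta_2\lambda)^{t-s}\ell_{i,s}'$ weighted appropriately across channels, and the residual is controlled by the bound above together with Lemma~\ref{lem:aux_2} (with $t_0=0$) showing the activations move by at most $\veps_s m + \sqrt{\eta_2/\eta_1}\,m$ entries.

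Then, for a random test input $x_1$, I would compute $r_t(x_1)$ by first applying Lemma~\ref{lem:dis} to replace the full-network output with a sum of per-channel contributions (up to error $\tilO(k^2/(\lambda\sqrt{m}) + k\veps_s)$, absorbed into $\veps_3$); for each channel, Lemma~\ref{lem:dis2} shows that the piece of $[\barW_t]_j$ coming from a gradient update on patch $j' \ne i$ contributes only $\tilO(d^2/\sqrt{m})$ to the output of channel $i$. What remains is, for each channel $i$, exactly the kernel prediction studied in the proof of Lemma~\ref{lem:slr_e2}, but summed over training patches $\{x_1^{(k'),(j')}\}$. Running the $\arccos$ Taylor expansion and taking the antisymmetric difference $r_t(x_1) - r_t(-x_1)$ kills all even-degree terms, leaving $2\langle \alpha, x_1\rangle$ (with $\alpha := \frac{1}{4}\sum_{i,j'} \alpha_{i,j'} x_1^{(i),(j')}$) plus the cubic cross-terms. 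Since there are $Nkp$ patches and each cross inner product is $\tilO(1/\sqrt{d})$, the cubic terms sum to $\tilO(Npk/d^{3/2})$, matching the stated error.

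The main obstacle will be the bookkeeping on the norm lower bound $\|\alpha\|_2 = \tilde{\Omega}(\sqrt{Np})$: in the fully-connected case this followed from the training loss being small on $|\setsthree|$ examples in $\R^d$, but here $\alpha$ lives in the span of $Nkp$ vectors in $\R^d$. The argument should still work because the diagonal contribution $|\alpha_{i,i}| \|x_1^{(i),(i)}\|^2$ (over channels that matter) plus $|\langle \alpha, x_1^{(i)}\rangle|$ must on average exceed a constant, which via Cauchy--Schwarz and $Npk \le O(dk)$ delivers the claimed lower bound provided $Nkp \lesssim dk$ (equivalently the assumption $Np \lesssim d$ scaled by $k$ as in Theorem~\ref{thm:31}). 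A secondary obstacle is verifying that the $k^2/\sqrt{m}$ and $d^2/\sqrt{m}$ errors from the disentanglement lemmas, accumulated over $Nkp$ gradient terms and $k$ channels, remain dominated by $\veps_3$; this will hinge on the sufficient over-parameterization assumption~\ref{ass:over} applied with the $k$-scaled parameters.
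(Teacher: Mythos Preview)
Your proposal is correct and follows essentially the same approach as the paper, which gives no separate proof for this corollary beyond stating that it (together with Corollary~\ref{lem:dlr_g22}) ``can be directly concluded'' from the preceding lemmas. Your sketch fleshes out precisely that adaptation: rerun the proof of Lemma~\ref{lem:slr_e2} channel by channel, using the disentanglement lemmas~\ref{lem:dis} and~\ref{lem:dis2} to isolate per-channel contributions, and track that the span now ranges over $Npk$ patch vectors---which is exactly what produces the extra $\tilO(Npk/d^{3/2})$ error from the cubic Taylor terms.
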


The final proof of Theorem~\ref{thm:31} follows directly from the proof of Theorem~\ref{thm:1} and Theorem~\ref{thm:2}.

\section{Toolbox}\label{sec:toolbox}
\begin{lemma}\label{lem:var_gaussian}

Let $X_1, X_2\sim \cN(0,1)$ and $a, b > 0$ such that $a^2 + b^2 = 1$. Then for every $\gamma_1, \gamma_2 \in \mathbb{R}$, we have that 
\begin{align}
&\left| \Pr\left[X_1 \geq \gamma_1  \mid a X_1 + b X_2 = \gamma_2 \right] - \Pr\left[X_1 \geq \gamma_1  \mid a X_1 + b X_2 = 0 \right] \right| \lesssim \frac{a|\gamma_2|}{b}
\\
& \Pr\left[ |X_1| \leq \gamma_1  \mid a X_1 + b X_2 = \gamma_2 \right]  \lesssim  \frac{|\gamma_1|}{b} 
\end{align}

\end{lemma}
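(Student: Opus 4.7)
The plan is to exploit the standard formula for the conditional distribution of one Gaussian given a linear function of it. Let $Y \triangleq aX_1 + bX_2$. Since $(X_1, Y)$ is jointly Gaussian with mean zero, $\mathrm{Var}(Y) = a^2+b^2 = 1$, and $\mathrm{Cov}(X_1,Y) = a$, the conditional law is
\begin{align}
X_1 \mid Y = \gamma_2 \;\sim\; \cN\!\left(a\gamma_2,\; b^2\right).
\end{align}
So both probabilities in the statement are probabilities of a univariate Gaussian with variance $b^2$ hitting an interval.

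For the first inequality, I would rewrite $\Pr[X_1 \geq \gamma_1 \mid Y = \gamma_2] = \Pr\!\left[Z \geq (\gamma_1 - a\gamma_2)/b\right]$ where $Z \sim \cN(0,1)$, and similarly $\Pr[X_1 \geq \gamma_1 \mid Y = 0] = \Pr[Z \geq \gamma_1/b]$. The difference between these two tail probabilities is the integral of the standard Gaussian density over an interval of length $a|\gamma_2|/b$. Using the uniform bound $\phi(z) \le 1/\sqrt{2\pi}$ on the Gaussian density gives the desired $O(a|\gamma_2|/b)$ bound.

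For the second inequality, the same conditional representation gives
\begin{align}
\Pr[|X_1|\le \gamma_1 \mid Y = \gamma_2] = \Pr\!\left[\frac{-\gamma_1 - a\gamma_2}{b} \le Z \le \frac{\gamma_1 - a\gamma_2}{b}\right],
\end{align}
which is again the integral of the standard Gaussian density over an interval, this time of length $2\gamma_1/b$. Bounding the density by $1/\sqrt{2\pi}$ yields the $O(|\gamma_1|/b)$ estimate.

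There is no real obstacle here; the lemma is a one-line consequence of the conditional Gaussian formula together with the uniform density bound $\phi(z) \le 1/\sqrt{2\pi}$. The only mild care needed is to keep track of the variance $b^2$ (not $1$) of the conditional distribution, which is what produces the $1/b$ factor in both estimates.
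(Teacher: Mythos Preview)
Your proposal is correct and essentially identical to the paper's proof. The only cosmetic difference is that the paper derives the conditional law $X_1\mid aX_1+bX_2=\gamma_2\sim\cN(a\gamma_2,b^2)$ by introducing the orthogonal combination $Y_2=bX_1-aX_2$ and noting that $(Y_1,Y_2)$ are independent standard Gaussians with $X_1=aY_1+bY_2$, whereas you quote the bivariate conditional Gaussian formula directly; from that point on both arguments bound the same interval probabilities by the uniform density bound $\phi\le 1/\sqrt{2\pi}$.
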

\begin{proof}[Proof of Lemma~\ref{lem:var_gaussian}]
Without loss of generality, we assume $a\gamma_2/b\ge 0$. Let $Y_1 = aX_1+bX_2$ and $Y_2 = bX_1-aX_2$. We have that $Y_1,Y_2$ are independent random Gaussian variables with marginal distribution $\cN(0,1)$. Moreover, $X_1 = aY_1 + bY_2$. 
Thus, $X_1 \mid a X_1 + b X_2 =\gamma_2$ is the same as $aY_1+bY_2\mid Y_1=\gamma_2$, which has distribution $\cN(a\gamma_2, b^2)$. Let $Z$ be a standard Gaussian, then 
\begin{align}
&\left|\Pr\left[X_1 \geq \gamma_1  \mid a X_1 + b X_2 = \gamma_2 \right] - \Pr\left[X_1 \geq \gamma_1  \mid a X_1 + b X_2 = 0 \right] \right|\nonumber
\\
&= \left|\Pr \left[ bZ + a\gamma_2  \geq \gamma_1 \right]  - \Pr \left[ b Z   \geq \gamma_1 \right] \right|=  \left|\Pr \left[ \frac{ \gamma_1}{b} \geq Z \geq \frac{\gamma_1}{b} - \frac{a\gamma_2}{b} \right] \right| \nonumber\\
& \lesssim \left|\frac{a\gamma_2}{b}\right| \tag{beacuse the density of $\cN(0,1)$ is bounded by $O(1)$}
\end{align}
Moreover, 
\begin{align}
\Pr\left[ |X_1| \leq \gamma_1  \mid a X_1 + b X_2 = \gamma_2 \right] & = \Pr\left[|bZ+a\gamma_2|\le \gamma_1\right]\lesssim |\gamma_1|/b\end{align}
\end{proof}

\begin{lemma}\label{lem:aux_22}

Let $M = M_0 + M_1$ where $M_1 \in \mathbb{R}^{d, d'}$ with $d' \leq d$ is a matrix with each entry i.i.d. $\mathcal{N}(0, 1/d)$ and $M_0 = w^\star \beta^{\top}$ where $\|\beta \|_2 \leq 1$ can depend on $M_1$. Then for every vector $z \in \mathbb{R}^{d'}$ we have that:
\begin{align}
\frac{\langle w^\star, Mz \rangle}{\| Mz \|_2} \leq 0.9
\end{align}
\end{lemma}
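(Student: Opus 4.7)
The plan is to orthogonally decompose $Mz$ relative to $w^\star$ and then reduce everything to standard random matrix estimates for $M_1$. Write $g := M_1^\top w^\star \in \mathbb{R}^{d'}$ and $M_1^\perp := (I - w^\star w^{\star\top}) M_1$, so that $M_1 = w^\star g^\top + M_1^\perp$ with every column of $M_1^\perp$ orthogonal to $w^\star$. Then
\[
Mz \;=\; (\beta^\top z + g^\top z)\, w^\star \;+\; M_1^\perp z,
\]
and the two summands are orthogonal. Hence
\[
\frac{\langle w^\star, Mz\rangle^2}{\|Mz\|_2^2} \;=\; \frac{(\beta^\top z + g^\top z)^2}{(\beta^\top z + g^\top z)^2 + \|M_1^\perp z\|_2^2},
\]
and it suffices to upper-bound the ratio $(\beta^\top z + g^\top z)^2 / \|M_1^\perp z\|_2^2$ by a constant ($0.81/0.19 \approx 4.26$ for the stated conclusion).

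For the numerator, Cauchy--Schwarz gives $|\beta^\top z + g^\top z| \leq (\|\beta\|_2 + \|g\|_2)\|z\|_2$. The hypothesis controls $\|\beta\|_2 \leq 1$, and since $g = M_1^\top w^\star$ has i.i.d.\ $\mathcal{N}(0, 1/d)$ entries (because $w^\star$ is a unit vector and $M_1$ has i.i.d.\ $\mathcal{N}(0, 1/d)$ entries), standard Gaussian concentration yields $\|g\|_2 \leq \sqrt{d'/d} + o(1)$ w.h.p. For the denominator, after rotating coordinates so that $w^\star = e_1$, the matrix $M_1^\perp$ becomes (up to a zero row) a $(d-1) \times d'$ matrix with i.i.d.\ $\mathcal{N}(0, 1/d)$ entries. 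The standard lower tail bound on the smallest singular value of a rectangular Gaussian matrix gives $\sigma_{\min}(M_1^\perp) \geq 1 - \sqrt{d'/d} - o(1)$ w.h.p., so $\|M_1^\perp z\|_2 \geq (1 - \sqrt{d'/d} - o(1))\|z\|_2$ uniformly in $z$. Uniformity is automatic because this bound depends only on $M_1$, not on $z$.

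Combining the two estimates, the squared overlap is at most
\[
\frac{(\|\beta\|_2 + \sqrt{d'/d} + o(1))^2}{(\|\beta\|_2 + \sqrt{d'/d} + o(1))^2 + (1 - \sqrt{d'/d} - o(1))^2},
\]
which in the applied regime $d' \leq d/2$ is bounded strictly below $1$, giving the conclusion. The main subtlety (and the step to be most careful about) is that $\beta$ may depend adversarially on $M_1$, including on the singular vectors of $M_1^\perp$; the argument above sidesteps this by bounding $\|\beta + g\|_2$ by $\|\beta\|_2 + \|g\|_2$ and plugging into the uniform smallest-singular-value bound, rather than hoping that $\beta + g$ is far from the bottom singular direction of $M_1^\perp$. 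The rest of the argument is routine: the Gaussian concentration for $\|g\|_2$ and the singular-value concentration for $M_1^\perp$ are both classical and each hold with the required high probability after a union bound.
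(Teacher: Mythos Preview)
Your argument is correct. The approach differs from the paper's: you explicitly split $M_1 = w^\star g^\top + M_1^\perp$ and control the two pieces via separate concentration bounds on $\|g\|_2$ and $\sigma_{\min}(M_1^\perp)$. The paper instead keeps $M_1 z$ intact, invokes the single random-subspace fact $|\langle w^\star, M_1 z\rangle| \leq \tfrac{\sqrt{2}}{2}\|M_1 z\|_2$ for all $z$ (i.e., the projection of $w^\star$ onto $\mathrm{col}(M_1)$ has norm at most $\sqrt{2}/2$), and then expands $\|Mz\|_2^2$ and completes the square to get $\|Mz\|_2^2 \geq \langle w^\star, Mz\rangle^2 + \tfrac{1}{2}\|M_1 z\|_2^2$ directly. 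The paper's route is algebraically slicker (one random-matrix input and one identity), while yours is more modular and makes the dependence on $d'/d$ explicit through the two singular-value-type bounds. Both arguments are loose about the exact constant $0.9$ and implicitly need $d'/d$ bounded away from $1$ (the application has $d' = Np \leq d/2$); your bound $(1+\sqrt{d'/d})^2/(1-\sqrt{d'/d})^2$ in fact needs $d'/d$ a bit smaller than $1/2$ to land below $0.81/0.19$, but this is a cosmetic constant issue shared with the paper's proof, not a logical gap.
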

\begin{proof}[Proof of Lemma~\ref{lem:aux_22}]

Note that $Mz = w^\star \langle \beta, z \rangle + M_1 z$. Since $M_1$ is a random gaussian matrix and $d' \leq d$, we know that w.h.p. for every $z$ we have $\frac{\langle w^\star M_1z \rangle }{\| M_1 z \|_2} \leq \frac{\sqrt{2}}{2}$. 

This implies that 
\begin{align}
\| Mz \|_2^2 &= |\langle \beta, z \rangle|^2 + \| M_1 z \|_2^2 + 2 \langle \beta, z \rangle \langle w^\star, M_1z \rangle
\\
& \geq  |\langle \beta, z \rangle|^2 +  \langle w^\star M_1z \rangle^2 + 2 \langle \beta, z \rangle \langle w^\star ,M_1z \rangle + \frac{1}{2}  \| M_1 z \|_2^2
\\
& = ( \langle \beta, z \rangle  + \langle w^\star, M_1  z \rangle)^2 + \frac{1}{2}  \| M_1 z \|_2^2
\\
& = \langle w^\star, Mz \rangle^2 + \frac{1}{2}  \| M_1 z \|_2^2
\end{align}

This completes the proof.

\end{proof}

\section{Additional Details for Experiments}

In this section we provide additional details on the experimental results of Section~\ref{sec:experiments}. All of our models were trained using a single NVIDIA TitanXp GPU and our code is implemented via PyTorch. We note that for all our experiments, the mean pixel is subtracted from the CIFAR image and then the image is divided by the standard deviation pixel. We use mean and standard deviation values in the PyTorch WideResNet implementation: \url{https://github.com/xternalz/WideResNet-pytorch}. 

\subsection{Additional Details for Noise Mitigation Strategy}
\label{subsec:noise_mitigation}
In this section, we provide additional details for the mitigation strategy for a small learning rate described in Section~\ref{sec:experiments}. In Table~\ref{tab:clean_train_clean_test}, we demonstrate on CIFAR-10 images without data augmentation that this regularization can indeed counteract the negative effects of small learning rate, as we report a 4.72\% increase in validation accuracy when adding noise to a small learning rate. 

\begin{table}
	\centering
	\caption{Validation accuracies for WideResNet16 trained and tested on original CIFAR-10 images without data augmentation.} \label{tab:clean_train_clean_test}
	\begin{tabular}{cc} Method & Val. Acc \\
		\hline 
		Large LR + anneal & 90.41\%\\
		Small LR + noise & 89.65\%\\
		Small LR & 84.93\%
		
	\end{tabular}
	
\end{table}

We train for all models for 200 epochs, annealing the learning rates by a factor of 0.2 at the 60th, 120th, and 150th epoch for all models. The large learning rate model uses an initial learning rate of 0.1, whereas the small learning rate model uses initial learning rate of 0.01. The large learning rate is a standard hyperparameter setting for the WideResNet16 architecture, and we chose the small learning rate by scaling this value down. The other hyperparameter settings are standard. We remove data augmentation from the training set to isolate the effect of adding noise. 

We add noise before every time we apply the relu activation. As it is costly to add i.i.d. noise that is the size of the entire hidden layer, we sample Gaussian noise that has shape equal to the last two dimensions of the 4 dimensional hidden layer, where the first two dimensions are batch size and number of channels, and duplicate this over the first 2 dimensions. We sample different noise for every batch.   

Our annealing schedule simply multiplies the noise level by a constant factor at every iteration. We tune the standard deviation of the noise to $0.2$ and the annealing rate to $0.995$ every iteration. We show results from a single trial as the small LR with noise algorithm already shows substantial improvement over vanilla small LR. 

\subsection{Additional Details on Patch-Augmented CIFAR-10}
\label{subsec:exp_patch}

We first describe in greater detail our method for producing the patch. First, the split of our data is the following: of the 50000 CIFAR-10 training images, 10000 will contain no patch and 40000 will have a patch. We generate this split randomly before training and keep it fixed. During a single epoch, we iterate through all images, loading the 10000 clean images the same way each time. For the remaining 40000 examples, we use a patch-only image with probability 0.2 and a patch mixed with CIFAR image with probability 0.8. Thus, 20\% of the updates are on clean images, 16\% of updates are on patches only, and 64\% of updates are on mixed images, but the actual split of the data is slightly different because of our implementation. 

The patch will be located in the center of the image. We visualize the patches in Figure~\ref{fig:patch_vis}. We generate the patch as follows: before training begins, we sample a random vector $z$ with i.i.d entries from $\cN(0, \sigma_z^2)$ as well as $\zeta_i \sim [-\beta, \beta]$ for classes $i = 1, \ldots, 10$. Then to generate patch-only images, we add a scalar multiple of $\zeta_i$ to $z$ if the example belongs to class $i$. This scalar multiple is in the range $[-\alpha, \alpha]$ for some $\alpha$ we tune. We set coordinates not in the patch to $0$. To generate images that contain both patch and a CIFAR example, we simply add $z \pm \zeta_i$. In all, the hyperparameters we tune are $\sigma_z, \beta, \alpha$. 

We must choose $\sigma, \beta, \alpha$ on the correct scale so that large and small learning rates don't both ignore the patch or overfit to the patch. For the experiment shown, $\sigma_z = 1.25, \beta = 0.1, \alpha = 1.75$. 

Our large initial learning rate model trains with learning rate 0.1, annealing to 0.004 at the 30th epoch. and the small LR model trains with fixed learning rate 0.004. Our small LR with noise model trains with fixed learning rate 0.004, initial noise $0.4$, and decays the noise to 4e-6 after the 30th epoch. We train all models for 60 epochs total, starting from the same dataset and choice of patches. Table~\ref{tab:val_patch_acc} demonstrates the final validation accuracy numbers on patch-augmented and clean data. 

Now we provide additional evidence that the generalization disparity is indeed due to the learning order effect and not simply because the large learning rate model can already generalize better on clean CIFAR-10 images. To see this, we consider the generalization error of models trained on 10000 clean CIFAR images: the small LR model achieves 65\% validation accuracy, and the large LR model achieves 76\% validation accuracy. For comparison, on the full clean dataset the small LR model achieves 83\% validation accuracy whereas the large LR model achieves 90\% accuracy. 

We note that the final number of 69.89\% clean image accuracy for the small LR model trained on the patch dataset is much closer to 65\% than 83\%, suggesting that it is indeed using a fraction of the available CIFAR samples because of learning order. On the other hand, the large LR model achieves final clean validation accuracy of 87.61\% when trained on the patch dataset, which is very close to the 90\% that is achievable training on the full clean dataset. This indicates that the large LR model is still using the majority of the images to learn CIFAR examples before annealing, as it has not yet memorized the patches. 

\begin{figure}
	\centering
	\includegraphics[width=0.2\textwidth]{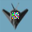}
	\includegraphics[width=0.2\textwidth]{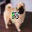}
	\caption{Visualizations of CIFAR-10 images with patches added.} \label{fig:patch_vis}
\end{figure}

\begin{table}
	\centering
	\caption{Validation accuracies for CIFAR-10 training dataset modified with patch. The mixed validation set similarly contains patches, but the clean set does not.}
	\begin{tabular}{c  c c}	Method	& Mixed Val. Acc. & Clean Val. Acc. \\
		\hline
		Large LR + anneal & 95.35\% & 87.61\%\\
		\hline
		Small LR & 92.83\% & 69.89\%\\
		\hline
		Small LR + noise & 94.43\% & 81.36\%
	\end{tabular}
	\label{tab:val_patch_acc}
\end{table}

\end{document}